\newcommand\myshade{85}
\colorlet{mylinkcolor}{YellowOrange}
\colorlet{mycitecolor}{Aquamarine}
\colorlet{myurlcolor}{violet}
\renewcommand{\hat}{\widehat}
\renewcommand{\tilde}{\widetilde}
\renewcommand{\bar}{\overline}
\newcommand{\bfm}[1]{\ensuremath{\boldsymbol{#1}}} % bm
\def\bzero{\bfm 0}
\def\bone{\bfm 1}
\def\bbone{\mathbbm{1}} % package bm
\def\bb{\bfm b}   \def\bB{\bfm B}
     \def\EE{\mathbb{E}}
   \def\bG{\bfm G}  
   \def\bH{\bfm H}  
   \def\bI{\bfm I}
     \def\RR{\mathbb{R}}
\def\bu{\bfm u}   \def\bU{\bfm U}  
\def\bw{\bfm w}   \def\bW{\bfm W}  
\def\bx{\bfm x}   \def\bX{\bfm X}  
   \def\bY{\bfm Y}  
\def\bz{\bfm z}   \def\bZ{\bfm Z}  
\def\calA{{\cal  A}}
\def\calD{{\cal  D}} 
\def\calF{{\cal  F}} 
\def\calG{{\cal  G}} 
\def\calH{{\cal  H}}
\def\calL{{\cal  L}} 
\def\calM{{\cal  M}} 
\def\calN{{\cal  N}} 
\def\calP{{\cal  P}} 
\def\calR{{\cal  R}} 
\def\calS{{\cal  S}} 
\def\calT{{\cal  T}}
\def\calX{{\cal  X}}
\newcommand{\bfsym}[1]{\ensuremath{\boldsymbol{#1}}}
 \def\balpha{\bfsym \alpha}
 \def\bbeta{\bfsym \beta}
 \def\bdelta{\bfsym {\delta}}
 \def\bnu{\bfsym {\nu}}
 \def\btheta{\bfsym {\theta}}           \def\bTheta {\bfsym {\Theta}}
 \def\beps{\bfsym \varepsilon}          
              \def\bSigma{\bfsym \Sigma}
         \def\bLambda {\bfsym {\Lambda}}
           \def\bOmega {\bfsym {\Omega}}
 \def\bzeta{\bfsym {\zeta}}
\def\bphi{\bfsym {\phi}}
\providecommand{\abs}[1]{\left\lvert#1\right\rvert}
\providecommand{\norm}[1]{\left\lVert#1\right\rVert}
\providecommand{\paran}[1]{\left( #1 \right)}
\providecommand{\brackets}[1]{\left[ #1 \right]}
\providecommand{\braces}[1]{\left\{ #1 \right\}}
\DeclarePairedDelimiter\floor{\lfloor}{\rfloor}
\providecommand{\convprob}{\stackrel{\calP}{\longrightarrow}}
\providecommand{\convdist}{\stackrel{\calD}{\longrightarrow}}
\providecommand{\defeq}{\triangleq}
\DeclarePairedDelimiterX{\infdivx}[2]{(}{)}{%
  #1 \; \delimsize\| \; #2%
}
\DeclareMathOperator{\diag}{diag}
\newcommand{\E}[1]{{\mathbb{E}} \left[ #1 \right]}
\DeclareMathOperator{\Tr}{Tr}
\newcommand{\vect}[1]{{\textsc{vec}} \left( #1 \right)}
\newcommand{\mat}[1]{{\textsc{mat}} \left( #1 \right)}
\newtheorem{definition}{Definition}
\newtheorem{assumption}{Assumption}
\newtheorem{lemma}{Lemma}
\newtheorem{proposition}{Proposition}
\newtheorem{theorem}{Theorem}
\newtheorem{corollary}{Corollary}
\newtheorem{remark}{Remark}
\newcommand{\bigO}[1]{ \mathcal{O} \left( #1 \right) }
\newcommand{\smlo}[1]{{\rm o} \left( #1 \right) }
\newcommand{\Op}[1]{{\mathcal{O}_p} \left( #1 \right) }
\newcommand{\op}[1]{{\rm o_p} \left( #1 \right) }
\definecolor{royalpurple}{rgb}{0.47, 0.32, 0.66}
\def\eps{\varepsilon}
\def\lam {\lambda}
\def\bLam {\bLambda}
\def\bdiag{{\rm bdiag}}
\def\widebreve{\mathpalette\wide@breve}
\def\wide@breve#1#2{\sbox\z@{$#1#2$}%
    \mathop{\vbox{\m@th\ialign{##\crcr
                \kern0.08em\brevefill#1{0.8\wd\z@}\crcr\noalign{\nointerlineskip}%
                $\hss#1#2\hss$\crcr}}}\limits}
\def\brevefill#1#2{$\m@th\sbox\tw@{$#1($}%
    \hss\resizebox{#2}{\wd\tw@}{\rotatebox[origin=c]{90}{\upshape(}}\hss$}
\titlespacing{\section}{0pt}{*0}{*0}
\titlespacing{\subsection}{0pt}{*0}{*0}
\titlespacing{\subsubsection}{0pt}{*0}{*0}
\newcommand{\mybibsty}{chicago}%{hsiamplain}%{agsm}
\newcommand{\mybib}{main}
\def\PIFULL{Auto-Clustered Policy Iteration}
\def\PI{ACPI}
\def\PEFULL{Auto-Clustered Policy Evaluation}
\def\PE{ACPE}
\def\MODELFULL{$K$-Heterogeneous Markov Decision Process}
\def\MODEL{$K$-Hetero MDP}
\begin{document}
%-------
%
%  title
%
%%
%% title page
%%

\newcommand{\blind}{0}

\if0\blind
{
	\title{\bf Reinforcement Learning with Heterogeneous Data: Estimation and Inference}
	\author[1]{ Elynn Y. Chen} % \thanks{Email: elynn.chen@stern.nyu.edu}} % {Alphabetical}
	\author[2]{Rui Song} % \thanks{Supported in part by NSF grant DMS-1555244, 2113637 and NCI grant P01 CA142538. Email: rsong@ncsu.edu}}
    \author[3]{ Michael I. Jordan} % \thanks{Email: jordan@cs.berkeley.edu}}
	\affil[1]{Stern School of Business, New York University}
    \affil[2]{Department of Statistics, North Carolina State University}
    \affil[3]{Department of Computer Science, University of California, Berkeley}
	%\affil[2]{...}
	%\affil[3]{...}
	%\date{\vspace{-5ex}}
	%\date{\today}
    \date{January 18, 2022}
	\maketitle
} \fi

\if1\blind
{
	\bigskip
	\bigskip
	\bigskip
	\title{\bf ...}
	\date{\vspace{-5ex}}
	\maketitle
	\medskip
} \fi

\begin{abstract}
\begin{singlespace}
    Reinforcement Learning (RL) has the promise of providing data-driven support for decision-making in a wide range of problems in healthcare, education, business, and other domains.
    Classical RL methods focus on the mean of the total return and, thus, may provide misleading results in the setting of the heterogeneous populations that commonly underlie large-scale datasets. We introduce the {\MODELFULL} ({\MODEL}) to address sequential decision problems with population heterogeneity.
    We propose the {\PEFULL} ({\PE}) for estimating the value of a given policy, and the {\PIFULL} ({\PI}) for estimating the optimal policy in a given policy class.
    Our auto-clustered algorithms can automatically detect and identify homogeneous sub-populations, while estimating the $Q$ function and the optimal policy for each sub-population.
    We establish convergence rates and construct confidence intervals for the estimators obtained by the {\PE} and {\PI}.
    We present simulations to support our theoretical findings, and we conduct an empirical study on the standard MIMIC-III dataset. 
    The latter analysis show evidence of value heterogeneity and confirms the advantages of our new method.
\end{singlespace}
\end{abstract}

%-------
%
%  main text
%
%\begin{singlespace}
%\tableofcontents
%\end{singlespace}

%!TEX root = 0-main.tex

\section{Introduction}  \label{sec:intro}

Many real world problems involve making decisions sequentially based on data that is influenced by previous decisions.
For example, in clinical practice, physicians make a series of treatment decisions over the course of a patient's disease based on his/her baseline and evolving characteristics \citep{schulte2014q}.
In education, human and automated tutors attempt to choose pedagogical activities that will maximize a student's learning, informed by their estimates of the student's current knowledge \citep{rafferty2016faster,reddy2017accelerating}.
The framework of reinforcement learning (RL), powered by large-scale datasets, has promise to provide valuable decision-making support in a wide range of domains, from healthcare, education, and business to scientific research~\citep{schulte2014q,mandel2017better,zhou2017optimizing}.

Classical RL methods estimate the \emph{value function}, which is defined as the expectation of accumulated future rewards conditioned on the state, or a state-action pair, at the time when a decision is made. An \emph{optimal policy} is obtained by maximizing the value function at each decision point.
Such mean-value-based RL methods have been the subject of a large literature (See \cite{sutton2018reinforcement} and references therein for a complete review).
However, naive applications of the mean-value-based RL methods to large-scale datasets may generate misleading results because real-world datasets are often created via aggregating many heterogeneous data sources.
Each data source may contain a different sub-population with different associations between state variable, action and outcomes (such as state transition, immediate rewards, and return).
Consider, for example, the MIMIC-III dataset \citep{johnson2016mimic,komorowski2018artificial}.
In Figure \ref{fig:hetero-mimic-III} (a), we plot the results of learning a single linear $Q$ function (i.e., a conditional expectation of future return on state-action pairs) with the entire sepsis cohort data.  The plot shows the estimation residual.
It is obvious that the dataset consists of two sub-populations with different mean values.
The mean-value-based $Q$ learning procedure fails to capture the heterogeneity in the dataset.
In general, results based on integrating over the entire super-population can be misleading, sometimes even dangerous in high-risk applications such as health care, finance and autonomous driving.
\begin{figure}[ht!]
    \centering
    \begin{subfigure}[b]{0.45\textwidth}
        \includegraphics[width=1\textwidth]{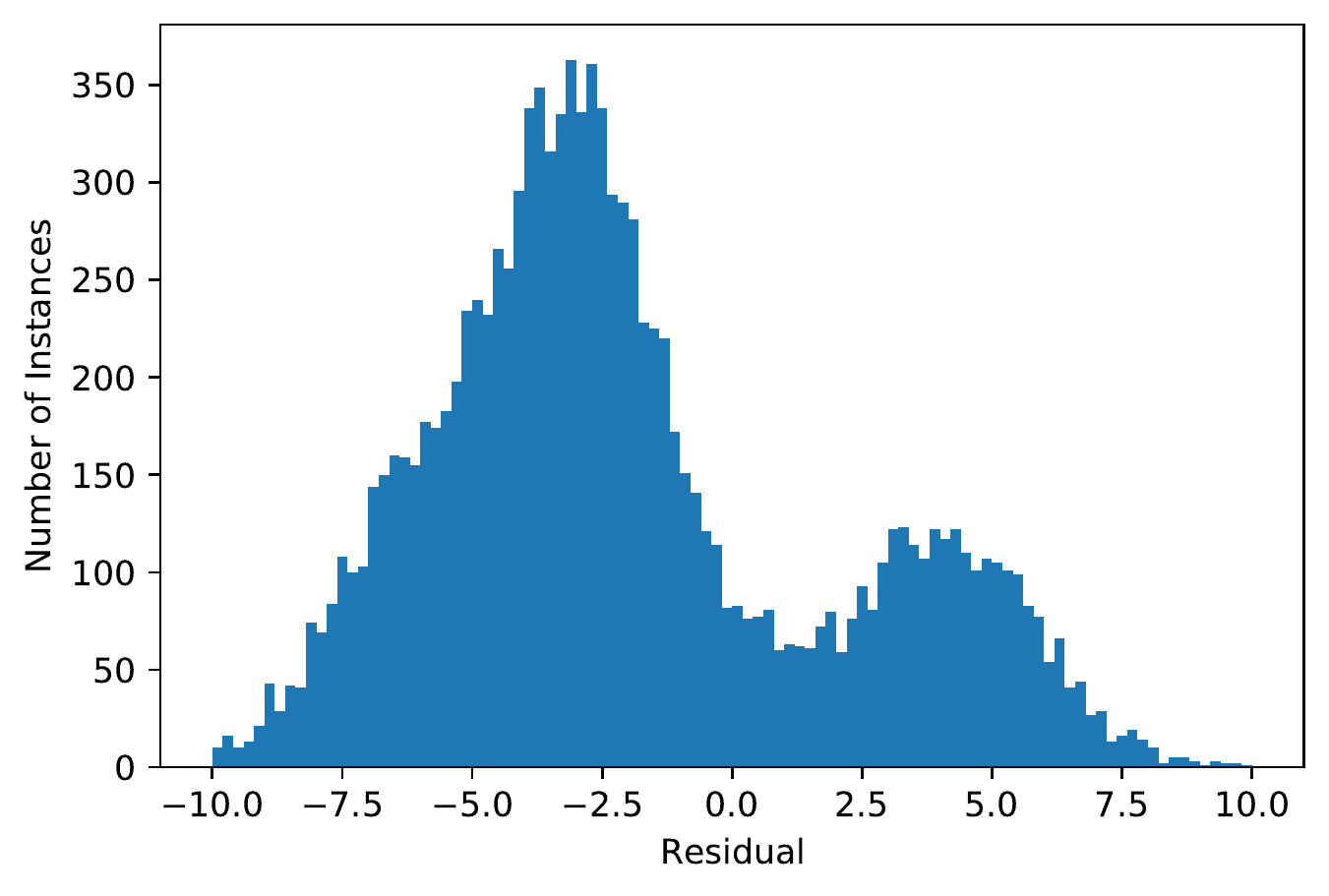}
        \caption{Residual distribution of simple linear regression is clearly bi-modal, with two peaks at around -4 and 4.}
    \end{subfigure}
    \begin{subfigure}[b]{0.45\textwidth}
        \includegraphics[width=1\textwidth]{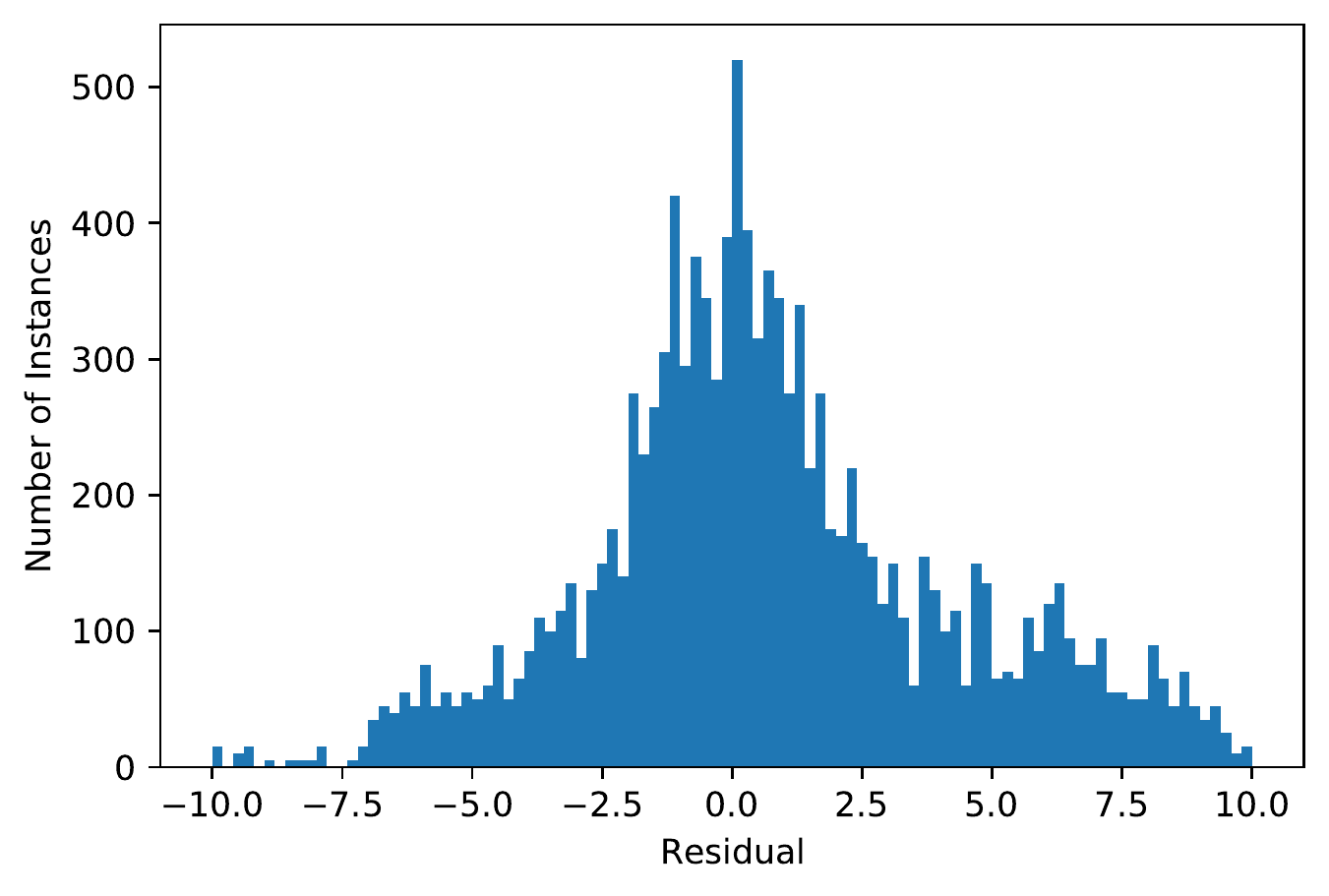}
        \caption{The residual distribution for our algorithm is bell-shaped, with one peak at around zero. It also performs better in terms of prediction error.}
    \end{subfigure}
    \caption{Heterogeneity in the MIMIC-III dataset and the improvement achieved by {\PI}.}
    \label{fig:hetero-mimic-III}
\end{figure}

In this paper, we introduce a framework that we refer to as \emph{$K$-heterogeneous MDP} ($K$-hetero MDP) to model dynamic decision making problems with heterogeneous populations.
We propose an algorithm, {\PIFULL} ({\PI}), that can automatically detect and identify homogeneous sub-populations from a dataset while learning the $Q$ function and the optimal policy for each sub-population.
We establish statistical guarantees for the estimators produced by the {\PI}.
Specifically, we obtain convergence rates and we construct confidence intervals (CIs) for the estimated model parameters and value functions.
These theoretical results are especially important in high-risk applications.
We apply the proposed method on the standard MIMIC-III dataset.
Figure \ref{fig:hetero-mimic-III} (b) presents the residuals after applying the {\PI} to the same MIMIC-III dataset.
It is clear that the {\PI} captures the heterogeneity in the population.
We present additional results in the experiments section that further confirm the favorable performance of our new method.

%\subsection{Contribution}
The contributions of our work are summarized as follows.
Methodologically, we introduce a formal definition of $K$-hetero MDP and propose the new {\PE} and {\PI} algorithms to deal with this heterogeneity automatically.
The efficacy of the proposed model and method is validated through empirical evidences from a synthetic and a real large-scale dataset.
While this paper focuses on batch-mode estimation, the framework of $K$-hetero MDP is generally applicable and the {\PE} and {\PI} can be extended to deal with online estimation.
Theoretically, we establish convergence rates and construct confidence intervals (CIs) under a bidirectional asymptotic framework that allows either $N$ or $T$ to go to infinity.
Our CIs are valid as long as either the number of trajectories $N$ or the length of trajectory $T$ diverge to infinity.
Our approach covers a wide variety of real applications and is especially useful when the number of trajectories in a sub-population is small.
We also offer theoretical guidance to practitioners on the choice of tuning parameters in {\PI}.
Technically, in order to study the properties of the estimators by the {\PE} and {\PI}, we study the temporal difference error and derive its convergence in the infinity norm.
This result is of independent importance, especially in studying RL algorithms with penalization \citep{song2015penalized}.

\subsection{Related work}

This paper is closely related to three different strands of research  in machine learning and statistics, namely distributional RL, dynamic treatment regime and mixture models.

\textbf{Distributional RL.}
In contrast to classical mean-value-based RL, \cite{bellemare2017distributional} introduce {\em distributional RL (DRL)} that focuses on the full distribution of the random return $Y_t$.
\cite{rowland2019statistics} present an example of multimodality in value distributions.
The main approaches to DRL include learning discrete {\em categorical} distribution \citep{bellemare2017distributional,rowland2018analysis,barth2018distributed}, learning distribution {\em quantiles} \citep{dabney2018distributional,yang2019fully}, and learning distribution {\em expectiles} \citep{rowland2019statistics}.
This work differs from ours in that it mainly focuses on computational issues and empirical performance.
No statistical properties of the estimators are studied, partly because of the fundamental difficulty of estimating a full distribution.

A useful concept introduced in the  DRL literature is the \emph{Bellman closedness} of a statistic.
Lemma 4.2 and Theorem 4.3 in \cite{rowland2019statistics} show that collections of moments are effectively the only finite set of statistics that are Bellman closed.
This inspires us to estimate the sub-population means instead of a full distribution of the super-population: the means are Bellman closed but the discretized distribution or the quantile is not.
In the middle of the spectrum from the mean-value-based RL to the distributional RL, our method is shown to capture heterogeneity while avoiding the sample inefficiency inherent in estimating a full distribution.

\textbf{Dynamic treatment regime.}
Researchers in dynamic treatment regime (DTR) have used the RL framework and MDP to derive a set of sequential treatment decision rules, or treatment regimes, to optimize patients' clinical outcomes over a fixed period of time.
See \cite{murphy2003optimal,robins2004optimal,schulte2014q,luedtke2016statistical,ertefaie2018constructing,zhu2019proper,shi2020statistical} and references therein.
Most methods in DTR are designed for finite horizon $T$ and are implemented through a backward recursive fitting procedure that is related to dynamic programming \citep{schulte2014q}.
For infinite horizon $T$, \cite{ertefaie2018constructing} propose a variant of greedy gradient $Q$ learning (GGQ) to estimate optimal dynamic treatment regimes.
This requires modeling a non-smooth function of the data, which creates complications.
\cite{luckett2019estimating} propose a $V$-learning algorithm that models the state value function and directly searches for the optimal policy among a pre-specified class of policies.
They show that the estimators of $V$-learning converge at the rate of the order $N^{-1/2}$, where $N$ is the number of decision trajectories in a dataset.
Although their setting allows for infinite horizon, the information accumulated over horizon $T$ does not enter explicitly in their results.
In contrast, we show that the convergence rate of our estimators are of the order $\paran{NT}^{-1/2}$ under similar assumptions.
\cite{shi2020statistical} obtain a convergence rate of the order $\paran{NT}^{-1/2}$ and derive confidence intervals for policy evaluation. But they do not consider finding the optimal policy or data heterogeneity.

While the personalized treatment regime in DTR aims to offer individualized treatment to each patient $i$ according to different state variable $X_{i,t}$,
the model coefficients are the same for all the patients.
In contrast, the heterogeneity in this paper stems from different model coefficients.
Our problem is more challenging since we need to estimate the model coefficients and, at the same time, to group them into clusters.

\textbf{Mixture models.}
In a linear regression setting, researchers have studied the problem of supervised clustering that explores homogeneous effects of covariates along coordinates and across samples.
The mixture-model-based approach needs to specify an underlying distribution for data, and also requires specification of the number of mixture components in the population.
An alternative approach to model-based clustering  employs grouping penalization on pairwise differences of the data points.
For example,  assuming parameter homogeneity over individuals, \cite{tibshirani2005sparsity}, \cite{bondell2008simultaneous}, \cite{shen2010grouping} and \cite{ke2013homogeneity} employ different penalty functions for each pair of coordinates of the coefficient vector to group similar-effect covariates along coordinates.
Assuming parameter heterogeneity across individuals, \cite{hocking2011clusterpath}, \cite{pan2013cluster}, and \cite{ma2017concave} adopt a fusion-type penalty with either an $L_p$-shrinkage or a non-convex penalty function to formulate clusters  across samples with the same regression coefficients.

This model-based approach is difficult to deploy in the RL setting because the distribution of the value function is generally not likely to be characterized by a Gaussian or mixture of Gaussian distribution \citep{sobel1982variance}.
Our {\PE} and {\PI} algorithms therefore are developed based on pairwise-fusion-type penalties.
However, our problem in RL is more challenging than that in linear regression.
First, while it is common to assume the noise in regression is i.i.d.\ sub-Gaussian, the observations over $T$ in RL are not uncorrelated. This raises significant challenges for our analysis.
Second, we consider the semi-parametric approximation with a diverging number of basis functions instead of the linear models.
Lastly, our ultimate goal is to learn an optimal policy, while the mixture regression framework focuses entirely on parameter estimation.

\subsection{Notation and organization}

We use $x$, $\bx$, $\bX$ to represent scalars, vectors and matrices, respectively.
Capital letter $X$ represents a random variable.
For any matrix $\bX$,  we use $\bx_{i \cdot}$, $\bx_{\cdot j}$, and $x_{ij}$ to refer to its $i$-th row, $j$-th column, and $ij$-th entry, respectively.
All vectors are column vectors and row vectors are written as $\bx^\top$ for any vector $\bx$.
%We write $\sigma_i$ be the $i$-th largest square root of eigenvalues of $\bX^\top \bX$, which are called the singular values of $\bX$.
%Then $\sigma_1$ is the largest in absolute value among $\sigma_i$.
We denote the matrix $\ell_2$-norm as $\norm{\bX}_2$ and max norm as $\norm{\bX}_{\max} \defeq \underset{i,j}{\max} \abs{x_{ij}}$.
When $\bX$ is a square matrix, we denote by $\Tr \paran{\bX}$, $\lambda_{max} \paran{\bX}$, and $\lambda_{min} \paran{\bX}$ the trace, maximum and minimum eigenvalues of $\bX$, respectively.
We denote $\mat{\bx}$ as the matricization of a vector $\bx$ and $\vect{\bX}$ as the vectorization of a matrix $\bX$.
%Notation $\bbone$ represents a vector or a matrix of all ones of proper size.
%We write by $a \asymp b$ when $a = \bigO{b}$ and $b = \bigO{a}$.
We let $C, c, C_0, c_0, \ldots$ denote generic constants whose actual values may vary.
%As a convention, we denote the true, oracle and estimated values as $\mathring\beta$, $\tilde\beta$, and $\hat\beta$, respectively.

The rest of this paper is organized as follows.
Section \ref{sec:model} introduces the problem setting and a few important concepts.
Section \ref{sec:estimation} proposes the {\PI}.
Section \ref{sec:computation} describes the computation procedure.
Section \ref{sec:theory} establishes the statistical properties of the estimators.
Section \ref{sec:simul} and Section \ref{sec:appl}  present empirical results with synthetic and real datasets.
Section \ref{sec:summ} concludes.
All proofs and technical lemmas are relegated to the supplementary material.

%!TEX root = 0-main.tex

\section{Statistical Framework} \label{sec:model}

\subsection{Markov decision process}

The problem of dynamic decision making can be  formalized mathematically as a \emph{Markov Decision Process} (MDP), $\calM = \braces{\calX, \calA, \Pr, r, \gamma}$.
Let $t \in [T]$ index the times in the decision process at which an action is taken, and let $i \in [N]$ index the set of individuals interacting with the environment.
The $i$-th individual may correspond to the $i$-th round of game, the $i$-th student, and the $i$-th patient in the setting of game playing, education, and health care, respectively.
At each time point $t$, the current status of the $i$-th individual, $i \in [N]$, is characterized by covariates $X_{i,t} \in \calX$ and a {\em finite} set of all possible actions $\calA$ is available.
An action $A_{i,t+1} \in \calA$ is selected and the state of the $i$-th individual changes to $X_{i,t+1}$ according to the transition probability $\Pr(X_{i,t+1} \mid X_{i,t}, A_{i,t})$.
An immediate random scalar reward $R_{i,t}$ is observed.
Over time, these comprise a sample trajectory for the $i$-th individual and we have in total $N$ sample trajectories, $\braces{\paran{X_{i,t}, A_{i,t}, R_{i,t}}}_{0\le t \le T}$ for $i\in[N]$ .

We assume that the data-generating model is a \textit{Time-Homogeneous Markov process (TH Markov)} and we also assume that the observed rewards satisfy a \textit{Conditional Mean Independence (CMI)} assumption.
\begin{assumption}[Data generating MDP.]   \label{assum:generating-MDP}
    The sample trajectories $\braces{\braces{\paran{X_{i,t}, A_{i,t}, R_{i,t}}}_{t\ge 0}}_{i\in[N]}$ are generated from (possibly $N$ different) Markov decision processes satisfying
    \begin{enumerate}[label=(\alph*)]
        \item (Markovianity)  For any $i\in[N]$ and $t\in [T]$, it holds that $X_{i,t}$ is independent of $\braces{\paran{X_{i,s}, A_{i,s}, Y_{i,s}}}$, for any $0\le s\le t-1$, conditional on $\paran{X_{i,t-1}, A_{i,t-1}}$.
        \item (Time-Homogeneity) The conditional density $\Pr(X_{i,t+1} \mid X_{i,t}, A_{i,t})$ is the same over $t$ for any $i\in[N]$.
         \item (Conditional Mean Independence (CMI)) For any $i\in[N]$ and $t\in[T]$, it holds that
        \begin{equation*}
                \E{R_{i,t} | X_{i,t} = \bx, A_{i,t}=a, \braces{X_{i,s}, A_{i,s}, R_{i,s}}_{0\le s < t}}
               = \E{R_{i,t} | X_{i,t} = \bx, A_{i,t}=a} = r_i\paran{\bx, a},
        \end{equation*}
        for some bounded reward function $r_i\paran{\bx,a}$.
    \end{enumerate}
\end{assumption}

\begin{remark}
    The assumptions of the TH Markov and CMI are common in the literature.
    However, Assumption \ref{assum:generating-MDP} (c) is different
    because $r_i\paran{\bx,a}$ can be different across $N$ trajectories.
\end{remark}

\begin{remark}
    Assumption \ref{assum:generating-MDP} implies that $R_{i,t} = r_i\paran{X_{i,t}, A_{i,t}} + \eta_{i,t}$
    where $\E{\eta_{i,t} \mid X_{i,t}, A_{i,t}} = 0$.
    The $\ell_2$ convergence and the asymptotic normality of the oracle estimators do not need any additional assumption on the noise term $\eta_{i,t}$.
    However, the uniform convergence that is needed for the feasible estimators requires $\eta_{i,t}$ to be a sub-Gaussian random variable.
\end{remark}

%\subsection{Policy and value functions}
A policy $\pi\paran{\bx} = \brackets{ \pi(1 \mid \bx), \;\cdots\;, \pi(M \mid \bx) }$ is a function that maps the state space $\calX$ to probability mass functions on the action space $\calA := \{1, \cdots, M\}$.
Under a policy $\pi$, an agent chooses action $A_{i,t}$ at state $X_{i,t}$ with probability $\pi\paran{A_{i,t} \mid X_{i,t}}$ and receives an immediate reward $R_{i,t}$.
The overall return at step $t$ is the accumulated discounted reward defined as
\begin{equation}  \label{eqn:return}
    Y_{i,t}= \sum_{s=t}^T \gamma^{t-s} R_{i,s},
\end{equation}
where the discount rate $\gamma \in [0,1)$ reflects a trade-off between immediate and future rewards.
Classic RL measures the goodness of a policy $\pi$ by value functions defined as follows.
The {\em state-value function} is the conditional expectation of the return conditioned on the starting state $\bx$:
\begin{equation} \label{eqn:v-func-0}
    V\paran{\pi, \bx} = \E{ \sum_{t=0}^T \gamma^t R_{i,t}
        \;\bigg|\; X_{i,0} = \bx,\;
        A_{i,t} \sim \pi( \cdot | X_{i,t}),\;
        X_{i,t+1} \sim \Pr(\cdot | X_{i,t}, A_{i,t})},
\end{equation}
where the expectation is taken under the trajectory distribution generated by policy $\pi$.
The {\em action-value function} or {\em $Q$ function} is the conditional expectation of the return conditioned on the starting state-action pair $(\bx, a)$:
\begin{equation} \label{eqn:q-func-0}
Q\paran{\pi, \bx, a} = \E{ \sum_{t=0}^T \gamma^t R_{i,t}
    \;\bigg|\; X_{i,0} = \bx,\; A_{i,0} = a,\;
    X_{i,t+1} \sim \Pr(\cdot | X_{i,t}, A_{i,t}), \;
    A_{i,t+1} \sim \pi( \cdot | X_{i,t+1})},
\end{equation}
where the expectation is taken by assuming that the dynamic system follows the given policy $\pi$ afterwards.
Let $\calR$ be a reference distribution of the covariate space $\calX$. The {\em integrated state-value} function of a given policy $\pi$ is defined as
\begin{equation} \label{eqn:inte-value}
    V_\calR(\pi)
    = \int_{\bx\in\calX} V\paran{\pi, \bx} \calR\paran{d\bx}
    = \int \sum_{a\in\calA} Q\paran{\pi, \bx, a} \pi\paran{a\mid\bx} \calR\paran{d\bx}.
\end{equation}
The reference distribution can be viewed as a distribution of initial states or stationary states and can be estimated from the dataset.

For the purpose of finding the optimal policy, we focus on a finite action set $\calA= \braces{1, \cdots, M}$ and a parametric class of policies $\Pi$ defined by
\begin{equation} \label{eqn:policy-param}
    \begin{aligned}
        & \pi\paran{\bx; \balpha} := \brackets{ \pi\paran{1, \bx; \balpha}, \cdots,  \pi\paran{M, \bx; \balpha}}, \quad\text{where} \\
        & \pi\paran{j, \bx; \balpha} =
        \begin{cases}
            \frac{\exp(\bx^\top\balpha_j)}{1 + \sum_{j=1}^{M-1} \exp(\bx^\top\balpha_j)}, \quad\text{for}\quad  k = 1, \cdots, M-1, \\
            \frac{1}{1 + \sum_{j=1}^{M-1} \exp(\bx^\top\balpha_M)}, \quad\text{otherwise,}
        \end{cases}
    \end{aligned}
\end{equation}
where we use $\balpha_j$ to denote a vector of parameters for the $j$-th action and $\balpha = \brackets{\balpha_1^\top, \cdots, \balpha_M^\top}^\top$.
The optimal policy $\pi^*_\calR\in\Pi$ satisfies $V_{\calR}\paran{\pi_\calR^*} \ge V_{\calR}\paran{\pi}$ for all $\pi \in \Pi$.
That is,
\begin{equation} \label{eqn:policy-opt}
    \pi_{\calR}^* = \underset{\pi\in\Pi}{\arg\max} \; V_{\calR}\paran{\pi}.
\end{equation}
The goal is to estimate $\pi_{\calR}^*$ using data collected from $N$ trajectories, each of which is assumed to have length $T_i = T$ for simplicity of presentation, and where $T$ may be finite or diverge to infinity.
Throughout, we assume that this reference distribution is fixed and the same for each homogeneous sub-population.

\subsection{Heterogeneous MDP and Bellman consistent equation}

Since the learning objective is optimizing the total return, we define the heterogeneity in terms of the value functions---the expectation of return \eqref{eqn:return} in homogeneous sub-populations.
Different from the classic RL, we allow different values, and thus different optimal policies, across a heterogeneous super-population.

\begin{definition}[$K$-Hetero MDP] \label{def:hetero-MDP}
    A $K$-heterogeneous MDP is defined as a dynamic system with a latent variable $\omega \in [K]$
     such that
    \begin{enumerate}[label=(\roman*)]
        \item For $\forall k \in [K]$, $\Pr\paran{\omega=k} = P^{(k)}$ and it satisfies that $P^{(k)}\in (0,1)$ and $\sum_{k=1}^K P^{(k)} = 1$.
        \item The value functions for each sub-population are conditioned additionally on $w_i = k$:
        \begin{align*}
           V^{(k)}\paran{\pi, \bx} &:= \E{ \sum_{t=0}^T \gamma^t R_{i,t}
               \;\bigg|\; X_{i,0} = \bx,\;
               A_{i,t} \sim \pi( \cdot | X_{i,t}),\;
               X_{i,t+1} \sim \Pr(\cdot | X_{i,t}, A_{i,t}),
               w_i = k}, \\
           Q^{(k)}\paran{\pi, \bx, a} &:= \EE \left[ \sum_{t=0}^T \gamma^t R_{i,t}
               \;\bigg|\; X_{i,0} = \bx,\; A_{i,0} = a,\;
               X_{i,t+1} \sim \Pr(\cdot | X_{i,t}, A_{i,t}), \; \right. \\
               & \hspace{8em} \bigg.
               A_{i,t+1} \sim \pi( \cdot | X_{i,t+1}), w_i = k \bigg].
        \end{align*}
       \item For any $k \ne k'$, the value functions of a given policy $\pi$ are different, that is, $V^{(k)}\paran{\pi, \bx} \ne V^{(k')}\paran{\pi, \bx}$ and $Q^{(k)}\paran{\pi, \bx, a} \ne Q^{(k')}\paran{\pi, \bx, a}$ for any $\pi\in\Pi$, $\bx\in\calX$, and $a\in\calA$.
    \end{enumerate}
\end{definition}

\begin{remark}
    Heterogeneity may also be defined in terms of MDP tuple $\calM^{(k)}$,
    or in terms of trajectory distribution $\calH^{(k)}(\pi)$.
    However, Definition \ref{def:hetero-MDP} provides a clearer framework to work with for the purpose of estimating an optimal policy that maximizes the expected return.
    Since MDPs with different $\calM^{(k)}$ or $\calH^{(k)}(\pi)$ may have the same value functions, heterogeneity defined by $\calM^{(k)}$ or $\calH^{(k)}(\pi)$ unnecessarily complicates the problem.
\end{remark}

Now we allow for heterogeneity over $i\in[N]$ on  trajectories $\braces{ X_{i,t}, A_{i,t}, R_{i,t} }$, $t\in[T]$, $i\in[N]$.
By the $K$-Hetero MDP assumption in Definition \ref{def:hetero-MDP},  the latent variable $w_i$ partitions $N$ trajectories into $K$ groups: $\calG = \paran{\calG^{(1)}, \cdots, \calG^{(K)}}$.
Without knowing the true groups, it is safe for now to use different action-value function $Q^{i}\paran{\pi, \bx, a}$  for different trajectory $i\in[N]$.
We have $Q^{i} \paran{\pi, \bx, a} = Q^{(k)}\paran{\pi,\bx, a}$ for all $i\in\calG^{(k)}$.
Our goal is to estimate $K$ and identify sub-populations that respond differently to a policy $\pi$.
The following Bellman consistency equation is a first-moment condition  that we will use to construct an auto-clustered estimation of the action-value function.
\begin{lemma} \label{lemma:q-bellman-cons-2}
    Under the assumptions of TH Markov and CMI,
    we have the first-moment condition of the action-value function $Q^{i}$ for any $i\in[N]$,
    \begin{gather}
        \E{ R_{i,t} + \gamma \sum_{a'\in\calA} Q^{i}\paran{\pi,X_{i,t+1}, a'} \pi\paran{a'|X_{i,t+1}} - Q^{i}\paran{\pi,X_{i,t}, A_{i,t}} \bigg | X_{i,t}, A_{i,t}} = 0,  \label{eqn:q-bellman-cons-2}  \\
        \E{ \paran{R_{i,t} + \gamma \sum_{a'\in\calA} Q^{i}\paran{\pi, X_{i,t+1}, a'} \pi(a'|X_{i,t+1}) - Q^{i}\paran{\pi, X_{i,t}, A_{i,t}}} \psi\paran{X_{i,t}, A_{i,t}} } = 0,
    \end{gather}
   where $\psi: \calX \times \calA \mapsto \RR$ is any real function supported on $\calX \times \calA$.
\end{lemma}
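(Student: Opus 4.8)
The plan is to establish the pointwise (conditional) identity \eqref{eqn:q-bellman-cons-2} first, and then to deduce the unconditional identity with the test function $\psi$ by one application of the tower property. The conditional identity is, at bottom, the one-step Bellman recursion for $Q^{i}$ combined with the data-generating assumptions; the only real work is to justify the ``restart-the-clock'' manipulation inside the definition of $Q^{i}$ and to check that the conditioning $\sigma$-algebras collapse under Markovianity.

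First I would fix $i\in[N]$ (and the corresponding $k$ with $i\in\calG^{(k)}$, so that $Q^{i}=Q^{(k)}$). Starting from the definition of $Q^{(k)}\paran{\pi,\bx,a}$ in Definition \ref{def:hetero-MDP}(ii), I would split off the first reward from the discounted return, factor $\gamma$ out of the remaining sum, and re-index it so that it begins one step later. Conditioning on $\paran{X_{i,1},A_{i,1}}$, using Markovianity (Assumption \ref{assum:generating-MDP}(a)) to discard the dependence on $X_{i,0}=\bx,A_{i,0}=a$ beyond what passes through $\paran{X_{i,1},A_{i,1}}$, and using Time-Homogeneity (Assumption \ref{assum:generating-MDP}(b)) to identify the conditional law of the shifted trajectory with that of a fresh trajectory started at $X_{i,1}$, the inner expectation over $A_{i,1}\sim\pi(\cdot\mid X_{i,1})$ becomes $\sum_{a'\in\calA}Q^{i}\paran{\pi,X_{i,1},a'}\pi\paran{a'\mid X_{i,1}}$. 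Combining this with CMI (Assumption \ref{assum:generating-MDP}(c)) for the first reward and invoking Time-Homogeneity once more to move the base time from $0$ to a generic $t$, I obtain the Bellman recursion
\begin{equation*}
Q^{i}\paran{\pi,\bx,a} = r_i\paran{\bx,a} + \gamma\,\E{ \textstyle\sum_{a'\in\calA} Q^{i}\paran{\pi,X_{i,t+1},a'}\,\pi\paran{a'\mid X_{i,t+1}} \,\Big|\, X_{i,t}=\bx,\, A_{i,t}=a }.
\end{equation*}

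Next I would evaluate the conditional expectation appearing in \eqref{eqn:q-bellman-cons-2}. By CMI, $\E{R_{i,t}\mid X_{i,t},A_{i,t}}=r_i\paran{X_{i,t},A_{i,t}}$; by Markovianity the conditional law of $X_{i,t+1}$ given $\paran{X_{i,t},A_{i,t}}$ and the whole history equals its law given $\paran{X_{i,t},A_{i,t}}$ alone, so the $\gamma$-term in \eqref{eqn:q-bellman-cons-2} matches the $\gamma$-term of the displayed recursion evaluated at $\paran{\bx,a}=\paran{X_{i,t},A_{i,t}}$. Hence
\begin{equation*}
\E{ R_{i,t} + \gamma\textstyle\sum_{a'\in\calA} Q^{i}\paran{\pi,X_{i,t+1},a'}\,\pi\paran{a'\mid X_{i,t+1}} \,\Big|\, X_{i,t},A_{i,t} } = Q^{i}\paran{\pi,X_{i,t},A_{i,t}},
\end{equation*}
and since $Q^{i}\paran{\pi,X_{i,t},A_{i,t}}$ is a function of the conditioning variables, subtracting it yields exactly \eqref{eqn:q-bellman-cons-2}. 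For the second identity in the statement, write $\delta_{i,t}$ for the temporal-difference term in the bracket; because $\psi\paran{X_{i,t},A_{i,t}}$ is measurable with respect to $\sigma\paran{X_{i,t},A_{i,t}}$, the tower property together with \eqref{eqn:q-bellman-cons-2} gives $\E{\delta_{i,t}\,\psi\paran{X_{i,t},A_{i,t}}} = \E{\psi\paran{X_{i,t},A_{i,t}}\,\E{\delta_{i,t}\mid X_{i,t},A_{i,t}}} = 0$.

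I expect the main obstacle to be the first step: making the clock-restart rigorous, i.e. interchanging the (possibly infinite, when $T\to\infty$) sum with the conditional expectations and verifying that the tail return $\sum_{s\ge t+1}\gamma^{\,s-t-1}R_{i,s}$, conditioned on $\paran{X_{i,t+1},A_{i,t+1}}$ and integrated against $\pi$, reproduces $\sum_{a'\in\calA}Q^{i}\paran{\pi,X_{i,t+1},a'}\pi\paran{a'\mid X_{i,t+1}}$. This is where Markovianity and Time-Homogeneity do the essential work, and where boundedness of $r_i$ together with $\gamma\in[0,1)$ supplies the integrability needed for Fubini/dominated convergence (with the obvious boundary adjustment of the recursion in the finite-$T$ case). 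Everything after that is bookkeeping.
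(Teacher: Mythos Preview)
Your proposal is correct and follows the standard derivation of the Bellman consistency equation: split off the first reward, use Markovianity and Time-Homogeneity to restart the clock and recognize the tail as $Q^{i}$ evaluated at the next state, then apply CMI and the tower property. The paper does not provide an explicit proof of this lemma in the supplementary material (it is treated as a standard MDP identity), so your argument is, if anything, more detailed than what the authors supply; the approach you take is the canonical one and there is nothing meaningfully different to compare.
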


\section{Estimation} \label{sec:estimation}

The Auto-Clustered Policy Evaluation (ACPE)
and
the Auto-Clustered Policy Iteration (ACPI) are summarized in Algorithm \ref{alg:ACPE} and \ref{alg:ACPI}, respectively. 
Algorithm \ref{alg:ACPE} deals with value evaluation of a given policy on the $K$-Hetero MDP. 
Algorithm \ref{alg:ACPI} obtains optimal policies for different groups by using Algorithm \ref{alg:ACPE} and group-wise policy improvement.

\subsection{Auto-Clustered Policy Evaluation} \label{sec:heteroPE}

Let $Q\paran{\bx, a; \bbeta^{\pi,i}}$ denote a model for $Q^{i}(\pi,\bx, a)$ indexed by $\bbeta^{\pi,i}= \brackets{(\bbeta_{1}^{\pi,i})^\top, \cdots, (\bbeta_{M}^{\pi,i})^\top}^\top$, $i \in [N]$,
where $\bbeta_{m}^{\pi,i} \in \RR^J$ is the coefficient for the $m$-th action in the action set $\calA$.
We assume that the map $\bbeta^{\pi,i} \mapsto Q\paran{\bx, a; \bbeta^{\pi,i}}$ is differentiable everywhere for each fixed $\bx$, $a$ and $\pi$.
We stack all individual coefficients in a long vector $\bbeta^\pi = \brackets{ (\bbeta^{\pi,1})^{\top} \cdots (\bbeta^{\pi,N})^{\top}}^\top$  of length $JMN$ and define a trajectory indicator
$\bLam^i = [ \bzero  \cdots \bzero \; \bI \; \bzero \cdots \bzero ]$
such that $\bbeta^{\pi,i} = \bLam^i \bbeta^\pi$.
Let $\nabla\, Q\paran{\bx, a; \bLam^i \bbeta^\pi}$ denote the gradient of $Q\paran{\bx, a; \bLam^i \bbeta^\pi}$ with respect to $\bbeta^\pi$.
With observed trajectories $\braces{ X_{i,t}, A_{i,t}, R_{i,t} }$,
we define a sample-version quantity:
\begin{align}   \label{eqn:G}
    \bG\paran{\pi, \bbeta^\pi}
    & = \frac{1}{NTJ} \sum_{i=1}^{N} \sum_{t=1}^{T} \bigg( R_{i,t}
    + \gamma \sum_{a\in\calA} Q\paran{X_{i,t}, a; \bLam^i\bbeta^\pi } \pi\paran{a|X_{i,t+1}} \nonumber \\
    & \hspace{4em} - Q\paran{X_{i,t}, A_{i,t}; \bLam^i\bbeta^\pi } \bigg) \cdot \nabla\, Q\paran{X_{i,t}, A_{i,t}; \bLam^i \bbeta^\pi}.
\end{align}

By Lemma \ref{lemma:q-bellman-cons-2}, under certain mild conditions, there exist some $\mathring\bbeta^\pi$ that satisfy $\E{\bG^\pi\paran{\mathring\bbeta^\pi} } = \bzero$.
Thus, an estimator of $\mathring\bbeta^\pi$ can be obtained by minimizing $\bG\paran{\pi, \bbeta^\pi}^\top\bG^\pi\paran{\pi, \bbeta^\pi}$.
At the same time, the true model coefficients should satisfy $\mathring\bbeta^{\pi,i}= \mathring\bbeta^{\pi,j}$ if $i,j \in \calG^{(k)}$, for any $k \in [K]$, under the $K$-hetero MDP assumption.
When the grouping $\calG^{(k)}$ is unknown, it is crucial and beneficial to encourage grouping individuals with the same model coefficients together to achieve efficient estimation.
Given a penalty function $p:\RR \mapsto \RR$, the {\PE} algorithm estimates $\mathring\bbeta^\pi$ for a given policy $\pi$ as:
\begin{equation} \label{eqn:Q-ls-beta}
    \hat\bbeta^\pi
    = \underset{\bbeta=[(\bbeta^1)^\top\cdots(\bbeta^N)^\top]^\top}{\arg\min} \;
    \brackets{ \bG\paran{\pi,\bbeta}^\top \bG\paran{\pi,\bbeta} + \frac{1}{N^2}\sum_{1\le i<j \le N} p\paran{(JM)^{-1/2}\norm{\bbeta^i - \bbeta^j}_2, \lam } }.
\end{equation}
The choices of the approximation model for $Q$ function and the penalty function $p$  will be presented in the sequel.
The fusion-type penalty shrinks some of the pairs $\hat\bbeta^{\pi,i}-\hat\bbeta^{\pi,j}$ to zero, based on which we can partition the sample into subgroups simultaneously when we estimate the coefficients.
The choice of tuning parameter $\lam$ is given in Theorem \ref{thm:feasible-in-prob}.

The {\PE} requires a class of models for the $Q$ function indexed by parameter $\bbeta^\pi$.
We use a basis function approximation.
Let $\bphi(\cdot) = \brackets{\phi_1(\cdot),\cdots,\phi_J(\cdot)}^\top$ be a vector of pre-specified basis functions such as Gaussian basis function or splines.
We allow $J$ to grow with the sample size to reduce the bias of the resulting estimates.
Given a policy $\pi$, the $Q$ function becomes
\begin{equation}  \label{eqn:Q-hetero-glm}
    Q\paran{X_{i,t}, A_{i,t}; \bLam^i \bbeta^\pi} \approx   Z_{i,t}^\top \bLam^i \bbeta^\pi,
\end{equation}
where $Z_{i,t} := \bz\paran{X_{i,t}, A_{i,t}}$ and
\begin{equation} \label{eqn:z}
     \bz\paran{\bx, a}
    := \brackets{ \bphi(\bx)^\top \bbone(a=1), \cdots, \bphi(\bx)^\top \bbone(a=M) }^\top.
\end{equation}
Plugging \eqref{eqn:Q-hetero-glm} in \eqref{eqn:G}, we have
\begin{equation}   \label{eqn:G-sieve}
    \bG\paran{\pi,\bbeta^\pi}
    = \frac{1}{NTJ} \sum_{i=1}^{N} \sum_{t=1}^{T} (\bLam^i)^\top Z_{i,t} \paran{ R_{i,t} - \paran{  Z_{i,t} - \gamma U^\pi_{i,t+1} }^\top \bLam^i  \bbeta^\pi },
\end{equation}
where $U^\pi_{i,t+1} := \bu\paran{\pi, X_{i,t+1}}$ and
\begin{equation}  \label{eqn:u}
    \bu\paran{\pi, \bx}
    := \brackets{ \bphi(\bx)^\top \pi(1|\bx), \cdots, \bphi(\bx)^\top  \pi(M|\bx) }^\top.
\end{equation}
% By Lemma \ref{lemma:q-bellman-cons-2}, under certain mild conditions, there exist some $\mathring\bbeta  $ that satisfy $\E{\bG^\pi\paran{\mathring\bbeta} } = \bzero$.

The {\PE } also requires a penalty function $p\paran{\cdot}$.
We consider concave penalties satisfying Assumption \ref{assum:penalty} that can produce unbiased estimates.
Examples includes SCAD \citep{fan2001variable} and MCP \citep{zhang2010nearly}, which are defined respectively as
\begin{align}
    p_{\eta}^{MCP}(t,\lambda) & = \lambda \int_0^t \paran{1-x/(\eta\lambda)}_+ dx, \quad \eta > 1,  \label{eqn:pen-MCP} \\
    p_{\eta}^{SCAD}(t,\lambda) & = \lambda \int_0^t \min\braces{1, \paran{\gamma-x/\lambda}_+/(\gamma-1)}dx, \quad \eta > 2, \label{eqn:pen-SCAD}
\end{align}
where $\eta$ is a parameter that controls the concavity of the penalty function.
In particular, both penalties converge to the $L_1$ penalty as $\eta \rightarrow \infty$.

The procedure of ACPE is summarized in Algorithm \ref{alg:ACPE}.
Under a semi-parametric $Q$ function approximation, we obtain $\hat \bbeta^\pi  $ by solving \eqref{eqn:Q-ls-beta} with instantiated $\bG\paran{\pi,\bbeta}$ given in \eqref{eqn:G-sieve} and a concave penalty $p\paran{\cdot}$ satisfying Assumption \ref{assum:penalty}.
Then, the individual-wise value functions can be estimated by
\begin{equation} \label{eqn:Q-V-indiv-est}
    \hat Q^{i} \paran{\pi, \bx, a} = \bz\paran{\bx, a}^\top \hat \bbeta^{\pi,i}, \quad\text{and}\quad
    \hat V^{i} \paran{\pi, \bx, a} = \bu\paran{\pi, \bx}^\top \hat \bbeta^{\pi,i},  \quad\text{for}\quad i \in [N],
\end{equation}
where $\bz\paran{\bx, a}$ and $\bu\paran{\pi, \bx}$ are defined in \eqref{eqn:z} and \eqref{eqn:u}, respectively.

\begin{algorithm}[ht!]
    \DontPrintSemicolon
    \KwInput{Sample trajectories $\braces{X_{i,t}, A_{i,t}, R_{i,t}, X_{i, t+1}}$, $t\in[T], i\in[N]$, a policy $\pi\paran{a \mid \bx}$ to be evaluated, and a tuning parameter $\lam$. }
    \KwOutput{Individual $Q$ coefficients $\hat\bbeta^{\pi,i}$ for $i\in[N]$, number of groups $K$, a partition of $N$ trajectories, $\calG = \braces{\calG^{(1)}, \cdots, \calG^{(K)}}$ and group coefficients $\hat\btheta^{\pi,(k)}$, $k\in[K]$.}

    Estimate $\hat\bbeta^{\pi}$ by solving %\eqref{eqn:Q-ls-beta} 
    $$\hat\bbeta^\pi
    = \underset{\bbeta=[(\bbeta^1)^\top\cdots(\bbeta^N)^\top]^\top}{\arg\min} \;
    \brackets{ \bG\paran{\pi,\bbeta}^\top \bG\paran{\pi,\bbeta} + \frac{1}{N^2}\sum_{1\le i<j \le N} p\paran{(JM)^{-1/2}\norm{\bbeta^i - \bbeta^j}_2, \lam } }, $$
    with instantiated $\bG\paran{\pi,\bbeta}$ given in \eqref{eqn:G-sieve} and a concave penalty $p\paran{\cdot}$ satisfying Assumption \ref{assum:penalty}. Two candidates for $p\paran{\cdot}$ are given in  \eqref{eqn:pen-MCP} and \eqref{eqn:pen-SCAD}.
    
    Let $\mat{\hat\bbeta^{\pi} } = \brackets{\hat\bbeta^{\pi,1}, \cdots, \hat\bbeta^{\pi,N}}$ be the matricization of $\hat\bbeta^{\pi}$.

    Apply a chosen clustering algorithm to cluster $N$ trajectories to $K$ groups based on $\braces{\hat\bbeta^{\pi,i}}_{i\in[N]}$ and obtain the $N\times K$ group membership matrix $\hat \bW$.

   Calculate the group-wise coefficients by 
   $$\hat\btheta^{\pi,(k)} =  \mat{\hat\bbeta^\pi}\,\hat\bw_{\cdot k} / \hat N^{(k)}, $$
   where $\hat N^{(k)} = \hat\bw_{\cdot k}^\top\hat\bw_{\cdot k}$ estimates the number of trajectories in $\calG^{(k)}$.

    \caption{Auto-Clustered Policy Evaluation (ACPE)}
    \label{alg:ACPE}
\end{algorithm}

Given estimated $\hat \bbeta^{\pi,i}$, we can estimate the number of groups $K$ and the $N\times K$ group membership matrix
$\hat \bW = \brackets{w_{i,k}}_{1\le i\le N, 1\le k \le K}$ by using any chosen clustering algorithm.
The $k$-th group coefficient is estimated as
\begin{equation} \label{eqn:theta-hat}
    \hat\btheta^{\pi,(k)}
    = \mat{\hat\bbeta^\pi}\hat\bw_{\cdot k} / \hat N^{(k)},
\end{equation}
where $\hat N^{(k)} = \hat\bw_{\cdot k}^\top\hat\bw_{\cdot k}$ estimates the number of trajectories in $\calG^{(k)}$ and $\mat{\hat\bbeta^\pi} \in \RR^{MJ\times N}$ is the matricization of $\hat\bbeta^\pi$.
When $N$ or $T$ are sufficiently large, by the oracle property in Theorem \ref{thm:feasible-in-prob}, we have $\Pr\paran{ \hat\bbeta^{\pi,i} = \hat\bbeta^{\pi,j}} \rightarrow 1$ if $i$ and $j$ belong to the same group.
The group coefficients $\hat\btheta^{\pi, (k)}$, $1\le k \le K$, can be chosen as the distinct values of $\hat\bbeta^{\pi,i}$, $i \in [N]$.
Accordingly, we have $\Pr\paran{ \hat K = K} \rightarrow 1$.
The value functions for $\calG^{(k)}$ can be estimated by
\begin{equation} \label{eqn:Q-V-group-est}
    \hat Q^{(k)} \paran{\pi, \bx, a}
    = \bz\paran{\bx, a}^\top \hat\btheta^{\pi, (k)}, \quad\text{and}\quad
    \hat V^{(k)}\paran{\pi,\bx}
    = \bu\paran{\pi, \bx}^\top \hat\btheta^{\pi, (k)}, \quad\text{for}\quad 1\le k \le K,
\end{equation}
where $\bz\paran{\bx, a}$ and $\bu\paran{\pi, \bx}$ are defined in \eqref{eqn:z} and \eqref{eqn:u}, respectively.
Let $\calR$ be a reference distribution on the covariate space $\calX_k$ of the $k$-th group.
The integrated value function for group $k$ is estimated by
\begin{equation}  \label{eqn:V-group-int-est}
    \hat V^{(k)}_\calR \paran{\pi}
    = \int \hat V^{(k)}\paran{\pi, \bx} \calR \paran{d\bx}.
    % = \int \sum_{ a\in\calA} \hat Q^{\pi,(k)}\paran{\bx, a} \pi\paran{a \mid \bx; \balpha} \calR \paran{d\bx}.
\end{equation}

\subsection{Auto-Clustered Policy Improvement}

For the purpose of finding the optimal policy, we focus on the policy class defined in \eqref{eqn:policy-param}.
Because of the heterogeneity across different sub-populations, we expect their respect optimal polices are different.
For each group $k$, its policy is denoted as $\pi\paran{\balpha^{(k)}}$, where the group-wise policy is indexed by parameter $\balpha^{(k)}$, $k\in[K]$.

The idea of Auto-Clustered Policy Iteration (ACPI) is to use ACPE (Algorithm \ref{alg:ACPE}) to estimate the value of any policy, obtain a $K$-partition of $N$ trajectories, and maximize the integrated state-value function over the class of policies \eqref{eqn:policy-param} for each group $k\in [K]$.
%For each group $k$, group-wise policies $\pi\paran{\balpha^{(k)}}$ are improved to maximize integrated value \eqref{eqn:V-group-int-est}, according to line 7 - 9 in Algorithm \ref{alg:ACPI}.
The procedure is summarized in Algorithm \ref{alg:ACPI}.

\begin{algorithm}[ht!]
    \DontPrintSemicolon
    \KwInput{Sample trajectories $\braces{\paran{X_{i,t}, A_{i,t}, R_{i,t}, X_{i,t+1}}}$, $t\in[T], i\in[N]$, and tuning parameter $\lam$. }
    \KwOutput{A partition of $N$ trajectories into $K$ groups, and an optimal policy for each group $k\in [K]$ in parametric family \eqref{eqn:policy-param}, with parameter $\balpha^{(k)}$.}

    At step $s = 1$, we initialize the number of groups $K=1$ (the value of $K$ will be updated immediately in the ACPE step)
    and policy coefficients $\alpha_s^{{(k)}}$, $1\le k \le K$.

    \While{Not converged}
    {
        \For{each group $1 \le k \le K$}{
            Let $\pi_s^{(k)} = \pi\paran{ \alpha_s^{{(k)}} }$, we apply ACPE (Algorithm \ref{alg:ACPE}) to obtain $\hat\bbeta^{\pi_s^{(k)},1}, \cdots, \hat\bbeta^{\pi_s^{(k)},N}$.
        }

        Apply a chosen clustering algorithm to cluster $N$ trajectories to $K$ groups based on $\braces{\hat\bbeta^{\pi_s^{(k)}, i}}_{i\in[N]}$ and obtain the $N\times K$ group membership matrix $\hat \bW$.

        Calculate the group-wise coefficients
        $$
            \hat\btheta^{\pi_s^{(k)},(k)} =  \mat{\hat\bbeta^{\pi_s^{(k)}}}\,\hat\bw_{\cdot k} / \hat N^{(k)},
        $$
        where $N^{(k)} = \hat\bw_{\cdot k}^\top\hat\bw_{\cdot k}$ estimates the number of trajectories in $\calG^{(k)}$.

        \For{each group $1 \le k \le K$}{
            Let $\pi_s^{(k)} = \pi\paran{\balpha^{(k)}_s}$ be the fixed index, we update
            $$
                \balpha^{(k)}_{s+1} \leftarrow \arg\underset{\balpha}{\max}\; \hat V_{\calR}\paran{\balpha}
            $$
            where the integrated value function $\hat V_{\calR}\paran{\balpha}$ is defined as
            $$
            \hat V_{\calR}\paran{\balpha}
                = \int \bu\paran{\pi\paran{\balpha}, \bx}^\top\hat\btheta^{\pi_s^{(k)},(k)}  d\calR(\bx), 
            $$
            and $\pi\paran{\balpha}$ and $\bu\paran{\pi\paran{\balpha}, \bx}$ are defined in \eqref{eqn:u} and \eqref{eqn:policy-param}, respectively.

            %Set $\balpha_k^{s+1} = \balpha_k^s + \delta^s \nabla_{\balpha} \hat V_{\calR}\paran{\pi\paran{\balpha}}$ where $\nabla_{\balpha} \hat V_{\calR}\paran{\pi\paran{\balpha}}$.% is given in \eqref{eqn:v-diff-explicit}.
            %Update $$
        }
        $s \leftarrow s + 1$.
    }
    \caption{Auto-Clustered Policy Iteration (ACPI)}
    \label{alg:ACPI}
\end{algorithm}

%!TEX root = 0-main.tex

\section{Theory} \label{sec:theory}

In this section, we lay out the theoretical framework for ACPE and ACPI in a double-divergence structure, which allows either sample size $N$ or decision horizon $T$ to go to infinity.
We establish asymptotic properties of the offline estimation of the coefficients, value functions and optimal policy.
Let $\bW=\brackets{\bw_{1 \cdot} \cdots \bw_{N \cdot}}^\top \in \RR^{N\times K}$ be the group membership matrix where $w_{ik} = 1$ for $i\in \calG^{(k)}$ and $w_{ik}=0$ otherwise.
We first present theoretical results for the {\em oracle estimator} of ACPE when the true $\bW$ is {\em known a priori}.
Section \ref{sec:oracle} derives the oracle properties, including $\ell_2$ and $\ell_\infty$ convergence rates, when the true sub-population information $\braces{\calG^{(k)}, 1\le k \le K}$ is known.
Then, in Section \ref{sec:feasible}, we establish the $\ell_2$ convergence rate and the asymptotic normality of the parameters estimated from \eqref{eqn:Q-ls-beta} when $\bW$  is {\em not known}.
Lastly, in Section \ref{sec:theory-policy}, we present that the estimated optimal policies for each group $\calG_k$ convergences to their respective true optimal policies.
As a convention, we denote the true, oracle and estimated values as $\mathring\beta$, $\tilde\beta$, and $\hat\beta$, respectively.
The proofs of all theorems are provided in the supplementary materials.

\subsection{Properties of the oracle estimators} \label{sec:oracle}

We denote $\btheta^{\pi} = \brackets{ {\btheta^{\pi, (1)}}^{\top} \cdots {\btheta^{\pi,(K)}}^\top}^{\top}$ as the group coefficient matrix where $\btheta^{\pi,(k)}$ is the coefficient of the $Q$ function for $\calG^{(k)}$.
When the $\bW$ is known, the oracle estimator refers to the estimator that minimizes the objective function in \eqref{eqn:Q-ls-beta} with respect to $\btheta^{\pi}$ without penalty, that is
\begin{equation} \label{eqn:Q-beta-ora}
    \tilde\btheta^{\pi} = \underset{\btheta}{\arg\min} \; \tilde\bG\paran{\pi, \btheta}^\top \tilde\bG\paran{\pi, \btheta},
\end{equation}
where $\tilde\bG\paran{\pi, \btheta}
= \bG\paran{\pi,(\bW\otimes\bI_{JM})\btheta}$, or equivalently,
\begin{equation}
    \tilde\bG\paran{\pi, \btheta}
    = \frac{1}{NTJ} \sum_{i=1}^{N} \sum_{t=1}^{T} \paran{(\tilde\bLam^{i})^\top Z_{i,t} R_{i,t} - (\tilde\bLam^{i})^\top  \paran{Z_{i,t} Z_{i,t}^\top - \gamma Z_{i,t} (U_{i,t+1}^{\pi})^\top} \tilde\bLam^{i} \btheta},
\end{equation}
and $\tilde\bLam^{i} = \bLam^{i} (\bW \otimes \bI_{JM})$,
$\tilde\bbeta^\pi = (\bW\otimes\bI_{JM}) \tilde\btheta^\pi$,
$Z_{i,t}$ and $U^{\pi}_{i,t}$ are defined in \eqref{eqn:z} and \eqref{eqn:u}, respectively.

We first detail the assumptions that are necessary for the convergence and asymptotic normality of estimates obtained by the ACPE.

\begin{definition}[$\kappa$-Smooth functions]
    Let $f\paran{\cdot}$ be an arbitrary function on $\calX\in\RR^{p}$.
    For a $p$-tuple $\balpha = (\alpha_1, \cdots, \alpha_p)$ fo non-negative integers, let $D^{\alpha}$ denote the differential operator:
    \begin{equation*}
        D^{\balpha} f(\bx) = \frac{{\partial}^{\norm{\balpha}_1} f(\bx)}{\partial x_1^{\alpha_1}\cdots \partial x_p^{\alpha_p}},
    \end{equation*}
    where $\bx=(x_1, \cdots, x_p)^\top$.
    The class of $\kappa$-smooth functions is defined as
    \begin{equation*}
        \calH(\kappa, c) = \braces{ f: \underset{\norm{\balpha}_1\le \floor{\kappa}}{\sup} \; \underset{\bx\in\calX}{\sup} \; \abs{D^{\balpha} f(\bx)} \le c; \text{ and } \underset{\norm{\balpha}_1\le \floor{\kappa}}{\sup}\; \underset{\bx_1, \bx_2\in\calX, \bx_1 \ne \bx_2}{\sup} \; \frac{\abs{D^{\balpha} f(\bx_1) - D^{\balpha} f(\bx_2)}}{\norm{\bx_1-\bx_2}_2^{\kappa - \floor{\kappa}}}}.
    \end{equation*}
\end{definition}

\begin{assumption} \label{assum:r-q-func}
    There exists some $\kappa$, $c>0$ such that $r(\bx, a)$, $\Pr(\bx' | \bx, a)$ belong to the class of $\kappa$-smooth function of $\bx$ for any $a\in\calA$ and $\bx'\in\calX$.
\end{assumption}

Lemma 1 in \cite{shi2020statistical} shows that under Assumption \ref{assum:r-q-func}, there exists some constant $c' > 0$ such that $Q\paran{\pi, \bx, a}$ belongs to the class of $\kappa$-smooth function of $\bx$ for any policy $\pi$ and any action $a\in\calA$.
This implies that the $Q$ function has bounded derivatives up to order $\floor{\kappa}$.
To approximate the $Q$ function, we  restrict our attention to two particular type of Sieve basis functions. 
Let $BSpline(J, r)$ denote a tensor-product B-spline basis of dimension $J$ and of degree $r$ on $[0,1]^p$ and
$Wav(J, r)$ denote a tensor-product Wavelet basis of regularity $r$ and dimension $J$ on $[0,1]^p$.
The sieve $\bphi_J$ we use is either $BSpline(J, r)$ or $Wav(J, r)$ with $r > \max\paran{\kappa, 1}$.
This, together with Assumption \ref{assum:r-q-func}, implies that there exists a set of vectors $\{\mathring\btheta^{\pi}_{a}\}_{a\in\calA}$ that satisfies $\underset{\bx\in\calX, a\in\calA}{\sup}\abs{Q(\pi, \bx, a) - \Phi_J^\top(\bx) \mathring\btheta^{\pi}_{a}} = \bigO{J^{-\kappa/p}}$. 

%\begin{assumption} \label{assum:basis}
%    Let $BSpline(J, r)$ denote a tensor-product B-spline basis of dimension $J$ and of degree $r$ on $[0,1]^p$ and
%    $Wav(J, r)$ denote a tensor-product Wavelet basis of regularity $r$ and dimension $J$ on $[0,1]^p$.
%    The sieve $\bphi_J$ is either $BSpline(J, r)$ or $Wav(J, r)$ with $r > \max\paran{\kappa, 1}$.
%\end{assumption}

%\begin{remark}
%    \elynn{Remark on basis functions. Gaussian basis function offers the highest degree of flexibility.}
%\end{remark}

\begin{assumption} \label{assum:density}
    Let $\mu$ be the limiting density function of $X_{i,t}$ and $\nu_0$ be the initial state density. 
    The density function $\mu$ and $\nu_0$ are uniformly bounded away from zero and infinity on $\calX$.
\end{assumption}

\begin{assumption}  \label{assum:min-eigen-of-E-Sigma} %\label{assum:MC-T-bound}
    There exists some constant $C_1 > 0$ such that
    \begin{equation*}
        \underset{\pi\in\Pi}{\inf} \;
        \lam_{\min}\paran{T^{-1}\sum_{t=0}^{T-1} \E{ Z_{i,t}Z_{i,t}^\top - \gamma^2 \bar\bu(\pi, X_{i,t}, A_{i,t}) \bar\bu(\pi, X_{i,t}, A_{i,t})^\top} } \ge C_1,
    \end{equation*}
    where
    $\bar\bu(\pi, \bx, a) = \E{ \bu\paran{\pi, X_{i,t+1}} \,\mid\, X_{i,t}=\bx, A_{i,t}=a }$
    and
    $\lam_{\min}(\cdot)$ denotes the minimal eigenvalue of a matrix.
\end{assumption}

\begin{assumption}   \label{assum:geo-ergodic} %\label{assum:MC-T-infty}
Under the setting that $T\rightarrow\infty$, the Markov chain $\braces{X_{i,t}}_{t\ge 0}$ is geometrically ergodic, that is, there exists some function $f(\bx)$ on $\calX$ and some constant $c \le 1$ such that $\int_{\bx\in\calX} f(\bx)\mu(\bx) d\bx < + \infty$ and
\begin{equation*}
    \norm{P_t(\cdot | \bx) - \mu(\bx)}_{TV} \le f(\bx) \rho^t, \qquad \forall t \ge 0,
\end{equation*}
where $\mu(\cdot)$ is the limiting density function and $\norm{\cdot}_{TV}$ denotes the total variation norm.
\end{assumption}

Assumption \ref{assum:density} is weak, it does not require the limiting density function $\mu$ to be equal to the limit state density $\nu_0$. 
When $T$ is finite, Assumption \ref{assum:min-eigen-of-E-Sigma} guarantees that $\tilde\bSigma$ defined in \eqref{eqn:tilde-sigma} is invertible with probability approaching one when $N \rightarrow \infty$.
When $T$ is infinite, Assumption \ref{assum:geo-ergodic} (i) guarantees that the matrix $\E{\tilde\bSigma}$ is invertible and (ii) enables us to derive matrix concentration inequalities for $\tilde\bSigma$.
Together, they imply that $\tilde\bSigma$ is invertible with probability approaching one.
Assumption \ref{assum:min-eigen-of-E-Sigma} and \ref{assum:geo-ergodic} are needed to show the existence of a unique $\{ \mathring{\btheta}^\pi_a \}_{a\in\calA}$ uniformly over $\Pi$.
They are mild and can be verified empirically by checking that certain data-dependent matrices are invertible.
More discussion on similar assumptions can be found in \cite{luckett2019estimating} and \cite{shi2020statistical}.

Let $\mathring\btheta^{\pi} = \brackets{ \mathring{\btheta}^{\pi, (1) \top} \;\cdots\; \mathring{\btheta}^{\pi,(K) \top} }^{\top}$ where $\mathring\btheta^{\pi,(k)} = \brackets{\mathring\btheta_1^{\pi,(k) \top} \;\cdots\; \mathring\btheta_M^{\pi,(k) \top}}^\top$ for $k\in[K]$. 
Proposition \ref{thm:beta-l2-convg} and Theorem \ref{eqn:ora-asymp-normal} in the supplemental material establish the $\ell_2$ convergence and the asymptotic normality of the oracle estimator $\tilde\btheta^{\pi}$.
Corollary \ref{thm:int-val-asymp-normal}  in the supplemental material shows the the asymptotic normality of the integrated value of a given policy for each sub-population. 
They imply that the oracle group-wise estimator converges at a rate of $\paran{T N P_k}^{-1/2}$ for $\forall k\in [K]$.
The estimators in \cite{luckett2019estimating,jiang2016doubly,thomas2015high} typically converge at a rate of $\paran{N P_k}^{-1/2}$ and are not suitable for settings when one sub-population has only few trajectories.
Our estimation procedure also aggregates information along horizon $T$ and achieves same rates as in \cite{shi2020statistical} when group membership $\bW$ are known a priori.

To finally establish the large sample theory for the {\PE} when group membership are unknown, we need a stronger {\em uniform consistency} result for the oracle estimators when either $N \rightarrow \infty$ or $T \rightarrow \infty$.

\begin{theorem}  [Oracle estimator uniform convergence]   \label{eqn:ora-uniform-conv}
    Suppose Assumption \ref{assum:r-q-func} -- \ref{assum:geo-ergodic} hold.
    Let $N_{min} = \underset{1\le k \le K}{\min} N_k$ and $N_{max} = \underset{1\le k \le K}{\max} N_k$.
    If $K=\smlo{N_{\min} T}$,
   $J \ll \sqrt{N_{\min} T} / \log(N_{\min} T)$,
   $J^{-\kappa / p} \ll 1/\sqrt{N_{\max} T}$,
    %and $\sqrt{J \log(N_{\min}T)/N_{\min}T}. \rightarrow 0$,
    we have with probability at least $1 - 2JMK (N_{\min} T)^{-2} - \bigO{(N_{\min}T)^{-2}}$ that
    \begin{equation*}
       \underset{\pi \in \Pi}{\sup}\norm{ \tilde\btheta^{\pi} - \mathring\btheta^{\pi} }_{\infty}  \le \phi_{NT},
    \end{equation*}
    where
    \begin{equation} \label{eqn:psi-NT}
        \phi_{NT} = 6 c C^{-1} \frac{N_{\max}}{N_{\min}} \sqrt{2 J \frac{\log(N_{\min}T)}{N_{\min}T} }.
    \end{equation}
\end{theorem}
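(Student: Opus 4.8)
\quad The estimating function $\btheta\mapsto\tilde\bG(\pi,\btheta)$ in the objective \eqref{eqn:Q-beta-ora} is affine: setting
\[
\tilde\bSigma(\pi)=\frac{1}{NTJ}\sum_{i=1}^N\sum_{t=1}^T(\tilde\bLam^i)^\top\big(Z_{i,t}Z_{i,t}^\top-\gamma Z_{i,t}(U^\pi_{i,t+1})^\top\big)\tilde\bLam^i,
\quad
\tilde\bb(\pi)=\frac{1}{NTJ}\sum_{i=1}^N\sum_{t=1}^T(\tilde\bLam^i)^\top Z_{i,t}R_{i,t},
\]
we have $\tilde\bG(\pi,\btheta)=\tilde\bb(\pi)-\tilde\bSigma(\pi)\btheta$. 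Because $\tilde\bLam^i$ is a block selector placing its $JM\times JM$ identity in the slot of the (oracle-known) group $k(i)$ of trajectory $i$, the matrix $\tilde\bSigma(\pi)=\bdiag\big(\tilde\bSigma^{(1)}(\pi),\dots,\tilde\bSigma^{(K)}(\pi)\big)$ is block diagonal, with $\tilde\bSigma^{(k)}(\pi)=\frac{1}{NTJ}\sum_{i\in\calG^{(k)}}\sum_t\big(Z_{i,t}Z_{i,t}^\top-\gamma Z_{i,t}(U^\pi_{i,t+1})^\top\big)$. When every block is invertible, the minimizer of $\|\tilde\bG(\pi,\cdot)\|_2^2$ is its unique root $\tilde\btheta^\pi=\tilde\bSigma(\pi)^{-1}\tilde\bb(\pi)$, and affineness gives the exact error identity
\[
\tilde\btheta^{\pi,(k)}-\mathring\btheta^{\pi,(k)}=\big(\tilde\bSigma^{(k)}(\pi)\big)^{-1}\,\tilde\bG^{(k)}\big(\pi,\mathring\btheta^\pi\big),\qquad k=1,\dots,K,
\]
where $\tilde\bG^{(k)}$ is the $k$-th block of $\tilde\bG$. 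It therefore suffices to lower-bound $\sigma_{\min}(\tilde\bSigma^{(k)}(\pi))$ and upper-bound $\|\tilde\bG^{(k)}(\pi,\mathring\btheta^\pi)\|_\infty$, both uniformly over $k$ and over $\pi\in\Pi$.

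\textbf{Step 1 (uniform conditioning of the design).}\quad We have $\E{\tilde\bSigma^{(k)}(\pi)}=\frac{1}{NJ}\sum_{i\in\calG^{(k)}}\frac{1}{T}\sum_t\E{Z_{i,t}Z_{i,t}^\top-\gamma Z_{i,t}\bar\bu(\pi,X_{i,t},A_{i,t})^\top}$, and by the standard argument that the Bellman-residual design inherits a uniform minimal-singular-value bound from the symmetrized matrix in Assumption~\ref{assum:min-eigen-of-E-Sigma} (which controls $\lambda_{\min}$ of $\E{Z_{i,t}Z_{i,t}^\top-\gamma^2\bar\bu\bar\bu^\top}$ uniformly in $\pi$), we get $\sigma_{\min}\big(\E{\tilde\bSigma^{(k)}(\pi)}\big)\ge C\,N_k/(NJ)$ for a constant $C$ and every $\pi$. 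We then control the deviation $\tilde\bSigma^{(k)}(\pi)-\E{\tilde\bSigma^{(k)}(\pi)}$ in operator norm: when $T$ is fixed and $N\to\infty$, a matrix Bernstein inequality over independent trajectories; when $T\to\infty$, a blocked matrix concentration inequality using the geometric ergodicity of Assumption~\ref{assum:geo-ergodic} to decorrelate the summands along $t$. Since the policy class \eqref{eqn:policy-param} is a smooth softmax and the sieve basis is bounded, $\pi\mapsto\tilde\bSigma^{(k)}(\pi)$ is Lipschitz, so $\sup_{\pi\in\Pi}$ reduces to a maximum over an $\epsilon$-net whose cardinality is polynomial in $N_{\min}T$ and hence absorbed. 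The conditions $K=\smlo{N_{\min}T}$ and $J\ll\sqrt{N_{\min}T}/\log(N_{\min}T)$ force this deviation to be of smaller order than $C N_{\min}/(NJ)$, so on an event of probability $1-\bigO{(N_{\min}T)^{-2}}$ we obtain $\norm{(\tilde\bSigma^{(k)}(\pi))^{-1}}_2\le 2NJ/(C N_{\min})$ for all $k$ and all $\pi\in\Pi$.

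\textbf{Steps 2--3 (temporal-difference noise and assembly).}\quad Writing $R_{i,t}=r_i(X_{i,t},A_{i,t})+\eta_{i,t}$ and inserting the sieve expansion of $Q^i(\pi,\cdot,\cdot)$ (sup-error $\bigO{J^{-\kappa/p}}$, valid by Assumption~\ref{assum:r-q-func} and the B-spline/wavelet choice), the Bellman consistency identity of Lemma~\ref{lemma:q-bellman-cons-2} yields
\[
\tilde\bG^{(k)}(\pi,\mathring\btheta^\pi)=\frac{1}{NTJ}\sum_{i\in\calG^{(k)}}\sum_t Z_{i,t}\,(\eta_{i,t}+\xi^\pi_{i,t})+\br^{(k)}_{NT}(\pi),\qquad \norm{\br^{(k)}_{NT}(\pi)}_\infty=\bigO{J^{-\kappa/p}},
\]
with $\xi^\pi_{i,t}=r_i(X_{i,t},A_{i,t})+\gamma\sum_a\pi(a|X_{i,t+1})Q^i(\pi,X_{i,t+1},a)-Q^i(\pi,X_{i,t},A_{i,t})$ satisfying $\E{\xi^\pi_{i,t}\mid X_{i,t},A_{i,t}}=0$ by Lemma~\ref{lemma:q-bellman-cons-2} and bounded, and $\E{\eta_{i,t}\mid X_{i,t},A_{i,t}}=0$ with $\eta_{i,t}$ sub-Gaussian. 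Coordinate by coordinate, $\{Z_{i,t}(\eta_{i,t}+\xi^\pi_{i,t})\}_t$ is then a sub-Gaussian martingale-difference sequence and the trajectories are independent across $i$, so an Azuma/Freedman-type bound with tail calibrated to $(N_{\min}T)^{-2}$ controls each coordinate by a multiple of $\frac{1}{NTJ}\sqrt{N_kT\log(N_{\min}T)}$; a union bound over the $JMK$ coordinates (and, via the same Lipschitz net, over $\Pi$) produces the factor $2JMK(N_{\min}T)^{-2}$ in the probability and the $\sqrt{2\log(N_{\min}T)}$ in $\phi_{NT}$, while $J^{-\kappa/p}\ll(N_{\max}T)^{-1/2}$ makes $\br^{(k)}_{NT}(\pi)$ negligible. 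On the intersection of the two events, the error identity and block-diagonal structure give, for every $\pi$ and $k$,
\[
\norm{\tilde\btheta^{\pi,(k)}-\mathring\btheta^{\pi,(k)}}_\infty\le\norm{(\tilde\bSigma^{(k)}(\pi))^{-1}}_2\,\sqrt{JM}\,\norm{\tilde\bG^{(k)}(\pi,\mathring\btheta^\pi)}_\infty\le\frac{2NJ}{C N_{\min}}\cdot\sqrt{JM}\cdot\frac{c\sqrt{N_{\max}T\log(N_{\min}T)}}{NTJ},
\]
using $N_k\le N_{\max}$ in the numerator, $N_k\ge N_{\min}$ in the denominator, and $\|\bv\|_2\le\sqrt{JM}\,\|\bv\|_\infty$ within a block (which is why $J$, not $JMK$, appears). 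The right-hand side is a constant multiple of $\frac{\sqrt{N_{\max}}}{N_{\min}}\sqrt{J\log(N_{\min}T)/T}$, and bounding $\sqrt{N_{\max}}/N_{\min}\le (N_{\max}/N_{\min})/\sqrt{N_{\min}}$ turns it into $6cC^{-1}\frac{N_{\max}}{N_{\min}}\sqrt{2J\log(N_{\min}T)/(N_{\min}T)}=\phi_{NT}$ once the fixed constant $M$ and the numerical factors are folded into the prefactor. Taking the maximum over $k$ and the supremum over $\pi$ finishes the argument.

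\textbf{Main obstacle.}\quad The crux is carrying out Steps~1 and~2 \emph{simultaneously} over the infinite, non-compact policy class $\Pi$ and under temporal dependence when $T\to\infty$: i.i.d.\ matrix Bernstein is unavailable for $\tilde\bSigma^{(k)}(\pi)$, so one needs a mixing/blocking concentration inequality driven by Assumption~\ref{assum:geo-ergodic}, and the $\sup_\pi$ must be handled with \emph{explicit} (not merely asymptotic) high-probability constants so that the bound holds with probability exactly $1-2JMK(N_{\min}T)^{-2}-\bigO{(N_{\min}T)^{-2}}$. This is precisely the self-contained $\ell_\infty$ bound on the temporal-difference error advertised in the introduction; everything else---the affine/block-diagonal algebra, the Bellman decomposition, and the final arithmetic---is routine bookkeeping.
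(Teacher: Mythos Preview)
Your proposal is correct and follows essentially the same route as the paper: exploit the block-diagonal structure $\tilde\bSigma=\bdiag(P^{(k)}\tilde\bSigma^{(k)})$, bound $\|(\tilde\bSigma^{(k)})^{-1}\|$ via Assumption~\ref{assum:min-eigen-of-E-Sigma} plus matrix concentration (Lemma~E.2-type), split the residual $\tilde\bG^{(k)}(\pi,\mathring\btheta^\pi)$ into a martingale piece $\tilde\bzeta_1^{(k)}$ controlled coordinatewise by Bernstein/Azuma and a sieve-bias piece $\tilde\bzeta_2^{(k)}=O(J^{-\kappa/p})$, and union-bound over the $JMK$ coordinates. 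The only substantive difference is that you are more explicit about the $\sup_{\pi\in\Pi}$ via a Lipschitz/$\epsilon$-net argument, whereas the paper simply asserts the bounds hold ``uniformly on $\Pi$'' (relying on the $\pi$-uniform constants in Assumption~\ref{assum:min-eigen-of-E-Sigma} and Lemma~\ref{thm:noise}); your care here is warranted, though you should note that the softmax parameter space is non-compact, so the net must be taken in the image (the simplex-valued maps $\bx\mapsto\pi(\cdot\mid\bx)$), not in $\balpha$.
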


\begin{remark}
    By Definition \ref{def:hetero-MDP}, $N_{\max} / N_{\min} = P_{\max} / P_{\min}$ are bounded away from infinity.
    The bound \eqref{eqn:psi-NT} can be simplified to $\Op{\sqrt{ J \frac{\log(N T)}{N T} }}$.
\end{remark}

\subsection{Properties of the feasible estimator}    \label{sec:feasible}

We now study the theoretical properties of the feasible estimator when the true group membership $\bW$ is not known.
We introduce the following assumptions on the penalty function and the minimum signal difference between groups.
\begin{assumption} \label{assum:penalty}
    The penalty function $p\paran{x, \lam}$ is a symmetric function of $x$, and it is non-decreasing and concave in $x$ for $x\in [0, +\infty)$.
    Let $\rho(x) =  \lam^{-1} p\paran{x, \lam}$, there exists a constant $0 < c < \infty$ such that $\rho(0) = 0$ and $ \rho(x)$ is a constant for all $x \ge c \lam$.
    Its derivative $\rho'(x)$ exists and is continuous except for a finite number of $x$ and $\rho'(0+)=1$.
\end{assumption}

\begin{assumption} \label{assum:signal-difference}
    For $K > 2$, define the minimal difference of the common values between any pair of groups as
    \begin{equation*}
        d_{NT} = (JM)^{-1/2} \; \underset{k\ne l}{\min}\; \norm{\mathring{\btheta}_k - \mathring{\btheta}_l}_2.
    \end{equation*}
   We assume that $d_{NT} \gg \phi_{NT}$ where $\phi_{NT}$ is given in \eqref{eqn:psi-NT}.
\end{assumption}

Assumption \ref{assum:penalty} is commonly assumed in high-dimensional settings.
It is satisfied by concave penalties such as MCP \eqref{eqn:pen-MCP} and SCAD \eqref{eqn:pen-SCAD}.
Assumption \ref{assum:signal-difference} is the separability condition
on the minimum signal difference between groups that is needed to recover the true groups.

\begin{theorem}[Feasible estimator] \label{thm:feasible-in-prob}
    Suppose the conditions in Theorem \ref{eqn:ora-uniform-conv} and Assumption \ref{assum:penalty} and \ref{assum:signal-difference} hold and $K \ge 2$.
    Let $N_{min} = \underset{1\le k \le K}{\min} N_k$ and $N_{max} = \underset{1\le k \le K}{\max} N_k$.
    If $d_{NT} \ge C \lam$ and $\lam \gg \phi_{NT}$ %$\lam \gg \max\paran{N_{\min}^{-1} \sqrt{J}, \phi_{NT}}$, 
    where $C$, $d_{NT}$, and $\phi_{NT}$ are defined in Assumption \ref{assum:penalty}, \ref{assum:signal-difference}, and Equation \eqref{eqn:psi-NT}.
    Then, uniformly over $\Pi$, there exists a local minimizer $\hat\bbeta^\pi$ of the objective function $\calL_{NT}$ given in \eqref{eqn:Q-ls-beta} satisfying
    \begin{equation*}
        \Pr\paran{ \hat\bbeta^\pi = \tilde\bbeta^\pi } \rightarrow 1.
    \end{equation*}
\end{theorem}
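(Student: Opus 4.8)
The plan is to show that the oracle estimator $\tilde\bbeta^\pi = (\bW\otimes\bI_{JM})\tilde\btheta^\pi$ is, with probability tending to one, a local minimizer of the penalized objective $\calL_{NT}$ in \eqref{eqn:Q-ls-beta}. This is the standard ``oracle-reduces-to-local-minimizer'' argument for concave fusion penalties (as in \citealp{ma2017concave}), but carried out uniformly over the policy class $\Pi$ and using the uniform-convergence rate $\phi_{NT}$ from Theorem \ref{eqn:ora-uniform-conv}. First I would set up the Karush–Kuhn–Tucker-type characterization: because $\bG(\pi,\cdot)$ is affine in $\bbeta$ after plugging in the sieve form \eqref{eqn:G-sieve}, the unpenalized term $\bG^\top\bG$ is a quadratic form, and the penalty is separable over the pairwise differences $\bbeta^i-\bbeta^j$. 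I would decompose the parameter space into the ``oracle subspace'' $\calM = \{\bbeta : \bbeta^i=\bbeta^j \text{ whenever } i,j\in\calG^{(k)}\}$ and its complement, and verify the two conditions that guarantee $\tilde\bbeta^\pi$ is a strict local minimum: (i) restricted to $\calM$, $\tilde\btheta^\pi$ is the minimizer of the smooth part (this is exactly the oracle definition \eqref{eqn:Q-beta-ora}, since all within-group differences vanish so the penalty contributes a constant); and (ii) the subgradient/directional-derivative condition in directions that leave $\calM$ is strictly satisfied.

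For step (ii), the key quantitative inputs are: on one hand, Assumption \ref{assum:signal-difference} together with the uniform bound $\sup_{\pi}\norm{\tilde\btheta^\pi-\mathring\btheta^\pi}_\infty\le\phi_{NT}$ ensures that, since $d_{NT}\gg\phi_{NT}$, the oracle between-group differences $(JM)^{-1/2}\norm{\tilde\bbeta^{\pi,i}-\tilde\bbeta^{\pi,j}}_2$ are all at least $c\lam$ (using $d_{NT}\ge C\lam$), so the concave penalty has \emph{zero derivative} there — the penalty exerts no force pulling distinct groups together. On the other hand, for a perturbation that splits a true group, one must show the gradient of the smooth part $\bG^\top\bG$ (which is $\Op{\phi_{NT}}$-small at $\tilde\bbeta^\pi$, being essentially the oracle first-order condition plus approximation error) is dominated by $\rho'(0+)=1$ times $\lam$; this is where the condition $\lam\gg\phi_{NT}$ is used. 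Combining, any direction leaving $\calM$ strictly increases $\calL_{NT}$, so $\tilde\bbeta^\pi$ is a local minimizer. Finally, $\Pr(\hat\bbeta^\pi=\tilde\bbeta^\pi)\to1$ follows by intersecting the event from Theorem \ref{eqn:ora-uniform-conv} (probability $\ge 1-2JMK(N_{\min}T)^{-2}-\bigO{(N_{\min}T)^{-2}}\to1$) with the event that $\tilde\bSigma$ is invertible (Assumptions \ref{assum:min-eigen-of-E-Sigma}, \ref{assum:geo-ergodic}), and the uniformity over $\Pi$ is inherited from Theorem \ref{eqn:ora-uniform-conv} since all bounds there are $\sup_\pi$-bounds.

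I expect the main obstacle to be making step (ii) rigorous \emph{uniformly over $\Pi$}: the smooth part's Hessian and gradient depend on $\pi$ through $U^\pi_{i,t+1}$, so I need a uniform-in-$\pi$ lower bound on the curvature of $\bG^\top\bG$ transverse to $\calM$ and a uniform-in-$\pi$ upper bound $\Op{\phi_{NT}}$ on its gradient at the oracle point. The curvature bound should follow from Assumption \ref{assum:min-eigen-of-E-Sigma}'s uniform lower eigenvalue condition on $T^{-1}\sum_t\E{Z_{i,t}Z_{i,t}^\top-\gamma^2\bar\bu\bar\bu^\top}$ plus a matrix-concentration argument (as already needed for Theorem \ref{eqn:ora-uniform-conv}), but transferring it to the transverse subspace of $\calM$, where the effective design is $(\bW\otimes\bI_{JM})$-projected, requires checking that the projection does not kill the minimal eigenvalue — this uses $P_{\min}$ bounded away from zero. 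A secondary technical point is handling the nonconvexity of $p$: one must argue the constructed point is a \emph{local} (not global) minimizer, which is the standard weakening for SCAD/MCP-type penalties, and verify the local neighborhood on which strict convexity of the smooth part holds is large enough to exclude all ``merging'' directions, which again reduces to $d_{NT}\gg\phi_{NT}$.
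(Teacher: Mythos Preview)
Your proposal is correct and follows essentially the same route as the paper: split the parameter space into the oracle subspace $\calM_\calG=\{\bbeta:\bbeta^i=\bbeta^j\text{ for }i,j\in\calG^{(k)}\}$ and its complement, show $\tilde\bbeta^\pi$ minimizes on $\calM_\calG$ (the paper's Lemma~\ref{thm:E1-ora-B-min-loss}), and then show any perturbation off $\calM_\calG$ increases the objective because the penalty contribution $\lam\rho'(0+)$ dominates the smooth-part gradient (the paper's Lemma~\ref{thm:E2-B*-min}). The paper carries out step~(ii) via a first-order Taylor expansion of $\calL(\bbeta)-\calL(\bar\bbeta)$ into a smooth term $I_1$ and a penalty term $I_2$, rather than the KKT/subgradient language you use, but these are equivalent.

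One small over-complication in your plan: the ``uniform lower bound on curvature transverse to $\calM$'' you flag as the main obstacle is not actually needed. The paper's step~(ii) is purely first-order---it compares $\calL(\bbeta)$ to $\calL(\bar\bbeta)$ directly via the gradient at an intermediate point $\bbeta^{(\iota)}$, and the comparison $I_2\ge|I_1|$ goes through using only the bounds $\norm{Z_{i,t}}_2\le c\sqrt{J}$, $\abs{\eps_{j,i,t}^\pi}$ bounded, and $\norm{\bbeta^{(\iota)}-\mathring\bbeta}_\infty\le\phi_{NT}$. No Hessian lower bound enters. So your step~(ii) can be executed more simply than you anticipate.
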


Recall that we define $\hat\bW$ as an estimator of $\bW$ that is obtained by applying any clustering method on the column vector of $\mat{\bbeta^\pi}$.
A direct conclusion from Theorem \ref{thm:feasible-in-prob} is that $ \Pr\paran{ \hat\bW = \bW } \rightarrow 1$ since the oracle estimator $\tilde\bbeta^\pi_i = \tilde\bbeta^\pi_j$ for any $i,j \in \calG^{(k)}$.

The oracle property in Theorem \ref{thm:feasible-in-prob} together with Corollary \ref{thm:int-val-asymp-normal} in Appendix \ref{appen:oracle} directly leads to the asymptotic distribution of $\hat\beta^\pi$, which is presented in the following theorem.

\begin{theorem}[Asymptotic normality of integrated value estimator] \label{thm:CI-int-value}
    Suppose the conditions in Theorem \ref{eqn:ora-uniform-conv} and \ref{thm:feasible-in-prob} hold.
    If $ J^{-\kappa / p} \ll \paran{N_{\max} T \paran{1 + \norm{\int\bphi_J(\bx)\calR(d\bx)}_2^{-2}}}^{-1/2}$,
    as either $N_{\min}\rightarrow \infty$ or $T\rightarrow \infty$, we have for any $\bx\in\calX$, % and uniformly over $\pi\in\Pi$,
    \begin{equation*}
        \begin{aligned}
            \sqrt{N T} \cdot \hat\sigma_{\calR}^{(k)}(\pi)^{-1}
            \paran{ \hat V_{\calR}^{(k)}\paran{\pi} -  V_{\calR}^{(k)}\paran{\pi} }
            & \convdist \calN\paran{0, 1} \, ,
        \end{aligned}
    \end{equation*}
    for any $k\in[K]$, where
    $$\hat\sigma_{\calR}^{(k)}\paran{\pi}^2
    = \paran{\int\bu\paran{\pi, \bx}\calR(d\bx)}^\top
    \hat\bSigma^{\pi,(k)} \hat \bOmega^{\pi,(k)} (\hat\bSigma^{\pi,(k) \top})^{-1} \paran{\int\bu\paran{\pi, \bx}\calR(d\bx)},$$
    where $\hat\bSigma^{\pi,(k)}$ and $\hat\bOmega^{\pi,(k)}$ are defined as
        \begin{align}
        \hat\bSigma^{\pi,(k)}
        & = \frac{1}{NT} \sum_{i=1}^{N} \sum_{t=0}^{T-1}
        (\hat\bLam^{i,(k)})^\top
        Z_{i,t}\paran{ Z_{i,t} - \gamma U_{i,t+1}^\pi}^\top
        \hat \bLam^{i,(k)},  \label{eqn:hat-sigma-k}\\
        \hat\bOmega^{\pi,(k)}
        & = \frac{1}{NT}  \sum_{i=1}^{N} \sum_{t=0}^{T-1} (\hat\bLam^{i,(k)})^\top Z_{i,t} Z_{i,t}^\top \hat\bLam^{i,(k)} \paran{ R_{i,t} - \paran{Z_{i,t} - \gamma U_{i,t+1}^\pi}^\top \hat\bLam^{i,(k)} \hat\btheta^{\pi} }^2, \label{eqn:hat-omega-k}
    \end{align}
    and $\hat\bLam^{i,(k)} = \bLam^i(\hat\bw_{\cdot k} \otimes \bI_{JM})$,
    $Z_{i,t}$ and $U^{\pi}_{i,t}$ are defined in \eqref{eqn:z} and \eqref{eqn:u}, respectively.
\end{theorem}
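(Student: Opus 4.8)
The plan is to transfer the asymptotic normality from the oracle estimator to the feasible estimator and then to establish consistency of the plug-in variance. The entry point is Theorem~\ref{thm:feasible-in-prob}: under the stated conditions there is a local minimizer $\hat\bbeta^\pi$ of $\calL_{NT}$ in \eqref{eqn:Q-ls-beta} with $\Pr(\hat\bbeta^\pi=\tilde\bbeta^\pi)\to 1$, uniformly over $\pi\in\Pi$. Since the clustering step is applied to the columns of $\mat{\hat\bbeta^\pi}$, and the oracle satisfies $\tilde\bbeta^{\pi,i}=\tilde\bbeta^{\pi,j}$ exactly when $i,j$ share a group $\calG^{(k)}$, this forces $\Pr(\hat\bW=\bW)\to 1$, and hence $\hat\btheta^{\pi,(k)}=\tilde\btheta^{\pi,(k)}$ and $\hat\bLam^{i,(k)}=\bLam^i(\bw_{\cdot k}\otimes\bI_{JM})$ with probability approaching one. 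On this event $\hat V_\calR^{(k)}(\pi)=\tilde V_\calR^{(k)}(\pi)$ and both $\hat\bSigma^{\pi,(k)},\hat\bOmega^{\pi,(k)}$ agree with their oracle analogues, so it suffices to prove the statement with every hatted quantity replaced by its oracle counterpart.

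For the oracle I would invoke Corollary~\ref{thm:int-val-asymp-normal} of Appendix~\ref{appen:oracle}, which gives $\sqrt{NT}\,(\sigma_\calR^{(k)}(\pi))^{-1}(\tilde V_\calR^{(k)}(\pi)-V_\calR^{(k)}(\pi))\convdist\calN(0,1)$, where $\sigma_\calR^{(k)}(\pi)^2$ is the population analogue of $\hat\sigma_\calR^{(k)}(\pi)^2$ obtained by replacing $\hat\bSigma^{\pi,(k)},\hat\bOmega^{\pi,(k)}$ by their limits. Recall the mechanism behind that corollary: writing the oracle normal equation for block $k$ and expanding around $\mathring\btheta^{\pi,(k)}$ yields $\tilde\btheta^{\pi,(k)}-\mathring\btheta^{\pi,(k)}=(\tilde\bSigma^{\pi,(k)})^{-1}\cdot\frac{1}{NTJ}\sum_{i\in\calG^{(k)}}\sum_t(\tilde\bLam^i)^\top Z_{i,t}\eta_{i,t}+\text{(bias)}+\text{(remainder)}$, where the leading term is independent across $i$ and weakly dependent across $t$ by geometric ergodicity (Assumption~\ref{assum:geo-ergodic}), so a martingale/mixing CLT applies after contracting with $\int\bu(\pi,\bx)\calR(d\bx)$ and rescaling by $\sqrt{NT}$; the remainder is negligible by the $\ell_2$ and $\ell_\infty$ rates in Proposition~\ref{thm:beta-l2-convg} and Theorem~\ref{eqn:ora-uniform-conv}. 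The role of the new hypothesis $J^{-\kappa/p}\ll(N_{\max}T(1+\|\int\bphi_J(\bx)\calR(d\bx)\|_2^{-2}))^{-1/2}$ is precisely to kill the bias after normalization: since $\sigma_\calR^{(k)}(\pi)^2$ is bounded below by a constant multiple of $\|\int\bphi_J(\bx)\calR(d\bx)\|_2^2$ using Assumption~\ref{assum:min-eigen-of-E-Sigma} (the summand $1$ in the bracket covers the regime where this norm stays away from zero), the sieve approximation bias $\bigO{J^{-\kappa/p}}$, once multiplied by $\sqrt{NT}\,(\sigma_\calR^{(k)}(\pi))^{-1}$, is $\smlo{1}$.

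It then remains to show $\hat\sigma_\calR^{(k)}(\pi)^2/\sigma_\calR^{(k)}(\pi)^2\convprob 1$, uniformly over $\Pi$. By the first-paragraph reduction we again work with the oracle versions. For $\hat\bSigma^{\pi,(k)}$ a matrix law of large numbers gives $\frac{1}{NT}\sum_{i\in\calG^{(k)}}\sum_t(\tilde\bLam^i)^\top Z_{i,t}(Z_{i,t}-\gamma U^\pi_{i,t+1})^\top\tilde\bLam^i\to\bSigma^{\pi,(k)}$, using independence across $i$ when $N\to\infty$ and geometric mixing when $T\to\infty$, with moment control from boundedness of the sieve basis and of the rewards. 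For $\hat\bOmega^{\pi,(k)}$ one additionally replaces the fitted residual $R_{i,t}-(Z_{i,t}-\gamma U^\pi_{i,t+1})^\top\tilde\bLam^i\tilde\btheta^\pi$ by $\eta_{i,t}$ at the cost of an error driven by $\|\tilde\btheta^\pi-\mathring\btheta^\pi\|_2$ and the $\bigO{J^{-\kappa/p}}$ approximation term, both handled by Cauchy--Schwarz and both $\smlo{1}$ in the relevant norm. Together with the eigenvalue lower bound of Assumption~\ref{assum:min-eigen-of-E-Sigma} (continuity of the matrix inverse at $\bSigma^{\pi,(k)}$), Slutsky's theorem combines the consistent variance with the Gaussian limit, and uniformity over $\Pi$ is inherited because every step above has a version uniform in $\pi$ over the smooth policy class \eqref{eqn:policy-param} (via Theorem~\ref{eqn:ora-uniform-conv} and uniform laws of large numbers). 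The same argument with $\int\bu(\pi,\bx)\calR(d\bx)$ replaced by $\bu(\pi,\bx)$ yields the pointwise statement for $\hat V^{(k)}(\pi,\bx)$.

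I expect the main obstacle to be the consistency of the sandwich variance estimator in the double-divergence regime: controlling $\hat\bOmega^{\pi,(k)}$ is delicate because the fitted residuals, the estimated group labels, and the temporal dependence of $\{X_{i,t}\}_t$ all enter simultaneously, and one must certify that the refined rate condition on $J$ genuinely dominates the sieve bias in the regime where $\|\int\bphi_J(\bx)\calR(d\bx)\|_2$ is permitted to shrink to zero.
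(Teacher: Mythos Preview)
Your proposal is correct and follows the paper's own route: the paper states just before Theorem~\ref{thm:CI-int-value} that ``the oracle property in Theorem~\ref{thm:feasible-in-prob} together with Corollary~\ref{thm:int-val-asymp-normal} \ldots\ directly leads to the asymptotic distribution,'' i.e., reduce to the oracle via $\Pr(\hat\bbeta^\pi=\tilde\bbeta^\pi)\to 1$ (hence $\hat\bW=\bW$ and all hatted objects equal their tildes), then read off the CLT from Corollary~\ref{thm:int-val-asymp-normal}. One small remark: Corollary~\ref{thm:int-val-asymp-normal} is already stated with the \emph{oracle plug-in} variance $\tilde\sigma_\calR^{(k)}(\pi)$ (not the population $\sigma_\calR^{(k)}$), so once $\hat\sigma_\calR^{(k)}=\tilde\sigma_\calR^{(k)}$ on the high-probability event you are finished---your third paragraph on variance consistency is re-deriving material that sits inside the proof chain of Theorem~\ref{thm:subhomo-distn} (via Lemma~\ref{thm:cov-approx}) rather than an additional step you need here.
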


Theorem \ref{thm:CI-int-value} imply that the group-wise estimator converges at a rate of $\paran{N T P_k}^{-1/2}$, $k\in [K]$.
The estimators in \cite{luckett2019estimating,jiang2016doubly,thomas2015high} typically converge at a rate of $\paran{N P_k}^{-1/2}$ and are not suitable for settings when one sub-population has only a few trajectories.
Our estimation procedure also aggregates information along horizon $T$.
The theoretical property along $T$ is obtained by treating finite and infinite $T$ separately, using the matrix concentration and martingale center limit theorem.

The dimension of the unknown parameters $\bbeta$ in  \eqref{eqn:Q-ls-beta} is $NJM$, which will diverge as sample size $N$ increases.
Without the $K$-hetero MDP assumption and the penalty term, the $\ell_2$ convergence of the parameters can be shown to be $1/\sqrt{T}$ under the TH Markov and CMI assumption, employing arguments similar to those in  \cite{luckett2019estimating} and \cite{shi2020statistical}.
The information accumulated along $N$ does not help.
In contrast, the {\PE} obtain $1/\sqrt{TNP_{\min}}$ convergence rate where $P_{\min} = \min\braces{P^{(1)}, \cdots, P^{(k)}}$.
The estimation in \cite{luckett2019estimating} is based on the state value $V(\cdot)$ and the calculation necessitates a correct specification of the behavior policy for the importance ratio that shows up in the Bellman equation of $V(\cdot)$.
In contrast, our method estimates the action value function $Q(\cdot)$ and derive the corresponding state value estimators.
As a result, we do not need to specify the behavior policy, nor do we need to estimate it from the observed dataset.
Our method can be viewed as an implicit importance weighting.

%\begin{theorem}  \label{thm:feasible-beta-asym}
%    Under the conditions in Theorem \ref{eqn:ora-asymp-normal} and \ref{thm:feasible-in-prob},
%    for any $\bnu \in \RR^{JMN}$ satisfying $J^{\kappa / p} \gg \sqrt{N_{\max} T \paran{1 + \norm{\bnu}_2^{-2}}}$, we have as either $N_{\min}\rightarrow\infty$ or $T\rightarrow \infty$,
%    \begin{equation*}
%        \sqrt{NT} \; \hat\sigma_{\beta_i}^{-1} \bnu^\top \paran{ \hat\bbeta^{\pi}_i- \mathring\bbeta^{\pi}_i }
%        \convdist \calN\paran{0, 1},
%    \end{equation*}
%    where
%    \begin{align*}
%        \hat \sigma_{\beta_i}
%        & = \bnu^\top (\hat\bw_{i\cdot}^\top \otimes\bI_{JM})  \hat\bSigma^{-1} \hat\bOmega (\hat\bSigma^\top)^{-1} (\hat\bw_{i\cdot}\otimes\bI_{JM})  \bnu \\
%        \hat\bSigma
%        & = \frac{1}{NT} \sum_{i=1}^{N} \sum_{t=0}^{T-1} (\hat\bLam^{i})^\top  Z_{i,t}\paran{ Z_{i,t} - \gamma U_{i,t+1}^\pi}^\top \hat \bLam^{i},\\
%        \hat\bOmega
%        & = \frac{1}{NT} \sum_{i=1}^{N} \sum_{t=0}^{T-1} (\hat\bLam^{i})^\top Z_{i,t} Z_{i,t}^\top \hat\bLam^{i} \paran{ R_{i,t} - \paran{Z_{i,t} - \gamma U_{i,t+1}^\pi}^\top \hat\bLam^{i} \hat\btheta^{\pi} }^2,
%    \end{align*}
%    and $\hat\bLam^{i} = \bLam^{i} (\hat\bW \otimes \bI_{JM})$,
%    $Z_{i,t}$ and $U^{\pi}_{i,t}$ are defined in \eqref{eqn:z} and \eqref{eqn:u}, respectively.
%\end{theorem}
%Asymptotic normality of the value functions similar to Corollary \ref{thm:val-asymp-normal} can be established by applying Corollary \ref{thm:feasible-beta-asym} and thus is omit here.

\subsection{Optimal policy} \label{sec:theory-policy}

In this section, we establish a convergence result for the estimated optimal policy for each homogeneous sub-population assuming that the parametric class $\Pi$ satisfies the following properties.
\begin{assumption} \label{assum:policy-to-value}
    The map $\balpha \rightarrow V_{\calR}\paran{\pi\paran{\balpha}}$ has a unique and well-separated maximum $\balpha^*$ in the interior of the support of $\balpha$, where $V_{\calR}\paran{\pi\paran{\balpha}}$ is defined in \eqref{eqn:inte-value}.
\end{assumption}

\begin{assumption}  \label{assum:policy-smooth}
    For any $\bx\in\calX$ and $a \in\calA$, we have as $\delta \downarrow 0$,
    \begin{equation*}
        \underset{\norm{\balpha_1-\balpha_2}_2 \le \delta}{\sup}\; \EE\norm{\pi\paran{a, \bx; \balpha_1} - \pi\paran{a, \bx; \balpha_1}} \rightarrow 0.
    \end{equation*}
\end{assumption}

Assumption \ref{assum:policy-to-value} requires that the true optimal decision in each state is unique (see also the Assumption A.8 of \cite{ertefaie2018constructing} and Assumption 6 of \cite{luckett2019estimating}) and is a standard assumption in M-estimation.
Assumption \ref{assum:policy-smooth} requires smoothness on the class of the polices.
The following lemma shows that the parametric class of policies $\Pi$ defined in \eqref{eqn:policy-param} satisfies Assumption \ref{assum:policy-to-value} and \ref{assum:policy-smooth}.

\begin{lemma}  \label{thm:policy-prop}
    The parametric class of policies $\Pi$ defined in \eqref{eqn:policy-param} satisfies Assumption \ref{assum:policy-to-value} and \ref{assum:policy-smooth}.
\end{lemma}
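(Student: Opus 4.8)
The plan is to check the two assumptions in turn; the smoothness condition (Assumption~\ref{assum:policy-smooth}) is routine, and the bulk of the care goes into the maximizer condition (Assumption~\ref{assum:policy-to-value}).

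\emph{Smoothness.} First I would observe that, with the reference category pinned, $\balpha \mapsto \pi\paran{j,\bx;\balpha}$ is the composition of the linear map $\balpha \mapsto \paran{\bx^\top\balpha_1,\dots,\bx^\top\balpha_{M-1}}$ with the softmax map $\mathrm{sm}\colon \RR^{M-1}\to\Delta^{M-1}$. The Jacobian of $\mathrm{sm}$ has entries $\mathrm{sm}_j\paran{\delta_{jk}-\mathrm{sm}_k}$, each bounded by $1/4$ in absolute value, so $\balpha\mapsto\pi\paran{j,\bx;\balpha}$ is $C^\infty$ with $\norm{\nabla_\balpha\pi\paran{j,\bx;\balpha}}_2 \le C_0\norm{\bx}_2$ for a universal $C_0$. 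By the mean value theorem $\abs{\pi\paran{j,\bx;\balpha_1}-\pi\paran{j,\bx;\balpha_2}} \le C_0\norm{\bx}_2\norm{\balpha_1-\balpha_2}_2$ for all $\bx,\balpha_1,\balpha_2$; since the covariate space $\calX$ is compact (the sieve is constructed on $[0,1]^p$), $B\defeq\sup_{\bx\in\calX}\norm{\bx}_2<\infty$, and the supremum in Assumption~\ref{assum:policy-smooth} is at most $C_0 B\delta\to0$ as $\delta\downarrow0$ (with dominated convergence if $\bx$ is random). This also shows $\balpha\mapsto\pi\paran{a\mid\bx;\balpha}$ is jointly continuous in $\paran{\balpha,\bx}$, which I will reuse below.

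\emph{Unique, well-separated, interior maximum.} Next I would establish continuity of $\balpha\mapsto V_\calR\paran{\pi\paran{\balpha}}$. Using \eqref{eqn:inte-value}, $V_\calR\paran{\pi\paran{\balpha}}=\int\sum_{a\in\calA}Q\paran{\pi\paran{\balpha},\bx,a}\pi\paran{a\mid\bx;\balpha}\calR(d\bx)$; the on-policy $Q$ is uniformly bounded (Lemma~1 of \cite{shi2020statistical} with Assumption~\ref{assum:r-q-func}) and, because $\gamma<1$ makes the Bellman operator a contraction, depends continuously on $\pi$; combined with the continuity of the weights from the previous step, dominated convergence gives continuity of $\balpha\mapsto V_\calR\paran{\pi\paran{\balpha}}$ on the (compact) support of $\balpha$, hence existence of a maximizer $\balpha^*$. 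For uniqueness I would invoke identifiability of the baseline-category logit: whenever $\int\bx\bx^\top\calR(d\bx)$ is nonsingular — which holds since the reference law is non-degenerate on $\calX$ under Assumption~\ref{assum:density} — equality of $\bx^\top\balpha_j$ and $\bx^\top\balpha_j'$ for $\calR$-a.e.\ $\bx$ forces $\balpha_j=\balpha_j'$, so $\balpha\mapsto\pi\paran{\cdot;\balpha}$ is injective; together with the standard M-estimation premise (as in Assumption~A.8 of \cite{ertefaie2018constructing} and Assumption~6 of \cite{luckett2019estimating}) that $V_\calR$ is maximized by a single policy within $\Pi$, this yields a unique $\balpha^*$. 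Well-separatedness is then automatic, since for each $\epsilon>0$ the quantity $\sup_{\norm{\balpha-\balpha^*}_2\ge\epsilon}V_\calR\paran{\pi\paran{\balpha}}$ is the maximum of a continuous function over a compact set and hence lies strictly below $V_\calR\paran{\pi\paran{\balpha^*}}$; interiority I would obtain by taking the support of $\balpha$ to be a compact set chosen large enough that the maximizer is not on the boundary.

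\emph{Main obstacle.} I expect the delicate step to be the uniqueness (and interiority) half of Assumption~\ref{assum:policy-to-value}: the smoothness estimate is mechanical and well-separatedness comes for free once uniqueness is in hand, but uniqueness of $\balpha^*$ genuinely cannot follow from the form of $\Pi$ alone — it requires that the value landscape be non-flat near its peak. My plan deals with this by reducing it to two inputs that do hold in this setting, namely identifiability of the logit map (non-degeneracy of $\int\bx\bx^\top\calR(d\bx)$ via Assumption~\ref{assum:density}) and the customary M-estimation hypothesis that the population-optimal policy in $\Pi$ is unique; I would state explicitly that interiority is ensured by the choice of a sufficiently large compact parameter space rather than derived from the parametrization. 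Everything else — the contraction argument for continuity of $Q^\pi$ in $\pi$ and the softmax Lipschitz bound — is bookkeeping.
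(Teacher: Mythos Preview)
The paper does not supply a proof of this lemma --- it is stated in the main text without argument and no proof appears in the supplementary material --- so there is nothing against which to compare your approach directly. Your verification of Assumption~\ref{assum:policy-smooth} is correct and is the natural argument: the softmax Jacobian bound together with compactness of $\calX$ gives the required equicontinuity, and nothing more is needed there.

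You have, however, put your finger on a genuine issue with the lemma as written. Assumption~\ref{assum:policy-to-value} is a statement about the map $\balpha\mapsto V_\calR\paran{\pi\paran{\balpha}}$, and whether this map has a unique, well-separated, interior maximizer depends on the underlying MDP and on $\calR$, not merely on the functional form of $\Pi$ (if all actions are value-equivalent, $V_\calR$ is constant in $\balpha$). Your workaround --- assume the population-optimal policy in $\Pi$ is unique and then use injectivity of the baseline-category logit to pull uniqueness back to $\balpha$ --- is the right way to salvage the claim, but it is not a derivation from the policy class alone; you are re-importing the very hypothesis the lemma purports to verify. The honest statement, which is effectively what you arrive at, is that $\Pi$ satisfies Assumption~\ref{assum:policy-smooth} outright, and that for Assumption~\ref{assum:policy-to-value} the parametrization is identifiable, so uniqueness at the policy level (which must be assumed separately, as the paper's own discussion preceding the lemma concedes by pointing to analogous assumptions in \cite{ertefaie2018constructing} and \cite{luckett2019estimating}) transfers to uniqueness at the parameter level. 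Your compactness argument for well-separatedness is then valid, and your caveat that interiority is secured by choice of the parameter domain rather than derived is the correct reading.
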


Theorem \ref{thm:opt-policy-conv} establishes that the estimated optimal policy for each homogeneous sub-population converges in probability to the true optimal policy over $\pi$
and that the estimated value of the estimated optimal policy converges to the true value of the estimated optimal policy.

\begin{theorem} \label{thm:opt-policy-conv}
    Suppose the conditions in Theorem  \ref{eqn:ora-uniform-conv} and \ref{thm:feasible-in-prob} and Assumption \ref{assum:policy-to-value}, \ref{assum:policy-smooth} hold.
    Let $\hat\balpha^{(k)\,*} = \arg\max_{\balpha} \hat V_{\calR}^{(k)}\paran{\pi(\alpha)}$ and
    $\balpha^{(k)\,*} = \arg\max_{\balpha} V_{\calR}^{(k)}\paran{\pi(\alpha)}$, we have as either $N_{\min}\rightarrow\infty$ or $T\rightarrow \infty$,
    \begin{enumerate}[label=(\roman*)]
        \item $\norm{\hat\balpha^{(k)\,*} - \balpha^{(k)\,*}}_2 \convprob 0$.
        \item $\abs{V_\calR\paran{\pi(\hat\balpha^{(k)\,*})} - V_\calR\paran{\pi(\balpha^{(k)\,*})}} \convprob 0$.
    \end{enumerate}
\end{theorem}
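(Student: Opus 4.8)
The plan is to read part (i) as a standard consistency statement for an M-estimator whose (random) criterion is the estimated group-wise integrated value $\balpha\mapsto \hat V_\calR^{(k)}(\pi(\balpha))$, and part (ii) as its corollary via continuity of the limiting criterion. The two ingredients of the classical argmax argument are: (a) uniform (in $\balpha$) convergence of $\hat V_\calR^{(k)}(\pi(\balpha))$ to the population criterion $V_\calR^{(k)}(\pi(\balpha))$; and (b) the fact that this population criterion has a well-separated maximum at $\balpha^{(k)*}$, which for the softmax class $\Pi$ of \eqref{eqn:policy-param} is exactly what Assumption~\ref{assum:policy-to-value} asserts and Lemma~\ref{thm:policy-prop} verifies. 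Throughout I fix a group index $k$ and work on the event $\{\hat\bW=\bW\}$ (up to relabeling), which by Theorem~\ref{thm:feasible-in-prob} and the remark following it has probability tending to one; on this event the feasible group-$k$ estimator coincides with the oracle one, $\hat\btheta^{\pi,(k)}=\tilde\btheta^{\pi,(k)}$, simultaneously for all $\pi\in\Pi$.

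First I would establish (a). Theorem~\ref{eqn:ora-uniform-conv} gives $\sup_{\pi\in\Pi}\norm{\tilde\btheta^{\pi,(k)}-\mathring\btheta^{\pi,(k)}}_\infty\le\phi_{NT}$ with probability tending to one, and the sieve bound stated just before Assumption~\ref{assum:density}, together with $V^{(k)}(\pi,\bx)=\sum_a Q^{(k)}(\pi,\bx,a)\pi(a\mid\bx)$, gives $\sup_{\pi\in\Pi}\abs{V_\calR^{(k)}(\pi)-\paran{\int\bu(\pi,\bx)\,\calR(d\bx)}^\top\mathring\btheta^{\pi,(k)}}=\bigO{J^{-\kappa/p}}$. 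Since $\hat V_\calR^{(k)}(\pi)=\paran{\int\bu(\pi,\bx)\,\calR(d\bx)}^\top\hat\btheta^{\pi,(k)}$ by \eqref{eqn:V-group-int-est}, Hölder's inequality yields
\begin{equation*}
\sup_{\pi\in\Pi}\abs{\hat V_\calR^{(k)}(\pi)-V_\calR^{(k)}(\pi)}\;\le\; \sup_{\pi\in\Pi}\norm{\textstyle\int\bu(\pi,\bx)\,\calR(d\bx)}_1\cdot\phi_{NT}+\bigO{J^{-\kappa/p}},
\end{equation*}
and $\norm{\int\bu(\pi,\bx)\,\calR(d\bx)}_1\le\int\norm{\bphi_J(\bx)}_1\,\calR(d\bx)$, which for the tensor-product B-spline and wavelet sieves grows slowly enough that, under the dimension conditions already imposed in Theorems~\ref{eqn:ora-uniform-conv} and~\ref{thm:CI-int-value}, the right-hand side is $\op{1}$. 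Composing with $\balpha\mapsto\pi(\balpha)\in\Pi$ gives $\sup_{\balpha}\abs{\hat V_\calR^{(k)}(\pi(\balpha))-V_\calR^{(k)}(\pi(\balpha))}\convprob 0$.

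Given (a) and (b), part (i) is routine. By Lemma~\ref{thm:policy-prop} the map $\balpha\mapsto V_\calR^{(k)}(\pi(\balpha))$ is continuous with unique well-separated maximizer $\balpha^{(k)*}$, so for every $\epsilon>0$ there is $\delta>0$ with $\sup_{\norm{\balpha-\balpha^{(k)*}}_2\ge\epsilon}V_\calR^{(k)}(\pi(\balpha))\le V_\calR^{(k)}(\pi(\balpha^{(k)*}))-\delta$; on the event where the Step-(a) uniform deviation is below $\delta/2$, the defining inequality $\hat V_\calR^{(k)}(\pi(\hat\balpha^{(k)*}))\ge \hat V_\calR^{(k)}(\pi(\balpha^{(k)*}))$ forces $\norm{\hat\balpha^{(k)*}-\balpha^{(k)*}}_2<\epsilon$, and that event has probability tending to one. (Continuity of $\balpha\mapsto\hat V_\calR^{(k)}(\pi(\balpha))$, from Assumption~\ref{assum:policy-smooth}, guarantees the argmax is attained; otherwise run the same argument with an approximate maximizer.) Part (ii) then follows from (i): with $\balpha\mapsto V_\calR^{(k)}(\pi(\balpha))$ continuous (Assumption~\ref{assum:policy-smooth}, boundedness of the rewards, and dominated convergence), the continuous mapping theorem gives $V_\calR^{(k)}(\pi(\hat\balpha^{(k)*}))\convprob V_\calR^{(k)}(\pi(\balpha^{(k)*}))$, which is the stated display (with $V_\calR$ read as the group-$k$ value $V_\calR^{(k)}$); and if one instead wants ``estimated versus true value of the estimated policy,'' $\abs{\hat V_\calR^{(k)}(\pi(\hat\balpha^{(k)*}))-V_\calR^{(k)}(\pi(\hat\balpha^{(k)*}))}$ is bounded above by the Step-(a) supremum and hence also $\op{1}$.

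The only genuinely technical step is (a): upgrading the coordinate-wise oracle bound of Theorem~\ref{eqn:ora-uniform-conv}---already uniform over the infinite policy class $\Pi$, which is precisely why that theorem is stated the way it is---to a uniform bound on the scalar value functionals, while keeping the sieve dimension $J$ in the regime where $\norm{\int\bu(\pi,\bx)\calR(d\bx)}_1\cdot\phi_{NT}\to0$ and $J^{-\kappa/p}\to0$. Everything else (the well-separated-maximum argument and the continuous mapping step) is textbook M-estimation boilerplate, and both limiting regimes $N_{\min}\to\infty$ and $T\to\infty$ are covered simultaneously because $\phi_{NT}$ and $J^{-\kappa/p}$ vanish in each.
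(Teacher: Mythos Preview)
Your proposal is correct and follows essentially the same route as the paper: establish $\sup_{\balpha}\bigl|\hat V_\calR^{(k)}(\pi(\balpha))-V_\calR^{(k)}(\pi(\balpha))\bigr|\convprob 0$ via the oracle property plus uniform-in-$\pi$ consistency of $\tilde\btheta^{\pi,(k)}$, invoke the well-separated maximum (Assumption~\ref{assum:policy-to-value}) for the argmax consistency in~(i), and then apply continuity of $\balpha\mapsto V_\calR^{(k)}(\pi(\balpha))$ with the continuous mapping theorem for~(ii). The only cosmetic differences are that the paper pairs the uniform deviation with the $\ell_2$ bound of Proposition~\ref{thm:beta-l2-convg} rather than your $\ell_1$--$\ell_\infty$ H\"older step on top of Theorem~\ref{eqn:ora-uniform-conv}, and for~(ii) it invokes the equicontinuity statement Proposition~\ref{thm:equicont} rather than arguing continuity of the population value directly via dominated convergence; both choices land on the same conclusion with the same conditions.
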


%!TEX root = 0-main.tex

\section{Computation} \label{sec:computation}

%\subsection{Optimization algorithm}

%\cite{chi2015splitting}

The optimization problem of \eqref{eqn:Q-ls-beta} is challenging because of the coupling of $\bbeta_i$ and $\bbeta_j$ in the penalty term.
This problem is similar to that in \cite{chi2015splitting} and can be solved by the alternating direction method of multipliers (ADMM) \citep{boyd2011distributed},
the alternating minimization algorithm (AMA) \citep{tseng1991applications},
and the general iterative shrinkage and thresholding method \citep{gong2013iterative}.
All three approaches employ variable splitting to handle the shrinkage penalties in \eqref{eqn:Q-ls-beta}.
In this section, we summarize the ADMM procedure to solve \eqref{eqn:Q-ls-beta}; readers are referred to \cite{boyd2011distributed,chi2015splitting,gong2013iterative} and references therein for more information on alternative optimization algorithms.
For brevity, we suppress the superscript $\pi$ in $\bbeta^\pi$ in this section.

We first recast \eqref{eqn:Q-ls-beta} as an equivalent constrained problem:
\begin{eqnarray}
\min_{\bbeta, \bdelta} & & \calL\paran{\bbeta, \bdelta}
             :=  \bG\paran{\pi, \bbeta}^\top \bG\paran{\pi, \bbeta}
             + \frac{1}{N^2}\sum_{1\le i<j \le N} p\paran{(JM)^{-1/2}\norm{\bdelta_{ij}}_2, \lam }
             \label{eqn:constrain-opt-1} \\
s.t. & & \bdelta_{ij} = \bbeta_i - \bbeta_j, \quad \forall\{i, j\}: 1 \leq i < j \leq N, \nonumber
\end{eqnarray}
where $\bdelta = \brackets{\bdelta_{ij}^\top, i < j}^\top$.
Applying the augmented Lagrangian method (ALM), the solution of \eqref{eqn:constrain-opt-1} can be obtained by minimizing
\begin{equation} \label{eqn:alm}
\calL_{AL}\paran{\bbeta, \bdelta, \bnu}
:=
\calL\paran{\bbeta, \bdelta}
+ \frac{1}{N^2} \sum_{i < j} \langle \bnu_{ij}, \bbeta_i - \bbeta_j - \bdelta_{ij} \rangle
+ \frac{\rho}{2JMN^2} \sum_{i < j} \norm{  \bbeta_i - \bbeta_j -  \bdelta_{ij} }_2^2,
\end{equation}
where $\bnu = \brackets{\bnu_{ij}^\top, i<j}^\top$ is a vector of Lagrangian multipliers and $\rho$ is a non-negative tuning parameter.
ADMM minimizes the augmented Lagrangian \eqref{eqn:alm} by iteratively optimizing over blocks of variables $\bbeta$, $\bdelta$, and the dual parameter $\bnu$.
Specifically, we start at initial values $\paran{\bbeta^0, \bdelta^0, \bnu^0}$ and update $\paran{\bbeta^s,\bdelta^s,\bnu^s}$ at the $s$-th iteration as follows.
\begin{enumerate}[leftmargin=*,label={\sc Step }\arabic*.]
\item Given $\paran{\bdelta^s,\bnu^s}$, update $\bbeta^{s+1}$ by solving $\bbeta$ from $\nabla_{\bbeta}\;\calL_{AL}\paran{\bbeta, \bdelta^s, \bnu^s} = 0$.
\item Given $\bbeta^{s+1}$, update $\bdelta^{s+1}$ using the analytical forms, for MCP, with
\begin{equation*}
    \bdelta_{ij}^{s+1} = \begin{cases}
        \frac{\calS\paran{\bbeta_i^{s+1} - \bbeta_j^{s+1} + \rho^{-1} \bnu_{ij}^s, \lambda/\rho}}{1 - 1/(\gamma \rho)} &\text{if } \|\bbeta_i^{s+1} - \bbeta_j^{s+1} + \rho^{-1} \bnu_{ij}^s\| \leq \gamma \lambda \\
        \bbeta_i^{s+1} - \bbeta_j^{s+1} + \rho^{-1} \bnu_{ij}^s &\text{if } \|\bbeta_i^{s+1} - \bbeta_j^{s+1} + \rho^{-1} \bnu_{ij}^s\| > \gamma \lambda
    \end{cases}
\end{equation*}
and, for SCAD, with
\begin{equation*}
    \bdelta_{ij}^{s+1} = \begin{cases}

        \calS\paran{\bbeta_i^{s+1} - \bbeta_j^{s+1} + \rho^{-1} \bnu_{ij}^s, \lambda/\rho} &\text{if } \|\bbeta_i^{s+1} - \bbeta_j^{s+1} + \rho^{-1} \bnu_{ij}^s\| \leq \lambda + \lambda / \rho \\

        \frac{\calS\paran{\bbeta_i^{s+1} - \beta_j^{s+1} + \rho^{-1} \bnu_{ij}^s,\gamma \lambda / ((\gamma - 1)\rho)}}{1 = 1 / ((\gamma - 1)/\rho)} &\text{if } \lambda + \lambda / \rho < \|\bbeta_i^{s+1} - \bbeta_j^{s+1} + \rho^{-1} \bnu_{ij}^s\| \leq \gamma \lambda \\

        \bbeta_i^{s+1} - \bbeta_j^{s+1} + \rho^{-1} \bnu_{ij}^s &\text{if } \|\bbeta_i^{s+1} - \bbeta_j^{s+1} + \rho^{-1} \bnu_{ij}^s)\| > \gamma \lambda

    \end{cases},
\end{equation*}
where $\calS\paran{x, c} = {\rm sign}(x) \paran{\abs{x} - c}_+$ is the soft thresholding rule and $\paran{x}_+ = x$ if $x>0$ and $0$ otherwise.
\item Update the dual parameter  $\bnu_{ij}^{s+1} = \bnu_{ij}^s + \rho (\bbeta_i^{s+1} - \bbeta_j^{s+1} - \bdelta_{ij}^{s+1})$.
\end{enumerate}
The iteration terminates when the norm of the primal residual is smaller than some pre-specified small tolerance $\epsilon$, that is, when $\|\bbeta_i - \bbeta_j - \bdelta_{ij}\| < \epsilon$.
The algorithmic convergence of ADMM has been established in \cite{boyd2011distributed} and \cite{chi2015splitting}.

%\begin{remark}
%    A practical note is that each $\bnu_{ij}$ is a vector, which will cause trouble in the first step when it has a closed-form solution. However, if all $\nu_{ij}$ are concatenated into a big vector, this closed-form solution can be written. In this case we can simply implement a ``difference" matrix, e.g. $\Delta = \{(\be_i - \be_j)^s\}_{1 \leq i \leq j \leq N}$ where $\be_i$ is an N-dimensional vector whose $i$ th coordinate is 1 and all others zero. And our objective function will be then in terms of matrix - vector product form instead of summation form. However, when the dimension is very large, such concatenation will become infeasible as the dimension of $\Delta$ will explode. In such case, we need to do iterative methods for continuously differentiable functions (i.e. gradient descent) to find an optimal $\bbeta$ or an accurate approximation for such $\bbeta$ if resource-constrained.
%\end{remark}

%\subsection{Tuning parameter $\lam$ and $K$ selection}

%!TEX root = 0-main.tex

\section{Simulations} \label{sec:simul}

In this section, we compare the performance of {\PE} and {\PI} with their mean-value-based counterparts (i.e., policy evaluation and iteration) on simulated data.
We generate the initial state variable $X_{i,1}$ from a standard normal distribution $\calN\paran{\bzero_2, \bI_2}$.
The available action set is $\calA=\braces{0, 1}$.
The system evolves according to
\begin{equation*}
    X_{i, t+1} =
    \begin{bmatrix}
        0.75 (2 A_{i,t} - 1)  & 0 \\
        0 & 0.75 (1 - 2 A_{i,t})
    \end{bmatrix}
    X_{i,t} + \beps_{i,t},
\end{equation*}
where $\beps_{i,t} \overset{i.i.d}{\sim} \calN\paran{\bzero, \bI_2/4}$.
The data-generating actions are i.i.d.\ Bernoulli random variables with expectation $0.5$ and are independent of $\bx_{i,t}$ for any $t\ge 1$.
%The behavior action $A_{i, t}$ depends on policies; in particular, we consider three types of policies: random policy, fixed policy and optimal policy.
We consider $K=2$ homogeneous groups.
The immediate reward $R_{i,t}$ is defined by
\begin{equation*}
    R_{i,t} = X_{i,t}^\top \bb_k  - 0.25 (2 A_{i,t} - 1), \quad\text{for}\quad \forall i \in \calG_k,\; k \in 1,2,
\end{equation*}
where $\bb_1 = [2, -1]$ and $\bb_2 = [-2, 1]$.
% $\bb_1 \ne \bb_2 \in \RR^2$ % are real coefficient vectors. % and $\eta_{i,t} \overset{i.i.d}{\sim} \calN\paran{0, 0.05}$.
Note that $\bb_k$ is not the coefficient of Q function.
Unless otherwise specified, we use $\gamma = 0.6$ for discount factor, the MCP \eqref{eqn:pen-MCP} with $\eta=1.5$ for penalization throughout all experiments.
We present results for the consistency and asymptotic distribution in policy estimation and solving for optimal policies.

\subsection{Coefficients of the value function of a given policy \texorpdfstring{$\pi$}{pi} } \label{sec:simul-PE1}

The target policy $\pi$ to be evaluated is specified as %  \attn{may need to be changed}
\begin{equation*}
    \pi\paran{a | \bx} =
    \begin{cases}
        0, & x_1 > 0 \text{ and } x_2 > 0; \\
        1, & \text{otherwise},
    \end{cases}
\end{equation*}
where $x_i$ denotes the $i$-th element of a vector $\bx$.
% For each setting,   we try different values of tuning parameters $\lam$.
For each group, we simulate $N_k = 100$ trajectories of length $T=10$.
Thus, we have in total $N=200$ trajectories.
Recall that we use $Q$ function model \eqref{eqn:Q-hetero-glm} with features arranged in \eqref{eqn:z}. 
The parameter  $\bbeta^i$ contains $M$ blocks of coefficients and each block corresponds to one action from $\calA=\{a_1, \cdots, a_M\}$. 
In this simulation setting, we have $\bbeta^i = [\bbeta_1^{i \top}, \bbeta_2^{i \top}]^\top \in \RR^{4}$ where $\bbeta_1^{i}$ and $\bbeta_2^{i} \in \RR^2$ are coefficients associated with action $a_1 = 0$ and $a_2=1$, respectively. 
Figure~\ref{fig:random_a} plots the $\hat\bbeta_1^i$ and $\hat\bbeta_2^i$ for all $200$ trajectories.
We further run K-means algorithms to cluster $\{\bbeta^i\}_{i\in[N]}$ into two groups and calculate the cluster centroids $\hat\btheta_1^{(k)} = [\hat\btheta_1^{(k) \top}, \hat\btheta_2^{(k) \top}]^\top$ for $k=1, 2$. 
The centroids $\hat\btheta_1^{(k)}$ and $\hat\btheta_2^{(k)}$ for $k=1,2$ are also plotted in the figure as the red dots. 
The left and right columns of Figure~\ref{fig:random_a} corresponds to using $\lambda=0.1$ or $0.05$, respectively, in the MCP \eqref{eqn:pen-MCP} penalization. 

As shown in Figure~\ref{fig:random_a}, the proposed algorithm is able to recover mostly correct coefficients with an appropriate choice of $\lam$ even when the group centroids are relatively close to each other.
The tuning parameter $\lam$ controls the ``focus of the group.''
Smaller $\lam$ encourages heterogeneity, that is, different values for different trajectories, while large $\lam$ enforce homogeneity.
When $\lambda$ is too small, we lose efficiency by grouping trajectories together.
When $\lambda$ is too big, the problem reduces to homogeneous policy evaluation with the same coefficients and hence loses the heterogeneity that we are seeking.

\begin{figure}[ht!]
  \centering
  \begin{subfigure}[b]{0.44\textwidth}
  \includegraphics[width=1\textwidth]{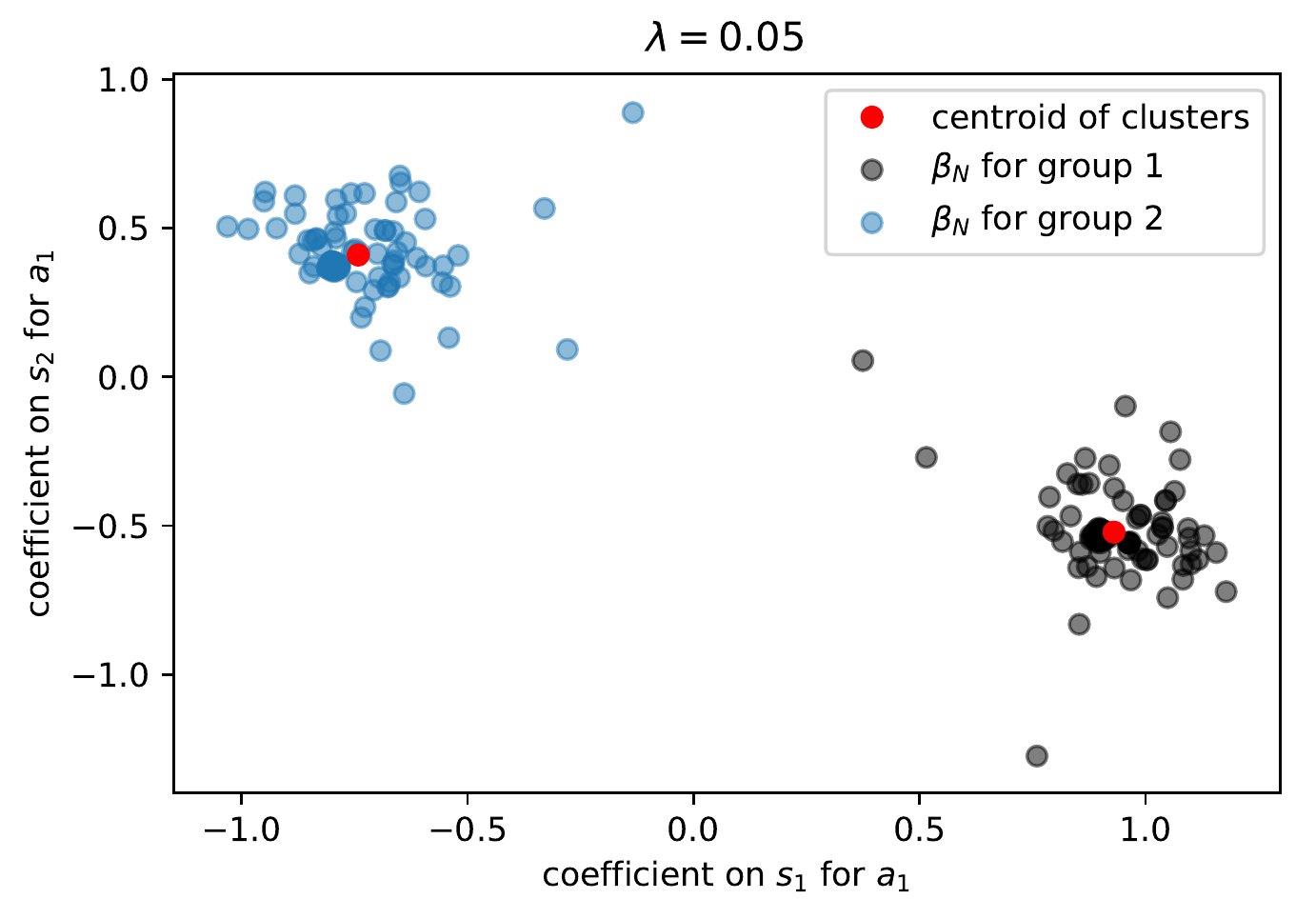}
  \caption{$\{ \hat\bbeta_1^i \}_{i\in [N]}$ for action $a_1=0$ with $\lambda = 0.05$}
  \label{fig:far1}
  \end{subfigure}
  \hspace{4ex}
  \begin{subfigure}[b]{0.44\textwidth}
  \includegraphics[width=1\textwidth]{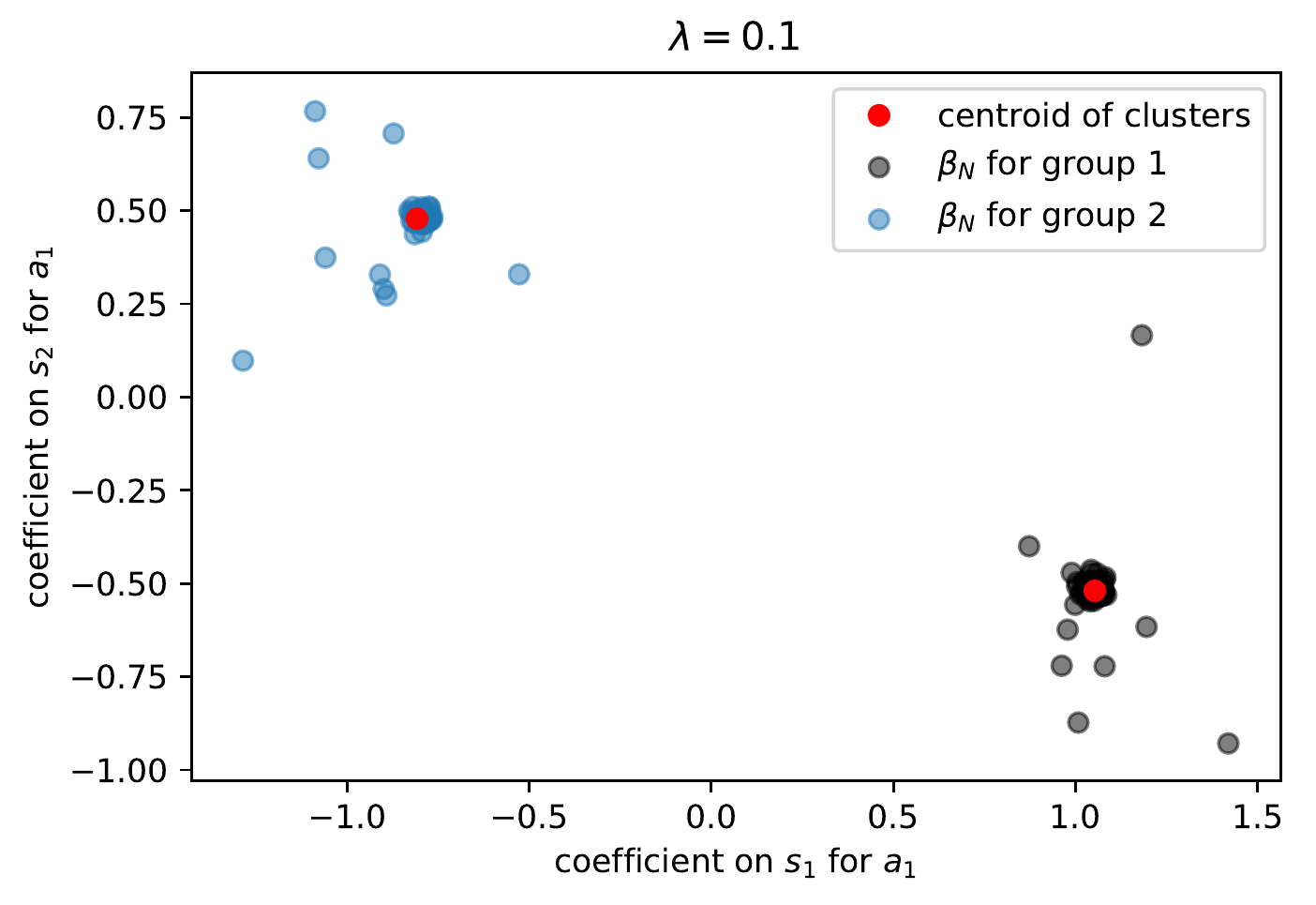}
  \caption{$\{ \hat\bbeta_1^i \}_{i\in [N]}$ for action $a_1=0$ with $\lambda = 0.1$}
  \label{fig:far2}
  \end{subfigure}
  \begin{subfigure}[b]{0.44\textwidth}
  \includegraphics[width=1\textwidth]{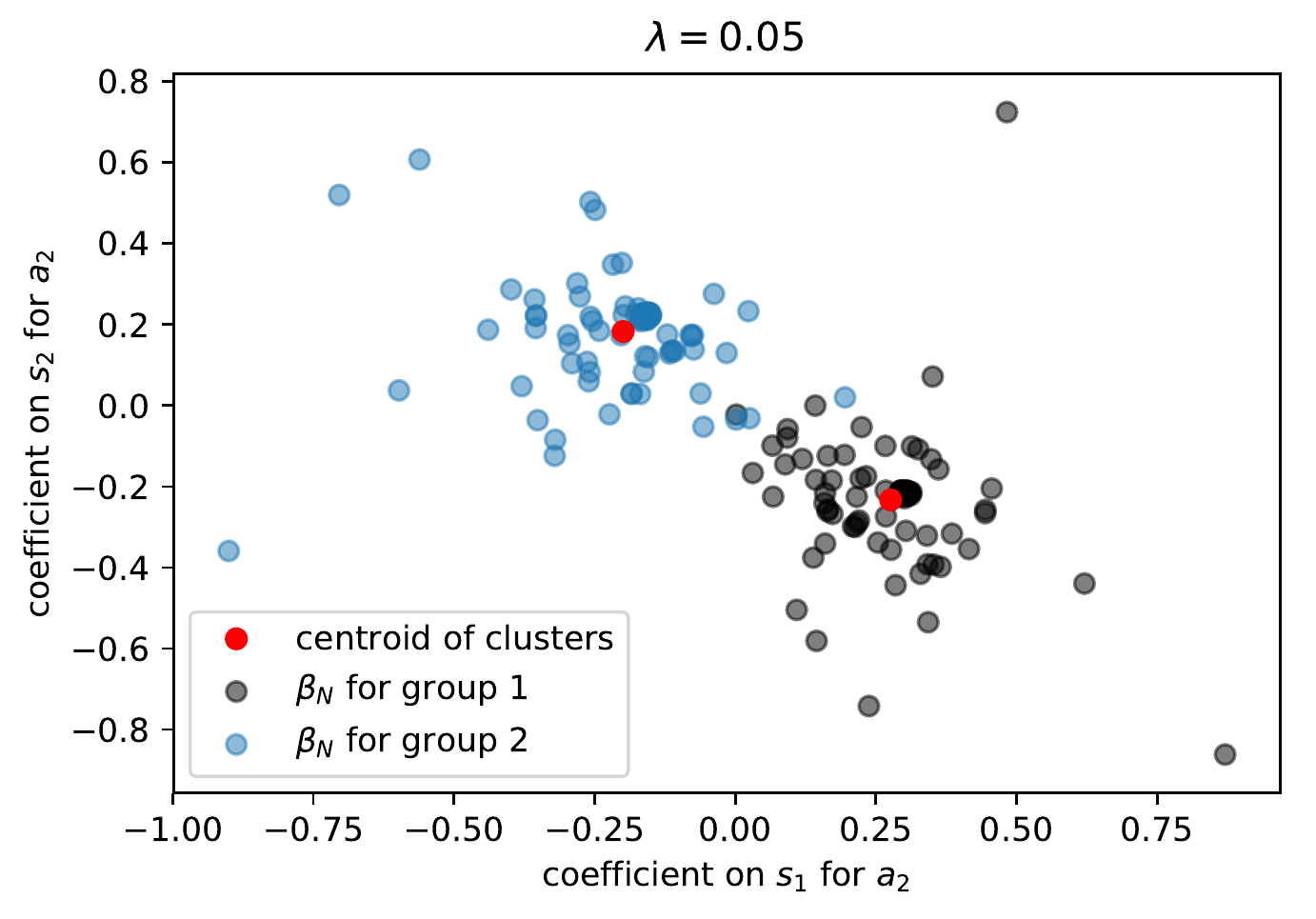}
  \caption{$\{ \hat\bbeta_2^i \}_{i\in [N]}$ for action  $a_2=1$, $\lambda = 0.05$}
  \label{fig:close1}
  \end{subfigure}
  \hspace{4ex}
  \begin{subfigure}[b]{0.44\textwidth}
  \includegraphics[width=1\textwidth]{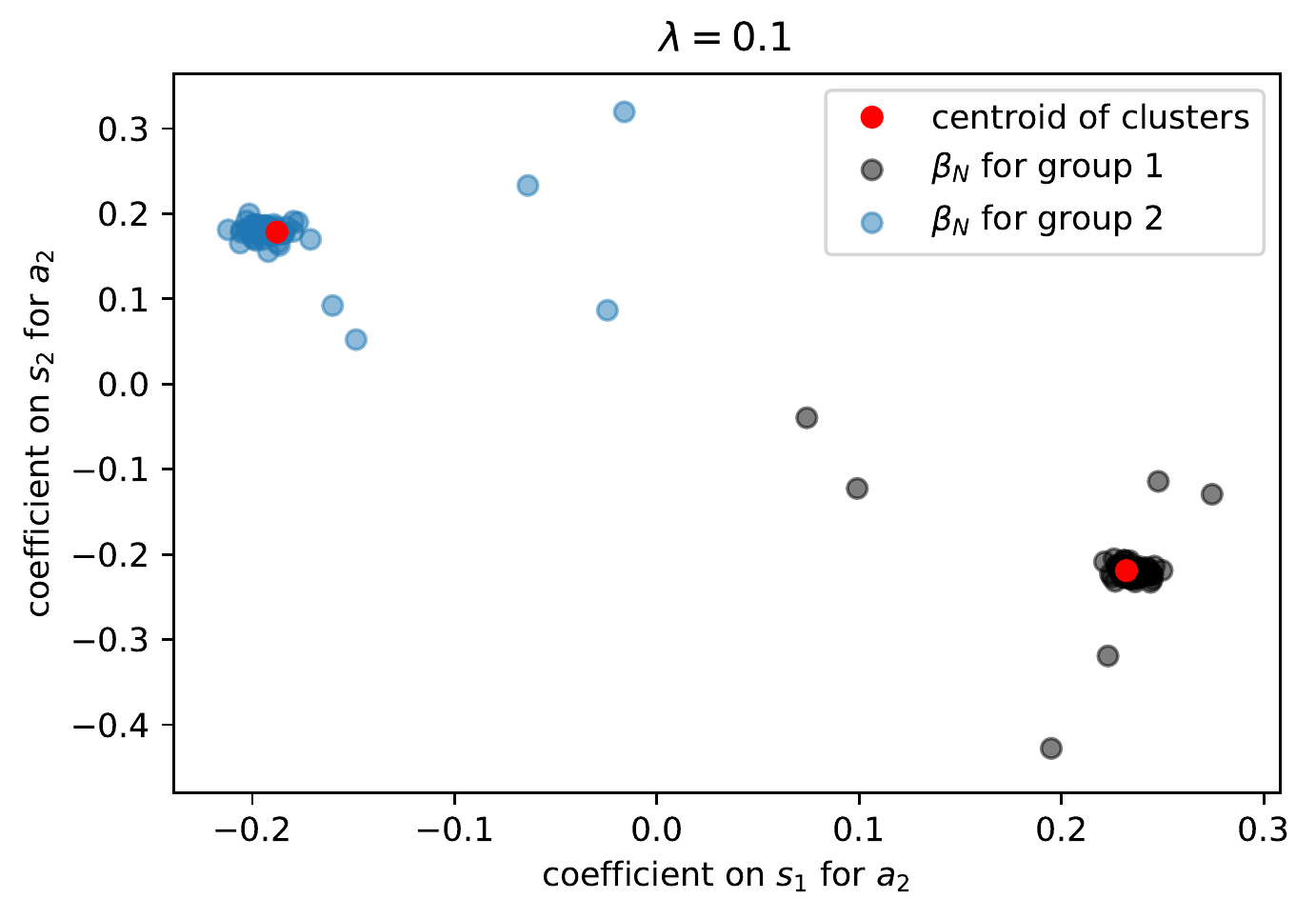}
  \caption{$\{ \hat\bbeta_2^i \}_{i\in [N]}$ for action $a_2=1$, $\lambda = 0.1$}
  \label{fig:close2}
  \end{subfigure}
  \caption{Coefficients of $Q$ functions and the centroids estimated by {\PE}.
      We use blue and gray colors to denote the true membership of each point.
      We use MCP \eqref{eqn:pen-MCP} penalization with $\eta = 1.5$. 
      Plots (a) and (c) on the left correspond to the results with $\lambda = 0.05$; plots (b) and (d) on the right correspond to the results with $\lambda = 0.1$. 
      We can see that with an appropriate choice of the tuning parameter $\lambda$, the majority of fitted coefficients  fall in a very small neighborhood of the centroids and can be hence correctly classified by simple clustering algorithms. }
  \label{fig:random_a}
\end{figure}

\subsection{Confidence intervals of the integrated value of a given policy $\pi$} \label{sec:simul-PE2}

We compare the  integrated value estimated by ACPE and classical mean-value policy evaluation (MVPE).
The true integrated value $\mathring V^\pi_\calR$ is computed by Monte Carlo approximation.
Specifically, we simulate $N = 5,000$ independent trajectories (2,500 trajectories for each group) with initial states distributed according to $\calR$.
For each trajectory, we simulate $T$ steps to obtain $\braces{Y_{i,t}^{(k)}}$ for $1 \le i \le N$, $1 \le t \le T$, and $k = 1, 2$.
For each sub-group, the true value is approximated by $\mathring{V}_{\calR}^{\pi, (k)} \approx N_k^{-1} \sum_{i=1}^{N_k} \sum_{t=1}^T \gamma^t Y_{i,t}^{(k)}$.
For the MVPE, the estimated $\hat V^{MV}_\calR(\pi)$ and the corresponding CI are obtained using equation (3.9) and (3.10) in \cite{shi2020statistical}.
For the ACPE, we use MCP \eqref{eqn:pen-MCP} with $\eta=1.5$ and $\lambda=0.1$. 
The estimated $\hat V^{AC}_\calR(\pi)$ and the corresponding CI are obtained using results in Theorem \ref{thm:CI-int-value}.
We compare the true value with our estimated values with $n = 20, 50, 100$ and $T = 10, 30, 40$. 
The coverage probability is plotted in Figure~\ref{fig:ecp} for trajectories from group 1 and 2 respectively in subplot (a) and (b). 
We can see that heterogeneous estimation performs consistently better in terms of coverage probability. 
%Due to the stationarity of this Markov chain, homogeneous estimation starts to perform better when $N$ and $T$ are both large as the estimated V between two groups are close, but is still behind heterogeneous estimation.

\begin{figure}[htpb!]
    \centering
    \begin{subfigure}[b]{0.44\textwidth}
        \includegraphics[width=1\textwidth]{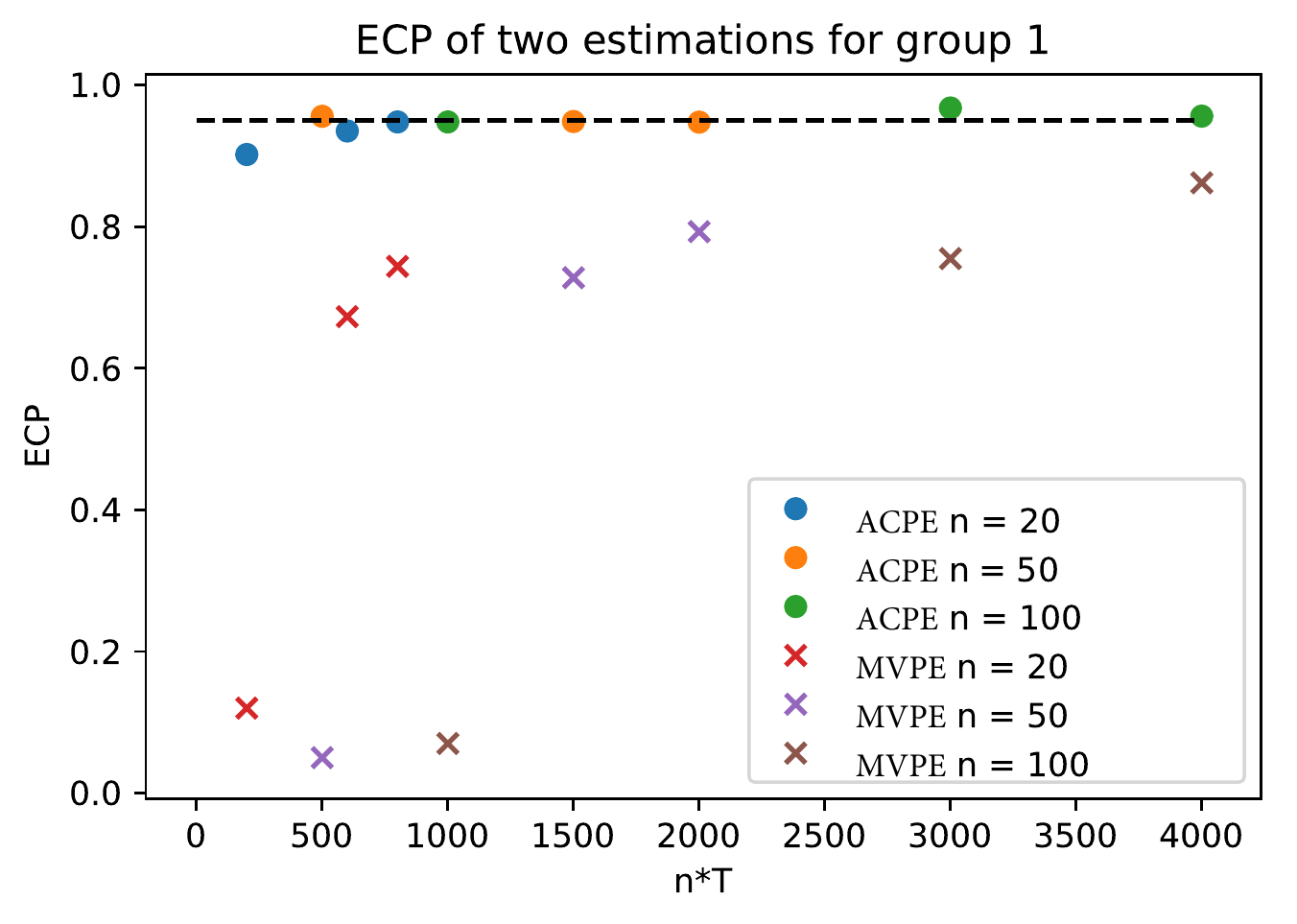}
        \caption{For trajectories from group 1}
        \label{fig:ecp1}
    \end{subfigure}
    \hspace{4ex}
    \begin{subfigure}[b]{0.44\textwidth}
        \includegraphics[width=1\textwidth]{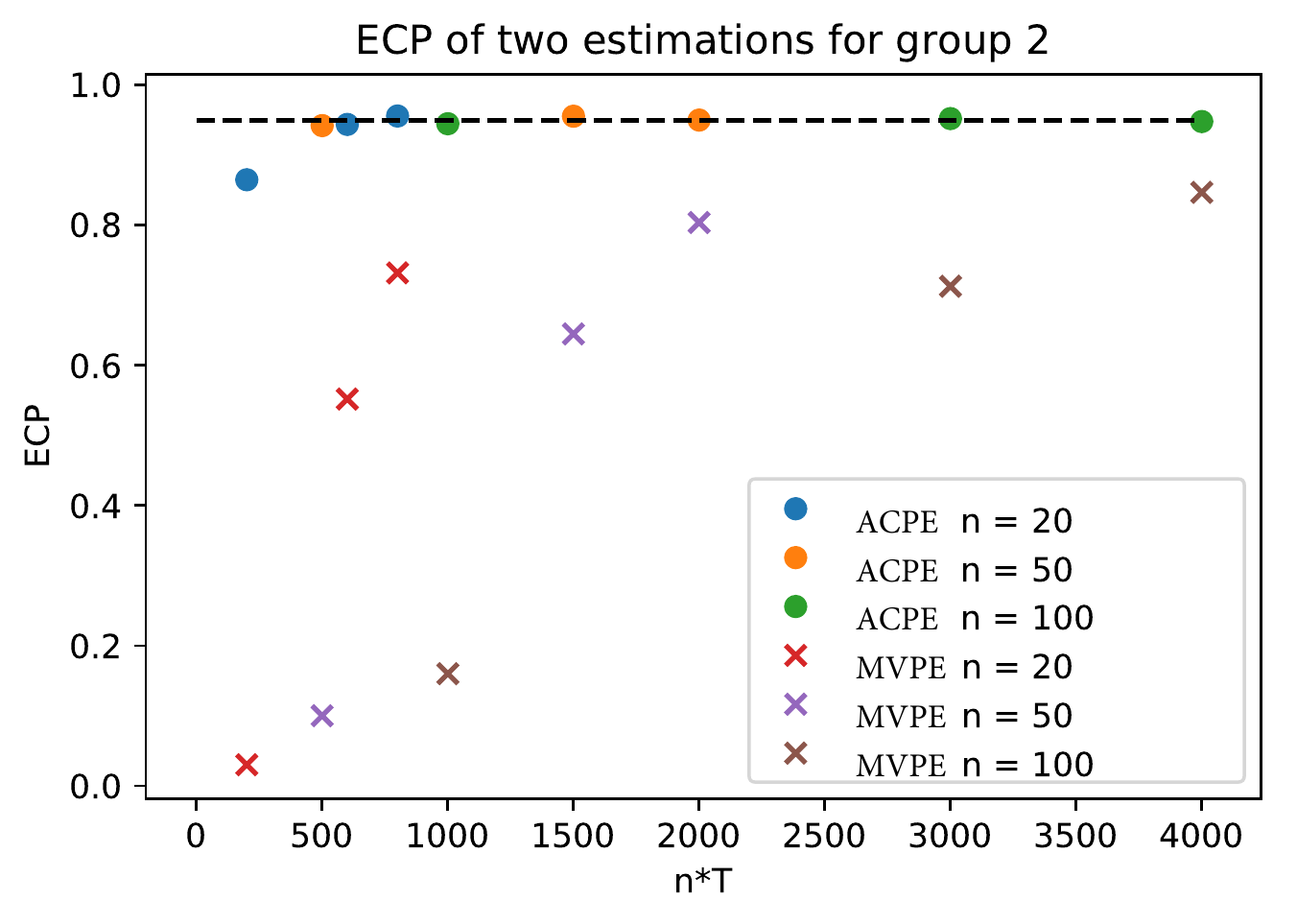}
        \caption{For trajectories from group 2}
        \label{fig:ecp2}
    \end{subfigure}
    \caption{Empirical covering probabilities (ECP) of the 95\% confidence interval. 
        ACPE represents our auto-clustered policy evaluation and its corresponding CI in Theorem \ref{thm:CI-int-value}. 
        MVPE represent the classic mean-value-based policy evaluation and its corresponding CI are constructed using equation (3.9) and (3.10) in \cite{shi2020statistical}. 
        Subplot (a) and (b) correspond to the ECP's for trajectories from two different groups $k=1$ and $2$, respectively. 
    }
    \label{fig:ecp}
\end{figure}

\subsection{Parametric optimal policy} \label{sec:simul-opt}

We train 100 steps of Algorithm \ref{alg:ACPI} (\PI) to obtain two policies, which we denote
$\pi(\hat \balpha^{(1) *}_{ACPI})$ and $\pi(\hat \balpha^{(2) *}_{ACPI})$, respectively for estimated group 1 and 2.  
%$\pi(\balpha_{G1})$ and $\pi(\balpha_{G2})$. 
%% Training curve to show convergence. 
%We present the training curve (the value of $\hat{V}$ by every 5 iterations) in Figure \ref{fig:training_curve}. 
We follow the same protocol to obtain a policy $\pi(\hat\balpha^{*}_{MVPI})$ via standard mean-value-based policy iteration (MVPI).

Firstly, we show how different are the actions given by $\pi(\hat \balpha^{(1) *})$, $\pi(\hat \balpha^{(2) *})$, and $\pi(\hat\balpha^{*}_{MVPI})$ on a specific sample. 
We take a $j$-th test sample with $X_{j,1} =  \begin{bmatrix}  1.2506\\  0.77477 \end{bmatrix}$,  $X_{j, 2} = \begin{bmatrix}1.0277\\-0.52615\end{bmatrix}$, $A_{j, 1} = 1$, $Y_{j, 1} = 1.6559$.
The optimal actions at $t=2$, i.e. $A_{j, 2}$, suggested by {\PI} for group 1 and 2 and by MVPI are given in Table \ref{tab:estimated_policy}.
This trajectory is in fact in group 1, which is correctly clustered by  {\PE}. 
Thus, the optimal action should be chosen by following $\pi\paran{\hat \balpha^{(1) *}_{ACPI}}$ which prefers action $A_{j, 2}=1$. 
If we ignore heterogeneity and naively adopt MVPI, we will wrongly weight both actions evenly, yielding a lower expected outcome.

\begin{table}[ht!]
\centering
\begin{tabular}{|c|c|c|}
\hline
 Estimated optimal policy & $\Pr(A = 0)$ & $\Pr(A = 1)$ \\
 \hline
 $\pi\paran{\hat \balpha^{(1) *}_{ACPI}}$ & 0.14 & 0.86 \\
\hdashline
$\pi\paran{\hat \balpha^{(2) *}_{ACPI}}$  & 0.75 & 0.25 \\
 \hline
 $\pi\paran{\hat\balpha_{MVPI}}$ & 0.58 & 0.42 \\
 \hline
\end{tabular}
\caption{The optimal action given by {\PI} for group 1 and 2 and by MVPI on a random trajectory. 
We use $\eta = 1.5$ and $\lambda=0.1$ in {\PI}. }
\label{tab:estimated_policy}
\end{table}

%% Convergence of ACPI
%\begin{figure}[htpb!]
%\centering
%\includegraphics[width=0.5\textwidth]{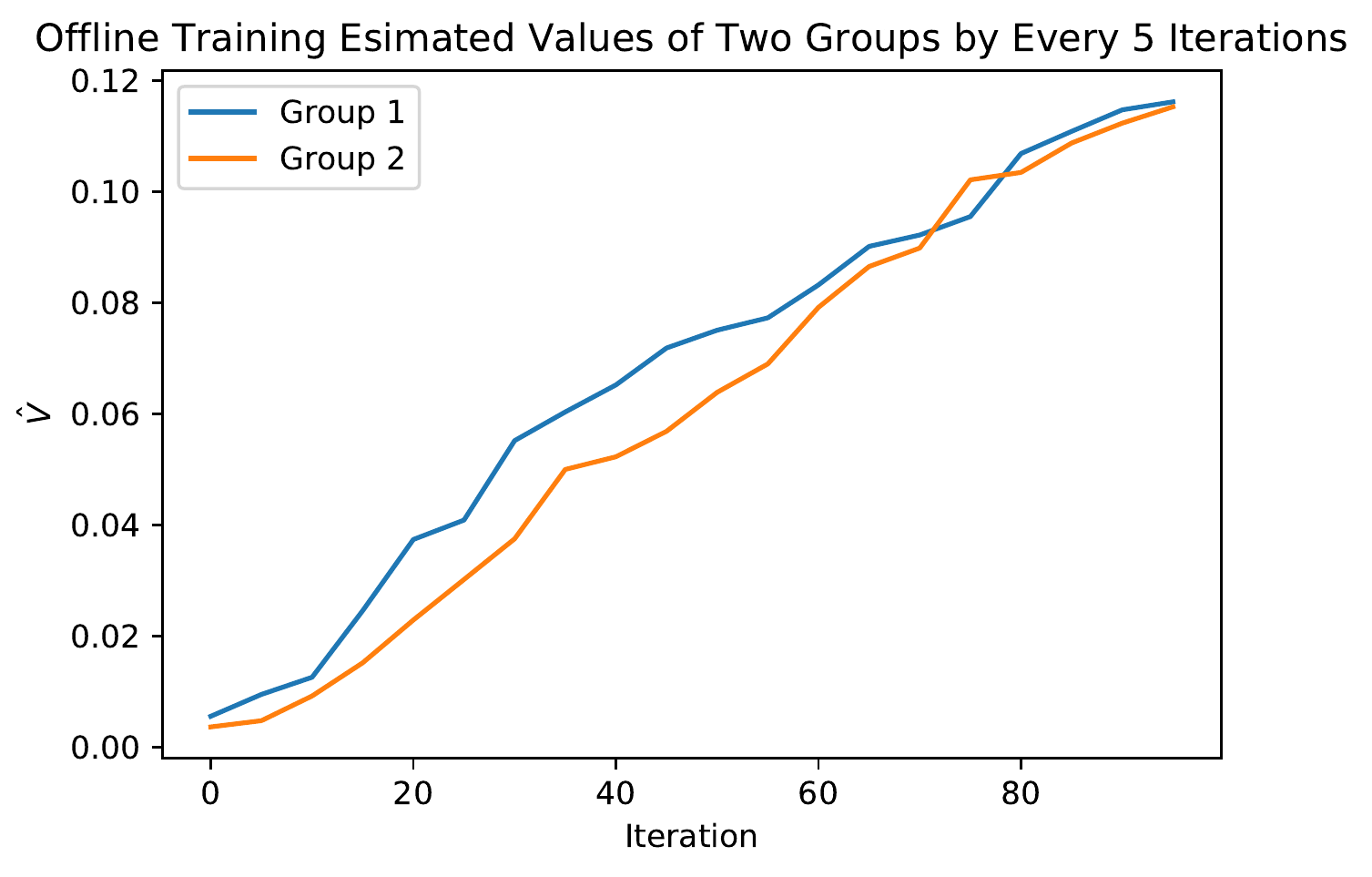}
%\caption{The value of $\hat{V}$ reported every 5 iterations.}
%\label{fig:training_curve}
%\end{figure}

We further generate $N=500$ samples distributed according to $\calN\paran{\bzero, \bI_2}$ (250 samples for each group). 
We estimate the true value of all three policies using Monte Carlo simulation $\mathring{V}_{\calR}\paran{\pi} \approx N^{-1} \sum_{i=1}^{N} \sum_{t=0}^{T-1} \gamma^t \bY_{i,t}$ at $T = 50$. 
We repeat 10 times and obtain the average values of all three policies on different groups. 
The results are presented in Table \ref{tab:estimated_policy_value}. 
%$\mathring{V}(\pi(\balpha_{G})) = 0.0024256$ with variance $0.0095241$, $\mathring{V} (\pi(\balpha_{G_1})) = 0.11926$ with variance $0.087801$, and $\mathring{V}(\pi(\balpha_{G_2})) = 0.10398$ with variance $0.09141$. 
It shows that policies produced by {\PI} are indeed better in terms of expected discounted total sum of rewards. 

\begin{table}[ht!]
    \centering
    \begin{tabular}{|c|c|c|}
        \hline
        \multirow{2}{*}{Estimated optimal policy} & \multicolumn{2}{c}{Integrated value $\mathring{V}_\calR(\pi)$} \\ \cline{2-3}
        & True group 1 & True group 2 \\
        \hline
        $\pi\paran{\hat \balpha^{(1) *}_{ACPI}}$ & 0.119 & - \\
        \hdashline
        $\pi\paran{\hat \balpha^{(2) *}_{ACPI}}$  & - & 0.104 \\
        \hline
        $\pi\paran{\hat\balpha_{MVPI}}$ & 0.002  & 0.002 \\
        \hline
    \end{tabular}
    \caption{The integrated values of the optimal policies given by {\PI} for group 1 and 2 and by MVPI. 
    We use $\eta = 1.5$ and $\lambda=0.1$ in {\PI}.}
    \label{tab:estimated_policy_value}
\end{table}

%!TEX root = 0-main.tex

\section{Real data application: MIMIC-III} \label{sec:appl}

In this section, we illustrate the advantages of the {\PI} on the Medical Information Mart for Intensive Care version III (MIMIC-III) Database \citep{johnson2016mimic}, which is a freely available source of de-identified critical care data from 2001--2012 in six ICUs at a Boston teaching hospital.

We consider a cohort of sepsis patients, following the same data processing procedure as in \cite{komorowski2018artificial}.
Each patient in the cohort is characterized by a set of 47 variables, including demographics, Elixhauser premorbid status, vital signs, and laboratory values.
Since we do not consider high-dimensional issue in the present paper, we conduct a dimension reduction using principal component analysis and choose top $10$ principal components as our state features, which explain around 97\% of the total variance.. 
Patients' data were coded as multidimensional discrete time series $X_{i,t} \in\RR^{10}$ for $1\le i \le N$ and $1\le t \le T_i$ with four-hour time steps.
The actions of interests are the total volume of intravenous (IV) fluids and maximum dose of vasopressors administrated over each four-hour period.
We discretize two action variables into three levels, respectively. Our low corresponds to 1 - 2, medium corresponds to 3 and high corresponds to 4 - 5 in \cite{komorowski2018artificial}.
The combination of the two drugs makes $M = 3 \times 3 = 9$ possible actions in total.
In the final processed dataset, we sampled \num{1000} unique adult ICU admissions, corresponding to unique trajectories to be fed into our algorithms.
The observation length $T_i$ varies across trajectories. %, with 12,987 records in total.
%We then perform dimension reduction by PCA, resulting in ten variables that explain around 97\% of the variance.

The reward signal $R_{i,t}$ is important and is crafted carefully in real applications.
For the final reward, we follow \cite{komorowski2018artificial} and use hospital mortality or 90-day mortality.
Specifically, when a patient survived, a positive reward was released at the end of each patient's trajectory; a negative reward was issued if the patient died.
For the intermediate rewards, we follow \cite{prasad2017reinforcement} and associate reward to the health measurement of a patient.
The detailed description of the data pre-processing is presented in Section \ref{appen:mimic-iii} of the supplemental material.
More information about the dataset can also be found in \cite{komorowski2018artificial} and \cite{prasad2017reinforcement}.

The final processed data contains 12,987 tuples $\braces{\paran{X_{i,t}, A_{i,t}, R_{i,t}, X_{i,t+1}}}$ with $i\in [1000]$ and $ t \in [T_i]$. 
We first evaluate the value of a random policy $\pi(a|\bx) = 1/9$ for $a\in[9]$ on the data set using ACPE and MVPE. 
Note both {\PE} and MVPV try to minimize the Bellman residual, which are the sample counterpart of the left hand side of \eqref{eqn:q-bellman-cons-2}, while {\PE} accounts for heterogeneity over different trajectories. 
We fit linear $Q$ functions and plot the histograms of the sample Bellman residuals of MVPE and {\PE}, respectively, in (a) and (b) of Figure~\ref{fig:res-plots}. 
As demonstrated in Figure~\ref{fig:res-plots}~(a), we see that the Bellman residual plot of MVPE is bi-modal.
This hints that there are at least two different coefficient groups in this population.
Figure~\ref{fig:res-plots}~(b) is the residual plot of ACPE.
We can see that the residual plot for our algorithm is well-shaped and performs better in terms of fitting error.
We perform a principal component analysis on our coefficient matrix (averaged across actions) as shown in Figure~\ref{fig:coef_pca}.
We can observe that there are two clear separate clusters within this coefficient matrix, which is a confirmation of the bi-modality of the residuals and a sign that simple $k$-means clustering would work well when we cluster the $\bbeta$'s into groups.

We further use {\PI} to obtain optimal policies $\pi(\hat \balpha^{(1) *}_{ACPI})$ and $\pi(\hat \balpha^{(2) *}_{ACPI})$, respectively for estimated group 1 and 2.  
We follow the same protocol to obtain a optimal policy  $\pi(\hat\balpha^{*}_{MVPI})$ via standard mean-value-based policy iteration (MVPI).
Table~\ref{tab:real_values} summarizes the estimated values of $\pi(\hat \balpha^{(1) *}_{ACPI})$, $\pi(\hat \balpha^{(2) *}_{ACPI})$, and $\pi(\hat\balpha^{*}_{MVPI})$ with different values of discount factor $\gamma$.
These policies are evaluated on two groups separately on the two sub-groups returned by {\PI}. 
We can see that the policy obtained by {\PI} outperforms the one returned by MVPI by around 30~\% on both group 1 and group 2. 

We then demonstrate two sample patients in Table~\ref{tab:sample_patients1} and ~\ref{tab:sample_patients2} to show the effectiveness of estimated policies.
Patient 1 eventually died in hospital, and this policy is evaluated at two time points before his/her death.
At this time point, many of his/her indices are extremely abnormal.
For example, his/her Glasgow Coma Scale (GCS) decreased sharply from 15 to 6.
This is a critical sign of urgent treatment.
Indeed, policy trained under heterogeneous estimation ACPI recommends high intravenous volume and high/medium vasopressors, while policy trained under homogeneous estimation MVPI still puts significant weights on medium intravenous volume and medium vasopressors. 
This conservative policy recommended by homogeneous estimation would be dangerous.
Patient 2 survives and all his/her indices are fairly normal at the evaluated time point, except for his/her relatively high respiratory rate (RR).
He/she is also one of the youngest patients in the sample.
Thus, the physician decides to inject little to no amounts of both intravenous and vasopressors. 
The heterogeneously estimated policy by ACPI recommends the same treatment as the physician does.
However, the homogeneously estimated policy by MVPI also suggests a medium volume of intravenous, which is unnecessary for such a patient.

\begin{figure}[ht!]
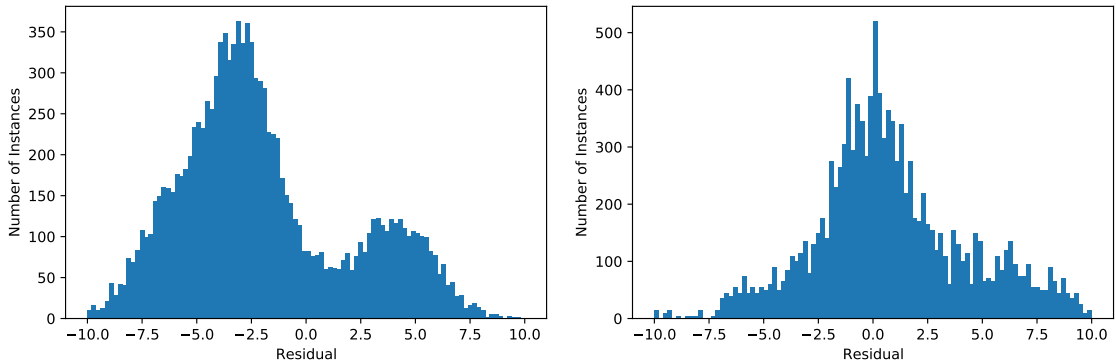

  \centering
  \begin{subfigure}[b]{0.45\textwidth}
  \includegraphics[width=1\textwidth]{figs/res_ls.pdf}
  \caption{Residual distribution of simple linear regression is clearly bi-model, with two peaks at around -4 and 4.}
  %\label{fig:res_ls}
  \end{subfigure}
    \begin{subfigure}[b]{0.45\textwidth}
  \includegraphics[width=1\textwidth]{figs/res_hetero.pdf}
  \caption{Residual of our algorithm is bell-shaped, with one peak at around 0. It also performs better in terms of fitting error.}
  %\label{fig:res_hetero}
  \end{subfigure}
  \caption{Bellman residuals of MVPE and ACPE of the random policy on the MIMIC-III data set.}
  \label{fig:res-plots}
\end{figure}

\begin{figure}[ht!]
    \centering
    %\begin{subfigure}[b]{0.6\textwidth}
    \includegraphics[width=.6\textwidth]{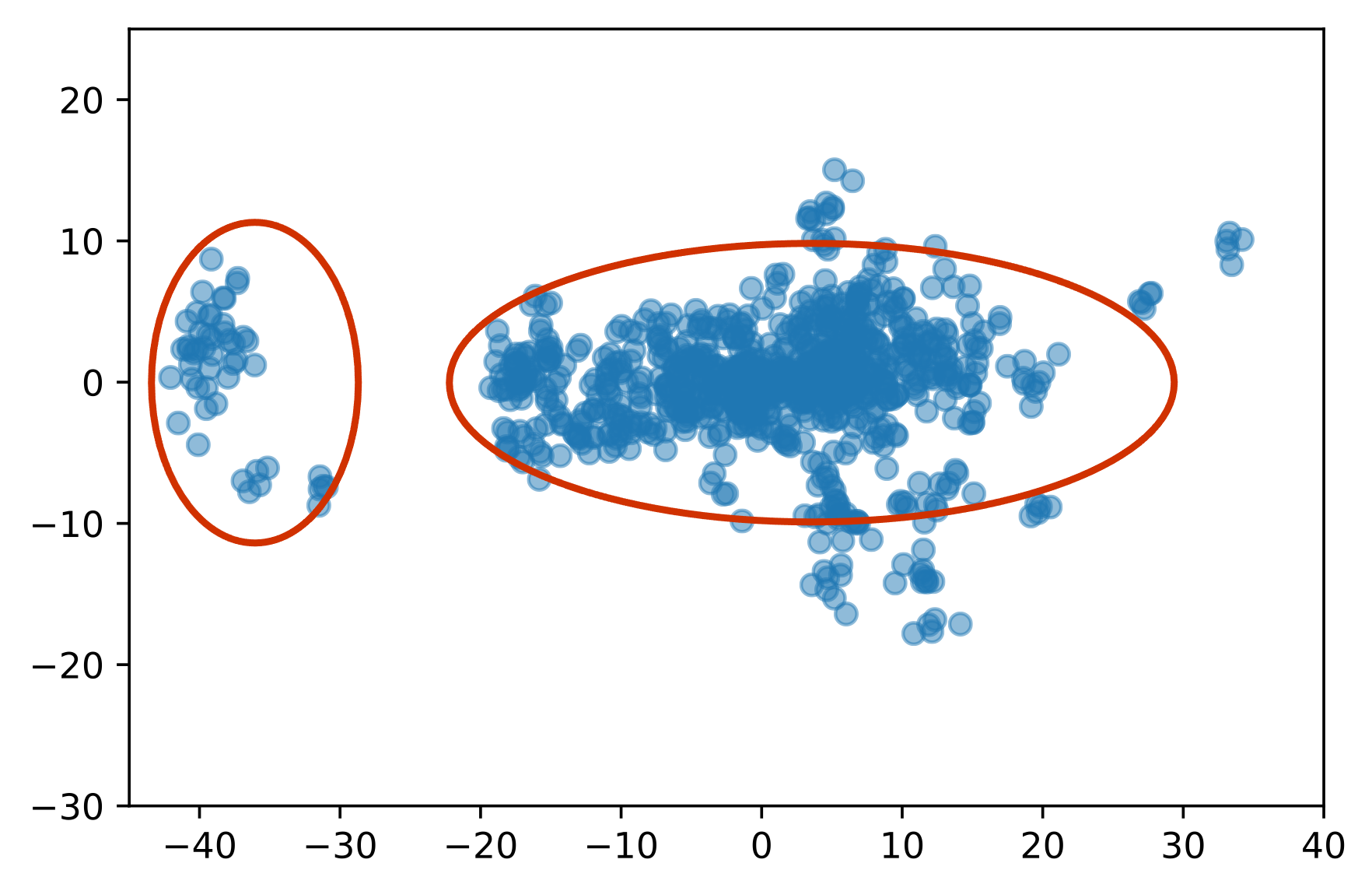}
    \caption{Plot of the first two principal components of the coefficient matrix (averaged across actions) produced by ACPE of the random policy. 
        We can see there is a clear separation between two groups.}
    \label{fig:coef_pca}
   %\end{subfigure}
\end{figure}

\begin{table}[ht!]
\centering
\begin{tabular}{|c|c|c|c|c|}
\hline
\multirow{2}{*}{Estimated optimal policy} &\multicolumn{2}{c|}{Estimated Group 1} & \multicolumn{2}{c|}{Estimated Group 2} \\
\cline{2-5}
 & $\gamma = 0.5$ & $\gamma = 0.7$ & $\gamma = 0.5$ & $\gamma = 0.7$  \\
\hline
$\pi\paran{\hat \balpha^{(1) *}_{ACPI}}$ & -2.2140 & -3.1851 & - & - \\
\hdashline
$\pi\paran{\hat \balpha^{(2) *}_{ACPI}}$ & - & - & -1.4433 & -1.7793 \\
\hline
$\pi\paran{\hat \balpha^{*}_{MVPI}}$   & -3.0964 & -4.5735 & -2.0634 & -2.6487 \\
\hline
ACPI Improvement  & 28.50\% & 30.36\% & 30.05\% & 32.82\% \\
\hline
\end{tabular}
\caption{Estimated values on two subgroups of the heterogeneously estimated policies by ACPI and the homogeneously estimated policy by MVPI. 
The groups are estimated by ACPI. }
\label{tab:real_values}
\end{table}

\begin{table}[ht!]
\centering
\begin{tabular}{|c|c|c|c|}
\hline
Optimal policy & ACPI & MVPI & Expert \\
\hline
iv low, vaso low & $< 0.0001$ & $<0.0001$ & 0\\
\hline
iv low, vaso med & $< 0.0001$ & 0.071 & 0\\
\hline
iv low, vaso high &  $< 0.0001$ & <0.0001& 0\\
\hline
iv med, vaso low & $<0.0001$ & $0.10$ & 0 \\
\hline
iv med, vaso med & $<0.0001$  & 0.14 & 0 \\
\hline
iv med, vaso high & $0.042$ & 0.21 & 0\\
\hline
iv high, vaso low & $0.23$ & $<0.0001$ & 0\\
\hline
iv high, vaso med & $0.36$ &  0.26 & 0\\
\hline
iv high, vaso high & 0.37 & 0.22  & 1\\
\hline
\end{tabular}
\caption{Recommended action by policy estimated by ACPI, MVPI and the expert for a sample patient from group 1.}
\label{tab:sample_patients1}
\end{table}

\begin{table}[ht!]
\centering
\begin{tabular}{|c|c|c|c|}
\hline
Optimal policy & ACPI & MVPI & Expert \\
\hline
iv low, vaso low & $0.72 $ & $0.36$ & 1\\
\hline
iv low, vaso med & $0.18 $ & 0.018 & 0\\
\hline
iv low, vaso high &  $< 0.0001$ & <0.0001 & 0\\
\hline
iv med, vaso low & $<0.0001$ & $0.43$ & 0 \\
\hline
iv med, vaso med & $<0.0001$  & <0.0001& 0 \\
\hline
iv med, vaso high & $<0.0001$ & 0.11 & 0\\
\hline
iv high, vaso low & $0.09$ & $<0.0001$ & 0\\
\hline
iv high, vaso med & $<0.001$ &  0.075 & 0\\
\hline
iv high, vaso high & <0.001 & <0.0001  & 0\\
\hline
\end{tabular}
\caption{Recommended action by policy estimated by ACPI, MVPI and the expert for a sample patient from group 2.}
\label{tab:sample_patients2}
\end{table}

%%
%% Examples of figures, algorithms and codes inputs
%%
% \input{exps/99-figure.tex}
% \input{exps/99-algo.tex}
% \input{exps/99-codes.tex}

\section{Summary} \label{sec:summ}

%Reinforcement learning (RL) has achieved tremendous theoretical and technical achievements in  recent years, leading its increasing applicability to real-life problems in economics, health care, autonomous driving, financial and business management.
Classical RL methods model and optimize the (action-)value function defined as the expectation of total return.
Direct applications of such mean-value based RL to large-scale datasets may generate misleading results because of data heterogeneity.
In this paper, we go beyond classical mean-value based RL to allow for heterogeneity which is characterized by different values across sub-populations.
We proposed {\PE} and {\PI} for both the policy evaluation and control.
We establish convergence rates and construct confidence intervals (CIs) for the estimators obtained by the {\PE} and {\PI}.
Our theoretical findings are validated on synthetic and real datasets.
Particularly, the experiments on the standard MIMIC-III dataset shows evidence of data heterogeneity and confirms the advantage of our new method.

Statistical analysis of policy evaluation and optimal control under the framework of RL and MDP have great potential to facilitate dynamic decision making in a variety of real applications.
We have demonstrated the importance of data heterogeneity when combining RL and large-scale datasets.
There are several interesting directions for future research in this area.
First, our method is based on the Bellman consistency equation to approximate the $Q^\pi$ function.
Its performance is guaranteed under the assumption of bounded state distribution shift, which is caused by the discrepancy between the behavior policy and the target policy.
However, this assumption may not hold generally in real applications.
It is of great interest to investigate various ways to relax such an assumption.
Second, the theoretical results in this paper are developed for offline batch estimation.
Developing online RL estimation with heterogeneity is an interesting topic for future research.
Lastly, we estimate the optimal policy by a variation of $Q$ learning.
It would also be worthwhile to investigate the ways to incorporate heterogeneity in other RL methods such as policy gradient.

%-------
%
%  bibliography
%

\clearpage
%\nocite{*} % include all bib's
\bibliographystyle{\mybibsty}
\bibliography{\mybib}

%-------
%
%  appendix
%
\clearpage
\setcounter{page}{1}
\begin{appendices}

\begin{center}
    {\Large Supplementary materials for ``Heterogeneous Reinforcement Learning with Offline Data: Estimation and Inference''}

    {Elynn Y. Chen, Rui Song, and Michael I. Jordan}
\end{center}

%\input{99-proof-A.tex}

%!TEX root = 0-main.tex

\section{Proofs: Oracle Estimators}  \label{appen:oracle}

The solution of \eqref{eqn:Q-beta-ora} can be obtained by solving the estimating equation:
\begin{equation} \label{eqn:empirical-scores-ora}
    \frac{1}{NT} \sum_{i=1}^{N} \sum_{t=1}^{T} \paran{(\tilde{\bLam}^i)^\top Z_{i,t} R_{i,t} - (\tilde{\bLam}^i)^\top Z_{i,t}\paran{ Z_{i,t} - \gamma U^{\pi}_{i,t+1}}^\top \tilde{\bLam}^i \btheta^{\pi} } = 0.
\end{equation}
The resulting oracle estimator of the group coefficients obtained by \eqref{eqn:empirical-scores-ora} has the following decomposition:
\begin{equation} \label{eqn:theta-decompose}
    \paran{ \tilde\btheta^{\pi}  - \mathring\btheta^{\pi}  }
    %=  \tilde\bSigma^{-1} \tilde\bG\paran{\mathring\btheta ^\pi}
    =  (\tilde\bSigma^{\pi})^{-1}   \frac{1}{N T} \sum_{i=1}^{N} \sum_{t=1}^{T}  \paran{(\tilde{\bLam}^i)^\top Z_{i,t} (\eps_{1,i,t}^{\pi}  + \eps_{2,i,t}^{\pi} )}
    = (\tilde\bSigma^{\pi})^{-1} \tilde\bzeta_1^{\pi} + (\tilde\bSigma^{\pi})^{-1} \tilde\bzeta_2^{\pi},
\end{equation}
where
\begin{align}
    \tilde\bSigma^{\pi}
    & = \frac{1}{N T} \sum_{i=1}^{N} \sum_{t=1}^{T} (\tilde{\bLam}^i)^\top  Z_{i,t}\paran{ Z_{i,t} - \gamma U^{\pi}_{i,t+1}}^\top \tilde{\bLam}^i,\\
    \tilde\bzeta_1^{\pi}
    & = \frac{1}{N T} \sum_{i=1}^{N} \sum_{t=1}^{T}  (\tilde{\bLam}^i)^\top Z_{i,t} \eps_{1,i,t}, \\
    \tilde\bzeta_2^{\pi}
    & = \sum_{i=1}^{N} \sum_{t=1}^{T}  (\tilde{\bLam}^i)^\top Z_{i,t} \eps_{2,i,t}, \\
    \eps_{1,i,t}^{\pi} 
    & = R_{i,t} + \gamma\cdot\sum_{a\in\calA} Q(\pi, X_{i,t+1}, a) \pi(a \mid X_{i,t+1}) -  Q(\pi, X_{i,t}, A_{i,t})   \label{eqn:eps-1}\\
    \eps_{2,i,t}^{\pi} 
    & = \gamma\cdot\sum_{a\in\calA} e(\pi, X_{i,t+1}, a) \pi(a \mid X_{i,t+1}) - \sum_{a\in\calA} e(\pi, X_{i,t}, a) \bbone(A_{i,t}=a),  \label{eqn:eps-2} \\
    e(\pi, X_{i,t}, a)
    & = Q(\pi, X_{i,t}, a) - \bphi(X_{i,t})^\top \tilde{\bLam}^i \mathring\btheta^{\pi}. % \mathring\bbeta^{\pi,i}_{a},
\end{align}
%and $\mathring\bbeta^\pi = [ (\mathring\bbeta^{\pi,1}_{1})^\top,
%\cdots,
%(\mathring\bbeta^{\pi,1}_{M})^\top,
%\cdots,
%(\mathring\bbeta^{\pi,N}_{1})^\top,
%\cdots,
%(\mathring\bbeta^{\pi,N}_{M})^\top ]^\top
%= (\bW\otimes\bI_{JM}) \mathring\btheta^\pi$.
In the proofs, we will omit the superscript $\pi$ in $\tilde\bSigma^{\pi}$, $\tilde\bzeta_1^{\pi}$, $ \tilde\bzeta_2^{\pi}$, $\eps_{1,i,t}^{\pi}$, $\eps_{2,i,t}^{\pi}$, $\tilde\bSigma^{\pi, (k)}$, $\tilde\bzeta_1^{\pi, (k)}$, and $\tilde\bzeta_2^{\pi, (k)}$ for brevity. 

\begin{proposition}[Oracle estimator $\ell_2$ convergence]        \label{thm:beta-l2-convg}
    Suppose Assumption \ref{assum:r-q-func} -- \ref{assum:geo-ergodic} hold.
    $N_{min} = \underset{1\le k \le K}{\min} N_k$ and $N_{max} = \underset{1\le k \le K}{\max} N_k$.
    If $K=\smlo{N_{\min}T}$,
    $J \ll \sqrt{N_{\min} T} / \log(N_{\min} T)$,
    $J^{\kappa / p} \gg \sqrt{N_{\max} T}$,
    as either $N_{\min} \rightarrow \infty$ or $T \rightarrow \infty$,
    \begin{equation*}
        \norm{ \tilde\btheta^{\pi}  - \mathring\btheta^{\pi}  }_2
        =  \Op{ J^{1/2} (T N_{\min})^{-1/2} }  +  \Op{J^{-\kappa / p}}.
    \end{equation*}
    If we have balanced groups, that is, $N_{\min} \asymp N_{\max} \asymp N$, then
    \begin{equation*}
        \norm{ \tilde\btheta^{\pi}  - \mathring\btheta^{\pi}  }_2
        =  \Op{ J^{1/2} (T N)^{-1/2} }  +  \Op{J^{-\kappa / p}}.
    \end{equation*}
\end{proposition}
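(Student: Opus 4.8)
The plan is to exploit the fact that the oracle objective \eqref{eqn:Q-beta-ora} separates across the $K$ groups: the trajectory indicator $\tilde\bLam^i$ routes trajectory $i$ entirely into the block of its group, so the estimating equation \eqref{eqn:empirical-scores-ora} decouples into $K$ independent block equations. Fix $\pi$ and a group $k$; the corresponding block of the decomposition \eqref{eqn:theta-decompose} reads $\tilde\btheta^{\pi,(k)} - \mathring\btheta^{\pi,(k)} = (\hat\bSigma^{(k)})^{-1}\paran{\tilde\bzeta_1^{(k)} + \tilde\bzeta_2^{(k)}}$, where $\hat\bSigma^{(k)} = (N_kT)^{-1}\sum_{i\in\calG^{(k)}}\sum_{t} Z_{i,t}(Z_{i,t}-\gamma U^\pi_{i,t+1})^\top$ is the group-$k$ block of the oracle design (the common $N_k/N$ factors cancel), and $\tilde\bzeta_1^{(k)}$, $\tilde\bzeta_2^{(k)}$ collect the Bellman-residual term \eqref{eqn:eps-1} and the sieve-approximation term \eqref{eqn:eps-2} over $i\in\calG^{(k)}$. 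I would bound $\norm{(\hat\bSigma^{(k)})^{-1}}_2$, $\norm{\tilde\bzeta_1^{(k)}}_2$, and $\norm{\tilde\bzeta_2^{(k)}}_2$ separately and then recombine over the (finitely many) groups; the balanced case is the specialization $N_{\min}\asymp N_{\max}\asymp N$.

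The first and most substantial step is to show $\norm{(\hat\bSigma^{(k)})^{-1}}_2 = \Op{1}$. By a now-standard argument (the matrix lemma of \cite{shi2020statistical}, which uses $\E{U^\pi_{i,t+1}\mid X_{i,t},A_{i,t}} = \bar\bu(\pi,X_{i,t},A_{i,t})$ together with Cauchy--Schwarz), Assumption \ref{assum:min-eigen-of-E-Sigma} implies that the smallest singular value of the population design $\bSigma^{(k)} := \E{\hat\bSigma^{(k)}}$ is bounded below by a positive constant depending only on $C_1$ and $\gamma$. One then passes from $\bSigma^{(k)}$ to $\hat\bSigma^{(k)}$ by concentration: when $T$ is finite, $\hat\bSigma^{(k)}$ is an average of $N_k$ i.i.d.\ trajectory sums and matrix Bernstein, together with $\norm{Z_{i,t}}_2 = O(\sqrt{J})$ (a consequence of Assumption \ref{assum:density} and the spline/wavelet normalization $\E{\bphi_J\bphi_J^\top}\asymp \bI_J$), gives $\norm{\hat\bSigma^{(k)} - \bSigma^{(k)}}_2 = \Op{\sqrt{J\log(MJ)/(N_kT)}}$; when $T\to\infty$ the same bound follows from the geometric-ergodicity Assumption \ref{assum:geo-ergodic} via a blocking matrix-concentration inequality. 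Under $J \ll \sqrt{N_{\min}T}/\log(N_{\min}T)$ this deviation is $\op{1}$, so with probability tending to one $\norm{(\hat\bSigma^{(k)})^{-1}}_2 = \Op{1}$.

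For the stochastic term, Lemma \ref{lemma:q-bellman-cons-2} gives $\E{\eps_{1,i,t}\mid X_{i,t},A_{i,t}} = 0$, so $\{Z_{i,t}\eps_{1,i,t}\}$ forms a martingale-difference array (across $t$ along a trajectory, independent across trajectories); hence $\E{\norm{\tilde\bzeta_1^{(k)}}_2^2} = (N_kT)^{-2}\sum_{i\in\calG^{(k)},t}\E{\norm{Z_{i,t}}_2^2\,\eps_{1,i,t}^2} = O(J/(N_kT))$, using $\E{\norm{Z_{i,t}}_2^2} = O(J)$ and the bounded conditional second moment of the Bellman residual, and Markov's inequality yields $\norm{\tilde\bzeta_1^{(k)}}_2 = \Op{\sqrt{J/(N_kT)}}$. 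For the approximation term, the sieve bound $\sup_{\bx,a}\abs{Q(\pi,\bx,a) - \bphi_J(\bx)^\top\mathring\btheta^\pi_a} = O(J^{-\kappa/p})$ (Assumption \ref{assum:r-q-func} with $r>\max(\kappa,1)$) gives $\abs{\eps_{2,i,t}} = O(J^{-\kappa/p})$; splitting $\eps_{2,i,t}$ into its $(X_{i,t},A_{i,t})$-conditional mean and a martingale-difference remainder, the remainder contributes the lower-order $\Op{J^{1/2-\kappa/p}(N_kT)^{-1/2}}$, while the conditional-mean piece $(N_kT)^{-1}\sum_{i\in\calG^{(k)},t}Z_{i,t}\bar\eps_{2,i,t}$ concentrates on its population mean, a vector of $L_2(\mu)$ inner products whose norm is at most $\norm{\E{ZZ^\top}}_2$ times the $L_2(\mu)$-norm of the $\bphi_J$-projected conditional-mean function, i.e.\ $\Op{J^{-\kappa/p}}$. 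Combining through the block identity, $\norm{\tilde\btheta^{\pi,(k)} - \mathring\btheta^{\pi,(k)}}_2 \le \norm{(\hat\bSigma^{(k)})^{-1}}_2\paran{\norm{\tilde\bzeta_1^{(k)}}_2 + \norm{\tilde\bzeta_2^{(k)}}_2} = \Op{J^{1/2}(N_kT)^{-1/2}} + \Op{J^{-\kappa/p}}$; since $N_k\ge N_{\min}$, summing these $\Op{}$ bounds over the $K = \smlo{N_{\min}T}$ groups preserves the rate and yields $\norm{\tilde\btheta^\pi - \mathring\btheta^\pi}_2 = \Op{J^{1/2}(N_{\min}T)^{-1/2}} + \Op{J^{-\kappa/p}}$, and $N_{\min}\asymp N_{\max}\asymp N$ gives the balanced form.

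The main obstacle is the invertibility step: establishing $\norm{(\hat\bSigma^{(k)})^{-1}}_2 = \Op{1}$ uniformly over both the finite-$T$ regime (i.i.d.\ trajectories) and the $T\to\infty$ regime (within-trajectory dependence, handled via Assumption \ref{assum:geo-ergodic} and a matrix-concentration bound for dependent sequences), including the reduction of the non-symmetric matrix $\E{Z(Z-\gamma U^\pi)^\top}$ to the symmetric positive-definite object of Assumption \ref{assum:min-eigen-of-E-Sigma}; the remaining bounds are routine second-moment computations. Note that, unlike Theorem \ref{eqn:ora-uniform-conv}, this proposition is stated pointwise in $\pi$, so no covering argument over $\Pi$ is required here.
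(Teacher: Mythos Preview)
Your proposal is correct and follows essentially the same approach as the paper: exploit the block-diagonal structure of $\tilde\bSigma$ to reduce to $K$ independent group-wise problems, bound $\norm{(\tilde\bSigma^{(k)})^{-1}}_2$, $\norm{\tilde\bzeta_1^{(k)}}_2$, and $\norm{\tilde\bzeta_2^{(k)}}_2$ exactly as the paper does in its sub-homogeneous Proposition~\ref{thm:subhomo-l2} and Lemmas~\ref{thm:Sigma-k}--\ref{thm:zeta12}, and recombine. Two small tightenings: when you write ``summing over the $K = \smlo{N_{\min}T}$ groups preserves the rate,'' note that the $\ell_2$ norm of the stacked vector actually picks up a $\sqrt{K}$ factor, which the paper absorbs by explicitly invoking that $K$ is \emph{fixed} (Definition~\ref{def:hetero-MDP}), not merely $\smlo{N_{\min}T}$; and the paper handles $\tilde\bzeta_2^{(k)}$ more directly via $\abs{\bnu^\top\tilde\bzeta_2^{(k)}}^2 \le \sup_{i,t}\abs{\eps_{2,i,t}}^2 \cdot \bnu^\top\bigl((N_kT)^{-1}\sum Z_{i,t}Z_{i,t}^\top\bigr)\bnu$ together with $\lambda_{\max} = \Op{1}$, so your conditional-mean/martingale split, while valid, is unnecessary.
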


\begin{proof}
    Recall that  $\tilde{\bLam}^i = \bLam^i (\bW \otimes \bI_{JM})$.
    Let $P^{(k)} = N^{(k)} / N$, $\tilde\bSigma^{(k)}$ and $\tilde\bzeta_{i}^{(k)}$ are defined in \eqref{eqn:tilde-Sigma-k} and \eqref{eqn:tilde-zeta-k}, respectively.
    We have
    \begin{equation} \label{eqn:hetero-homo-relation-mat}
         \tilde\bSigma = \bdiag\paran{ P^{(k)} \tilde\bSigma^{(k)} }_{1\le k\le K}, \quad\text{and}\quad
         \tilde\bzeta_{i}  = \begin{bmatrix}
             P^{(1)} (\tilde\bzeta_{i}^{(1)})^\top & \cdots & P^{(K)} (\tilde\bzeta_{i}^{(K)})^\top
         \end{bmatrix}^\top, \quad\text{for } i = 1,2.
    \end{equation}
    Then, by decomposition \eqref{eqn:theta-decompose} and Proposition \ref{thm:subhomo-l2}, we have
    \begin{equation*}
        \norm{ \tilde\btheta^{\pi}  - \mathring\btheta^{\pi}  }_2
        = \paran{\sum_{k=1}^K \norm{ (\tilde\bSigma^{(k)})^{-1} \tilde\bzeta_1^{(k)} + (\tilde\bSigma^{(k)})^{-1} \tilde\bzeta_2^{(k)} }_2^2}^{1/2}
        = \Op{ \sqrt{\frac{KJ}{TN_{min}}} }  +  \Op{K J^{-\kappa / p}}.
    \end{equation*}
    The desired result follows by noticing that $K$ is fixed and $P^{(k)}$ is bounded away from zero under our assumption.
\end{proof}

\begin{theorem}  [Asymptotic normality of oracle coefficient estimator]   \label{eqn:ora-asymp-normal}
    Suppose Assumption \ref{assum:r-q-func} -- \ref{assum:geo-ergodic} hold.
    Let $N_{min} = \underset{1\le k \le K}{\min} N_k$ and $N_{max} = \underset{1\le k \le K}{\max} N_k$.
    If $K=\smlo{N_{\min} T}$,
    $J = \smlo{ \sqrt{N_{\min} T} / \log(N_{\min} T) }$.
    % $J^{-\kappa / p} \ll (N_{\min} T)^{-1/2}$.
    For any $\bnu \in \RR^{JMK}$ satisfying $\norm{\bnu}_2 = \bigO{1}$,  $\norm{\bnu}_2^{-1} = \bigO{1}$ and $J^{-\kappa / p} \ll \paran{N_{\min} T \paran{1 + \norm{\bnu}_2^{-2}}}^{-1/2}$, we have, as either $N_{\min}\rightarrow\infty$ or $T\rightarrow \infty$,
    \begin{equation*}
        \sqrt{NT} \cdot
        \tilde \sigma_{\theta} \paran{\pi, \bnu}^{-1} \cdot
        \bnu^\top \paran{ \tilde\btheta^{\pi} - \mathring\btheta^{\pi} }
        \convdist \calN\paran{0, 1},
    \end{equation*}
    where $\tilde\sigma_{\theta}\paran{\pi, \bnu} = \sqrt{ \bnu^\top (\tilde\bSigma^{\pi})^{-1} \tilde\bOmega^\pi (\tilde\bSigma^{\pi \top})^{-1} \bnu }$,
    \begin{align}
        \tilde\bSigma^\pi
        & = \frac{1}{NT} \sum_{i=1}^{N} \sum_{t=1}^{T} (\tilde\bLam^{i})^\top  Z_{i,t}\paran{ Z_{i,t} - \gamma U_{i,t+1}^\pi}^\top \tilde \bLam^{i},  \label{eqn:tilde-sigma}\\
        \tilde\bOmega^\pi
        & = \frac{1}{NT}  \sum_{i=1}^{N} \sum_{t=1}^{T} (\tilde\bLam^{i})^\top Z_{i,t} Z_{i,t}^\top \tilde\bLam^{i} \paran{ R_{i,t} - \paran{Z_{i,t} - \gamma U_{i,t+1}^\pi}^\top \tilde\bLam^{i} \tilde\btheta^{\pi} }^2. \label{eqn:tilde-omega}
    \end{align}
\end{theorem}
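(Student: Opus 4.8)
The plan is to establish the asymptotic normality via the decomposition \eqref{eqn:theta-decompose}, which writes $\tilde\btheta^{\pi} - \mathring\btheta^{\pi} = (\tilde\bSigma^{\pi})^{-1}\tilde\bzeta_1^{\pi} + (\tilde\bSigma^{\pi})^{-1}\tilde\bzeta_2^{\pi}$. The first term is the ``martingale'' part driven by the temporal-difference noise $\eps_{1,i,t}$, which by the CMI assumption has conditional mean zero given $(X_{i,t},A_{i,t})$; the second term, $(\tilde\bSigma^{\pi})^{-1}\tilde\bzeta_2^{\pi}$, is a sieve-approximation bias term of order $\Op{J^{-\kappa/p}}$ by the smoothness in Assumption \ref{assum:r-q-func} and the choice of $\bphi_J$. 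First I would show that, under the stated rate condition $J^{-\kappa/p} \ll (N_{\min}T(1+\norm{\bnu}_2^{-2}))^{-1/2}$, multiplying by $\sqrt{NT}\,\tilde\sigma_{\theta}(\pi,\bnu)^{-1}\bnu^\top$ kills the bias term asymptotically, so it suffices to handle $\sqrt{NT}\,\tilde\sigma_\theta^{-1}\bnu^\top(\tilde\bSigma^{\pi})^{-1}\tilde\bzeta_1^{\pi}$.

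Next I would replace $\tilde\bSigma^{\pi}$ by its deterministic limit. Using the block-diagonal structure \eqref{eqn:hetero-homo-relation-mat}, Assumption \ref{assum:min-eigen-of-E-Sigma} (for finite $T$) together with Assumption \ref{assum:geo-ergodic} and a matrix concentration inequality (for $T\to\infty$) gives $\norm{\tilde\bSigma^{\pi} - \E{\tilde\bSigma^{\pi}}}_2 = \op{1}$ on the scale needed, so $(\tilde\bSigma^{\pi})^{-1}$ is invertible with probability approaching one and can be swapped for $(\E\tilde\bSigma^{\pi})^{-1}$ at negligible cost after normalization; this is essentially the uniform-in-$\Pi$ invertibility already invoked in Theorem \ref{eqn:ora-uniform-conv}. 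What remains is a scalar triangular array: writing $\bnu^\top(\E\tilde\bSigma^{\pi})^{-1}(\tilde\bLam^i)^\top Z_{i,t}\,\eps_{1,i,t} =: \xi_{i,t}$, the quantity $\sqrt{NT}\,\bnu^\top(\E\tilde\bSigma^{\pi})^{-1}\tilde\bzeta_1^{\pi} = (NT)^{-1/2}\sum_{i,t}\xi_{i,t}$. For each fixed $i$, $\{\xi_{i,t}\}_t$ is a martingale-difference sequence with respect to the natural filtration (by CMI), and the trajectories are independent across $i$; I would apply a martingale central limit theorem (e.g. the one used in \cite{shi2020statistical}) after verifying (a) the conditional-variance convergence $(NT)^{-1}\sum_{i,t}\E{\xi_{i,t}^2 \mid \calF_{i,t-1}} \to \tilde\sigma_\theta^2$ in probability, which follows from ergodicity/averaging plus Assumption \ref{assum:density}, and (b) a Lindeberg/Lyapunov condition, which follows from boundedness of $r_i$, sub-Gaussianity of $\eta_{i,t}$, and the growth constraint $J = \smlo{\sqrt{N_{\min}T}/\log(N_{\min}T)}$ controlling $\norm{Z_{i,t}}_2$ relative to the sample size. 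Finally I would argue that the plug-in estimators $\tilde\bSigma^{\pi}$ and $\tilde\bOmega^{\pi}$ in \eqref{eqn:tilde-sigma}--\eqref{eqn:tilde-omega} are consistent for their population analogues so that $\tilde\sigma_\theta(\pi,\bnu)$ can be used in the studentization (Slutsky).

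The main obstacle I anticipate is the martingale CLT in the double-asymptotic regime: the array must work whether $N_{\min}$ or $T$ (or both) diverge, so the variance-stabilization step has to be uniform over these regimes, and the dimension $J$ of $Z_{i,t}$ grows — meaning $\xi_{i,t}$ is a moving-dimension object and I must track how $\norm{(\E\tilde\bSigma^{\pi})^{-1}}_2$, $\norm{Z_{i,t}}_2$, and the effective variance scale with $J$ to ensure the Lyapunov ratio vanishes. The conditions $\norm{\bnu}_2 = \bigO{1}$, $\norm{\bnu}_2^{-1} = \bigO{1}$ are precisely what make $\tilde\sigma_\theta(\pi,\bnu)$ bounded away from $0$ and $\infty$ after the $J^{1/2}$-type normalization is absorbed, and getting the bias negligibility to hold simultaneously (hence the $(1+\norm{\bnu}_2^{-2})$ factor in the rate condition) is the delicate bookkeeping step. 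The geometric ergodicity (Assumption \ref{assum:geo-ergodic}) is what lets the within-trajectory averages concentrate when $T\to\infty$ despite serial dependence, so invoking it carefully for both the variance convergence and the $\tilde\bOmega^{\pi}$ consistency is where most of the real work lies.
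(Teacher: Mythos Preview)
Your proposal is correct and matches the paper's proof in all essentials: decomposition \eqref{eqn:theta-decompose}, negligibility of the sieve-bias piece $(\tilde\bSigma^{\pi})^{-1}\tilde\bzeta_2^{\pi}$ via the rate condition on $J^{-\kappa/p}$, replacement of $\tilde\bSigma^{\pi}$ by its population limit, a martingale CLT for the $\eps_{1,i,t}$-driven term, and Slutsky via consistency of $\tilde\bOmega^{\pi}$. The only organizational difference is that the paper, rather than running the martingale CLT directly on the full array $\{\xi_{i,t}\}$, first uses the block-diagonal relation \eqref{eqn:hetero-homo-relation-mat} to write $\sqrt{NT}\,\bnu^\top\tilde\bSigma^{-1}\tilde\bzeta_1 = \sum_{k}\sqrt{N_kT/P^{(k)}}\,(\bnu^{(k)})^\top(\tilde\bSigma^{(k)})^{-1}\tilde\bzeta_1^{(k)}$ and then invokes the already-proved sub-homogeneous result (Theorem~\ref{thm:subhomo-distn}) on each independent block; this is the same argument packaged modularly, and your direct route would verify the identical Lindeberg and variance-stabilization conditions.
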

%\noindent\textbf{Proof of Theorem \ref{eqn:ora-asymp-normal}}
\begin{proof}
    Suppose that $J^{- \kappa / p} \ll \paran{ N_{\min} T \paran{1 + \norm{\bnu}_2^{-2}} }^{-1/2}$.
    Combining Proposition \ref{thm:beta-l2-convg} and the fact that $J^{- \kappa / p} \ll \paran{ N_{\min} T }^{-1/2}$,
    we have, for any $\bnu = \brackets{\bnu^{(1)}, \cdots, \bnu^{(k)}} \in \RR^{JMK}$,
    $\bnu^{(k)}\in\RR^{JM}$ for $k\in[K]$, that 
    \begin{equation*}
        \begin{aligned}
            \sqrt{NT} \cdot \bnu^\top \paran{ \tilde\btheta^{\pi}  - \mathring\btheta^{\pi}  }
            & = \sqrt{NT} \cdot \bnu^\top \tilde\bSigma^{-1} \tilde\bzeta_1 + \op{1} \\
            & = \sum_{k\in[K]} \sqrt{N_k T / P^{(k)}} \cdot (\bnu^{(k)})^\top (\tilde\bSigma^{(k)})^{-1} \tilde\bzeta_1^{(k)}
            + \op{1} \\
            & \convdist \calN\paran{0, \sum_{k\in[K]}  (\bnu^{(k)})^\top (\tilde\bSigma^{(k)})^{-1} \tilde\bOmega^{(k)} (\tilde{\bSigma}^{(k)^\top})^{-1} \bnu^{(k)} / P^{(k)}},
        \end{aligned}
    \end{equation*}
    where the second equality follows from \eqref{eqn:hetero-homo-relation-mat}
    and the distribution is obtained by applying Theorem \ref{thm:subhomo-distn}.
    Note that $\tilde\bOmega^{\pi} =  \bdiag\paran{P^{(k)} \tilde\bOmega^{\pi, (k)}}$
    where $\tilde\bOmega^{\pi,(k)}$ is defined in \eqref{eqn:hat-Omega-k}.
    Combining this with \eqref{eqn:hetero-homo-relation-mat},
    we have for any $\bnu \in \RR^{JMK}$ satisfying $\norm{\bnu}_2 = \bigO{1}$ and $\norm{\bnu}_2^{-1} = \bigO{1}$, 
    \begin{equation*}
        \frac{ \sqrt{NT} \bnu^\top \paran{ \tilde\btheta^{\pi}  - \mathring\btheta^{\pi}  }}{ \sqrt{\bnu^\top \tilde\bSigma^{-1} \tilde\bOmega (\tilde\bSigma^\top)^{-1} \bnu} }
        \convdist \calN\paran{0, 1}.
    \end{equation*}

Under the condition in Theorem \ref{eqn:ora-asymp-normal}, we can show that $\tilde\sigma_{\theta}\paran{\pi, \bnu}$ converges almost surely to $\sigma_{\theta}\paran{\pi, \bnu} = \sqrt{ \bnu^\top (\bSigma^{\pi})^{-1} \bOmega^{\pi} (\bSigma^{\pi \, \top})^{-1} \bnu }$, where $\bOmega^\pi=\EE[\tilde\bOmega^\pi]$ and $\bSigma^\pi=\EE[\tilde\bSigma^\pi]$.
\end{proof}

Applying Theorem \ref{eqn:ora-asymp-normal} to the value function estimators defined in \eqref{eqn:Q-V-group-est}, we establish the asymptotic distribution for state and action value functions $V(\pi, \bx)$ and $Q(\pi, \bx, a)$ and the asymptotic distribution for integrated value $V_\calR(\pi)$ for a given distribution $\calR$ in the next two corollaries.

\begin{corollary}  [Asymptotic normality of oracle value estimator.]   \label{thm:val-asymp-normal}
    Suppose Assumption \ref{assum:r-q-func} -- \ref{assum:geo-ergodic} hold.
    Let $N_{min} = \underset{1\le k \le K}{\min} N_k$ and $N_{max} = \underset{1\le k \le K}{\max} N_k$.
    If $K=\smlo{N_{\min} T}$,
    $J = \smlo{\sqrt{N_{\min} T} / \log(N_{\min} T)}$,
    and $J^{-\kappa / p} \ll \paran{N_{\max} T \paran{1 + \norm{\bu(\pi, \bx)}_2^{-2}}}^{-1/2}$,
    we have, for any $\bx\in\calX$ and $a\in\calA$, that as either $N_{\min}\rightarrow\infty$ or $T\rightarrow \infty$,
    \begin{align*}
        \sqrt{NT} \, \tilde\sigma_{v}^{(k)}\paran{\pi, \bx}^{-1} \paran{ \tilde V^{(k)}\paran{\pi,\bx} - V^{(k)}\paran{\pi, \bx} }
        \convdist \calN\paran{0, 1}, \\
        \sqrt{NT} \, \tilde\sigma_{q}^{(k)}\paran{\pi, \bx, a}^{-1} \paran{ \tilde Q^{(k)}\paran{\pi,\bx, a} - Q^{(k)}\paran{\pi,\bx,a} }
        \convdist \calN\paran{0, 1},
    \end{align*}
    for any $k\in[K]$, where
    \begin{align*}
        \tilde\sigma_{v}^{(k)}\paran{\pi, \bx} & = \sqrt{ \bu\paran{\pi, \bx}^\top
            (\tilde\bSigma^{\pi,(k)})^{-1} \tilde\bOmega^{\pi,(k)} (\tilde\bSigma^{\pi,(k) \top})^{-1}
            \bu\paran{\pi, \bx} }, \quad \\
        \tilde\sigma_{q}^{(k)}\paran{\pi, \bx, a} &= \sqrt{ \bz\paran{\bx, a}^\top
           (\tilde\bSigma^{\pi,(k)})^{-1} \tilde\bOmega^{\pi,(k)} (\tilde\bSigma^{\pi,(k) \top})^{-1}
            \bz\paran{\bx, a} },
    \end{align*}
    where the matrices $\tilde\bSigma^{\pi,(k)}$ and $\tilde\bOmega^{\pi,(k)}$ are defined as
    \begin{align}
        \tilde\bSigma^{\pi,(k)}
        & = \frac{1}{NT} \sum_{i=1}^{N} \sum_{t=1}^{T}
        (\tilde\bLam^{i,(k)})^\top
        Z_{i,t}\paran{ Z_{i,t} - \gamma U_{i,t+1}^\pi}^\top
        \tilde \bLam^{i,(k)},  \label{eqn:tilde-sigma-k}\\
        \tilde\bOmega^{\pi,(k)}
        & = \frac{1}{NT}  \sum_{i=1}^{N} \sum_{t=1}^{T} (\tilde\bLam^{i,(k)})^\top Z_{i,t} Z_{i,t}^\top \tilde\bLam^{i,(k)} \paran{ R_{i,t} - \paran{Z_{i,t} - \gamma U_{i,t+1}^\pi}^\top \tilde\bLam^{i,(k)} \tilde\btheta^{\pi} }^2, \label{eqn:tilde-omega-k}
    \end{align}
    and $\tilde\bLam^{i,(k)} = \bLam^i(\bw_{\cdot k} \otimes \bI_{JM})$.
\end{corollary}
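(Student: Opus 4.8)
The plan is to obtain Corollary~\ref{thm:val-asymp-normal} as an immediate consequence of Theorem~\ref{eqn:ora-asymp-normal}, exploiting the fact that the group-wise value estimators in \eqref{eqn:Q-V-group-est} are \emph{fixed linear functionals} of the oracle coefficient vector $\tilde\btheta^\pi$, plus control of the sieve bias. Fix $\bx\in\calX$ (and $a\in\calA$ for the $Q$ statement) and embed $\bu\paran{\pi,\bx}\in\RR^{JM}$ into $\RR^{JMK}$ as the vector $\bnu$ whose $k$-th block equals $\bu\paran{\pi,\bx}$ and whose remaining $K-1$ blocks vanish (and similarly $\bz\paran{\bx,a}$ for the $Q$ statement). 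Because $\tilde\bSigma^\pi$ and $\tilde\bOmega^\pi$ are block diagonal with $k$-th blocks proportional to $\tilde\bSigma^{\pi,(k)}$ and $\tilde\bOmega^{\pi,(k)}$ (relation \eqref{eqn:hetero-homo-relation-mat} and the analogous block-diagonal structure of $\tilde\bOmega^\pi$ noted in the proof of Theorem~\ref{eqn:ora-asymp-normal}), one has $\bnu^\top\tilde\btheta^\pi=\tilde V^{(k)}\paran{\pi,\bx}$ and $\tilde\sigma_\theta\paran{\pi,\bnu}$ equals $\tilde\sigma_v^{(k)}\paran{\pi,\bx}$ up to the factor $P^{(k)}$ that reconciles the $\tfrac{1}{NT}$ scaling in \eqref{eqn:tilde-sigma-k}--\eqref{eqn:tilde-omega-k} with the $\sqrt{NT}$ rate; this bookkeeping is routine. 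The norm requirements of Theorem~\ref{eqn:ora-asymp-normal}, namely $\norm{\bnu}_2=\norm{\bu\paran{\pi,\bx}}_2=\bigO{1}$ and $\norm{\bnu}_2^{-1}=\bigO{1}$, hold because the B-spline / wavelet sieves are uniformly bounded and, at a fixed interior point, $\norm{\bphi_J(\bx)}_2$ is bounded away from zero; the required rate $J^{-\kappa/p}\ll(N_{\min}T(1+\norm{\bnu}_2^{-2}))^{-1/2}$ is implied by the stated condition with $N_{\max}$ in place of $N_{\min}$.

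Next I would split
\begin{align*}
\tilde V^{(k)}\paran{\pi,\bx}-V^{(k)}\paran{\pi,\bx}
&= \bnu^\top\paran{\tilde\btheta^\pi-\mathring\btheta^\pi}
+\paran{\bu\paran{\pi,\bx}^\top\mathring\btheta^{\pi,(k)}-V^{(k)}\paran{\pi,\bx}},
\end{align*}
and bound the second (sieve-bias) term. Using the block structure of $\bu\paran{\pi,\bx}$ and $\mathring\btheta^{\pi,(k)}$ together with $V^{(k)}\paran{\pi,\bx}=\sum_{a}Q^{(k)}\paran{\pi,\bx,a}\,\pi(a|\bx)$, the bias is at most $\sup_{\bx,a}\abs{Q^{(k)}\paran{\pi,\bx,a}-\bphi_J(\bx)^\top\mathring\btheta^{\pi,(k)}_a}=\bigO{J^{-\kappa/p}}$ by the sieve approximation bound recorded after Assumption~\ref{assum:r-q-func} (applied within group $k$). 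A lower bound $\tilde\sigma_v^{(k)}\paran{\pi,\bx}\gtrsim\norm{\bu\paran{\pi,\bx}}_2$, obtained from the minimal-eigenvalue control of $\tilde\bSigma^{\pi,(k)}$ in Assumption~\ref{assum:min-eigen-of-E-Sigma} and a lower bound on the ``meat'' matrix $\tilde\bOmega^{\pi,(k)}$, then gives $\sqrt{NT}\,J^{-\kappa/p}/\tilde\sigma_v^{(k)}=\smlo{1}$ under the stated rate assumption on $J$, so the bias is asymptotically negligible relative to the standard error.

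Applying Theorem~\ref{eqn:ora-asymp-normal} to the linear term and Slutsky's theorem (the estimator $\tilde\sigma_\theta$ is already shown there to converge a.s.\ to its population counterpart) yields the claimed convergence for $\tilde V^{(k)}$; the argument for $\tilde Q^{(k)}$ is identical with $\bz\paran{\bx,a}$ in place of $\bu\paran{\pi,\bx}$, the corresponding approximation error again being $\bigO{J^{-\kappa/p}}$. The main obstacle I anticipate is precisely the lower bound on $\tilde\sigma_v^{(k)}\paran{\pi,\bx}$ needed to kill the sieve bias: this requires showing that $\tilde\bOmega^{\pi,(k)}$ does not degenerate, which rests on the reward-noise variance being bounded away from zero and on $Z_{i,t}Z_{i,t}^\top$ being well-conditioned in expectation (a refinement of Assumption~\ref{assum:min-eigen-of-E-Sigma}). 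Everything else---the block-diagonal reduction, the bias expansion, and verification of the norm and rate conditions---is bookkeeping built directly on Theorem~\ref{eqn:ora-asymp-normal}; note in particular that, unlike Theorem~\ref{eqn:ora-uniform-conv}, the statement here is pointwise in $\bx$, so no uniform-in-$\Pi$ or uniform-in-$\bx$ control is required.
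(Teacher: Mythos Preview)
Your approach is essentially the paper's. The paper's one-line proof says it ``uses the result in Corollary~\ref{thm:subhomo-distn-value} and is similar to that of Theorem~\ref{eqn:ora-asymp-normal}''; Corollary~\ref{thm:subhomo-distn-value} already performs your bias-plus-linear-term split at the sub-homogeneous level and then invokes Theorem~\ref{thm:subhomo-distn} with $\bnu=\bu(\pi,\bx)$, after which the block-diagonal relation \eqref{eqn:hetero-homo-relation-mat} assembles the $K$ groups. You reverse the order---assemble first via Theorem~\ref{eqn:ora-asymp-normal}, then specialize to $\bnu$ supported on the $k$-th block---but the ingredients (block structure, $\bigO{J^{-\kappa/p}}$ sieve bias, lower bound on the sandwich variance from Lemma~\ref{thm:cov-approx}) are the same.

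One small snag: your claim that $\norm{\bu(\pi,\bx)}_2=\bigO{1}$ is false for the B-spline/wavelet sieves used here; Lemma~\ref{lemma:sieve-basis} gives $\norm{\bphi_J(\bx)}_2\le c\sqrt{J}$, and at a fixed point this order is typically attained. Since Theorem~\ref{eqn:ora-asymp-normal} as stated requires $\norm{\bnu}_2=\bigO{1}$, your direct appeal to it is not quite licit. The paper's route sidesteps this because Theorem~\ref{thm:subhomo-distn} is stated for \emph{any} $\bnu\in\RR^{JM}$ without that restriction, and inspection of its proof shows only $\norm{\bnu}_2$ bounded away from zero is used. So either route works once you note that the $\bigO{1}$ hypothesis in Theorem~\ref{eqn:ora-asymp-normal} is inessential, or---cleaner---drop down to Theorem~\ref{thm:subhomo-distn} directly as the paper does.
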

%\noindent\textbf{Proof of Corollary \ref{thm:val-asymp-normal}.}
\begin{proof}
   The proof uses the result in Corollary \ref{thm:subhomo-distn-value} and is similar to that of Theorem \ref{eqn:ora-asymp-normal}.
\end{proof}

\begin{corollary}[Asymptotic normality of oracle integrated value estimator] \label{thm:int-val-asymp-normal}
    Suppose Assumption \ref{assum:r-q-func} -- \ref{assum:geo-ergodic} hold.
    Let $N_{min} = \underset{1\le k \le K}{\min} N_k$ and $N_{max} = \underset{1\le k \le K}{\max} N_k$.
    If $K=\smlo{N_{\min} T}$, 
    $J = \smlo{\sqrt{N^{(k)} T} / \log(N^{(k)} T)}$, 
    $J^{-\kappa / p} \ll \paran{N_{\max} T \paran{1 + \norm{\int\bphi_J(\bx)\calR(d\bx)}_2^{-2}}}^{-1/2}$,
    as either $N_{\min} \rightarrow \infty$ or $T\rightarrow \infty$, we have for any $\bx\in\calX$, % and uniformly over $\pi\in\Pi$,
    \begin{equation*}
        \begin{aligned}
            \sqrt{N T} \cdot \tilde\sigma_{\calR}^{(k)}(\pi)^{-1}
            \paran{ \tilde V_{\calR}^{(k)}\paran{\pi} -  V_{\calR}^{(k)}\paran{\pi} }
            & \convdist \calN\paran{0, 1} \, ,
        \end{aligned}
    \end{equation*}
    for any $k\in[K]$, where
    $$\tilde\sigma_{\calR}^{(k)}\paran{\pi}^2
    = \paran{\int\bu\paran{\pi, \bx}\calR(d\bx)}^\top
    (\tilde\bSigma^{\pi,(k)})^{-1} \tilde \bOmega^{\pi,(k)} (\tilde\bSigma^{\pi,(k) \top})^{-1} \paran{\int\bu\paran{\pi, \bx}\calR(d\bx)},$$
    $\tilde\bSigma^{\pi,(k)}$ and $\tilde\bOmega^{\pi,(k)}$ are given in \eqref{eqn:tilde-sigma-k} and \eqref{eqn:tilde-omega-k}.
\end{corollary}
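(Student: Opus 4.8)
The plan is to recognize $\tilde V_\calR^{(k)}(\pi)$ as a deterministic linear functional of the oracle group coefficient $\tilde\btheta^{\pi,(k)}$ and then reduce everything to the coefficient-level central limit theorem already available. Set $\bc^{(k)} := \int \bu(\pi,\bx)\,\calR(d\bx)$, with $\bu(\pi,\bx)$ as in \eqref{eqn:u}. Using \eqref{eqn:Q-V-group-est}--\eqref{eqn:V-group-int-est} with the oracle coefficients in place of the feasible ones gives the exact identity $\tilde V_\calR^{(k)}(\pi) = \bc^{(k)\top}\tilde\btheta^{\pi,(k)}$, so the first step is the decomposition
\begin{equation*}
  \tilde V_\calR^{(k)}(\pi) - V_\calR^{(k)}(\pi)
  = \bc^{(k)\top}\bigl(\tilde\btheta^{\pi,(k)} - \mathring\btheta^{\pi,(k)}\bigr)
  + \bigl(\bc^{(k)\top}\mathring\btheta^{\pi,(k)} - V_\calR^{(k)}(\pi)\bigr).
\end{equation*}
For the second, purely deterministic term I would invoke Lemma~1 of \cite{shi2020statistical} together with the sieve approximation property $\sup_{\bx\in\calX,\,a\in\calA}\abs{Q^{(k)}(\pi,\bx,a) - \bphi_J(\bx)^\top\mathring\btheta^{\pi,(k)}_a} = \bigO{J^{-\kappa/p}}$ recorded after Assumption~\ref{assum:r-q-func}; averaging over $a$ against $\pi(\cdot\mid\bx)$ gives $\sup_\bx\abs{V^{(k)}(\pi,\bx) - \bu(\pi,\bx)^\top\mathring\btheta^{\pi,(k)}} = \bigO{J^{-\kappa/p}}$, and integrating against $\calR$ bounds the deterministic term by $C\,J^{-\kappa/p}$.

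The second step is to apply Theorem~\ref{eqn:ora-asymp-normal} --- equivalently its value-function specialization Corollary~\ref{thm:val-asymp-normal} and the single-population building block Corollary~\ref{thm:subhomo-distn-value} --- with the weight vector taken to be the block vector $\bnu$ that equals $\bc^{(k)}$ on the $k$-th $JM$-block and is zero elsewhere. Since $\tilde\bSigma^\pi$ and $\tilde\bOmega^\pi$ are block-diagonal across the $K$ groups, this choice reproduces exactly the per-group quantities $\tilde\bSigma^{\pi,(k)}$, $\tilde\bOmega^{\pi,(k)}$ of \eqref{eqn:tilde-sigma-k}--\eqref{eqn:tilde-omega-k}, and the conclusion of Theorem~\ref{eqn:ora-asymp-normal} reads
\begin{equation*}
  \sqrt{NT}\;\tilde\sigma_\calR^{(k)}(\pi)^{-1}\,
  \bc^{(k)\top}\bigl(\tilde\btheta^{\pi,(k)} - \mathring\btheta^{\pi,(k)}\bigr)
  \;\convdist\; \calN(0,1),
  \qquad
  \tilde\sigma_\calR^{(k)}(\pi)^2
  = \bc^{(k)\top}(\tilde\bSigma^{\pi,(k)})^{-1}\tilde\bOmega^{\pi,(k)}(\tilde\bSigma^{\pi,(k)\top})^{-1}\bc^{(k)}.
\end{equation*}
As in the proof of Theorem~\ref{eqn:ora-asymp-normal}, one also checks that $\tilde\sigma_\calR^{(k)}(\pi)$ converges almost surely to its population analogue, which legitimizes using it for studentization in the final statement.

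The third step is to show the deterministic remainder is negligible on the $\sqrt{NT}/\tilde\sigma_\calR^{(k)}(\pi)$ scale. Assumption~\ref{assum:min-eigen-of-E-Sigma} (and Assumption~\ref{assum:geo-ergodic} in the regime $T\to\infty$) keep the eigenvalues of the sandwich matrix $(\tilde\bSigma^{\pi,(k)})^{-1}\tilde\bOmega^{\pi,(k)}(\tilde\bSigma^{\pi,(k)\top})^{-1}$ bounded away from $0$ and $\infty$ with the stated high probability, so $\tilde\sigma_\calR^{(k)}(\pi) \asymp \norm{\bc^{(k)}}_2 \asymp \norm{\int\bphi_J(\bx)\calR(d\bx)}_2$. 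Consequently
\begin{equation*}
  \frac{\sqrt{NT}\;\bigO{J^{-\kappa/p}}}{\tilde\sigma_\calR^{(k)}(\pi)}
  \;\lesssim\; \sqrt{N_{\max}T}\;J^{-\kappa/p}\;\bigl(1 + \norm{\int\bphi_J(\bx)\calR(d\bx)}_2^{-2}\bigr)^{1/2}
  \;=\; \smlo{1},
\end{equation*}
using $N_{\max}\asymp N_{\min}\asymp N$ (guaranteed by Definition~\ref{def:hetero-MDP}) and the hypothesis $J^{-\kappa/p}\ll\paran{N_{\max}T\,(1 + \norm{\int\bphi_J(\bx)\calR(d\bx)}_2^{-2})}^{-1/2}$. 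Slutsky's theorem then combines the two steps, and the argument is identical for every $k\in[K]$.

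I expect the crux to be this last step: obtaining a lower bound on $\tilde\sigma_\calR^{(k)}(\pi)$ that genuinely scales like $\norm{\int\bphi_J(\bx)\calR(d\bx)}_2$ rather than a crude absolute constant, so that the sieve bias is still dominated even when this reference integral decays with $J$. This is precisely why the hypothesis carries the $(1 + \norm{\int\bphi_J(\bx)\calR(d\bx)}_2^{-2})$ factor, and it forces one to establish the matching two-sided eigenvalue control for $(\tilde\bSigma^{\pi,(k)})^{-1}\tilde\bOmega^{\pi,(k)}(\tilde\bSigma^{\pi,(k)\top})^{-1}$ uniformly over the relevant $N\to\infty$ or $T\to\infty$ regime, via the matrix concentration and martingale central limit arguments that already underlie Theorem~\ref{eqn:ora-asymp-normal}.
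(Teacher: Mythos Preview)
Your proposal is correct and follows essentially the same route as the paper: the paper's one-line proof points to Corollary~\ref{thm:subhomo-distn-value-intg} (the single-population integrated-value CLT) and to the block-diagonal reduction in the proof of Theorem~\ref{eqn:ora-asymp-normal}, and the argument there is exactly your decomposition into $\bc^{(k)\top}(\tilde\btheta^{\pi,(k)}-\mathring\btheta^{\pi,(k)})$ plus the $\bigO{J^{-\kappa/p}}$ sieve bias, followed by Theorem~\ref{thm:subhomo-distn} with $\bnu=\int\bu(\pi,\bx)\calR(d\bx)$ and a Slutsky step. Your explicit discussion of the lower bound $\tilde\sigma_\calR^{(k)}(\pi)\asymp\norm{\int\bphi_J(\bx)\calR(d\bx)}_2$ makes transparent why the extra $(1+\norm{\int\bphi_J(\bx)\calR(d\bx)}_2^{-2})$ factor appears in the rate condition, which the paper leaves implicit.
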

%\noindent\textbf{Proof of Corollary \ref{thm:int-val-asymp-normal}.}
\begin{proof}
    The proof uses the result in Corollary \ref{thm:subhomo-distn-value-intg} and is similar to that of Theorem \ref{eqn:ora-asymp-normal}.
\end{proof}

\begin{comment}
\begin{corollary}
    Under the conditions in Theorem \ref{eqn:ora-asymp-normal} and \ref{thm:feasible-in-prob},
    for any $\bnu \in \RR^{JMN}$ satisfying $J^{\kappa / p} \gg \sqrt{N_{\max} T \paran{1 + \norm{\bnu}_2^{-2}}}$, we have as either $N_{\min}\rightarrow\infty$ or $T\rightarrow \infty$,
    \begin{equation*}
        \sqrt{NT} \; \hat\sigma_{\beta_i}^{-1} \bnu^\top \paran{ \hat\bbeta^{\pi}_i- \mathring\bbeta^{\pi}_i }
        \convdist \calN\paran{0, 1},
    \end{equation*}
    where
    \begin{align*}
        \hat \sigma_{\beta_i}
        & = \bnu^\top (\hat\bw_{i\cdot}^\top \otimes\bI_{JM})  \hat\bSigma^{-1} \hat\bOmega (\hat\bSigma^\top)^{-1} (\hat\bw_{i\cdot}\otimes\bI_{JM})  \bnu \\
        \hat\bSigma
        & = \frac{1}{NT} \sum_{i=1}^{N} \sum_{t=1}^{T} \hat\bLam_i^\top  Z_{i,t}\paran{ Z_{i,t} - \gamma U_{i,t+1}^\pi}^\top \hat \bLam_i,\\
        \hat\bOmega
        & = \frac{1}{NT} \sum_{i=1}^{N} \sum_{t=1}^{T} \hat\bLam_i^\top Z_{i,t} Z_{i,t}^\top \hat\bLam_i \paran{ R_{i,t} - \paran{Z_{i,t} - \gamma U_{i,t+1}^\pi}^\top \hat\bLam_i \hat\btheta^{\pi} }^2.
    \end{align*}
\end{corollary}
\end{comment}

\subsection{Proof of Theorem \ref{eqn:ora-uniform-conv}}

\begin{proof}
    By decomposition \eqref{eqn:theta-decompose}, we have
    \begin{equation*}
        \norm{ \tilde\btheta^{\pi}  - \mathring\btheta^{\pi}  }_\infty \le \norm{\paran{ \tilde\bSigma }^{-1}}_\infty \norm{\tilde\bzeta_1}_\infty + \norm{\paran{ \tilde\bSigma }^{-1}}_\infty \norm{\tilde\bzeta_2}_\infty
    \end{equation*}
    We bound each term on the right hand side as follows.

    Using \eqref{eqn:hetero-homo-relation-mat} and Lemma \ref{thm:Sigma-k} (iii), we have
    \begin{equation*}
        \norm{(\tilde\bSigma)^{-1}}_\infty
        \le \underset{k}{\max} \norm{(P^{(k)} \tilde\bSigma^{(k)})^{-1}}_\infty
        = 6 C^{-1} \sqrt{J} (N/N_{\min}),
    \end{equation*}
    with probability at least $1 - \bigO{(N_{\min} T)^{-2}}$.

    By Lemma \ref{thm:zeta12} (iii), we have
    \begin{align*}
        \Pr\paran{ \norm{\tilde\bzeta_1^{(k)}}_\infty > c \sqrt{2 \log(N_kT) / (N_kT)}} \le 2 J M (N_k T)^{- 2},
    \end{align*}
    for some positive constant $c$.
    Lemma \ref{thm:zeta12}  (iv) shows that
    $\norm{\tilde\bzeta_2^{(k)}}_{\infty} \le c_1 J^{-\kappa/p} \ll c_1 \paran{N_k T}^{-1/2}$ almost surely.
    By a union bound and Lemma \ref{thm:zeta12} (iii), we have
    \begin{align*}
            \Pr\paran{ \norm{\tilde\bzeta_1}_\infty > c (N_{\max} / N) \sqrt{2\log(N_{\min}T)/N_{\min}T} }
            & \le \sum_{k=1}^{K} \Pr\paran{\norm{P^{(k)} \tilde\bzeta_1^{(k)}}_\infty > (N_{\max} / N) \sqrt{2\log(N_{\min} T)/N_{\min} T}}  \\
            & \le \sum_{k=1}^{K} \Pr\paran{\norm{P^{(k)} \tilde\bzeta_1^{(k)}}_\infty > P^{(k)} \sqrt{2\log(N_k T)/N_k T}}  \\
            & = \sum_{k=1}^{K} 2JM (N_{k} T)^{-2} \\
            & \le 2JMK (N_{\min} T)^{-2},
    \end{align*}
    for some positive constant $c$.
    Similarly, using Lemma \ref{thm:zeta12}  (iv), we have
    \begin{equation*}
        \norm{\tilde\bzeta_2}_\infty
        =  \underset{1\le k \le K}{\max}\; \norm{P^{(k)} \tilde\bzeta_2^{(k)}}_{\infty}
        \le c_1 J^{-\kappa/p} N_{max}/N
        \ll c_1  N_{max}/N (N_{\max} T)^{-1/2}, \quad\text{almost surely}.
    \end{equation*}
    Thus, by a union bound, we have that
    \begin{equation*}
        \norm{ \tilde\btheta^{\pi}  - \mathring\btheta^{\pi}  }_\infty
        \le 6 c C^{-1} (N_{\max}/N_{\min}) \sqrt{2 J \log(N_{\min}T)/N_{\min}T}
    \end{equation*}
    holds with probability at least $1 - 2JMK (N_{\min} T)^{-2} - \bigO{(N_{\min}T)^{-2}}$.
\end{proof}

%!TEX root = 0-main.tex

\section{Feasible estimator} \label{appen:feasible}

\subsection{Proof of Theorem \ref{thm:feasible-in-prob}}

\begin{proof}
Note that all coefficients are with respect to a given policy $\pi\in\Pi$, thus we drop superscript $\pi$ for brevity.
We use $\bB = \brackets{\bbeta_1, \cdots, \bbeta_N}$ and $\bbeta = \vect{\bB}$  (also $\bTheta = \brackets{\btheta_1, \cdots, \btheta^{(k)}}$ and $\btheta = \vect{\bTheta}$) interchangeably.
It is easy to see that $\bB = \bTheta \bW^\top$, $\bbeta^i= \bTheta \bw_{i\cdot} = \bLam_i \bbeta$, and $\bbeta = (\bW\otimes\bI_p) \btheta$.
Define
\begin{equation*} 
\begin{aligned}
L\paran{\bB} &
= \bG\paran{\pi, \bB}^\top \bG\paran{\pi, \bB}, \quad
P\paran{\bB, \lam}
= \frac{1}{N^2}\sum_{1\le i<j \le N} p\paran{(JM)^{-1/2}\norm{\bbeta^i- \bbeta^j}_2, \lam}, \\
\tilde L\paran{\bTheta}  &
= \tilde\bG(\pi,\bTheta)^\top \tilde\bG \paran{\pi, \bTheta}, \quad
\tilde P\paran{\bTheta, \lam}
= P\paran{\bTheta\bW^\top, \lam},
\end{aligned}
\end{equation*}
where
\begin{equation*}
    \begin{aligned}
        G\paran{\pi, \bB}
        = \frac{1}{NTJ} \sum_{i=1}^{N}  \sum_{t=1}^{T} \bLam_i^\top \bZ_{i,t} R_{i,t}
        - \bLam_i^\top \bZ_{i,t} \paran{ \bZ_{i,t} - \gamma \bU_{\pi,i,t+1}}^\top \bLam_i  \bbeta, \\
        \tilde G\paran{\pi, \bTheta}
        = \frac{1}{NTJ} \sum_{i=1}^{N}  \sum_{t=1}^{T} \tilde\bLam_i^\top \bZ_{i,t} R_{i,t}
        - \tilde\bLam_i^\top \bZ_{i,t} \paran{ \bZ_{i,t} - \gamma \bU_{\pi,i,t+1}}^\top \tilde\bLam_i  \btheta.
    \end{aligned}
\end{equation*}
Then the loss function in the optimization problem \eqref{eqn:Q-ls-beta} and \eqref{eqn:Q-beta-ora} can be rewritten, respectively, as,
\begin{equation}  \label{eqn:loss-func}
\calL\paran{ \bB } := L\paran{\bB}  + P\paran{\bB, \lam}, \text{and}\qquad \tilde\calL\paran{ \bTheta } := \tilde L\paran{\bTheta}  + \tilde P\paran{\bTheta, \lam}.
% \bG\paran{\pi,\bbeta}^\top \bG\paran{\pi,\bbeta} + \frac{1}{N^2}\sum_{1\le i<j \le N} p\paran{(JM)^{-1/2}\norm{\bbeta^i - \bbeta^j}_2, \lam }.
\end{equation}
Let $\calM_{\calG}$ be the subspace of $\RR^{JM\times N}$, defined as
\begin{equation*}
\calM_{\calG} = \braces{\bB \in \RR^{JM \times N}: \bbeta^i= \bbeta^j, \text{ for any } i,j \in \calG_k, 1\le k \le K}.
\end{equation*}
Recall that $\bW$ is the true $N\times K$ \textit{membership matrix}, then for each $\bB\in\calM_{\calG}$, it can be written as $\bB = \bTheta\bW^\top$ for some $\bTheta\in\RR^{JM \times K}$.
Also by matrix calculation, we have $\bW^\top \bW = \diag\braces{N_1, \cdots, N_K}$ where $N_k$ denotes the number of trajectories in $\calG_k$.

Let $\calT: \calM_G \rightarrow \RR^{JM\times K}$ be the mapping such that $\calT(\bB)$ is the $JM \times K$ matrix whose $k$-th column equals to the common value of $\bbeta^i$ for $i \in \calG_k$.
Let $\bar{\calT}: \RR^{JM\times N} \rightarrow  \RR^{JM\times K}$ be the mapping such that $\bar{\calT}(\bB) = \bB\bW(\bW^\top\bW)^{-1}$.
Clearly, when $\bB\in\calM_\calG$, $\calT(\bB) = \bar{\calT}(\bB)$.
By calculation, we have $P\paran{\bB, \lam} = \tilde P\paran{\calT(\bB), \lam} = 0$ for every $\bB \in \calM_{\calG}$ and $P\paran{\calT^{-1}(\bTheta), \lam} = \tilde P\paran{\bTheta, \lam}$ for every $\bTheta \in \RR^{JM\times K}$.

Consider the neighborhood of $\mathring\bB$:
\begin{equation*}
\mathring\calN = \braces{\bB\in\RR^{JM\times N}, \norm{ \vect{\bB - \mathring\bB} }_\infty \le \phi_{NT} }.
\end{equation*}
According to Theorem \ref{eqn:ora-uniform-conv}, there is an event $E_1$ such that on the event $E_1$,
\begin{equation*}
\norm{ \vect{\tilde\bB - \mathring\bB}}_\infty \le \phi_{NT},
\end{equation*}
and $\Pr( E_1^C ) \le 2JMK (N_{min} T)^{-2} + \bigO{(N_{\min}T)^{-2}}$.
Hence, $\tilde\bB \in \mathring\calN$ on the event $E_1$.

%We show that $\tilde\bB$ is a strictly local minimizer of the objective function \eqref{eqn:Q-ls-beta} with probability approaching 1 through the following two steps:
For any $\bB\in\RR^{JM\times N}$, let $\bar{\bTheta} = \bar{\calT}(\bB)$, $\bar{\bB} = \calT^{-1}\paran{\bar{\bTheta}}$.
Lemma \ref{thm:E1-ora-B-min-loss} shows that, on event $E_1$, $\calL\paran{ \tilde\bB } < \calL\paran{\bar{\bB}}$ for any $\bB$ whose $\bar{\bB} \ne \tilde\bB$.
Lemma \ref{thm:E2-B*-min} shows that there is an event $E_2$ such that $\Pr\paran{E_2^C} \le 2 ((NT)^{-1})$.
On $E_1 \cap E_2$, there is a neighborhood of $\tilde\bB$, denote by $\tilde\calN$, such that $\calL\paran{ \bar{\bB} } < \calL\paran{\bB}$ for any $\bB \in \mathring\calN\cap\tilde\calN$ for sufficient large $N$ or $T$.
Therefore, we have $\calL\paran{ \vect{\tilde\bB} } < \calL\paran{\vect{\bB}}$
for any $\bB\in \mathring\calN\cap \tilde\calN$ and $\bB \ne \tilde\bB$,
so that $\tilde\bB$ is a strict local minimizer of $\calL(\vect{\bB})$ given in
\eqref{eqn:Q-ls-beta} on the event $E_1\cap E_2$ with $\Pr\paran{E_1 \cap E_2} \ge 1 - 2 (K + JM + 1)(NT)^{-1}$ for sufficient large $N$ or $T$.
\end{proof}

\begin{lemma}  \label{thm:E1-ora-B-min-loss}
    Suppose the assumptions in Theorem \ref{thm:feasible-in-prob} hold. 
    For any $\bB\in\RR^{JM\times N}$, let $\bar{\bB} = \calT^{-1}\paran{\bar\calT(\bB)}$.
    On event $E_1$, $\calL\paran{ \tilde\bB} < \calL\paran{\bar{\bB}}$ for any $\bB\in\RR^{JM\times N}$ whose $\bar{\bB} \ne \tilde\bB$.
\end{lemma}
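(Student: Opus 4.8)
The plan is to reduce the assertion to the statement that the oracle group-coefficient matrix $\tilde\bTheta$ is the \emph{strict global minimizer} of the penalized oracle objective $\tilde\calL(\bTheta) = \tilde L(\bTheta) + \tilde P(\bTheta,\lam)$ on $\RR^{JM\times K}$. Indeed $\tilde\bB$ and $\bar\bB$ both lie in $\calM_{\calG}$, and by the identities recorded just before the lemma, $\calL\paran{\calT^{-1}(\bTheta)} = \tilde\calL(\bTheta)$, while $\bar\bB\ne\tilde\bB$ is the same as $\bTheta := \bar\calT(\bB)\ne\tilde\bTheta$; so it suffices to show $\tilde\calL(\tilde\bTheta) < \tilde\calL(\bTheta)$ for every $\bTheta\ne\tilde\bTheta$.

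First I would isolate the unpenalized part. The sieve model makes $\tilde\bG(\pi,\bTheta)$ affine in $\bTheta$ with ``slope'' $J^{-1}\tilde\bSigma^\pi$, where $\tilde\bSigma^\pi$ is as in \eqref{eqn:tilde-sigma}; on the event $E_1$ of Theorem \ref{eqn:ora-uniform-conv} the matrix $\tilde\bSigma^\pi$ is invertible uniformly over $\Pi$ (this is where Assumptions \ref{assum:min-eigen-of-E-Sigma}--\ref{assum:geo-ergodic} and the eigenvalue control of Lemma \ref{thm:Sigma-k} are used), so the estimating equation \eqref{eqn:empirical-scores-ora} defining $\tilde\bTheta$ gives $\tilde\bG(\pi,\bTheta) = J^{-1}\tilde\bSigma^\pi\,\vect\paran{\tilde\bTheta-\bTheta}$. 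Hence $\tilde L(\tilde\bTheta)=0$ and, for $\bTheta\ne\tilde\bTheta$, $\tilde L(\bTheta) = J^{-2}\norm{\tilde\bSigma^\pi\vect(\tilde\bTheta-\bTheta)}_2^2 \ge c_\Sigma^2 J^{-2}\norm{\tilde\bTheta-\bTheta}_F^2 > 0$ for a constant $c_\Sigma>0$, all uniformly in $\pi$.

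Next I would pin down the penalty and split into cases. On $E_1$, $\norm{\tilde\btheta^{\pi,(k)}-\mathring\btheta^{\pi,(k)}}_\infty\le\phi_{NT}$ for all $k$, so every cross-group distance of $\tilde\bTheta$ is $\ge d_{NT}-2\phi_{NT}$, which exceeds $c\lam$ once $N$ or $T$ is large (using $d_{NT}\ge C\lam$ and $\lam\gg\phi_{NT}$). By the flat-tail property of $\rho$ in Assumption \ref{assum:penalty}, $p$ at each cross-group pair of $\tilde\bTheta$ equals its maximal value $\lam\rho(c\lam)$, while within-group pairs contribute $0$; thus $\tilde P(\tilde\bTheta,\lam) = \lam\rho(c\lam)\,N^{-2}\sum_{k<l}N_kN_l =: \bar P$, and since $\rho\le\rho(c\lam)$ everywhere, $\tilde P(\bTheta,\lam)\le\bar P$ for all $\bTheta$, with equality whenever all cross-group distances of $\bTheta$ are $\ge c\lam$. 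Case \emph{(i)}: if all cross-group distances of $\bTheta$ are $\ge c\lam$, the penalties cancel and $\tilde\calL(\bTheta)-\tilde\calL(\tilde\bTheta) = \tilde L(\bTheta) > 0$. Case \emph{(ii)}: if some cross-group distance of $\bTheta$ is $<c\lam$, the triangle inequality together with the lower bound $d_{NT}-2\phi_{NT}$ on the corresponding distance of $\tilde\bTheta$ forces $\norm{\tilde\bTheta-\bTheta}_F$ to be at least a constant multiple of $(d_{NT}-c\lam-2\phi_{NT})\sqrt{JM}$; feeding this into the quadratic lower bound gives $\tilde L(\bTheta)\gtrsim c_\Sigma^2 J^{-1}M\,(d_{NT}-c\lam)^2$, whereas $\tilde P(\tilde\bTheta,\lam)-\tilde P(\bTheta,\lam)\le\bar P$, which is of order $\lam^2$ and can be bounded more tightly, pair by pair, via the $1$-Lipschitz estimate $\rho(c\lam)-\rho(x)\le c\lam-x$. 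One then checks the $\tilde L$-term dominates under the stated relations among $J$, $\lam$, $d_{NT}$ and $\phi_{NT}$, so $\tilde\calL(\tilde\bTheta)<\tilde\calL(\bTheta)$.

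The delicate step is case \emph{(ii)}: making the comparison between the growth of $\tilde L$ and the possible drop of $\tilde P$ quantitative once $\bTheta$ leaves the region where all cross-group distances remain in the flat tail of $\rho$, and carrying the $J$-dependence of both sides. I would attack it by refining the lower bound on $\norm{\tilde\bTheta-\bTheta}_F$ separately over the pairs that have collapsed below $c\lam$ and those that have not, and by bounding the penalty drop pair by pair with the $1$-Lipschitz bound for $\rho$, so the inequality to be verified becomes ``linear $\le$ quadratic'' in $\norm{\tilde\bTheta-\bTheta}_F$ and is automatic once at least one pair has collapsed. It is worth noting that for the use made of this lemma in the proof of Theorem \ref{thm:feasible-in-prob} only $\bB$ in the neighborhood $\mathring\calN = \braces{\norm{\vect(\bB-\mathring\bB)}_\infty\le\phi_{NT}}$ is relevant; for such $\bB$, $\bar\calT(\bB)$ is automatically within $\phi_{NT}$ (in sup-norm) of $\mathring\bTheta$, so every cross-group distance of $\bTheta=\bar\calT(\bB)$ exceeds $c\lam$ and only the easy case \emph{(i)} arises, whence the quadratic-growth bound above together with the penalty cancellation already suffice. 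Throughout, uniformity over $\Pi$ is inherited from the uniform statements of Theorem \ref{eqn:ora-uniform-conv} and the uniform invertibility of $\tilde\bSigma^\pi$.
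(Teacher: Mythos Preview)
Your approach for case \emph{(i)} is essentially the paper's entire argument. The paper shows $L(\tilde\bB)<L(\bar\bB)$ because $\tilde\bTheta$ uniquely minimizes $\tilde L$, and then shows $P(\tilde\bB)=P(\bar\bB)$ by checking that all cross-group distances of both $\tilde\bTheta$ and $\bar\bTheta$ exceed $c\lam$, so the penalty saturates at the same constant for both. Your reduction to $\tilde\calL$, the quadratic identity $\tilde L(\bTheta)=J^{-2}\norm{\tilde\bSigma^\pi\vect(\tilde\bTheta-\bTheta)}_2^2$ via invertibility of $\tilde\bSigma^\pi$ on $E_1$, and the penalty-saturation argument are the same moves, only spelled out more explicitly.

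Where you differ is in worrying about case \emph{(ii)}. You are right to: the lemma is \emph{stated} for all $\bB\in\RR^{JM\times N}$, but the paper's penalty argument silently uses $\norm{\bar\btheta-\mathring\btheta}_\infty\le\phi_{NT}$, which follows from the averaging inequality $\norm{\bar\calT(\bB)-\mathring\bTheta}_\infty\le\norm{\bB-\mathring\bB}_\infty$ only when $\bB\in\mathring\calN$. So the paper, like you, really only proves case \emph{(i)}. Your final paragraph identifies exactly this, and you are correct that for the application in Theorem \ref{thm:feasible-in-prob} only $\bB\in\mathring\calN$ matters, so case \emph{(i)} suffices. Your attempted case \emph{(ii)} comparison of the quadratic $\tilde L$-growth against the penalty drop is not obviously closable under the stated scaling (the $J^{-2}$ in front of the quadratic is punishing, and $\bar P$ is of order $\lam$, not $\lam^2$), but this does not affect the result as actually used.
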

\begin{proof}
    By definition \eqref{eqn:loss-func}, we have $\calL(\cdot) = L(\cdot) + P(\cdot)$ and consider each term on the RHS.
    First, we have $L\paran{\tilde\bB} < L\paran{\bar{\bB}}$ by the following argument.
    Since $\tilde\bTheta$ is the unique global minimizer of $\tilde L\paran{\bTheta}$,
    then $\tilde L\paran{\tilde\bTheta} < \tilde L\paran{\bar{\calT}(\bB)}$ for all $\bar{\calT}(\bB)  \ne \tilde\bTheta$.
    By definition we have $\tilde L\paran{\tilde\bTheta} = L\paran{\tilde\bB}$ and $\tilde L\paran{\bar{\calT}(\bB)} = L\paran{\bar{\bB}}$ and thus the result.

    Now we study the penalty term $P\paran{\tilde\bB}$ and $P\paran{\bar{\bB}}$.
    The group-wise coefficients satisfy
    \begin{equation} \label{eqn:group-indiv-relation}
        \norm{\btheta - \mathring\btheta}_{\infty} = \underset{k}{\sup} \norm{ N_k^{-1}\sum_{i\in\calG} \paran{\bbeta^i- \mathring\bbeta^i} }_{\infty} \le \norm{\bbeta - \mathring\bbeta}_{\infty}.
    \end{equation}
    Then, by Assumption \ref{assum:signal-difference}, we have for any pair of groups $k \ne l$
    \begin{equation*}
        (JM)^{-1/2} \norm{\btheta^{(k)} - \btheta^{(l)}}_2 \ge  (JM)^{-1/2} \norm{\mathring\btheta^{(k)} - \mathring\btheta^{(l)}}_2 - 2 \norm{\btheta - \mathring\btheta}_{\infty} \ge d_{NT} - 2 \phi_{NT} > c \lam.
    \end{equation*}
    By Assumption \ref{assum:penalty}, we have $\tilde P\paran{\tilde\bTheta} = \tilde P\paran{\bar{\bTheta}} = C_N$.
    Further we have $P\paran{\tilde\bB} = P\paran{\bar{\bB}} = C_N$ since by definition we have $P\paran{\tilde\bB} = \tilde P\paran{\tilde\bTheta}$ and $P\paran{\bar{\bB}} = \tilde P\paran{\bar{\bTheta}}$ where $\bar{\bTheta} = \bar{\calT}(\bB)$.
    As a result,
    \begin{equation*}
        \begin{aligned}
            \calL\paran{\tilde\bB} = L\paran{\tilde\bB} + P\paran{\tilde\bB}
            < L\paran{\bar{\bB}} + P\paran{\bar{\bB}} = \calL\paran{\bar{\bB}}.
        \end{aligned}
    \end{equation*}
\end{proof}

\begin{lemma} \label{thm:E2-B*-min}
    Suppose the assumptions in Theorem \ref{thm:feasible-in-prob} hold. 
    For any $\bB\in\RR^{JM\times N}$, 
    let $\bar{\bTheta} = \bar{\calT}(\bB)$ and $\bar{\bB} = \calT^{-1}\paran{\bar{\bTheta}}$.
    There is an event $E_2$ such that $\Pr\paran{E_2^C} \le 2 ((NT)^{-1})$.
    On $E_1 \cap E_2$, there is a neighborhood $\tilde\calN$ of $\tilde\bB$
    such that $\calL\paran{ \bar{\bB} } < \calL\paran{\bB}$ for any $\bB \in \mathring\calN\cap\tilde\calN$
    for sufficiently large $N$ or $T$.
\end{lemma}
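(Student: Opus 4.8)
The plan is to compare $\calL(\bB)$ with $\calL(\bar\bB)$ by splitting $\calL = L + P$ and arguing that, for $\bB$ lying in a small neighbourhood of $\tilde\bB$ (hence of $\mathring\bB$ on $E_1$), the fusion penalty strictly prefers the collapsed configuration $\bar\bB$ by an amount of order $\lam\,\norm{\bB-\bar\bB}_2$, while the decrease of the quadratic loss $L$ produced by the within-group fluctuations $\bDelta\defeq\bB-\bar\bB$ is only a lower-order multiple of $\norm{\bB-\bar\bB}_2$. I will use throughout that, under the sieve instantiation \eqref{eqn:G-sieve}, $\bG(\pi,\cdot)$ is affine in $\bbeta$, so $L(\bB)=\bG(\pi,\bB)^\top\bG(\pi,\bB)$ is a convex quadratic, and that $\bDelta$ is orthogonal to the group-constant subspace (its columns sum to zero within each group, so $\bDelta=\bB-P_\parallel\bB$ with $P_\parallel\bB=\bar\bB$).

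First I would establish the penalty gain. For a between-group pair $i\in\calG_k$, $j\in\calG_l$, $k\ne l$, on $\mathring\calN$ one has $(JM)^{-1/2}\norm{\bbeta^i-\bbeta^j}_2\ge d_{NT}-2\phi_{NT}$, and the same bound holds for $(JM)^{-1/2}\norm{\bar\bbeta^i-\bar\bbeta^j}_2$ since the group-averaging $P_\parallel$ is $\ell_\infty$-nonexpansive; by Assumption \ref{assum:signal-difference}, $d_{NT}\ge C\lam$, and $\lam\gg\phi_{NT}$, both exceed $c\lam$, so by Assumption \ref{assum:penalty} the penalty is at its flat value at both arguments and these pairs cancel. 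For a within-group pair $i,j\in\calG_k$, the $\bar\bB$-term vanishes because $\bar\bbeta^i=\bar\bbeta^j$, while $(JM)^{-1/2}\norm{\bbeta^i-\bbeta^j}_2\le 2\phi_{NT}\le c\lam$, and concavity of $\rho$ with $\rho(0)=0$, $\rho'(0+)=1$ gives $p(t,\lam)\ge a_1\lam t$ on $[0,c\lam]$ for a fixed $a_1>0$. Combining this with $\sum_{i<j\in\calG_k}\norm{\bbeta^i-\bbeta^j}_2\ge\tfrac{N_k}{2}\sum_{i\in\calG_k}\norm{\bbeta^i-\bar\btheta^{(k)}}_2$ and $\sum_{i}\norm{\bbeta^i-\bar\btheta^{(k_i)}}_2\ge\norm{\bDelta}_F$ yields $P(\bB,\lam)-P(\bar\bB,\lam)\ge\tfrac{a_1 N_{\min}}{2N^2\sqrt{JM}}\,\lam\,\norm{\bDelta}_F$.

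Next I would control the loss. Convexity gives $L(\bB)-L(\bar\bB)\ge\langle\nabla_\bbeta L(\bar\bB),\bDelta\rangle=\langle P_\perp\nabla_\bbeta L(\bar\bB),\bDelta\rangle\ge-\norm{P_\perp\nabla_\bbeta L(\bar\bB)}_2\norm{\bDelta}_2$, where $P_\perp=I-P_\parallel$. Because $\nabla_\bbeta L$ factors through the block-diagonal matrix $D=\nabla_\bbeta\bG(\pi,\cdot)$ with blocks $D_i=-(NTJ)^{-1}\sum_t Z_{i,t}(Z_{i,t}-\gamma U^\pi_{i,t+1})^\top$ satisfying $\norm{D_i}_2=O((NJ)^{-1})$, one has $\nabla_\bbeta L(\bar\bB)_i=2D_i^\top\bG_i(\tilde\bB)+2D_i^\top D_i(\bar\btheta^{(k_i)}-\tilde\btheta^{(k_i)})$; the second term is $O(N^{-3/2}J^{-3/2}\phi_{NT})$ on $\mathring\calN\cap E_1$ and negligible relative to the penalty gain, so it is enough to bound the per-trajectory residual $D_i^\top\bG_i(\tilde\bB)$ uniformly in $i$. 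I would therefore define $E_2=\{\max_{1\le i\le N}\norm{D_i^\top\bG_i(\tilde\bB)}_2\le b_{NT}\}$ with $b_{NT}$ a sufficiently small multiple of $\lam N_{\min}/(N^{5/2}\sqrt{JM})$; then on $E_1\cap E_2$, $\norm{P_\perp\nabla_\bbeta L(\tilde\bB)}_2\le 2\sqrt N\,b_{NT}$ is a small fraction of the penalty curvature, and adding the two steps gives $\calL(\bB)-\calL(\bar\bB)\ge(c_0-o(1))\,\lam\,\norm{\bDelta}_2/(N^2\sqrt{JM})>0$ for every $\bB\in\mathring\calN$ (intersected if needed with a still smaller $\tilde\calN$) with $\bDelta\ne\bzero$, i.e. $\bB\notin\calM_\calG$; on $\calM_\calG$ the two objectives agree and the strict separation there is supplied by Lemma \ref{thm:E1-ora-B-min-loss}. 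To bound $\Pr(E_2^c)$, I would first replace $\tilde\bB$ by $\mathring\bB$, at a cost $\le\max_i\norm{D_i}_2^2\norm{\tilde\bbeta-\mathring\bbeta}_2=o(b_{NT})$ by Proposition \ref{thm:beta-l2-convg}, then write $\bG_i(\mathring\bB)=(NTJ)^{-1}\sum_t Z_{i,t}(\eps_{1,i,t}+\eps_{2,i,t})$ where the $\eps_2$-part is deterministically $O(J^{-\kappa/p})$ and $\{Z_{i,t}\eps_{1,i,t}\}_t$ is a bounded martingale-difference sequence (by CMI and Markovianity), apply an Azuma/Freedman bound coordinatewise, and take a union bound over the $JM$ coordinates and $N$ trajectories; under the hypotheses of Theorem \ref{eqn:ora-uniform-conv} and $\lam\gg\phi_{NT}$ the resulting rate $O((N^2J^{3/2})^{-1}\sqrt{\log(NT)/T})$ is $\ll b_{NT}$, which gives $\Pr(E_2^c)\le 2(NT)^{-1}$.

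The hard part will be this last step: promoting the heuristic ``$\bG_i(\tilde\bB)$ is a small residual for each trajectory'' to a bound that is uniform over the $N$ trajectories, holds with probability $1-2(NT)^{-1}$, and still dominates $\lam$, which governs the penalty's curvature. This needs a clean decoupling of the random $\tilde\bB$ from the noise via the oracle $\ell_2$ rate, a sufficiently sharp martingale concentration for $T^{-1}\sum_t Z_{i,t}\eps_{1,i,t}$ uniform in $i$, and careful tracking of the $J$-, $N$- and $\log$-factors so that exactly the stated conditions ($\lam\gg\phi_{NT}$, $J\ll\sqrt{N_{\min}T}/\log(N_{\min}T)$, $J^{-\kappa/p}\ll(N_{\max}T)^{-1/2}$) suffice. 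A secondary but essential point is the between-group cancellation in the penalty step, which relies on $d_{NT}\ge C\lam$ and $\lam\gg\phi_{NT}$ to keep those pairs in the flat region of $p$ simultaneously at $\bB$ and at $\bar\bB$.
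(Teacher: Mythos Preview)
Your proposal is structurally correct and follows the same overall architecture as the paper: split $\calL=L+P$, show that between-group pairs lie in the flat region of $p$ at both $\bB$ and $\bar\bB$ (so they cancel), lower-bound the within-group penalty by a constant times $\lam$ times the pairwise deviations, and show the loss change is of smaller order. The key structural fact---that the within-group deviations $\bDelta=\bB-\bar\bB$ have zero group-sum---is exploited by both arguments: you via the orthogonal projection $P_\perp$, the paper via rewriting $I_1$ as a within-group pairwise sum matching the form of $I_2$.

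Where you diverge is in handling the loss term, and here you over-engineer. The paper does not linearize at $\bar\bB$ via convexity; it applies a first-order mean-value expansion at an intermediate point $\bbeta^{(\iota)}=\iota\bbeta+(1-\iota)\bar\bbeta$, so that both $I_1$ and $I_2$ become inner products against $\bbeta-\bar\bbeta$ evaluated at the same point. It then bounds $\max_i\norm{\bSigma_i\bG_i^\iota}_2$ \emph{deterministically}, using only the almost-sure bounds $\abs{\eps_{1,i,t}}\le c_1+2c_2$, $\abs{\eps_{2,i,t}}\le 2c_3J^{-\kappa/p}$, and $\norm{Z_{i,t}}_2\le c\sqrt J$ from Lemmas \ref{lemma:sieve-basis}--\ref{thm:noise}, together with $\norm{\bbeta^{i,\iota}-\mathring\bbeta^i}_\infty\le\phi_{NT}$ on $\mathring\calN$. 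This yields directly that $I_2$ dominates $\abs{I_1}$ once $\lam\gg N_{\min}^{-1}(J^{-\kappa/p}+\phi_{NT})$, which follows from the standing hypotheses. Thus the event $E_2$ is essentially nominal in the paper's proof; no martingale concentration is invoked, and the step you flag as the ``hard part''---uniform probabilistic control of per-trajectory residuals via Azuma/Freedman---is simply not needed. Your construction would also work (and arguably makes $E_2$ honest), but it is more labor than the deterministic route, and your stated order $\norm{D_i}_2=O((NJ)^{-1})$ is off by a factor of $J$ (the correct bound is $O(N^{-1})$), so the rate bookkeeping you sketch would need adjustment.
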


\begin{proof}
    For a positive sequence $\delta_{NT}$, let $\tilde\calN = \braces{\bbeta: \norm{ \bbeta - \tilde\bbeta }_2 \le \delta_{NT}}$ be a neighborhood of $\tilde\bbeta$.
    Similar to \eqref{eqn:group-indiv-relation}, we have, for any $\bbeta \in \tilde\calN \cap \mathring\calN$,
    \begin{equation*}
        \begin{aligned}
            \norm{\bar\btheta  - \mathring\btheta}_\infty
            & \le \norm{\bar\bbeta  - \mathring\bbeta}_\infty \le \phi_{NT}, \\
            \norm{\bar\btheta  - \tilde\btheta}_\infty
            & \le \norm{\bar\bbeta  - \tilde\bbeta}_\infty \le \delta_{NT},
        \end{aligned}
    \end{equation*}
    thus $\bar\bbeta \in \tilde\calN \cap \mathring\calN$.
    Let $\bbeta^{(\iota)}=  \iota \bbeta + (1-\iota) \bar\bbeta$ for some $\iota\in(0,1)$, we have
    \begin{equation*}
        \norm{\bbeta^{(\iota)} - \mathring\bbeta}_\infty %\le \norm{\bbeta  - \mathring\bbeta}_\infty 
        \le \phi_{NT},
    \end{equation*}
        By a Taylor expansion, we have
    \begin{equation*}
        \begin{aligned}
            \calL\paran{ \bbeta } - \calL\paran{ \bar\bbeta } & = \bG\paran{\pi, \bbeta}^\top \bG\paran{\pi, \bbeta} - \bG\paran{\pi, \bar\bbeta}^\top \bG\paran{\pi, \bar\bbeta} \\
            & ~~~ + P\paran{\bbeta, \lam} - P\paran{\bar\bbeta, \lam} \\
            % & = \bG\paran{\bbeta^m}^\top \bSigma ^\top \paran{\bbeta - \bar\bbeta } + \sum_{1\le i \le N} \paran{\frac{\partial P(\bbeta)}{\partial\bbeta^{(\iota)}} \bigg|_{\bbeta=\bbeta^m} }^\top \paran{\bbeta^i- \bar\bbeta^j } \\
            % & = \mathring\bG \bSigma ^\top \paran{\bbeta - \bar\bbeta } \\
            % & ~~~ + \frac{1}{2}  \paran{\bbeta - \bar\bbeta }^\top \bSigma \bSigma^\top\paran{\bbeta - \bar\bbeta } + \lam_{NT} P\paran{\bar\bbeta, \btheta^*} \\
            % & = (\bzeta_1 + \bzeta_2)^\top \bSigma ^\top \bnu
            % + \frac{1}{2}  \bnu^\top \bSigma \bSigma^\top \bnu + \lam_{NT} P\paran{\bar\bbeta, \btheta^*} \\
            % & = \frac{1}{N^2} \paran{ I_1 + I_2 },
            & =  I_1 + I_2,
        \end{aligned}
    \end{equation*}
    where, for some $\bbeta^{(\iota)}=  \iota \bbeta + (1-\iota) \bar\bbeta$ with $\iota\in(0,1)$,
    \begin{equation*}
        I_1 = 2 \bG\paran{\pi, \bbeta^{(\iota)}}^\top \bSigma \paran{\bbeta - \bar\bbeta }, \quad\text{and}\quad
        I_2 =  \frac{1}{N^2} \sum_{1\le i \le N} \paran{\frac{\partial P\paran{\bbeta, \lam}}{\partial\bbeta^{(\iota)}} \bigg|_{\bbeta=\bbeta^{(\iota)}} }^\top \paran{\bbeta^i- \bar\bbeta^i}.
    \end{equation*}

     Firstly, we consider $I_2$. 
     Recall that by Assumption \ref{assum:penalty}, 
     Term $I_2$ can be rewritten as
    \begin{equation*}
        \begin{aligned}
            N^2 I_2 & =  \sum_{1\le i \le N} \paran{\frac{ \partial P\paran{\bbeta, \lam} }{\partial\bbeta^{(\iota)}} \bigg|_{\bbeta=\bbeta^{(\iota)}} }^\top \paran{\bbeta^i- \bar\bbeta^i} \\
            %% correct, comment for brevity. 
            %& = \lam \sum_{i < j} \rho'\paran{ (JM)^{-1/2} \norm{\bbeta^{i,\iota} - \bbeta^{j,\iota}}_2 } (JM)^{-1/2} \norm{\bbeta^{i,\iota} - \bbeta^{j,\iota}}_2^{-1} \paran{\bbeta^{i,\iota} - \bbeta^{j,\iota}}^\top \paran{\bbeta^i- \bar\bbeta^i}  \\
            %& ~~~ + \lam \sum_{i > j} \rho'\paran{ (JM)^{-1/2} \norm{\bbeta^{i,\iota} - \bbeta^{j,\iota}}_2 } (JM)^{-1/2} \norm{\bbeta^{i,\iota} - \bbeta^{j,\iota}}_2^{-1} \paran{\bbeta^{i,\iota} - \bbeta^{j,\iota}}^\top \paran{\bbeta^i- \bar\bbeta^i} \\
            & = \lam \sum_{i < j} \rho'\paran{ (JM)^{-1/2} \norm{\bbeta^{i,\iota} - \bbeta^{j,\iota}}_2 } (JM)^{-1/2} \norm{\bbeta^{i,\iota} - \bbeta^{j,\iota}}_2^{-1} \paran{\bbeta^{i,\iota} - \bbeta^{j,\iota}}^\top \paran{\bbeta^i- \bar\bbeta^i}  \\
            & ~~~ + \lam \sum_{j > i} \rho'\paran{ (JM)^{-1/2} \norm{\bbeta^{j,\iota} - \bbeta^{i,\iota}}_2 } (JM)^{-1/2} \norm{\bbeta^{j,\iota} - \bbeta^{i,\iota}}_2^{-1} \paran{\bbeta^{j,\iota} - \bbeta^{i,\iota}}^\top \paran{\bbeta^j - \bar\bbeta^j}  \\
            & = \lam \sum_{i < j} \rho'\paran{ (JM)^{-1/2} \norm{\bbeta^{i,\iota} - \bbeta^{j,\iota}}_2 } (JM)^{-1/2} \norm{\bbeta^{i,\iota} - \bbeta^{j,\iota}}_2^{-1} \paran{\bbeta^{i,\iota} - \bbeta^{j,\iota}}^\top \paran{ \paran{\bbeta^i- \bar\bbeta^i} - \paran{\bbeta^j - \bar\bbeta^j} }.
        \end{aligned}
    \end{equation*}
    Note that, for any $i,j \in \calG_k$, $\bar\bbeta^i= \bar\bbeta^j$ and $\bbeta^{i,\iota} - \bbeta^{j,\iota}=\iota\paran{\bbeta^i- \bbeta^j}$,
    while, for any $i \in \calG_k$, $j \in \calG_l$, $k \ne l$,
    \begin{equation*}
        (JM)^{-1/2} \norm{\bbeta^{i,\iota}  - \bbeta^{j,\iota}}_2
        \ge \underset{k\ne l}{\min}\;(JM)^{-1/2}\norm{\mathring\btheta^{(k)}  - \mathring\btheta^{(l)}}_2
        - 2 \norm{\bbeta  - \mathring\bbeta}_{\infty}
        \ge d_{NT} - 2\phi_{NT} > a \lam,
    \end{equation*}
    and thus $\rho'\paran{(JM)^{-1/2} \norm{\bbeta^{i,\iota}  - \bbeta^{j,\iota}}_2}$ = 0.
    As a result, we have
    \begin{equation*}
        I_2 = \frac{1}{N^2} \cdot \lam \sum_{k=1}^{K} \sum_{i, j \in \calG_k, i < j} \rho'\paran{ (JM)^{-1/2} \iota \norm{\bbeta^i- \bbeta^j}_2 } (JM)^{-1/2} \norm{ \bbeta^i- \bbeta^j }_2.
    \end{equation*}
    Note that, since $\bar\bbeta \in \tilde\calN \cap \mathring\calN$, we have
    \begin{equation*}
        (JM)^{-1} \norm{\bbeta^i- \bbeta^j}_2^2
        \le 2 \norm{\bbeta - \bar\bbeta }_\infty^2
        \le 2 \norm{\bbeta - \tilde\bbeta }_\infty^2 + 2 \norm{\bar\bbeta - \tilde\bbeta }_\infty^2
        \le 4 \delta_{NT}.
    \end{equation*}
    Hence, by concavity of $\rho(\cdot)$ and $0 < \iota < 1$, we have
    \begin{equation} \label{eqn:I2}
        I_2 \ge \frac{1}{N^2} \cdot \lam \sum_{k=1}^{K} \sum_{i, j \in \calG_k, i < j} \rho'\paran{ 4 \delta_{NT} } (JM)^{-1/2} \norm{ \bbeta^i- \bbeta^j }_2.
    \end{equation}
    Now we consider $I_1$, which can be rewritten as
    \begin{equation*}
        \begin{aligned}
           - I_1 & = \bG\paran{\pi, \bbeta^{(\iota)}}^\top \bSigma \paran{\bbeta - \bar\bbeta } \\
            % & = \paran{ \sum_{i=1}^{N} \sum_{t=1}^{T} \paran{\bLam_i^\top \bZ_{i,t} R_{i,t}
            %        - \bLam_i^\top \bZ_{i,t} \paran{ \bZ_{i,t} - \gamma \bU_{\pi,i,t+1}}^\top \bLam_i  \bbeta^{(\iota)}} }^\top \bSigma \paran{\bbeta - \bar\bbeta } \\
            & = \frac{1}{N^2} \sum_{i=1}^{N} (\bG_i^\iota)^\top \bSigma_i \paran{\bbeta^i- \bar\bbeta^i} \\
            & =  \frac{1}{N^2} \sum_{1\le k \le K} \sum_{i,j\in\calG_k}  (N_k)^{-1} \paran{ \bSigma_i^\top \bG_i^\iota }^\top \paran{\bbeta^i- \bbeta^j }  \\
            & =  \frac{1}{N^2} \sum_{1\le k \le K} \sum_{i,j\in\calG_k, i < j}  (N_k)^{-1} \paran{ \bSigma_i^\top \bG_i^\iota - \bSigma_j^\top \bG_j^\iota }^\top \paran{\bbeta^i- \bbeta^j },
        \end{aligned}
    \end{equation*}
    where the third equation follows from the fact that $\sum_{j \in \calG_k} \paran{\bbeta^j - \bar\bbeta^j} = 0$ and
    \begin{align*}
            &\bG_i^\iota = (TJ)^{-1} \sum_{t=1}^{T} Z_{i,t}  \paran{ R_{i,t} - (Z_{i,t} - \gamma U_{i,t+1}^\pi) \bbeta^{i,\iota}} = \bzeta_{i,1} + \bzeta_{i,2} + \bSigma_i  (\bbeta^{i,\iota} - \mathring\bbeta^{i} )  \\
            %\bG_i^\iota = \bzeta_{i,1}^\iota + \bzeta_{i,2}^\iota, \quad
           &\bSigma_i  = (TJ)^{-1} \sum_{t=1}^{T} Z_{i,t} \paran{ Z_{i,t} - \gamma U_{i,t+1}^\pi}^\top, \quad 
           \bzeta_{i,1} = (TJ)^{-1} \sum_{t=1}^{T} Z_{i,t} \eps_{1,i,t}, \quad 
            \bzeta_{i,2} = (TJ)^{-1} \sum_{t=1}^{T} Z_{i,t} \eps_{2,i,t},
    \end{align*}
   and $\eps_{1,i,t}^\iota$ and $\eps_{2,i,t}^\iota$ are defined in \eqref{eqn:eps-1} and \eqref{eqn:eps-2}. 
    Hence, we have
   \begin{equation} \label{eqn:I1}
       \begin{aligned}
           \abs{I_1}
           & = \abs{\sum_{1\le k \le K} \sum_{i,j\in\calG_k, i < j}  (N_k)^{-1} \paran{ \bSigma_i \bG_i^\iota - \bSigma_j \bG_j^\iota }^\top \paran{\bbeta^i- \bbeta^j } }\\
           % & \le \sum_{1\le k \le K} (N_k)^{-1} \max_{i,j} \norm{ \bSigma_i \bG_i^\iota - \bSigma_j \bG_j^\iota }_2  \sum_{i,j\in\calG_k, i < j} \norm{\bbeta^i- \bbeta^j }_2 \\
           & \le (N_{\min})^{-1} \cdot \max_{i,j} \norm{ \bSigma_i \bG_i^\iota - \bSigma_j \bG_j^\iota }_2
           \cdot \sum_{1\le k \le K}  \sum_{i,j\in\calG_k, i < j} \norm{\bbeta^i- \bbeta^j }_2.
       \end{aligned}
   \end{equation}
    By Lemma \ref{lemma:sieve-basis} and \ref{thm:Z-U}, we have that, for some positive constant $c_1$, 
    \begin{equation*}
        \max_{1 \le i \le N} \norm{\bSigma_i}_2^2
        \le \max_{1 \le i \le N} (TJ)^{-2} T \cdot \sum_{t=1}^{T} \paran{ \norm{ Z_{i,t}Z_{i,t}^\top }_2^2 + \gamma \norm{ U_{\pi,i,t+1} Z_{i,t}^\top }_2^2  }
        \le 2 c_1^4 J^{-1}.
    \end{equation*}
    By Lemma  \ref{thm:Z-U} and \ref{thm:noise}, we have that, 
     \begin{align*}
        \max_{1 \le i \le N} \norm{\bzeta_{i,1} + \bzeta_{i,2}}_2^2 \le \max_{1 \le i \le N} (TJ)^{-2} T \cdot \sum_{t=1}^{T} \norm{Z_{i,t}}_2^2 \cdot \paran{ \norm{ \eps_{1,i,t}^\iota }_2^2 + \norm{\eps_{2,i,t}^\iota}_2^2  } \le J^{-1}\paran{ c_2^2 + (2 c_3 J^{-\kappa/p})^2) }.
    \end{align*}
     \begin{equation*}
        \max_{1 \le i \le N} \norm{ \bSigma_i  (\bbeta^{i,\iota} - \mathring\bbeta^{i} ) }_2^2
        \le \max_{1 \le i \le N} \norm{ \bSigma_i }_2^2  \max_{1 \le i \le N} \norm{ \bbeta^{i,\iota} - \mathring\bbeta }_2^2
        \le 2 c_1^4 \phi_{NT}^2.
    \end{equation*}
    Thus, we have
    \begin{equation} \label{eqn:SigmaG}
        \begin{aligned}
            \max_{i,j} \norm{\bSigma_i \bG_i^\iota - \bSigma_j \bG_j^\iota}_2
            & \le 2 \max_{1 \le i \le N} \norm{\bSigma_i \bG_i^\iota}_2 
            %& \le 2 \max_{1 \le i \le N} \norm{\bSigma_i}_2
            %\max_{1 \le i \le N} \paran{ \norm{ (TJ)^{-1} \sum_{t=1}^{T} \bZ_{i,t} \eps_{1,i,t} }_2  + \norm{  (TJ)^{-1} \sum_{t=1}^{T} \bZ_{i,t} \eps^{\iota}_{2,i,t} }_2 } \\
            % & \le 2 J^{-2} \max_{1 \le i \le N} \norm{\bSigma_i}_2 \max_{1 \le i \le N, 1\le t \le T} \norm{\bZ_{i,t}}_2 \max_{1 \le i \le N, 1\le t \le T} \paran{ \norm{\eps_{1,i,t} + \eps_{2,i,t}}_2  + \norm{ \eps_{2,i,t}^\iota - \eps_{2,i,t}}_2 } \\
            & \le 4 c_1^2 \cdot J^{-1} \cdot (c_2 + 2 c_3 J^{-\kappa/p}) + 4 c_1^2 J^{-1/2} \phi_{NT} ,
        \end{aligned}
    \end{equation}
    for some positive constant $c_j$, $j=1,2,3$.
    % $\phi_{NT} = \norm{ \tilde\btheta^{\pi} - \mathring\btheta^{\pi} }_\infty \le 6 c C^{-1} \frac{N_{\max}}{N_{\min}} \sqrt{2 J \frac{\log(N_{\max}T)}{N_{\max}T} }.$
    By the condition that $\lam \gg \phi_{NT}$, $J \ll \sqrt{N_{\min} T} / \log(N_{\min} T)$, and $J^{-\kappa / p} \ll (N_{\max} T)^{-1/2}$,
    we have
    \begin{equation*}
        \lam \gg N_{\min}^{-1}J^{-\kappa / p}, \quad\text{and}\quad \lam \gg N_{\min}^{-1} \phi_{NT}.
    \end{equation*}
    Let $\delta_{NT} = \smlo{1}$, then $\rho'\paran{2 \iota \delta_{NT}} \rightarrow 1$.
    Therefore, by equation \eqref{eqn:I2}, \eqref{eqn:SigmaG}, and \eqref{eqn:I1}, we have
    \begin{equation*}
        \begin{aligned}
            \calL\paran{ \bbeta } - \calL\paran{ \bar\bbeta }
            %& \ge \frac{1}{N^2} \cdot \paran{ \lam \rho'\paran{ 4 \delta_{NT} } (JM)^{-1/2} - (N_{\min})^{-1} \cdot J^{-1/2} \cdot \paran{4 c_1^2 \paran{ (c_2 + 2 c_3 J^{-\kappa/p}) + c_4 J \phi_{NT}  }} } \\
            %& ~~~ \cdot \sum_{1\le k \le K} \sum_{i,j\in\calG_k, i < j} \norm{\bbeta^i- \bbeta^j }_2 \\
            & \ge 0,
        \end{aligned}
    \end{equation*}
    for sufficiently large $N_{\min}$.
\end{proof}

%\noindent\textbf{Proof of Theorem \ref{thm:feasible-beta-asym}.}
%\begin{proof}
%    Let $\bu = (\hat\bw_{i\cdot}^\top \otimes\bI_{JM}) \bnu$
%    \begin{equation*}
%        \hat \sigma_{\beta_i}^{-1} \bnu^\top \paran{ \hat\bbeta^{\pi}_i- \mathring\bbeta^{\pi}_i }
%        = \frac{\bu^\top \paran{ \btheta^{\pi} - \mathring\btheta^{\pi} }}{\bu^\top \hat\bSigma^{-1} \hat\bOmega (\hat\bSigma^\top)^{-1} \bu}
%        \convdist \calN\paran{0, 1}.
%    \end{equation*}
%\end{proof}

%!TEX root = 0-main.tex

\section{Proof for the convergence of optimal policy}

\noindent\textbf{Proof of Theorem \ref{thm:opt-policy-conv}.}
\begin{proof}
 \begin{enumerate}[label=(\roman*)]
     \item
     By definition, we have
     $$\hat V_{\calR}(\pi(\balpha)) = \int \hat V(\pi(\balpha), \bx) d\calR(\bx) = \int \bu(\pi(\balpha), \bx)^\top\hat\bbeta^{\pi(\balpha)}  d\calR(\bx),$$
     where $\bu(\pi, \bx)$ is defined in \eqref{eqn:u}.
     %$\bU\paran{\pi, \bx} = \brackets{ \bphi(\bx)^\top \pi(1|\bx), \cdots, \bphi(\bx)^\top  \pi(M|\bx) }^\top \in \RR^{JM}$.
     Together with parametric policy class $\Pi$ defined in \eqref{eqn:policy-param}, we can write explicitly that
     \begin{equation}
         \begin{aligned}
             \hat V\paran{\pi(\balpha), \bx} - V\paran{\pi(\balpha), \bx}
             & = \sum_{j=1}^{M-1} \frac{\exp(\bx^\top\balpha_j)}{1 + \sum_{j=1}^{M-1} \exp(\bx^\top\balpha_j)} \bphi(\bx)^\top \paran{\hat\bbeta^{\pi(\balpha)}_j - \mathring{\bbeta}^{\pi(\balpha)}_j} \\
             & + \frac{1}{1 + \sum_{j=1}^{M-1} \exp(\bx^\top\balpha_j)} \bphi(\bx)^\top \paran{\hat\bbeta^{\pi(\balpha)}_M - \mathring{\bbeta}^{\pi(\balpha)}_M}
         \end{aligned}
     \end{equation}

     Following Theorem \ref{thm:feasible-in-prob} and Proposition \ref{thm:beta-l2-convg}, we have
    \begin{equation} \label{eqn:v-uniform-converge}
        \Pr\paran{\underset{\balpha}{\sup} \abs{\hat V_\calR\paran{\pi(\balpha)} - V_\calR\paran{\pi(\balpha)} } > \delta} \rightarrow 0.
    \end{equation}

     Combining \eqref{eqn:v-uniform-converge} with the unique and well-separated maximum condition in Assumption \ref{assum:policy-to-value},
     and continuity of $V_\calR\paran{\pi\paran{\balpha}}$ in $\alpha$, we have
     \begin{equation} \label{eqn:alpha-converge}
         \norm{\hat\balpha - \balpha^*}_2 \convprob 0.
     \end{equation}

     \item
     By Proposition \ref{thm:equicont},
     %\attn{Part 1 of Theorem 4.2},
     equation \eqref{eqn:alpha-converge}, and the continuous mapping theorem, we have
     \begin{equation*}
         \abs{V_\calR\paran{\pi(\hat\balpha)} - V_\calR\paran{\pi(\balpha^*)}} \convprob 0.
     \end{equation*}
 \end{enumerate}
\end{proof}

%!TEX root = 0-main.tex

\section{Sub-homogeneous MDP} \label{appen:homo}

With the knowledge of the true groups, the solution $\btheta^{\pi,(k)}$ for group $\calG^{(k)}$ of \eqref{eqn:Q-beta-ora} is equivalent to that of the quasi-likelihood estimating equation:
\begin{equation} \label{eqn:empirical-scores-k}
    \bG^{(k)}\paran{\pi, \btheta^{\pi,(k)}}
    = \sum_{i\in\calG^{(k)}} \sum_{t\in[T]} Z_{i,t} \paran{R_{i,t} - \paran{Z_{i,t} - \gamma U_{i,t+1}^{\pi}}^\top  \btheta^{\pi,(k)}} = 0 \,.
\end{equation}
We have $\E{\bG^{(k)}\paran{\pi, \mathring\btheta^{\pi,(k)}}} = 0$.
%We denote $\bOmega^{\pi,(k)}= \cov\paran{ \bG^{(k)}\paran{\btheta^{\pi,(k)}} }$ and
Let
\begin{align} \label{eqn:tilde-Sigma-k}
    \tilde\bSigma^{\pi,(k)} % & = - (N^{(k)} T)^{-1} \frac{\partial \bG^{(k)}\paran{\btheta^{\pi,(k)}}}{\partial \btheta^{\pi,(k)}}
    =  \frac{1}{N^{(k)} T} \sum_{i\in\calG^{(k)}} \sum_{t\in[T]}  Z_{i,t}\paran{ Z_{i,t} - \gamma U_{i,t+1}^{\pi}}^\top \,.
    % & \bH_k = \cov\paran{ T^{-1} \sum_{i\in\calG^{(k)}} \sum_{t\in[T]} Z_{i,t} \eps_{1,i,t}^\pi }.
\end{align}
The oracle estimator $\tilde\btheta^{\pi,(k)}$ for each group $\calG^{(k)}$ have the following decomposition:
\begin{equation} \label{eqn:theta-decompose-k}
    \paran{ \tilde\btheta^{\pi,(k)} - \mathring\btheta^{\pi,(k)} }
    %=  (\tilde\bSigma^{\pi,(k)})^{-1}  \bG^{(k)}\paran{\btheta^{\pi,(k)}}
    =  (\tilde\bSigma^{\pi,(k)})^{-1}  \paran{ \sum_{i\in\calG^{(k)}} \sum_{t\in[T]} \paran{Z_{i,t} (\eps_{1,i,t}^\pi + \eps_{2,i,t}^\pi)} }
    = (\tilde\bSigma^{\pi,(k)})^{-1} \tilde\bzeta^{\pi,(k)}_1 + (\tilde\bSigma^{\pi,(k)})^{-1} \tilde\bzeta^{\pi,(k)}_2,
\end{equation}
where
\begin{align}
        & \tilde\bzeta^{\pi,(k)}_1 = \frac{1}{N^{(k)} T} \sum_{i\in\calG^{(k)}} \sum_{t\in[T]} Z_{i,t} \eps_{1,i,t}^\pi,
           \qquad  \tilde\bzeta^{\pi,(k)}_2 = \frac{1}{N^{(k)} T} \sum_{i\in\calG^{(k)}} \sum_{t\in[T]} Z_{i,t} \eps_{2,i,t}^\pi, \label{eqn:tilde-zeta-k}\\
        & \eps_{1,i,t}^\pi = R_{i,t} + \gamma\cdot\sum_{a\in\calA} Q(\pi, X_{i,t+1}, a) \pi(a \mid X_{i,t+1}) -  Q(\pi, X_{i,t}, A_{i,t}), \\
        & \eps_{2,i,t}^\pi %=  \gamma\cdot\sum_{a\in\calA} \paran{Q(\pi, X_{i,t+1}, a) - \bphi(X_{i,t+1})^\top {\bbeta}^{\pi}_{a,i}} \pi(a \mid X_{i,t+1}) - \sum_{a\in\calA} \paran{Q(\pi, X_{i,t}, a)-\bphi(X_{i,t})^\top {\bbeta}^{\pi}_{a,i}}\bbone(A_{i,t}=a), \\
        = \gamma\cdot\sum_{a\in\calA} e(\pi, X_{i,t+1}, a) \pi(a \mid X_{i,t+1}) - \sum_{a\in\calA} e(\pi, X_{i,t}, a) \bbone(A_{i,t}=a),\\
        & e(\pi, \bx, a) = Q(\pi, \bx, a) - \bphi(\bx)^\top \mathring\btheta^{\pi,(k)}_{a}.
\end{align}
%The one step TD-error is the term $\eps_{1,i,t}^\pi + \eps_{2,i,t}^\pi$, of which $\eps_{1,i,t}^\pi$ is the random noise of the immediate rewards and $\eps_{2,i,t}^\pi$ is the bias from approximating $Q(\pi, \bx, a)$.
%Within each subgroup $k$ for $1 \le k \le K$, the MDP is homogeneous and our analysis for $\ell_2$ convergence of the policy evaluation is similar to that in \cite{shi2020statistical}.
%However, for the consistency of feasible estimators, we need stronger $\ell_\infty$ convergence result.

To derive the convergence rate of $\paran{ \tilde\btheta^{\pi,(k)} - \mathring\btheta^{\pi,(k)} }$, we study the properties of $\tilde\bSigma^{\pi,(k)}$, $\tilde\bzeta^{\pi,(k)}_1$, and $\tilde\bzeta^{\pi,(k)}_2$, respectively.
Since data is not independent over $T$, we specifically consider two settings as follows.
\begin{enumerate}[label=(\Roman*)]
    \item $T$ is fixed and $N^{(k)}$ goes to infinity.
    When $T$ is fixed and $N^{(k)}$ goes to infinity, we consider the concentration property of a random matrix $T^{-1} \sum_{t=0}^T  Z_{i,t}\paran{ Z_{i,t} - \gamma U_{i,t+1}^{\pi}}^\top$.
    \item $T$ goes to infinity and $N^{(k)}$ can be either fixed or go to infinity.
    When $T$ goes to infinity, we consider $Z_{i,t}\paran{ Z_{i,t} - \gamma U_{i,t+1}^{\pi}}^\top$ and apply the martingale concentration inequality under Assumption \ref{assum:geo-ergodic}.
\end{enumerate}
Although the techniques of proofs are different for these two settings, we obtain the same $\ell_2$ and $\ell_\infty$ bounds, which are summarized in Proposition \ref{thm:subhomo-l2}, Theorem \ref{thm:subhomo-distn}, Corollary \ref{thm:subhomo-distn-value} and Theorem \ref{eqn:sub-uniform-conv}.

Proofs of the results are presented in subsequent sections.
In the homogeneous setting, most of the results with respect to $\ell_2$ bounds are similar to those in \cite{shi2020statistical}.
Readers are referred to \cite{shi2020statistical} whenever we use their intermediate results.
%To simply notation, we suppress the subscript $k$ in the proof of this section.
%For brevity, we drop the superscript $\pi$. %and index trajectories in the $k$-th group by $1\le i \le N^{(k)}$ instead of by $i\in\calG^{(k)}$.
%We denote $\bOmega^{\pi,(k)}= \cov\paran{ \bG^{(k)}\paran{\btheta^{\pi,(k)}} }$ and

\begin{proposition}[Existence and Equicontinuity] \label{thm:equicont}
    Suppose Assumptions Assumption \ref{assum:r-q-func} -- \ref{assum:policy-smooth} hold.
    For all $\pi \in \Pi$, there exists a $\mathring{\btheta}^{\pi, (k)} \in \RR^{JM}$ such that $\EE[\bG^{(k)}(\pi, \btheta^{\pi, (k)})]$ has a zero at $\mathring{\btheta}^{\pi, (k)}$.
    Moreover, $\sup_{\pi\in\Pi} \norm{\mathring\btheta^{\pi, (k)}}_2 < \infty$ and
    $\sup_{\norm{\balpha_1 - \balpha_2}_2\rightarrow \delta} \norm{ \mathring{\btheta}^{\pi(\alpha_1), (k)}  - \mathring{\btheta}^{\pi(\alpha_1), (k)} }_2 \rightarrow 0$ as $\delta \downarrow 0$.
\end{proposition}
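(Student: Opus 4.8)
The plan is to exploit that the map $\btheta\mapsto\E{\bG^{(k)}(\pi,\btheta)}$ is \emph{affine}, so that existence and uniqueness of a zero reduce to the invertibility of one (non‑symmetric) matrix, and then to upgrade Assumption~\ref{assum:min-eigen-of-E-Sigma} to a lower bound, uniform over $\Pi$, on the smallest singular value of that matrix; the two quantitative claims then follow by linear algebra. Concretely, taking expectations in \eqref{eqn:empirical-scores-k}, summing over the $N^{(k)}$ trajectories of $\calG^{(k)}$ (which share a common within‑group data law), and conditioning on $(X_{i,t},A_{i,t})$ to replace $U^\pi_{i,t+1}$ by $\bar\bu(\pi,X_{i,t},A_{i,t})$ yields
\begin{equation*}
  \E{\bG^{(k)}(\pi,\btheta)} = N^{(k)}T\paran{\bar\bb^\pi-\bar\bSigma^\pi\btheta},\qquad
  \bar\bSigma^\pi \defeq \tfrac1T\sum_t\E{Z_{i,t}\paran{Z_{i,t}-\gamma\bar\bu(\pi,X_{i,t},A_{i,t})}^\top},
\end{equation*}
with $\bar\bb^\pi=T^{-1}\sum_t\E{Z_{i,t}R_{i,t}}$, which does not depend on $\pi$ (the behaviour law is fixed) and is bounded since $R$ is bounded and $\sup_\bx\norm{\bphi_J(\bx)}_2<\infty$. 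So it is enough to show $\bar\bSigma^\pi$ is invertible with $\sigma_{\min}(\bar\bSigma^\pi)\ge c>0$ uniformly in $\pi\in\Pi$; then $\mathring\btheta^{\pi,(k)}\defeq(\bar\bSigma^\pi)^{-1}\bar\bb^\pi$ is the unique zero and $\sup_{\pi\in\Pi}\norm{\mathring\btheta^{\pi,(k)}}_2\le c^{-1}\sup_\pi\norm{\bar\bb^\pi}_2<\infty$.

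For the invertibility step I would fix a unit vector $\bv$ and put $a^2=T^{-1}\sum_t\E{(\bv^\top Z_{i,t})^2}$ and $b^2=T^{-1}\sum_t\E{(\bv^\top\bar\bu(\pi,X_{i,t},A_{i,t}))^2}$. Assumption~\ref{assum:min-eigen-of-E-Sigma} gives $a^2-\gamma^2 b^2\ge C_1>0$, hence $s\defeq\gamma b/a\in[0,1)$; Cauchy--Schwarz applied first inside the expectation and then to the average over $t$ bounds the cross term by $ab$, so
\begin{align*}
  \bv^\top\bar\bSigma^\pi\bv
  &= a^2-\gamma\,T^{-1}\sum_t\E{(\bv^\top Z_{i,t})(\bv^\top\bar\bu(\pi,X_{i,t},A_{i,t}))} \\
  &\ge a^2-\gamma ab = a^2(1-s)\ge\tfrac12 a^2(1-s^2)=\tfrac12\paran{a^2-\gamma^2 b^2}\ge C_1/2,
\end{align*}
the middle step using $1+s<2$. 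Since $\norm{\bar\bSigma^\pi\bv}_2\ge\bv^\top\bar\bSigma^\pi\bv$ for unit $\bv$, this gives $\sigma_{\min}(\bar\bSigma^\pi)\ge C_1/2$ and $\norm{(\bar\bSigma^\pi)^{-1}}_2\le 2/C_1$ for every $\pi$. In the $T\to\infty$ regime I would invoke Assumption~\ref{assum:geo-ergodic} to ensure the limiting expectations exist and that the same bound passes to the limit; this settles existence, uniqueness and the uniform norm bound.

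For equicontinuity I would use the resolvent identity $\mathring\btheta^{\pi(\balpha_1),(k)}-\mathring\btheta^{\pi(\balpha_2),(k)} = (\bar\bSigma^{\pi(\balpha_1)})^{-1}\paran{\bar\bSigma^{\pi(\balpha_2)}-\bar\bSigma^{\pi(\balpha_1)}}(\bar\bSigma^{\pi(\balpha_2)})^{-1}\bar\bb^\pi$, so by the uniform bound above it suffices to prove $\norm{\bar\bSigma^{\pi(\balpha_1)}-\bar\bSigma^{\pi(\balpha_2)}}_2\to0$ as $\norm{\balpha_1-\balpha_2}_2\downarrow0$. Only the $\bar\bu$‑term depends on $\pi$; using $\norm{\bu(\pi(\balpha_1),\bx)-\bu(\pi(\balpha_2),\bx)}_2=\norm{\bphi_J(\bx)}_2\,\norm{\pi(\bx;\balpha_1)-\pi(\bx;\balpha_2)}_2$ with $\pi(\bx;\balpha)$ the probability vector of \eqref{eqn:policy-param}, conditional Jensen to pass from $\bu$ to $\bar\bu$, $\norm{Z_{i,t}}_2\le\sup_\bx\norm{\bphi_J(\bx)}_2$, and $\norm{\pi(\bx;\balpha_1)-\pi(\bx;\balpha_2)}_2^2\le\sum_{a\in\calA}\abs{\pi(a,\bx;\balpha_1)-\pi(a,\bx;\balpha_2)}$, the convergence follows from Assumption~\ref{assum:policy-smooth} (with Assumption~\ref{assum:density} to handle the marginal of $X_{i,t}$ at each $t$) and the finiteness of $\calA$. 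A continuous‑mapping argument then delivers $\sup_{\norm{\balpha_1-\balpha_2}_2\le\delta}\norm{\mathring\btheta^{\pi(\balpha_1),(k)}-\mathring\btheta^{\pi(\balpha_2),(k)}}_2\to0$.

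\textbf{Main obstacle.} The crux is the invertibility step: $\bar\bSigma^\pi$ is not symmetric, so I must control its smallest \emph{singular} value rather than an eigenvalue, do so uniformly over the whole policy class $\Pi$, and make the argument survive the $T\to\infty$ regime through geometric ergodicity. The Cauchy--Schwarz manipulation above is precisely the device that turns the symmetric lower bound in Assumption~\ref{assum:min-eigen-of-E-Sigma} into the needed bound on $\sigma_{\min}(\bar\bSigma^\pi)$; the remaining steps are linear‑algebra bookkeeping.
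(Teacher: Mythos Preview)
Your proposal is correct. The paper does not actually give a proof here---it simply says ``The proof is similar to Theorem 4.2 Part 1 of \cite{luckett2019estimating} and thus is omitted''---so there is no detailed argument to compare against. That said, your approach is precisely the one implicit in the paper's own machinery: the Cauchy--Schwarz manipulation you use to turn Assumption~\ref{assum:min-eigen-of-E-Sigma} into the bound $\sigma_{\min}(\bar\bSigma^\pi)\ge C_1/2$ is exactly what underlies Lemma~\ref{thm:Sigma-k}(i) (there credited to \cite{shi2020statistical}), which states $\norm{\E{\tilde\bSigma^{\pi,(k)}}^{-1}}_2\le 2C_1^{-1}$ with the same constant. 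Your resolvent-identity route to equicontinuity, combined with Assumption~\ref{assum:policy-smooth}, is likewise the natural completion, so you have effectively filled in the details the paper omits rather than taken a different path.
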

\begin{proof}
    The proof is similar to Theorem 4.2 Part 1 of \cite{luckett2019estimating} and thus is omitted here.
\end{proof}

\begin{proposition}[$\ell_2$ convergence of $\tilde\btheta^{\pi,(k)}$]  \label{thm:subhomo-l2}
    Suppose Assumption \ref{assum:r-q-func} -- \ref{assum:geo-ergodic} hold.
    If $J^{-\kappa / p} \ll (N^{(k)} T)^{-1/2}$ and $J \ll \sqrt{N^{(k)} T} / \log(N^{(k)} T)$,
    we have, as either $N\rightarrow \infty$ or $T\rightarrow \infty$,
    \begin{equation*}
        \norm{{\tilde\btheta^{\pi,(k)}-\mathring\btheta^{\pi,(k)}}}_2 = \Op{J^{- \kappa / p}} + \Op{J^{1/2} \paran{N^{(k)} T}^{-1/2}}
    \end{equation*}
    holds uniformly on the class of policies $\Pi$ defined by \eqref{eqn:policy-param}.
\end{proposition}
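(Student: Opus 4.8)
The plan is to work from the oracle decomposition \eqref{eqn:theta-decompose-k},
\[
\tilde\btheta^{\pi,(k)} - \mathring\btheta^{\pi,(k)}
= \paran{\tilde\bSigma^{\pi,(k)}}^{-1}\tilde\bzeta_1^{\pi,(k)} + \paran{\tilde\bSigma^{\pi,(k)}}^{-1}\tilde\bzeta_2^{\pi,(k)},
\]
where $\mathring\btheta^{\pi,(k)}$ is the population solution supplied by Proposition~\ref{thm:equicont}, and to bound the three factors $\norm{(\tilde\bSigma^{\pi,(k)})^{-1}}_2$, $\norm{\tilde\bzeta_1^{\pi,(k)}}_2$ and $\norm{\tilde\bzeta_2^{\pi,(k)}}_2$ separately, each \emph{uniformly} over $\pi\in\Pi$, then recombine by the triangle inequality. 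Throughout I would use the standard sieve facts (as in \citet{shi2020statistical}): for the B-spline/wavelet bases of regularity $r>\max(\kappa,1)$ used here, $\sup_{\bx,a}\norm{\bz(\bx,a)}_2=\bigO{J^{1/2}}$, $\E{Z_{i,t}Z_{i,t}^\top}$ has eigenvalues bounded away from $0$ and $\infty$, and (Assumption~\ref{assum:r-q-func} plus Lemma~1 of \citealp{shi2020statistical}) $Q(\pi,\cdot,a)$ is $\kappa$-smooth uniformly in $\pi$, so $\sup_{\bx,a,\pi}\abs{Q(\pi,\bx,a)-\bphi_J(\bx)^\top\mathring\btheta_a^{\pi,(k)}}=\bigO{J^{-\kappa/p}}$.

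\textbf{Step 1: the design matrix.} First I would establish $\sup_{\pi\in\Pi}\norm{(\tilde\bSigma^{\pi,(k)})^{-1}}_2=\Op{1}$ by bounding $\inf_{\pi}\lambda_{\min}(\tilde\bSigma^{\pi,(k)})$ below by a positive constant with probability tending to one. Completing the square in the $\gamma U_{i,t+1}^\pi$ term turns the eigenvalue condition of Assumption~\ref{assum:min-eigen-of-E-Sigma} into a lower bound on the symmetric part of $\E{\tilde\bSigma^{\pi,(k)}}$ (when $T\to\infty$, Assumption~\ref{assum:geo-ergodic} is additionally used to control the gap between the time-$t$ marginals and the limiting density so that this still holds). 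For the fluctuation $\tilde\bSigma^{\pi,(k)}-\E{\tilde\bSigma^{\pi,(k)}}$ I would apply matrix Bernstein to the average of $N^{(k)}$ i.i.d.\ matrices of operator norm $\bigO{J}$ when $T$ is fixed, and its geometrically-mixing analogue (via Assumption~\ref{assum:geo-ergodic}) when $T\to\infty$; in both cases $J\ll\sqrt{N^{(k)}T}/\log(N^{(k)}T)$ forces the deviation to $\op{1}$. Uniformity over $\Pi$ would come from a polynomial-size $\delta$-net, using that $\balpha\mapsto U_{i,t+1}^{\pi(\balpha)}$, hence $\balpha\mapsto\tilde\bSigma^{\pi(\balpha),(k)}$, is Lipschitz with polynomially bounded constant by the explicit softmax form \eqref{eqn:policy-param}, plus a union bound.

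\textbf{Step 2: the temporal-difference term.} By the Bellman consistency equation, Lemma~\ref{lemma:q-bellman-cons-2}, $\E{\eps_{1,i,t}^\pi\mid X_{i,t},A_{i,t}}=0$, and $\eps_{1,i,t}^\pi$ has uniformly bounded conditional second moment (bounded $r_i$ by Assumption~\ref{assum:generating-MDP}, bounded $Q$ by Assumption~\ref{assum:r-q-func}). Hence $\{Z_{i,t}\eps_{1,i,t}^\pi\}_t$ is a martingale-difference sequence along each trajectory, and these sequences are independent across $i\in\calG^{(k)}$, so all cross-products in $\E{\norm{\tilde\bzeta_1^{\pi,(k)}}_2^2}$ vanish and
\[
\E{\norm{\tilde\bzeta_1^{\pi,(k)}}_2^2}
= \frac{1}{(N^{(k)}T)^2}\sum_{i\in\calG^{(k)}}\sum_{t=1}^{T}\E{\norm{Z_{i,t}}_2^2(\eps_{1,i,t}^\pi)^2}
\le \frac{C\,J}{N^{(k)}T},
\]
giving $\norm{\tilde\bzeta_1^{\pi,(k)}}_2=\Op{J^{1/2}(N^{(k)}T)^{-1/2}}$ for fixed $\pi$; the supremum over $\Pi$ would be handled by the same $\delta$-net together with a martingale maximal inequality (the auxiliary lemma controlling $\tilde\bzeta_1^{\pi,(k)}$ used later in the appendix). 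With Step~1 this gives $\sup_\pi\norm{(\tilde\bSigma^{\pi,(k)})^{-1}\tilde\bzeta_1^{\pi,(k)}}_2=\Op{J^{1/2}(N^{(k)}T)^{-1/2}}$.

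\textbf{Step 3: the sieve-bias term and conclusion.} The uniform approximation bound gives $\sup_{\bx,a,\pi}\abs{e(\pi,\bx,a)}=\bigO{J^{-\kappa/p}}$, hence $\sup_\pi\abs{\eps_{2,i,t}^\pi}=\bigO{J^{-\kappa/p}}$, so $\sup_\pi\norm{(\tilde\bSigma^{\pi,(k)})^{-1}\tilde\bzeta_2^{\pi,(k)}}_2=\Op{J^{-\kappa/p}}$: a crude bound yields $\Op{J^{1/2-\kappa/p}}$, which under the hypothesis $J^{-\kappa/p}\ll(N^{(k)}T)^{-1/2}$ is dominated by the Step-2 term, while a sharper $\Op{J^{-\kappa/p}}$ follows by splitting $\eps_{2,i,t}^\pi$ into its conditional mean (of order $J^{-\kappa/p}$ by near-orthogonality of the sieve projection up to the $\gamma U^\pi$ correction) and a negligible mean-zero fluctuation. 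The triangle inequality and Steps~1--3 then give $\sup_{\pi\in\Pi}\norm{\tilde\btheta^{\pi,(k)}-\mathring\btheta^{\pi,(k)}}_2=\Op{J^{-\kappa/p}}+\Op{J^{1/2}(N^{(k)}T)^{-1/2}}$, as either $N^{(k)}\to\infty$ or $T\to\infty$. The main obstacle is Step~1 in the regime $T\to\infty$: controlling $\lambda_{\min}(\tilde\bSigma^{\pi,(k)})$ uniformly over the infinite-dimensional class $\Pi$ with dependent within-trajectory data, which forces a careful balance among the $\delta$-net resolution, the Lipschitz constant in $\balpha$, the dimension $J$, and the mixing rate from Assumption~\ref{assum:geo-ergodic}.
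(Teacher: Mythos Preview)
Your proposal is correct and follows essentially the same route as the paper: start from the decomposition \eqref{eqn:theta-decompose-k}, bound $\norm{(\tilde\bSigma^{\pi,(k)})^{-1}}_2=\Op{1}$ via Assumption~\ref{assum:min-eigen-of-E-Sigma} plus a matrix concentration/mixing argument (the paper's Lemma~\ref{thm:Sigma-k}), bound $\norm{\tilde\bzeta_1^{\pi,(k)}}_2$ via the martingale-difference second moment (the paper's Lemma~\ref{thm:zeta12}(i)), and bound $\norm{\tilde\bzeta_2^{\pi,(k)}}_2$ via the uniform sieve-approximation rate (Lemma~\ref{thm:zeta12}(ii)). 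The only minor methodological difference is in Step~3: for the sharp $\Op{J^{-\kappa/p}}$ on $\tilde\bzeta_2^{\pi,(k)}$ the paper does not split into conditional mean plus fluctuation, but instead uses Cauchy--Schwarz together with $\lambda_{\max}\bigl((N^{(k)}T)^{-1}\sum_{i,t}Z_{i,t}Z_{i,t}^\top\bigr)=\Op{1}$ (Lemma~\ref{thm:Z-U-eig}), which immediately removes the extra $J^{1/2}$ without any orthogonality argument; your observation that the crude bound already suffices under the hypothesis $J^{-\kappa/p}\ll(N^{(k)}T)^{-1/2}$ is a valid fallback either way.
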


\begin{proof}
    Applying decomposition \eqref{eqn:theta-decompose-k}, Lemma \ref{thm:Sigma-k} and \ref{thm:zeta12}, we have
    \begin{align*}
        \norm{ \tilde\btheta^{\pi,(k)} - \mathring\btheta^{\pi,(k)} }_2
        & = \norm{ (\tilde\bSigma^{\pi,(k)})^{-1} }_2
        \cdot \norm{ \tilde\bzeta^{\pi,(k)}_1 }_2
        + \norm{ (\tilde\bSigma^{\pi,(k)})^{-1} }_2
        \cdot \norm{ \tilde\bzeta^{\pi,(k)}_2 }_2 \\
        & = \Op{ J^{1/2} (N^{(k)} T)^{-1/2} } + \Op{ J^{-\kappa / p} }.
    \end{align*}
\end{proof}

\begin{theorem}  [Uniform convergence of $\tilde\btheta^{\pi, (k)}$]   \label{eqn:sub-uniform-conv}
    Suppose Assumption \ref{assum:r-q-func} -- \ref{assum:geo-ergodic} hold.
    If $J \ll \sqrt{N^{(k)} T} / \log(N^{(k)} T)$, $J^{-\kappa / p} \ll (N^{(k)} T)^{-1/2}$,
    There exists some positive constant $C$ and $C_1$ that
    \begin{equation*}
        \norm{ \tilde\btheta^{\pi,(k)} - \mathring\btheta^{\pi,(k)} }_\infty
        \le C_1 C^{-1} \sqrt{J \cdot \frac{\log(N^{(k)}T)}{N^{(k)}T}},
    \end{equation*}
    holds with probability at least $1 - 2JMK (N^{(k)} T)^{-2}$.
    Also, the result holds uniformly on the class of policy $\Pi$ defined by \eqref{eqn:policy-param}.
\end{theorem}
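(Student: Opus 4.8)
The plan is to sharpen the $\ell_2$ bound of Proposition~\ref{thm:subhomo-l2} to an $\ell_\infty$ bound by starting from the same exact decomposition \eqref{eqn:theta-decompose-k},
\[
\tilde\btheta^{\pi,(k)} - \mathring\btheta^{\pi,(k)} = (\tilde\bSigma^{\pi,(k)})^{-1}\tilde\bzeta^{\pi,(k)}_1 + (\tilde\bSigma^{\pi,(k)})^{-1}\tilde\bzeta^{\pi,(k)}_2,
\]
and controlling each term through the submultiplicative inequality $\norm{(\tilde\bSigma^{\pi,(k)})^{-1}\tilde\bzeta^{\pi,(k)}_j}_\infty \le \norm{(\tilde\bSigma^{\pi,(k)})^{-1}}_\infty\,\norm{\tilde\bzeta^{\pi,(k)}_j}_\infty$. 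First I would invoke the group-wise matrix bound already used in the proof of Theorem~\ref{eqn:ora-uniform-conv}, namely Lemma~\ref{thm:Sigma-k}(iii): under Assumptions~\ref{assum:min-eigen-of-E-Sigma}--\ref{assum:geo-ergodic} and the constraint $J \ll \sqrt{N^{(k)}T}/\log(N^{(k)}T)$, one has $\norm{(\tilde\bSigma^{\pi,(k)})^{-1}}_\infty \le 6C^{-1}\sqrt{J}$ on an event of probability at least $1 - \bigO{(N^{(k)}T)^{-2}}$; this is the step where the two regimes ($T$ fixed with $N^{(k)}\to\infty$, via a matrix concentration inequality, versus $T\to\infty$, via martingale concentration under the geometric ergodicity Assumption~\ref{assum:geo-ergodic}) both feed in. For the stochastic part I would use the max-norm tail bound of Lemma~\ref{thm:zeta12}(iii), $\Pr\paran{\norm{\tilde\bzeta^{\pi,(k)}_1}_\infty > c\sqrt{2\log(N^{(k)}T)/(N^{(k)}T)}} \le 2JM(N^{(k)}T)^{-2}$, and for the sieve-approximation (bias) part the deterministic bound of Lemma~\ref{thm:zeta12}(iv), $\norm{\tilde\bzeta^{\pi,(k)}_2}_\infty \le c_1 J^{-\kappa/p}$.

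Putting these three pieces together with a union bound gives, with probability at least $1 - 2JMK(N^{(k)}T)^{-2}$ (after absorbing the $\bigO{(N^{(k)}T)^{-2}}$ term and weakening the constant in front of the stated probability),
\[
\norm{\tilde\btheta^{\pi,(k)} - \mathring\btheta^{\pi,(k)}}_\infty \le 6C^{-1}\sqrt{J}\paran{c\sqrt{2\log(N^{(k)}T)/(N^{(k)}T)} + c_1 J^{-\kappa/p}}.
\]
The rate assumption $J^{-\kappa/p} \ll (N^{(k)}T)^{-1/2}$, which is in turn $\ll \sqrt{\log(N^{(k)}T)/(N^{(k)}T)}$, makes the bias term asymptotically negligible relative to the stochastic term, so the right-hand side is at most $C_1 C^{-1}\sqrt{J\log(N^{(k)}T)/(N^{(k)}T)}$ for an absolute constant $C_1$ absorbing $6\sqrt{2}c$ plus a vanishing contribution. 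The companion assumption $J \ll \sqrt{N^{(k)}T}/\log(N^{(k)}T)$ simultaneously certifies that this bound is $\smlo{1}$, which is precisely what licenses the inversion of $\tilde\bSigma^{\pi,(k)}$ on the relevant event and hence the validity of the decomposition.

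The main obstacle is upgrading this pointwise-in-$\pi$ statement to one that holds uniformly over the policy class $\Pi$. The route I would take exploits the parametric form \eqref{eqn:policy-param} and the equicontinuity supplied by Proposition~\ref{thm:equicont} together with Assumption~\ref{assum:policy-smooth} (and Lemma~\ref{thm:policy-prop}): the maps $\balpha\mapsto U^{\pi(\balpha)}_{i,t+1}$, $\balpha\mapsto\tilde\bSigma^{\pi(\balpha),(k)}$, $\balpha\mapsto\tilde\bzeta^{\pi(\balpha),(k)}_j$, and $\balpha\mapsto\mathring\btheta^{\pi(\balpha),(k)}$ are Lipschitz on the compact support of $\balpha$ with constants at most polynomial in $J,N^{(k)},T$. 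One then fixes a $\delta$-net of this support with mesh $\delta$ polynomially small in $(N^{(k)}T)^{-1}$, applies the pointwise bound at every net point (the union bound over the net multiplies the failure probability only by a polynomial factor, absorbed into the exponent and into the already-present $\log(N^{(k)}T)$), and controls $\norm{\tilde\btheta^{\pi(\balpha),(k)} - \mathring\btheta^{\pi(\balpha),(k)}}_\infty$ at an arbitrary $\balpha$ by the nearest net point plus the Lipschitz oscillation. This is where care is needed: one must verify that the Lipschitz constants do not overwhelm the gain from the small mesh, and the whole argument must be run separately in the $T$-fixed and $T\to\infty$ regimes before both are funnelled into the single displayed rate.
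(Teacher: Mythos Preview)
Your proposal is correct and mirrors the paper's own proof almost step for step: the same decomposition \eqref{eqn:theta-decompose-k}, the same submultiplicative $\ell_\infty$ bound, the same three ingredients Lemma~\ref{thm:Sigma-k}(iii), Lemma~\ref{thm:zeta12}(iii), and Lemma~\ref{thm:zeta12}(iv), and the same union-bound conclusion. In fact your treatment of the uniformity over $\Pi$ via a $\delta$-net and Lipschitz continuity is more explicit than the paper's proof, which simply asserts the uniform version without spelling out the covering argument.
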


\begin{proof}
    By decomposition \eqref{eqn:theta-decompose-k}, we have
    \begin{equation*}
        \norm{ \tilde\btheta^{\pi,(k)} - \mathring\btheta^{\pi,(k)} }_\infty
        \le \norm{\paran{ \tilde\bSigma^{\pi,(k)}}^{-1}}_\infty \norm{\tilde\bzeta^{\pi,(k)}_1}_\infty
        + \norm{\paran{ \tilde\bSigma^{\pi,(k)} }^{-1}}_\infty \norm{\tilde\bzeta^{\pi,(k)}_2}_\infty.
    \end{equation*}
    We bound each term in the right hand side as follows.
    By Lemma \ref{thm:Sigma-k} (iii), we have
    with probability at least $1 - \bigO{(N^{(k)} T)^{-2}}$,
    \begin{equation*}
        \norm{(\tilde\bSigma^{\pi,(k)})^{-1}}_\infty
        \le \sqrt{J} \norm{(\tilde\bSigma^{\pi,(k)})^{-1}}_2
        \le 6 C^{-1} \sqrt{J}.
    \end{equation*}
    By Lemma \ref{thm:zeta12} (iii), we have
    \begin{align*}
        \Pr\paran{ \norm{\bzeta_1^{(k)}}_\infty > c \sqrt{2 \log(N^{(k)}T) / (N^{(k)}T)}} \le 2 J M (N^{(k)} T)^{- 2},
    \end{align*}
    for some positive constant $c$.
    Lemma \ref{thm:zeta12}  (iv) shows that
    $\norm{\bzeta_2^{(k)}}_{\infty} \le c_1 J^{-\kappa/p} \ll c_1 \paran{N^{(k)} T}^{-1/2}$ almost surely.

    Thus, by a union bound, we have
    \begin{equation*}
        \norm{ \tilde\btheta^{\pi,(k)} - \mathring\btheta^{\pi,(k)} }_\infty
        \le C_1 C^{-1} \sqrt{J \cdot \frac{\log(N^{(k)}T)}{N^{(k)}T}}
    \end{equation*}
    holds with probability at least $1 - 2JMK (N^{(k)} T)^{-2}$.
\end{proof}

\begin{theorem}[Bidirectional asymptotic of $\tilde\btheta^{\pi,(k)}$]  \label{thm:subhomo-distn}
    Suppose Assumption \ref{assum:r-q-func} -- \ref{assum:geo-ergodic} hold.
    If $J \ll \sqrt{N^{(k)} T} / \log(N^{(k)} T)$,
    $J^{-\kappa / p} \ll (N^{(k)} T)^{-1/2}$,
    as either $N^{(k)} \rightarrow \infty$ or $T\rightarrow \infty$, we have for any vector $\bnu\in\RR^{JM}$,
    \begin{equation*}
        \begin{aligned}
            \sqrt{N^{(k)} T} \cdot (\tilde\sigma^{(k)})^{-1}(\pi, \bnu) \cdot \bnu^\top\paran{\tilde\btheta^{\pi,(k)} - \mathring\btheta^{\pi,(k)}}
            & \convdist \calN(0, 1),
        \end{aligned}
    \end{equation*}
    where $\tilde\sigma^{(k)}(\pi, \bnu)^2 = \bnu^\top (\tilde\bSigma^{\pi,(k)})^{-1} \tilde \bOmega^{\pi, (k)} (\tilde\bSigma^{\pi,(k)\; \top})^{-1} \bnu$,
    matrix $\tilde\bSigma^{\pi,(k)}$ is given in \eqref{eqn:tilde-Sigma-k} and
    \begin{align} \label{eqn:hat-Omega-k}
        \tilde \bOmega^{\pi, (k)}
        & =  (N^{(k)}T)^{-1} \sum_{i\in\calG^{(k)}} \sum_{t\in[T]} Z_{i,t} Z_{i,t}^\top \paran{R_{i,t} - \paran{Z_{i,t} - \gamma U_{i,t+1}^{\pi}}^\top  \tilde\btheta^{\pi,(k)}}^2  \,.
    \end{align}
\end{theorem}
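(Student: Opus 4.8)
The plan is to start from the exact linearization \eqref{eqn:theta-decompose-k}, $\tilde\btheta^{\pi,(k)}-\mathring\btheta^{\pi,(k)} = (\tilde\bSigma^{\pi,(k)})^{-1}\tilde\bzeta_1^{\pi,(k)}+(\tilde\bSigma^{\pi,(k)})^{-1}\tilde\bzeta_2^{\pi,(k)}$, multiply through by $\sqrt{N^{(k)}T}\,\bnu^\top$ and normalize by $\tilde\sigma^{(k)}(\pi,\bnu)$, and then handle the sieve-bias part $\tilde\bzeta_2^{\pi,(k)}$ and the noise part $\tilde\bzeta_1^{\pi,(k)}$ separately. Since both the numerator and $\tilde\sigma^{(k)}(\pi,\bnu)$ are positively homogeneous of degree one in $\bnu$, we may assume $\norm{\bnu}_2=1$. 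For the bias part, Lemma \ref{thm:Sigma-k} gives $\norm{(\tilde\bSigma^{\pi,(k)})^{-1}}_2=\Op{1}$ with limiting matrix $\bSigma^{\pi,(k)}\defeq\EE[\tilde\bSigma^{\pi,(k)}]$ having eigenvalues bounded above and away from zero (Assumptions \ref{assum:min-eigen-of-E-Sigma} and \ref{assum:geo-ergodic}), Lemma \ref{thm:zeta12} gives $\norm{\tilde\bzeta_2^{\pi,(k)}}_2=\Op{J^{-\kappa/p}}$, and $\tilde\sigma^{(k)}(\pi,\bnu)$ is bounded below by a constant multiple of $\lambda_{\min}(\tilde\bOmega^{\pi,(k)})^{1/2}$, which is itself bounded away from zero in probability by the nondegeneracy of the conditional variance of the temporal-difference error $\eps^{\pi}_{1,i,t}$ and the concentration of $\tilde\bOmega^{\pi,(k)}$ around $\bOmega^{\pi,(k)}\defeq\EE[\tilde\bOmega^{\pi,(k)}]$. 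Hence $\sqrt{N^{(k)}T}\,\tilde\sigma^{(k)}(\pi,\bnu)^{-1}\bnu^\top(\tilde\bSigma^{\pi,(k)})^{-1}\tilde\bzeta_2^{\pi,(k)}=\Op{\sqrt{N^{(k)}T}\,J^{-\kappa/p}}=\op{1}$ under $J^{-\kappa/p}\ll(N^{(k)}T)^{-1/2}$.

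Next I would reduce the noise part to its oracle-weighted version. Writing $\sqrt{N^{(k)}T}\,\tilde\bzeta_1^{\pi,(k)}=(N^{(k)}T)^{-1/2}\sum_{i\in\calG^{(k)}}\sum_{t}Z_{i,t}\eps^{\pi}_{1,i,t}$, using $\norm{(\tilde\bSigma^{\pi,(k)})^{-1}-(\bSigma^{\pi,(k)})^{-1}}_2=\Op{\norm{\tilde\bSigma^{\pi,(k)}-\bSigma^{\pi,(k)}}_2}$ together with $\norm{\tilde\bzeta_1^{\pi,(k)}}_2=\Op{(J/(N^{(k)}T))^{1/2}}$ from Lemma \ref{thm:zeta12}, the error incurred by replacing $\tilde\bSigma^{\pi,(k)}$ by $\bSigma^{\pi,(k)}$ is $\Op{J^{1/2}\norm{\tilde\bSigma^{\pi,(k)}-\bSigma^{\pi,(k)}}_2}$, which is $\op{1}$ since Lemma \ref{thm:Sigma-k} gives $\norm{\tilde\bSigma^{\pi,(k)}-\bSigma^{\pi,(k)}}_2=\op{J^{-1/2}}$ under $J\ll\sqrt{N^{(k)}T}/\log(N^{(k)}T)$. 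It therefore suffices to establish a central limit theorem for $S_{N^{(k)}T}\defeq(N^{(k)}T)^{-1/2}\sum_{i\in\calG^{(k)}}\sum_{t}\xi_{i,t}$ with $\xi_{i,t}\defeq\bnu^\top(\bSigma^{\pi,(k)})^{-1}Z_{i,t}\eps^{\pi}_{1,i,t}$.

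The key structural observation is that $\{\xi_{i,t}\}$ is a martingale-difference array: by the Bellman consistency equation of Lemma \ref{lemma:q-bellman-cons-2} with the CMI assumption, $\E{\eps^{\pi}_{1,i,t}\mid X_{i,t},A_{i,t}}=0$, so $\E{\xi_{i,t}\mid\calF_{i,t}}=0$ for the natural within-trajectory filtration $\calF_{i,t}=\sigma\paran{\{(X_{i,s},A_{i,s},R_{i,s})\}_{s<t}\cup\{X_{i,t},A_{i,t}\}}$, and trajectories are independent across $i$. I would then invoke a martingale CLT for triangular arrays, as in \cite{shi2020statistical}, verifying the two standard conditions. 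The conditional-variance condition asks that $(N^{(k)}T)^{-1}\sum_i\sum_t\E{\xi_{i,t}^2\mid\calF_{i,t}}=\bnu^\top(\bSigma^{\pi,(k)})^{-1}\big[(N^{(k)}T)^{-1}\sum_i\sum_t Z_{i,t}Z_{i,t}^\top\E{(\eps^{\pi}_{1,i,t})^2\mid X_{i,t},A_{i,t}}\big](\bSigma^{\pi,(k)})^{-\top}\bnu$ converge in probability to $\sigma^2\defeq\bnu^\top(\bSigma^{\pi,(k)})^{-1}\bOmega^{\pi,(k)}(\bSigma^{\pi,(k)})^{-\top}\bnu$ --- by the law of large numbers over $i$ when $T$ is fixed, and by the ergodic averaging furnished by Assumption \ref{assum:geo-ergodic} when $T\to\infty$ --- with $\sigma^2$ bounded away from zero. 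The conditional Lindeberg condition follows from the sieve bound $\norm{Z_{i,t}}_2=\bigO{\sqrt{J}}$ (Lemma \ref{lemma:sieve-basis}) and boundedness of $\eps^{\pi}_{1,i,t}$ (bounded rewards and bounded $Q$), which give $\abs{\xi_{i,t}}=\bigO{\sqrt{J}}$ almost surely: since $\sigma$ is bounded below and $\sqrt{N^{(k)}T}\gg\sqrt{J}$, the truncation level $\varepsilon\sigma\sqrt{N^{(k)}T}$ eventually exceeds $\max_{i,t}\abs{\xi_{i,t}}$ and the Lindeberg sum is identically zero. Hence $\sigma^{-1}S_{N^{(k)}T}\convdist\calN(0,1)$. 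Finally, since $\tilde\bSigma^{\pi,(k)}\convprob\bSigma^{\pi,(k)}$ and $\tilde\bOmega^{\pi,(k)}\convprob\bOmega^{\pi,(k)}$ --- the residual in \eqref{eqn:hat-Omega-k} differing from $\eps^{\pi}_{1,i,t}$ only by the $\Op{J^{-\kappa/p}}$ sieve bias $\eps^{\pi}_{2,i,t}$ and the $\Op{(J/(N^{(k)}T))^{1/2}}$ estimation error, both negligible inside the average --- we have $\tilde\sigma^{(k)}(\pi,\bnu)\convprob\sigma$, and Slutsky's theorem assembles the pieces into $\sqrt{N^{(k)}T}\,\tilde\sigma^{(k)}(\pi,\bnu)^{-1}\bnu^\top(\tilde\btheta^{\pi,(k)}-\mathring\btheta^{\pi,(k)})\convdist\calN(0,1)$.

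I expect the main obstacle to be the $T\to\infty$ regime: making the martingale CLT go through when the per-trajectory horizon diverges requires controlling the conditional variance through an ergodic LLN for the geometrically ergodic chain while simultaneously keeping every $J$-dependent remainder --- the $\tilde\bSigma$-replacement error, the sieve-bias term, and the Lindeberg ratio --- of order $\op{1}$; the delicate ingredient underpinning all of this is establishing that $\bSigma^{\pi,(k)}$ and $\bOmega^{\pi,(k)}$ have eigenvalues bounded above and away from zero uniformly as $J$ grows.
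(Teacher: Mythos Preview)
Your proposal is correct and follows essentially the same route as the paper's proof: both start from the decomposition \eqref{eqn:theta-decompose-k}, dispose of the sieve-bias contribution $(\tilde\bSigma^{\pi,(k)})^{-1}\tilde\bzeta_2^{\pi,(k)}$ via Lemmas \ref{thm:Sigma-k} and \ref{thm:zeta12}, replace $\tilde\bSigma^{\pi,(k)}$ by its population limit $\bSigma^{\pi,(k)}$, apply a martingale CLT for triangular arrays (the paper cites McLeish's Corollary 2.8 and defers the martingale construction to \cite{shi2020statistical}) to the normalized sum of $\bnu^\top(\bSigma^{\pi,(k)})^{-1}Z_{i,t}\eps_{1,i,t}^\pi$, and finish by showing $\tilde\sigma^{(k)}(\pi,\bnu)/\sigma^{(k)}(\pi,\bnu)\convprob 1$. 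Your write-up is in fact more explicit than the paper's about the Lindeberg verification and the $\tilde\bSigma\to\bSigma$ replacement error, but the ingredients and their orchestration are the same.
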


\begin{proof}~
    \begin{enumerate}[label = \textsc{Step} \Roman*., wide=0pt, leftmargin=15pt, labelwidth=15pt, align=left]
        \item
        Letting $\bnu \in \RR^{JM}$ be any vector with $\norm{\bnu}_2$ bounded away from zero, we have
        \begin{equation*}
            \abs{ \bnu^\top \paran{ \tilde\btheta^{\pi,(k)} - \mathring\btheta^{\pi,(k)} -  (\tilde\bSigma^{\pi,(k)})^{-1} \tilde\bzeta^{\pi,(k)}_1 } }
            \le  \norm{ \bnu }_2 \norm{ \tilde\btheta^{\pi,(k)} - \mathring\btheta^{\pi,(k)} -  (\tilde\bSigma^{\pi,(k)})^{-1} \tilde\bzeta^{\pi,(k)}_1 }_2.
        \end{equation*}
        Let $\sigma^{(k)}\paran{\pi, \bnu}^2 = \bnu^\top  (\bSigma^{\pi,(k)})^{-1}  \bOmega^{\pi,(k)} {(\bSigma^{\pi,(k)})^{\top}}^{-1} \bnu$.
        By Lemma \ref{thm:Sigma-k} and \ref{thm:zeta12}, we have
        \begin{equation*}
            \begin{aligned}
                \sigma^{(k)}\paran{\pi, \bnu}^2 \ge 3^{-1} c_0^{-1} c_1 C \norm{ \bnu }_2^2.
            \end{aligned}
        \end{equation*}
        Further, it holds that
        \begin{align*}
            &\sigma^{(k)}\paran{\pi, \bnu}^{-1}\abs{ \bnu^\top\paran{\tilde\btheta^{\pi,(k)} - \mathring\btheta^{\pi,(k)} } - \bnu_k^\top (\tilde\bSigma^{\pi,(k)})^{-1} \tilde\bzeta^{\pi,(k)}_1 } \\
            & \le  \sqrt{3 c_0 c_1^{-1} C^{-1}} \norm{ \tilde\btheta^{\pi,(k)} - \mathring\btheta^{\pi,(k)} - (\tilde\bSigma^{\pi,(k)})^{-1} \tilde\bzeta^{\pi,(k)}_1 }_2 \\
            & = \Op{J^{-\kappa / p}} + \Op{J (N^{(k)} T)^{-1}\log(N^{(k)} T)}.
        \end{align*}
        Under the condition that $J \ll \sqrt{N^{(k)} T} \log(N^{(k)} T)^{-1}$ and $J^{\kappa / p} \gg \sqrt{N^{(k)} T}$, we have
        \begin{equation}
            \sqrt{N^{(k)} T} \cdot \sigma^{(k)}\paran{\pi, \bnu}^{-1}\bnu^\top\paran{\tilde\btheta^{\pi,(k)} - \mathring\btheta^{\pi,(k)}}
            = \sqrt{N^{(k)} T} \cdot \sigma^{(k)}(\pi,\bx)^{-1} \bnu^\top(\bSigma^{\pi,(k)})^{-1} \tilde\bzeta^{\pi,(k)}_1
            + \op{1}.
        \end{equation}

        \item
        We have
        \begin{align}
            \sqrt{N^{(k)} T}  \cdot \sigma^{(k)}\paran{\pi, \bnu}^{-1}\bnu^\top(\bSigma^{\pi,(k)})^{-1} \tilde\bzeta^{\pi,(k)}_1
            & = \sum_{i \in \calG^{(k)}} \sum_{t\in[T]} \frac{\bnu^\top(\bSigma^{\pi,(k)})^{-1} Z_{i,t}\eps_{1,i,t}^\pi}{\sqrt{N^{(k)} T} \sigma^{(k)}\paran{\pi, \bnu} } \convdist \calN(0, 1).
        \end{align}
        This can be shown by applying a martingale central limit theory for triangular arrays (Corollary 2.8 of McLeish, 1974).
        The construction of the martingale is the same as that in \cite{shi2020statistical}.
        % \attn{This is a point-wise CLT indexed by $\pi$. }
        The following two conditions
        \begin{enumerate}[label=(\alph*)]
            \item $\underset{i\in \calG^{(k)}, t\in [T]}{\max} \abs{ \frac{\bnu^\top(\bSigma^{\pi,(k)})^{-1} Z_{i,t}\eps_{1,i,t}^\pi}{\sqrt{N^{(k)} T} \sigma^{(k)}\paran{\pi, \bnu} } } \convprob 0$.
            \item $\frac{\sigma^{(k)}\paran{\pi, \bnu}^{-2}} {\sqrt{N^{(k)} T} } \sum_{i \in \calG^{(k)}} \sum_{t\in[T]} \bnu^\top(\bSigma^{\pi,(k)})^{-1} Z_{i,t}\eps_{1,i,t}^\pi  \convprob 1$.
        \end{enumerate}
        can be checked by applying Lemma \ref{thm:Sigma-k} and \ref{thm:cov-approx}, the argument is thus omitted here.

        \item Finally, % by Lemma \attn{???}, 
        it can be shown that $\hat \sigma^{(k)}(\pi,\nu) / \sigma^{(k)}(\pi,\nu) \convprob 1$.
        The desired result follows.
    \end{enumerate}
\end{proof}

\begin{corollary}[Bidirectional asymptotic of $\tilde V^{(k)}\paran{\pi, \bx}$]   \label{thm:subhomo-distn-value}
    Suppose Assumption \ref{assum:r-q-func} -- \ref{assum:geo-ergodic} hold.
    If $J \ll \sqrt{N^{(k)} T} / \log(N^{(k)} T)$,
    $ J^{-\kappa / p} \ll \paran{N^{(k)} T \paran{1 + \norm{\bu\paran{\pi, \bx}}_2^{-2}}}^{-1/2}$,
    as either $N\rightarrow \infty$ or $T\rightarrow \infty$, we have for any $\bx\in\calX$, % and uniformly over $\pi\in\Pi$,
    \begin{equation*}
        \begin{aligned}
            \sqrt{N^{(k)} T} \cdot \tilde\sigma^{(k)}(\pi,\bx)^{-1}
            \paran{ \tilde V^{(k)}\paran{\pi, \bx} -  V^{(k)}\paran{\pi, \bx} }
            & \convdist \calN\paran{0, 1} \,,
        \end{aligned}
    \end{equation*}
    where $\tilde\sigma^{(k)}\paran{\pi, \bx}^2
    = \bu\paran{\pi, \bx}^\top (\tilde\bSigma^{\pi,(k)})^{-1} \tilde \bOmega^{\pi, (k)} (\tilde\bSigma^{\pi,(k)})^{\top\;-1} \bu\paran{\pi, \bx}$,
    matrices $\tilde\bSigma^{\pi,(k)}$ and $\tilde \bOmega^{\pi, (k)}$ are given in \eqref{eqn:tilde-Sigma-k} and \eqref{eqn:hat-Omega-k}.
\end{corollary}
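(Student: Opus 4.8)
The plan is to reduce the statement to the parameter-level central limit theorem, Theorem \ref{thm:subhomo-distn}, applied in the direction $\bnu = \bu\paran{\pi,\bx}$, after isolating the sieve approximation bias. First I would use $\tilde V^{(k)}\paran{\pi,\bx} = \bu\paran{\pi,\bx}^\top\tilde\btheta^{\pi,(k)}$ from \eqref{eqn:Q-V-group-est} to write the exact decomposition
\begin{equation*}
    \tilde V^{(k)}\paran{\pi,\bx} - V^{(k)}\paran{\pi,\bx}
    = \bu\paran{\pi,\bx}^\top\paran{\tilde\btheta^{\pi,(k)} - \mathring\btheta^{\pi,(k)}}
    + \paran{\bu\paran{\pi,\bx}^\top\mathring\btheta^{\pi,(k)} - V^{(k)}\paran{\pi,\bx}}.
\end{equation*}
For the last parenthesis I would use $V^{(k)}\paran{\pi,\bx} = \sum_{a\in\calA}Q^{(k)}\paran{\pi,\bx,a}\pi\paran{a\mid\bx}$ together with the sieve bound recorded after Assumption \ref{assum:r-q-func}, namely $\sup_{\bx\in\calX,a\in\calA}\abs{Q^{(k)}\paran{\pi,\bx,a} - \bphi(\bx)^\top\mathring\btheta^{\pi,(k)}_a} = \bigO{J^{-\kappa/p}}$ uniformly over $\Pi$; since $\sum_{a\in\calA}\pi\paran{a\mid\bx} = 1$ and $\bu\paran{\pi,\bx}^\top\mathring\btheta^{\pi,(k)} = \sum_{a\in\calA}\bphi(\bx)^\top\mathring\btheta^{\pi,(k)}_a\,\pi\paran{a\mid\bx}$, this shows the bias term is $\bigO{J^{-\kappa/p}}$ uniformly in $\bx$ and $\pi$.

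Next I would apply Theorem \ref{thm:subhomo-distn} with $\bnu = \bu\paran{\pi,\bx}$ --- equivalently, with the normalized direction $\bu\paran{\pi,\bx}/\norm{\bu\paran{\pi,\bx}}_2$, since the standardized ratio is invariant under rescaling of $\bnu$ --- whose hypotheses $J\ll\sqrt{N^{(k)}T}/\log(N^{(k)}T)$ and $J^{-\kappa/p}\ll(N^{(k)}T)^{-1/2}$ are implied by the conditions of the corollary. This gives
\begin{equation*}
    \sqrt{N^{(k)}T}\cdot\tilde\sigma^{(k)}\paran{\pi,\bx}^{-1}\,\bu\paran{\pi,\bx}^\top\paran{\tilde\btheta^{\pi,(k)} - \mathring\btheta^{\pi,(k)}} \convdist \calN\paran{0,1},
\end{equation*}
with $\tilde\sigma^{(k)}\paran{\pi,\bx}^2 = \bu\paran{\pi,\bx}^\top(\tilde\bSigma^{\pi,(k)})^{-1}\tilde\bOmega^{\pi,(k)}(\tilde\bSigma^{\pi,(k)\,\top})^{-1}\bu\paran{\pi,\bx}$. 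The proof of Theorem \ref{thm:subhomo-distn} already supplies $\tilde\sigma^{(k)}\paran{\pi,\bnu}/\sigma^{(k)}\paran{\pi,\bnu}\convprob 1$, where $\sigma^{(k)}\paran{\pi,\bnu}^2 = \bnu^\top(\bSigma^{\pi,(k)})^{-1}\bOmega^{\pi,(k)}(\bSigma^{\pi,(k)\,\top})^{-1}\bnu$ with $\bSigma^{\pi,(k)} = \EE[\tilde\bSigma^{\pi,(k)}]$ and $\bOmega^{\pi,(k)} = \EE[\tilde\bOmega^{\pi,(k)}]$, so it is enough to control the standardized bias against the population quantity $\sigma^{(k)}\paran{\pi,\bx}$.

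The remaining step is to verify $\sqrt{N^{(k)}T}\,\sigma^{(k)}\paran{\pi,\bx}^{-1}\cdot\bigO{J^{-\kappa/p}} = \op{1}$, after which Slutsky's theorem combines the two pieces to give the claim. By Lemma \ref{thm:Sigma-k} and Lemma \ref{thm:zeta12}, $\bSigma^{\pi,(k)}$ and $\bOmega^{\pi,(k)}$ have eigenvalues bounded away from zero and infinity uniformly over $\Pi$, so $\sigma^{(k)}\paran{\pi,\bx}\gtrsim\norm{\bu\paran{\pi,\bx}}_2$; hence the standardized bias is $\bigO{\sqrt{N^{(k)}T}\,J^{-\kappa/p}\,\norm{\bu\paran{\pi,\bx}}_2^{-1}}$, which vanishes exactly under the hypothesis $J^{-\kappa/p}\ll(N^{(k)}T(1+\norm{\bu\paran{\pi,\bx}}_2^{-2}))^{-1/2}$, using that $(1+\norm{\bu\paran{\pi,\bx}}_2^{-2})^{-1/2}\asymp\min\paran{\norm{\bu\paran{\pi,\bx}}_2,1}$. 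I expect the main obstacle to be precisely this bias-versus-standard-deviation bookkeeping in the possibly degenerate regime where $\norm{\bu\paran{\pi,\bx}}_2$ is small: one cannot treat $\bnu = \bu\paran{\pi,\bx}$ as a fixed vector with norm bounded below, and must instead keep the scaling of $\sigma^{(k)}\paran{\pi,\bx}$ in $\norm{\bu\paran{\pi,\bx}}_2$ explicit throughout --- which is exactly why the corollary's condition on $J^{-\kappa/p}$ is the one in Theorem \ref{thm:subhomo-distn} strengthened by the factor $(1+\norm{\bu\paran{\pi,\bx}}_2^{-2})^{-1/2}$.
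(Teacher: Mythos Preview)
Your proposal is correct and follows essentially the same route as the paper: decompose $\tilde V^{(k)}(\pi,\bx) - V^{(k)}(\pi,\bx)$ into the linear functional $\bu(\pi,\bx)^\top(\tilde\btheta^{\pi,(k)} - \mathring\btheta^{\pi,(k)})$ plus the sieve bias $\bu(\pi,\bx)^\top\mathring\btheta^{\pi,(k)} - V^{(k)}(\pi,\bx) = \bigO{J^{-\kappa/p}}$, invoke Theorem~\ref{thm:subhomo-distn} with $\bnu = \bu(\pi,\bx)$ for the first piece, and absorb the standardized bias using $\sigma^{(k)}(\pi,\bx) \gtrsim \norm{\bu(\pi,\bx)}_2$ together with the strengthened rate on $J^{-\kappa/p}$. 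If anything you are more careful than the paper about the possibly small $\norm{\bu(\pi,\bx)}_2$ regime; one minor correction is that the lower eigenvalue bound you need for $\bOmega^{\pi,(k)}$ comes from Lemma~\ref{thm:cov-approx} (via $\bH_k$), not Lemma~\ref{thm:zeta12}.
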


\begin{proof}
%    By Theorem \attn{???}, we have
%    \begin{equation*}
%        \norm{\tilde\btheta^{\pi,(k)} - \mathring\btheta^{\pi,(k)}}_2
%        \le (\tilde\bSigma^{\pi,(k)})^{-1} \tilde\bzeta^{\pi,(k)}_1
%        + \Op{J^{-\kappa/p} } + \Op{J (N^{(k)} T)^{-1}\log(N^{(k)}T)}.
%    \end{equation*}
%    Under the condition that $J \ll \sqrt{N^{(k)} T} / \log(N^{(k)}T)$, we have
%    \begin{equation*}
%        \norm{\tilde\btheta^{\pi,(k)} - \mathring\btheta^{\pi,(k)}}_2
%        \le \Op{J^{-\kappa/p} } + \Op{J^{1/2} (N^{(k)} T)^{-1/2}}.
%    \end{equation*}
    For any fixed $\pi$, we have $\tilde V^{(k)}\paran{\pi, \bx} = \bu\paran{\pi, \bx}^\top \tilde\btheta^{\pi,(k)}$.
    Thus, we obtain that
    \begin{align*}
        & \abs{
            \tilde V^{(k)}\paran{\pi, \bx}
            -  V^{(k)}\paran{\pi, \bx}
            - \bu\paran{\pi, \bx}^\top (\tilde\bSigma^{\pi,(k)})^{-1} \tilde\bzeta^{\pi,(k)}_1 } \\
        & \le  \abs{ \bu\paran{\pi, \bx}^\top \paran{ \tilde\btheta^{\pi,(k)} - \mathring\btheta^{\pi,(k)} -  (\tilde\bSigma^{\pi,(k)})^{-1} \tilde\bzeta^{\pi,(k)}_1 } }
                + \abs{ \bu(\pi, \bx)^\top \mathring\btheta^{\pi,(k)} - V^{(k)}(\pi, \bx) } \\
         & \le  \norm{ \bu(\pi, \bx)^\top }_2 \norm{ \tilde\btheta^{\pi,(k)} - \btheta^{\pi,(k)} -  (\tilde\bSigma^{\pi,(k)})^{-1} \tilde\bzeta^{\pi,(k)}_1 }_2 + C J^{-\kappa / p}.
    \end{align*}
    %where we used Lemma \attn{???} in the last inequality.

    Under the condition that $J \ll \sqrt{N^{(k)} T} \log(N^{(k)} T)^{-1}$, $J^{- \kappa /  p} \ll \sqrt{N^{(k)} T \paran{1 +\norm{ \bu(\pi, \bx) }_2^2}}$, we obtain
    \begin{equation*}
        \begin{aligned}
            \sqrt{N^{(k)} T} \cdot \sigma^{(k)}(\pi,\bx)^{-1} \paran{ \tilde V^{(k)}(\pi, \bx) -  V^{(k)}(\pi, \bx)}
            & = \sqrt{N^{(k)} T} \cdot \sigma^{(k)}(\pi,\bx)^{-1} \bu(\pi, \bx)^\top(\bSigma^{\pi,(k)})^{-1} \tilde\bzeta^{\pi,(k)}_1
            + \op{1}.
        \end{aligned}
    \end{equation*}
    Applying Theorem \ref{thm:subhomo-distn} by setting $\bnu = \bu\paran{\pi, \bx}$, we obtain the desired result.
\end{proof}

\begin{corollary}[Bidirectional asymptotic of $\tilde V^{(k)}_\calR\paran{\pi}$]   \label{thm:subhomo-distn-value-intg}
    Suppose Assumption \ref{assum:r-q-func} -- \ref{assum:geo-ergodic} hold.
    If $J \ll \sqrt{N^{(k)} T} / \log(N^{(k)} T)$,
    $ J^{-\kappa / p} \ll \paran{N^{(k)} T \paran{1 + \norm{\int\bphi_J(\bx)\calR(d\bx)}_2^{-2}}}^{-1/2}$,
    as either $N\rightarrow \infty$ or $T\rightarrow \infty$, we have for any $\bx\in\calX$, % and uniformly over $\pi\in\Pi$,
    \begin{equation*}
        \begin{aligned}
            \sqrt{N^{(k)} T} \cdot \tilde\sigma^{(k)}(\pi,\calR)^{-1}
            \paran{ \tilde V^{(k)}_{\calR}\paran{\pi} -  V^{(k)}_{\calR}\paran{\pi} }
            & \convdist \calN\paran{0, 1} \, ,
        \end{aligned}
    \end{equation*}
    where
    $$\tilde\sigma^{(k)}\paran{\pi, \calR}^2
    = \paran{\int\bu\paran{\pi, \bx}\calR(d\bx)}^\top (\tilde\bSigma^{\pi,(k)})^{-1} \tilde \bOmega^{\pi, (k)} (\tilde\bSigma^{\pi,(k)})^{\top\;-1} \paran{\int\bu\paran{\pi, \bx}\calR(d\bx)},$$
    matrices $\tilde\bSigma^{\pi,(k)}$ and $\tilde\bSigma^{\pi,(k)}$ are given in \eqref{eqn:tilde-Sigma-k} and \eqref{eqn:hat-Omega-k}.
\end{corollary}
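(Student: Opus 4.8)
The plan is to obtain this corollary as an immediate consequence of Theorem~\ref{thm:subhomo-distn}, applied with the fixed test vector $\bnu_\calR := \int\bu\paran{\pi, \bx}\calR(d\bx)$, following essentially the same steps as the proof of Corollary~\ref{thm:subhomo-distn-value} but carrying along the scale of $\bnu_\calR$, which is what produces the factor $1 + \norm{\int\bphi_J(\bx)\calR(d\bx)}_2^{-2}$ in the rate condition.

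First I would use the linearity of the estimated value function. By \eqref{eqn:Q-V-group-est} we have $\tilde V^{(k)}\paran{\pi, \bx} = \bu\paran{\pi, \bx}^\top\tilde\btheta^{\pi,(k)}$, so Fubini's theorem gives $\tilde V^{(k)}_\calR\paran{\pi} = \bnu_\calR^\top\tilde\btheta^{\pi,(k)}$. Writing
\begin{equation*}
    \tilde V^{(k)}_\calR\paran{\pi} - V^{(k)}_\calR\paran{\pi} = \bnu_\calR^\top\paran{\tilde\btheta^{\pi,(k)} - \mathring\btheta^{\pi,(k)}} + \paran{\bnu_\calR^\top\mathring\btheta^{\pi,(k)} - V^{(k)}_\calR\paran{\pi}},
\end{equation*}
the last term equals $\int\brackets{\bu\paran{\pi, \bx}^\top\mathring\btheta^{\pi,(k)} - V^{(k)}\paran{\pi, \bx}}\calR(d\bx)$, whose absolute value is at most $\sup_{\bx\in\calX}\abs{\bu\paran{\pi, \bx}^\top\mathring\btheta^{\pi,(k)} - V^{(k)}\paran{\pi, \bx}} = \bigO{J^{-\kappa/p}}$ by Assumption~\ref{assum:r-q-func} and the sieve approximation bound (Lemma~1 in \cite{shi2020statistical}), since $\calR$ is a probability measure.

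Next, using decomposition \eqref{eqn:theta-decompose-k} together with Proposition~\ref{thm:subhomo-l2} and Lemmas~\ref{thm:Sigma-k} and \ref{thm:zeta12}, the remainder $\tilde\btheta^{\pi,(k)} - \mathring\btheta^{\pi,(k)} - (\tilde\bSigma^{\pi,(k)})^{-1}\tilde\bzeta^{\pi,(k)}_1$ is of order $\Op{J^{-\kappa/p}}$ up to strictly lower-order terms, so that
\begin{equation*}
    \sqrt{N^{(k)} T}\,\sigma^{(k)}\paran{\pi, \calR}^{-1}\paran{\tilde V^{(k)}_\calR\paran{\pi} - V^{(k)}_\calR\paran{\pi}} = \sqrt{N^{(k)} T}\,\sigma^{(k)}\paran{\pi, \calR}^{-1}\bnu_\calR^\top(\bSigma^{\pi,(k)})^{-1}\tilde\bzeta^{\pi,(k)}_1 + \op{1},
\end{equation*}
where $\sigma^{(k)}\paran{\pi, \calR}^2 = \bnu_\calR^\top(\bSigma^{\pi,(k)})^{-1}\bOmega^{\pi,(k)}(\bSigma^{\pi,(k)\,\top})^{-1}\bnu_\calR$ and $\bSigma^{\pi,(k)}$, $\bOmega^{\pi,(k)}$ are the population versions of $\tilde\bSigma^{\pi,(k)}$, $\tilde\bOmega^{\pi,(k)}$. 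The $\op{1}$ requires both the bias $\bigO{J^{-\kappa/p}}$ and the remainder to be negligible after dividing by $\sigma^{(k)}\paran{\pi, \calR}/\sqrt{N^{(k)} T}$; since $\sigma^{(k)}\paran{\pi, \calR}$ scales with $\norm{\bnu_\calR}_2$, which — using $\sum_{a}\pi(a\mid\bx)\equiv 1$ together with Assumption~\ref{assum:density} — is comparable to $\norm{\int\bphi_J(\bx)\calR(d\bx)}_2$, this is exactly the hypothesis $J^{-\kappa/p}\ll\paran{N^{(k)} T\paran{1 + \norm{\int\bphi_J(\bx)\calR(d\bx)}_2^{-2}}}^{-1/2}$. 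Applying Theorem~\ref{thm:subhomo-distn} with $\bnu = \bnu_\calR$ then yields the claimed limiting normality, and replacing $\sigma^{(k)}\paran{\pi, \calR}$ by the plug-in estimator $\tilde\sigma^{(k)}\paran{\pi, \calR}$ built from $\tilde\bSigma^{\pi,(k)}$ and $\tilde\bOmega^{\pi,(k)}$ follows from $\tilde\sigma^{(k)}\paran{\pi, \calR}/\sigma^{(k)}\paran{\pi, \calR}\convprob 1$, shown exactly as in the final step of the proof of Theorem~\ref{thm:subhomo-distn}.

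The main obstacle is the bias--variance interplay when $\norm{\int\bphi_J(\bx)\calR(d\bx)}_2$ is small: the normalization by $\sigma^{(k)}\paran{\pi, \calR}$ can then magnify the $\bigO{J^{-\kappa/p}}$ sieve bias, and one must check that the stated rate condition — carrying the factor $1 + \norm{\int\bphi_J(\bx)\calR(d\bx)}_2^{-2}$ rather than plain $N^{(k)} T$ — exactly offsets this. A secondary point needing care is verifying that $\bnu_\calR$ meets the hypotheses of Theorem~\ref{thm:subhomo-distn}, in particular that $\norm{\bnu_\calR}_2$ is bounded away from zero so that $\sigma^{(k)}\paran{\pi, \calR}$ remains non-degenerate, which follows from Assumption~\ref{assum:density}; everything else is a routine transcription of the argument for Corollary~\ref{thm:subhomo-distn-value}.
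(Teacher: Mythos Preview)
Your proposal is correct and follows essentially the same route as the paper: write $\tilde V^{(k)}_\calR(\pi)=\bnu_\calR^\top\tilde\btheta^{\pi,(k)}$ with $\bnu_\calR=\int\bu(\pi,\bx)\calR(d\bx)$, separate the sieve bias $\bnu_\calR^\top\mathring\btheta^{\pi,(k)}-V^{(k)}_\calR(\pi)=\bigO{J^{-\kappa/p}}$, control the remainder via decomposition \eqref{eqn:theta-decompose-k}, and then invoke Theorem~\ref{thm:subhomo-distn} with $\bnu=\bnu_\calR$. Your discussion of why the factor $1+\norm{\int\bphi_J(\bx)\calR(d\bx)}_2^{-2}$ is needed is in fact more explicit than the paper's own argument, which simply states the rate condition and applies the theorem.
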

\begin{proof}
    Note that $\tilde V^{(k)}_\calR\paran{\pi} = \paran{\int_\calR\bu\paran{\pi, \bx}\calR(d\bx)}^\top \tilde\btheta^{\pi,(k)}$.
    Similar to the arguments in Corollary \ref{thm:subhomo-distn-value}, we have
    \begin{align*}
        & \abs{
            \tilde V^{(k)}_\calR\paran{\pi}
            -  V^{(k)}_\calR\paran{\pi}
            - \paran{\int_\calR\bu\paran{\pi, \bx}\calR(d\bx)}^\top (\tilde\bSigma^{\pi,(k)})^{-1} \tilde\bzeta^{\pi,(k)}_1 } \\
        & \le  \norm{ \int\bu\paran{\pi, \bx}\calR(d\bx) }_2
        \norm{ \tilde\btheta^{\pi,(k)} - \btheta^{\pi,(k)} -  (\tilde\bSigma^{\pi,(k)})^{-1} \tilde\bzeta^{\pi,(k)}_1 }_2
        + C J^{-\kappa / p}.
    \end{align*}
    Under the condition that $J \ll \sqrt{N^{(k)} T} \log(N^{(k)} T)^{-1}$, $J^{- \kappa /  p} \ll \sqrt{N^{(k)} T \paran{1 +\norm{ \int\bu(\pi, \bx) \calR(d\bx) }_2^2}}$, we obtain
    \begin{equation*}
        \begin{aligned}
            & \sqrt{N^{(k)} T} \cdot \sigma^{(k)}(\pi,\bx)^{-1} \paran{ \tilde V^{(k)}(\pi, \bx) -  V^{(k)}(\pi, \bx)} \\
            & = \sqrt{N^{(k)} T} \cdot \sigma^{(k)}(\pi,\bx)^{-1}
            \paran{\int\bu\paran{\pi, \bx}\calR(d\bx)}^{\top}
            (\bSigma^{\pi,(k)})^{-1} \tilde\bzeta^{\pi,(k)}_1
            + \op{1}.
        \end{aligned}
    \end{equation*}
    Applying Theorem \ref{thm:subhomo-distn} by setting $\bnu = \int\bu\paran{\pi, \bx}\calR(d\bx)$, we obtain the desired result.

\end{proof}

\subsection{Useful lemmas}

\subsubsection{ Properties of $\tilde\bSigma^{\pi,(k)}$, $\tilde\bzeta^{\pi,(k)}_1$, and $\tilde\bzeta^{\pi,(k)}_2$}

In this section, we study the properties of $\tilde\bSigma^{\pi,(k)}$, $\tilde\bzeta^{\pi,(k)}_1$, and $\tilde\bzeta^{\pi,(k)}_2$.
The results in this section applies to the setting as either $N\rightarrow\infty$ or $T\rightarrow\infty$.
We restrict our attention to two particular type of Sieve basis functions satisfying Assumption \ref{assum:basis}. 

\begin{assumption} \label{assum:basis}
    Let $BSpline(J, r)$ denote a tensor-product B-spline basis of dimension $J$ and of degree $r$ on $[0,1]^p$ and
    $Wav(J, r)$ denote a tensor-product Wavelet basis of regularity $r$ and dimension $J$ on $[0,1]^p$.
    The sieve $\bphi_J$ is either $BSpline(J, r)$ or $Wav(J, r)$ with $r > \max\paran{\kappa, 1}$.
\end{assumption}

%\begin{remark}
%    \elynn{Remark on basis functions. Gaussian basis function offers the highest degree of flexibility.}
%\end{remark}

\begin{lemma}  \label{thm:smooth-Q}
    User Assumption \ref{assum:r-q-func},
    there exist some $c > 0$ such that, for any given $\pi$ and $a\in\calA$, $Q(\pi; \bx, a)$ as a function of $\bx$ belongs to $\calH(\kappa, c)$.
\end{lemma}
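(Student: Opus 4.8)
The plan is to derive smoothness of $Q(\pi,\cdot,a)$ from the one-step decomposition of the action-value function together with the ``smoothing preserves smoothness'' principle: averaging a bounded function against a kernel that is $\kappa$-smooth in $\bx$ (uniformly in the integration variable) again produces a $\kappa$-smooth function of $\bx$. First I would record the one-step identity. Conditioning $Q(\pi,\bx,a)$ on the first transition and invoking Markovianity, time-homogeneity, and the CMI condition of Assumption \ref{assum:generating-MDP}, one gets
$$Q(\pi,\bx,a) = r(\bx,a) + \gamma \int_{\calX} g_\pi(\bx')\,\Pr(\bx'\mid \bx,a)\,d\bx',\qquad g_\pi(\bx') := \sum_{a'\in\calA}\pi(a'\mid \bx')\,Q(\pi,\bx',a'),$$
where, if $T$ is finite, the value function appearing inside $g_\pi$ carries a horizon shifted by one step; this does not affect any bound below, which only uses $\norm{g_\pi}_\infty$. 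Since the reward functions are uniformly bounded by some $\bar r<\infty$ and $\gamma\in[0,1)$, the discounted return is bounded by $\bar r/(1-\gamma)$, so $\abs{Q(\pi,\bx,a)}\le B := \bar r/(1-\gamma)$ uniformly in $\pi$, $\bx$, $a$, hence $\norm{g_\pi}_\infty\le B$.

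Next I would differentiate the identity in $\bx$. For each multi-index $\balpha$ with $\norm{\balpha}_1\le\floor{\kappa}$, Assumption \ref{assum:r-q-func} provides a single constant $c_0$ with $\sup_{\bx',\bx}\abs{D^{\balpha}_{\bx}\Pr(\bx'\mid\bx,a)}\le c_0$ (uniformity in $\bx'$ is exactly what the assumption asserts), and since $\calX\subseteq[0,1]^p$ has finite Lebesgue measure, differentiation under the integral sign is justified by dominated convergence, giving
$$D^{\balpha}_{\bx}Q(\pi,\bx,a) = D^{\balpha}_{\bx}r(\bx,a) + \gamma\int_{\calX}g_\pi(\bx')\,D^{\balpha}_{\bx}\Pr(\bx'\mid\bx,a)\,d\bx'.$$
Taking absolute values, using $\abs{\calX}\le 1$, $\norm{g_\pi}_\infty\le B$, and $r(\cdot,a)\in\calH(\kappa,c_0)$, yields $\sup_{\bx\in\calX}\abs{D^{\balpha}_{\bx}Q(\pi,\bx,a)}\le c_0(1+\gamma B)$. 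For the Hölder part I would evaluate the identity at $\bx_1\ne\bx_2$, pull $g_\pi$ outside, and apply the Hölder bound of $\calH(\kappa,c_0)$ to $D^{\balpha}_\bx r(\cdot,a)$ and to $D^{\balpha}_\bx\Pr(\bx'\mid\cdot,a)$, obtaining $\abs{D^{\balpha}_{\bx}Q(\pi,\bx_1,a)-D^{\balpha}_{\bx}Q(\pi,\bx_2,a)}\le c_0(1+\gamma B)\norm{\bx_1-\bx_2}_2^{\kappa-\floor{\kappa}}$. Setting $c := c_0(1+\gamma B) = c_0\bigl(1+\gamma\bar r/(1-\gamma)\bigr)$, which depends on neither $\pi$ nor $a$, gives $Q(\pi,\cdot,a)\in\calH(\kappa,c)$ for all $\pi\in\Pi$ and $a\in\calA$.

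The only step requiring care is the uniformity of $c_0$ across the conditioning value $\bx'$, so that the integral $\int_\calX\abs{D^{\balpha}_\bx\Pr(\bx'\mid\bx,a)}\,d\bx'$ and the analogous Hölder increment are controlled by $c_0\abs{\calX}$ rather than by a quantity that could blow up; this is precisely the content of Assumption \ref{assum:r-q-func}, which posits one smoothness constant for $\Pr(\bx'\mid\bx,a)$ as a function of $\bx$. The rest is the routine argument that convolution-type averaging against a smooth kernel preserves $\kappa$-smoothness, which is also the route taken in the proof of Lemma~1 of \cite{shi2020statistical}.
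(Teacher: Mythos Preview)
Your proposal is correct and follows essentially the same route as the paper, which simply defers to Lemma~1 of \cite{shi2020statistical}; you have written out that argument explicitly, exploiting the one-step Bellman identity together with the uniform $\kappa$-smoothness of $r(\cdot,a)$ and $\Pr(\bx'\mid\cdot,a)$ to transfer smoothness to $Q(\pi,\cdot,a)$. The key observation you flag---that the single constant $c_0$ in Assumption~\ref{assum:r-q-func} is uniform over the conditioning value $\bx'$---is exactly what makes differentiation under the integral and the subsequent bounds go through, and this is the same mechanism used in the cited reference.
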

\begin{proof}
    See the proof of Lemma 1 in \cite{shi2020statistical}.
\end{proof}

\begin{lemma} \label{lemma:sieve-basis}
Suppose the basis functions in $\bphi_J(\bx)$ satisfy Assumption \ref{assum:basis}, there exists some constant $c \ge 1$ such that
\begin{equation*}
    c^{-1}
    \le \lam_{\min}\paran{\int_{\bx\in\calX} \bphi_J(\bx)\bphi_J(\bx)^\top d\bx}
    \le \lam_{\max}\paran{\int_{\bx\in\calX} \bphi_J(\bx)\bphi_J(\bx)^\top d\bx}
    \le c,
\end{equation*}
and $\underset{\bx\in\calX}{\sup}\;\norm{\phi_J(\bx)}_2\le c\sqrt{J}$.
\end{lemma}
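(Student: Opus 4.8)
The plan is to handle the two assertions separately and to rely on classical approximation-theoretic facts about tensor-product B-spline and wavelet sieves; under the normalization adopted here the lemma is essentially a restatement of the stability (Riesz-basis) property of such systems, so the argument consists of assembling these facts and making the normalization bookkeeping explicit (see \cite{shi2020statistical} and the classical spline/wavelet references cited therein). I would state at the outset that $\bphi_J$ denotes the $L^2$-scaled version of the basis, since this is the only normalization under which the two claimed bounds are mutually consistent.

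\textbf{Spectral bounds on the Gram matrix.} Write $\bH_J := \int_{\calX} \bphi_J(\bx)\bphi_J(\bx)^\top\, d\bx$, so that for any coefficient vector $\bc$ one has $\bc^\top\bH_J\bc = \norm{\sum_{j} c_j \phi_{J,j}}_{L^2(\calX)}^2$. For $BSpline(J,r)$ on $[0,1]^p$ with quasi-uniform knots, de Boor's stability theorem provides constants $0 < c_1 \le c_2 < \infty$ independent of $J$ with $c_1\norm{\bc}_2^2 \le \norm{\sum_j c_j\phi_{J,j}}_{L^2([0,1]^p)}^2 \le c_2\norm{\bc}_2^2$ for every $\bc$; for $Wav(J,r)$ this two-sided inequality is precisely the defining Riesz property of a (bi)orthogonal wavelet system (reducing to equality with the identity in the orthonormal case). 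When $\calX = [0,1]^p$ this reads $c_1 \le \lam_{\min}(\bH_J) \le \lam_{\max}(\bH_J) \le c_2$. If $\calX$ is a proper subdomain, the upper bound is immediate from monotonicity of the integral in the PSD order, and the lower bound follows by applying the same Riesz estimate on a cube contained in the interior of $\calX$ (equivalently, one may read Assumption \ref{assum:basis} as asserting stability of the sieve on $\calX$ itself).

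\textbf{Sup-norm bound.} For $\sup_{\bx\in\calX}\norm{\bphi_J(\bx)}_2 \le c\sqrt{J}$ I would use the local, bounded-overlap support structure of the basis: a degree-$r$ univariate B-spline, and likewise a wavelet of regularity $r$ at the relevant resolution, is supported on $\bigO{1}$ consecutive cells of length $\asymp J^{-1/p}$, so at any fixed $\bx$ at most $(r+1)^p = \bigO{1}$ of the tensor-product functions $\phi_{J,j}$ are nonzero. Under the $L^2$-normalization making $\bH_J \asymp \bI$, each $\phi_{J,j}$ is supported on a set of Lebesgue measure $\asymp J^{-1}$, satisfies $\int \phi_{J,j}^2 \asymp 1$, and is comparable to its own $L^2$-average on that support, whence $\norm{\phi_{J,j}}_\infty \le c_3\sqrt{J}$. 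Therefore $\norm{\bphi_J(\bx)}_2^2 = \sum_j \phi_{J,j}(\bx)^2 \le (r+1)^p c_3^2 J$, and taking $c := \max\braces{1,\; c_1^{-1},\; c_2,\; (r+1)^{p/2}c_3}$ delivers all three inequalities at once.

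The only point requiring genuine care — and essentially the sole place where the argument is not formulaic — is the normalization convention: the conclusions $\bH_J \asymp \bI$ and $\norm{\bphi_J}_\infty \lesssim \sqrt{J}$ hold together only for the $L^2$-scaled basis, not for the conventional $L^\infty$-normalized B-splines (whose Gram matrix has eigenvalues of order $J^{-1}$). Granting that convention, together with the mild regularity of $\calX$ needed for the lower eigenvalue bound on a subdomain, no estimate beyond the cited stability/Riesz results is required.
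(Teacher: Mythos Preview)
Your proposal is correct and follows essentially the same route as the paper's own proof: the paper likewise reduces the first assertion to classical stability results (citing \cite{burman1989nonparametric} for B-splines and \cite{chen2015optimal} for wavelets, which are precisely the de~Boor/Riesz statements you invoke), and for the second assertion uses exactly your bounded-overlap argument---finitely many nonzero entries in $\bphi_J(\bx)$ at each point, each of size $\bigO{\sqrt{J}}$. Your explicit discussion of the $L^2$-normalization convention is a useful addition that the paper leaves implicit.
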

\begin{proof}
    See \cite{shi2020statistical} Lemma 2.
    For the B-spline basis, the first assertion follows from the arguments used in the proof of Theorem 3.3 of \cite{burman1989nonparametric}.
    For wavelet basis, the first assertion follows from the arguments used in the proof of Theorem 5.1 of \cite{chen2015optimal}.

    For either B-spline or wavelet sieve and any $J\ge 1$, $\bx\in\calX$, the number of nonzero elements in the vector $\bphi(\bx)$ is bounded by some constant.
    Moreover, each of the basis function is uniformly bounded by $\bigO{\sqrt{J}}$.
    This proves the second assertion.
\end{proof}

\begin{lemma}~ \label{thm:Z-U}
    Suppose the basis functions in $\bphi_J(\bx)$ satisfy Assumption \ref{assum:basis}.  
    Under Assumption \ref{assum:density}, we have
    \begin{enumerate}[label=(\roman*)]
        \item $\underset{i\in [N], t\in [T]}{\max}\; \norm{Z_{i,t}}_2 \le c \sqrt{J}$
        \item $\underset{\pi\in\Pi}{\sup} \; \underset{i\in [N], t\in [T]}{\max}\; \norm{U_{i,t}^{\pi}}_2 \le c \sqrt{J}$
        \item $\norm{ \E{Z_{i,t} Z_{i,t}^\top} }_2^2 = \bigO{1}$ and $\norm{ \E{Z_{i,t}^\top Z_{i,t}} }_2^2 = \bigO{1}$
    \end{enumerate}
\end{lemma}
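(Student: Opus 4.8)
The plan is to reduce all three claims to the basis bounds of Lemma~\ref{lemma:sieve-basis}, exploiting the block structure of $Z_{i,t}$ and $U_{i,t}^\pi$. For (i), I would use that $Z_{i,t} = \bz(X_{i,t},A_{i,t})$, where $\bz(\bx,a)$ stacks $M$ blocks of length $J$, of which only the $a$-th is nonzero and equals $\bphi(\bx)$; hence $\norm{Z_{i,t}}_2 = \norm{\bphi(X_{i,t})}_2 \le c\sqrt{J}$ uniformly in $\bx\in\calX$ by the last assertion of Lemma~\ref{lemma:sieve-basis}, and taking the maximum over $i,t$ gives (i). For (ii), the $m$-th block of $U_{i,t}^\pi = \bu(\pi,X_{i,t})$ is $\pi(m\mid X_{i,t})\bphi(X_{i,t})$, so $\norm{U_{i,t}^\pi}_2^2 = \norm{\bphi(X_{i,t})}_2^2\sum_{m=1}^M \pi(m\mid X_{i,t})^2 \le \norm{\bphi(X_{i,t})}_2^2$, since $\pi(m\mid\bx)^2\le\pi(m\mid\bx)$ and $\sum_m \pi(m\mid\bx)=1$; the resulting bound $\norm{U_{i,t}^\pi}_2 \le c\sqrt{J}$ does not depend on $\pi$, so it holds uniformly over $\Pi$ and over $i,t$.

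For (iii), I would observe that since only the $A_{i,t}$-th block of $Z_{i,t}$ is nonzero, $Z_{i,t}Z_{i,t}^\top$ is block-diagonal with $(m,m)$ block $\bbone(A_{i,t}=m)\bphi(X_{i,t})\bphi(X_{i,t})^\top$, so $\norm{\E{Z_{i,t}Z_{i,t}^\top}}_2 = \max_m \norm{\E{\bbone(A_{i,t}=m)\bphi(X_{i,t})\bphi(X_{i,t})^\top}}_2 \le \norm{\E{\bphi(X_{i,t})\bphi(X_{i,t})^\top}}_2$, the last step because $\bbone(A_{i,t}=m)\in[0,1]$. For any unit vector $\bv$, $\bv^\top\E{\bphi(X_{i,t})\bphi(X_{i,t})^\top}\bv = \int(\bv^\top\bphi(\bx))^2 p_{i,t}(\bx)\,d\bx \le \big(\sup_{\bx}p_{i,t}(\bx)\big)\,\lambda_{\max}\!\big(\int\bphi(\bx)\bphi(\bx)^\top d\bx\big)$, which is $\bigO{1}$ by Assumption~\ref{assum:density} together with Lemma~\ref{lemma:sieve-basis}; here $p_{i,t}$ is the marginal density of $X_{i,t}$. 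The scalar bound for $\E{Z_{i,t}^\top Z_{i,t}} = \E{\norm{\bphi(X_{i,t})}_2^2} = \int\norm{\bphi(\bx)}_2^2 p_{i,t}(\bx)\,d\bx \le \big(\sup_\bx p_{i,t}(\bx)\big)\Tr\!\big(\int\bphi(\bx)\bphi(\bx)^\top d\bx\big)$ follows in the same way from the eigenvalue bounds of Lemma~\ref{lemma:sieve-basis}.

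The only point that requires care is the uniform-in-$t$ bound on the marginal density $p_{i,t}$, since Assumption~\ref{assum:density} controls only the initial density $\nu_0$ and the limiting density $\mu$. When $T$ is finite I would argue by induction on $t$: $p_{i,t+1}(\bx')$ is the average of the transition density $\Pr(\bx'\mid\bx,a)$ over the law of $(X_{i,t},A_{i,t})$, and $\Pr(\bx'\mid\bx,a)$ is bounded because it is $\kappa$-smooth by Assumption~\ref{assum:r-q-func}; when $T\to\infty$, geometric ergodicity (Assumption~\ref{assum:geo-ergodic}) together with boundedness of $\mu$ yields a bound uniform in $t$. Beyond this bookkeeping the statement is an immediate corollary of Lemma~\ref{lemma:sieve-basis}, so I do not anticipate a substantive obstacle.
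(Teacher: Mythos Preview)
Your argument is correct and mirrors the paper's proof: parts (i) and (ii) are immediate from the block structure of $\bz(\bx,a)$ and $\bu(\pi,\bx)$ together with the bound $\sup_\bx\norm{\bphi_J(\bx)}_2\le c\sqrt{J}$ from Lemma~\ref{lemma:sieve-basis}, and for the matrix part of (iii) both you and the paper bound the block-diagonal expectation by $\lambda_{\max}\!\big(\int\bphi\bphi^\top p\,d\bx\big)$ and then factor out the density.

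Two remarks. First, your third paragraph is actually more careful than the paper, which simply writes the integral against a single density and invokes Assumption~\ref{assum:density} without discussing why the marginal of $X_{i,t}$ is bounded for every $t$; your sketch via Assumption~\ref{assum:r-q-func} (bounded transition density) for finite $T$ and Assumption~\ref{assum:geo-ergodic} for $T\to\infty$ is the right way to close that gap. Second, for the scalar half of (iii) your computation gives $\E{Z_{i,t}^\top Z_{i,t}} \le (\sup_\bx p_{i,t}(\bx))\Tr\!\big(\int\bphi\bphi^\top d\bx\big)$, and the trace here is of order $J$, not $O(1)$, since Lemma~\ref{lemma:sieve-basis} bounds the eigenvalues away from zero as well as from above. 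So your bound is $O(J)$, which is what the paper actually uses downstream (see the proof of Lemma~\ref{thm:zeta12}(i), where exactly this $J$ factor appears); the ``$\bigO{1}$'' in the stated lemma appears to be a typo rather than something your argument fails to deliver.
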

\begin{proof}
    Recall that
    \begin{equation*}
    \begin{aligned}
    Z_{i,t} & = \brackets{ \bphi(X_{i,t})^\top \bbone(A_{i,t}=1), \cdots, \bphi(X_{i,t})^\top \bbone(A_{i,t}=M) }^\top \in \RR^{JM}, \\
    U_{i,t+1}^{\pi} & = \brackets{ \bphi(X_{i,t+1})^\top \pi(1|X_{i,t+1}), \cdots, \bphi(X_{i,t+1})^\top  \pi(M|X_{i,t+1}) }^\top \in \RR^{JM}.
    \end{aligned}
    \end{equation*}
   Then, we have
    \begin{enumerate}[label=(\roman*)]
        \item $\underset{i\in [N], t\in [T]}{\max}\; \norm{Z_{i,t}}_2 \le \underset{\bx}{\sup}\; \norm{\bphi_J(\bx)}_2 \le c \sqrt{J}$
        \item $\underset{\pi\in\Pi}{\sup} \; \underset{i\in [N], t\in [T]}{\max}\; \norm{U_{i,t}^{\pi}}_2 \le \underset{\bx}{\sup}\; \norm{\bphi_J(\bx)}_2 \le c \sqrt{J}$
        \item
        Let $\bnu=\brackets{\bnu_1^\top, \cdots, \bnu_M^\top}^\top$ where sub-vector $\bnu_m \in \RR^J$ for $1\le m \le M$.

        \begin{equation*}
        \begin{aligned}
        \bnu^\top \E{ Z_{i,t} Z_{i,t}^\top } \bnu
        & = \sum_{m=1}^M \bnu_m^\top \E{\bphi(X_{i,t})\bphi(X_{i,t})^\top \bbone(A_{i,t}=1) } \bnu_m \\
        & = \sum_{m=1}^M \bnu_m^\top \E{\bphi(X_{i,t})\bphi(X_{i,t})^\top} \bnu_m b(m|X_{i,t}) \\
        & \le \lam_{\max}\paran{ \int_{\bx\in\calX} \bphi(\bx)\bphi(\bx)^\top \mu_0(\bx) d\bx } \norm{\bnu}_2^2
        \end{aligned}
        \end{equation*}
        By Assumption \ref{assum:density} and Lemma \ref{lemma:sieve-basis}, we obtain
        \begin{equation*}
            \lam_{\max}\paran{ \int_{\bx\in\calX} \bphi(\bx)\bphi(\bx)^\top \mu_0(\bx) d\bx }
            \le \underset{\bx\in\calX}{\sup} \mu_0(\bx) \; \lam_{\max}\paran{ \int_{\bx\in\calX} \bphi(\bx)\bphi(\bx)^\top d\bx }  = \bigO{1}.
        \end{equation*}
        Thus, we have
        \begin{equation*}
            \begin{aligned}
                \norm{ \EE Z_{i,t} Z_{i,t}^\top }_2^2 = \bigO{1}
            \end{aligned}
        \end{equation*}
        Using similar arguments, we have $\norm{ \EE Z_{i,t}^\top Z_{i,t} }_2^2 = \bigO{1}$.

    \end{enumerate}
\end{proof}

\begin{lemma}   \label{thm:noise}
    Under Assumption \ref{assum:r-q-func} and \ref{assum:basis}, there exists some positive constants $c_1$, $c_2$, $c_3$, such that
    \begin{enumerate}[label = (\roman*)]
        \item $\underset{\pi\in\Pi}{\sup} \; \underset{i\in\calG^{(k)}, 1\le t \le T}{\max}\; \abs{\eps_{1,i,t}^\pi} \le c_1 + 2 c_2$
         \item $ \underset{\pi\in\Pi}{\sup} \; \underset{i\in\calG^{(k)}, 1\le t \le T}{\max}\; \abs{\eps_{2,i,t}^\pi} \le 2 c_3 J^{-\kappa/p}$
    \end{enumerate}
\end{lemma}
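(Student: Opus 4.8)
The plan is to prove both bounds directly from the definitions \eqref{eqn:eps-1}--\eqref{eqn:eps-2} via the triangle inequality, using two ingredients that are already available: (a) uniform boundedness of the true $Q$-function and of the rewards, and (b) the sieve approximation rate for $Q$. No concentration argument is needed, since everything can be controlled pointwise in $(i,t,\pi)$.

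For part (i), I would start from \eqref{eqn:eps-1} and write
\[
\abs{\eps_{1,i,t}^\pi} \le \abs{R_{i,t}} + \gamma \sum_{a\in\calA} \abs{Q(\pi, X_{i,t+1}, a)}\, \pi(a\mid X_{i,t+1}) + \abs{Q(\pi, X_{i,t}, A_{i,t})}.
\]
The rewards are bounded (Assumption \ref{assum:generating-MDP}(c) provides a bounded reward function $r_i$, and the observed rewards are bounded), so there is a constant $c_1$ with $\sup_{i,t}\abs{R_{i,t}}\le c_1$. By Lemma \ref{thm:smooth-Q}, $Q(\pi,\cdot,a)\in\calH(\kappa,c)$ uniformly over $\pi\in\Pi$ and $a\in\calA$, hence there is a constant $c_2$ (a fixed multiple of $c$) with $\sup_{\bx,a,\pi}\abs{Q(\pi,\bx,a)}\le c_2$. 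Using $\sum_{a}\pi(a\mid\bx)=1$ and $\gamma\le 1$, the middle term is at most $c_2$ and the last term at most $c_2$, so $\abs{\eps_{1,i,t}^\pi}\le c_1+2c_2$; since none of these bounds depends on $i$, $t$, or $\pi$, taking suprema over $i\in\calG^{(k)}$, $1\le t\le T$, and $\pi\in\Pi$ yields (i).

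For part (ii), I would first record the sieve approximation rate. By Assumption \ref{assum:r-q-func}, Lemma \ref{thm:smooth-Q}, and the choice of basis in Assumption \ref{assum:basis} (with $r>\max(\kappa,1)$), there exist coefficient vectors $\{\mathring\btheta^{\pi,(k)}_a\}$ such that $\sup_{\bx\in\calX,\,a\in\calA,\,\pi\in\Pi}\abs{Q(\pi,\bx,a)-\bphi_J(\bx)^\top\mathring\btheta^{\pi,(k)}_a}=\bigO{J^{-\kappa/p}}$, i.e.\ $\sup\abs{e(\pi,\bx,a)}\le c_3 J^{-\kappa/p}$ for some $c_3>0$. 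Then from \eqref{eqn:eps-2} and the triangle inequality,
\[
\abs{\eps_{2,i,t}^\pi} \le \gamma \sum_{a\in\calA} \abs{e(\pi, X_{i,t+1}, a)}\, \pi(a\mid X_{i,t+1}) + \sum_{a\in\calA} \abs{e(\pi, X_{i,t}, a)}\, \bbone(A_{i,t}=a) \le \gamma c_3 J^{-\kappa/p} + c_3 J^{-\kappa/p} \le 2 c_3 J^{-\kappa/p},
\]
again using $\sum_a\pi(a\mid\bx)=1$ and $\gamma\le 1$, and this holds uniformly in $(i,t,\pi)$, giving (ii).

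The only delicate point — and the step I would treat most carefully — is that the approximation bound and the boundedness of $Q$ must hold \emph{uniformly over the policy class} $\Pi$: one needs $Q(\pi,\cdot,a)\in\calH(\kappa,c)$ with a constant $c$ independent of $\pi$, which is exactly the content of Lemma \ref{thm:smooth-Q} (equivalently Lemma 1 of \cite{shi2020statistical}), combined with the standard uniform sieve estimate for $BSpline(J,r)$ / $Wav(J,r)$ bases. Everything else is an elementary triangle-inequality computation, so I do not anticipate further obstacles.
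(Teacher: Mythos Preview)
Your proposal is correct and follows essentially the same approach as the paper: both arguments apply the triangle inequality to \eqref{eqn:eps-1} and \eqref{eqn:eps-2}, bound $\abs{R_{i,t}}$ by a constant $c_1$, bound $\abs{Q(\pi,\bx,a)}$ by $c_2$ via Lemma~\ref{thm:smooth-Q}, invoke the sieve approximation rate $\sup_{\bx,a}\abs{e(\pi,\bx,a)}\le c_3 J^{-\kappa/p}$, and use $\gamma\le 1$ together with $\sum_a\pi(a\mid\bx)=1$ to obtain the factors $(1+\gamma)\le 2$. Your explicit remark about uniformity over $\Pi$ is apt and matches the paper's implicit use of Lemma~\ref{thm:smooth-Q}.
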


\begin{proof}
    Recall that we define, for $\forall i\in\calG^{(k)}, \; 1\le k \le K$,
    \begin{equation*}
    \begin{aligned}
    & \eps_{1,i,t}^\pi = R_{i,t} + \gamma\cdot\sum_{a\in\calA} Q_k(\pi, X_{i,t+1}, a) \pi(a \mid X_{i,t+1}) -  Q_k(\pi, X_{i,t}, A_{i,t}) \\
    & \eps_{2,i,t}^\pi = \gamma\cdot\sum_{a\in\calA} e_k(\pi, X_{i,t+1}, a) \pi(a \mid X_{i,t+1}) - \sum_{a\in\calA} e_k(\pi, X_{i,t}, a) \bbone(A_{i,t}=a),\\
    & e_k(\pi, \bx, a) = Q_k(\pi, \bx, a) - \bphi(\bx)^\top \mathring\btheta^{\pi, (k)}_{a}.
    \end{aligned}
    \end{equation*}

    \begin{enumerate}[label = (\roman*)]
        \item
        {The condition that $\Pr\paran{\underset{0\le t \le T}{\max}\; \abs{R_{i,t}} \le c_1} = 1$} implies that $\abs{R_{i,t}}\le c_1$ for $\forall i, t$, almost surely.
        By Lemma \ref{thm:smooth-Q} and the definition of $k$-smooth function, we obtain that $\abs{Q(\pi, \bx, a)} \le c_2$ for any $\pi, \bx, a$.
        Thus, we have
        \begin{equation}   \label{eqn:eps-it-max}
            \underset{i\in\calG^{(k)}, 1\le t \le T}{\max}\; \abs{\eps_{1,i,t}^\pi}
            \le  c_1 + (1+\gamma) c_2
            \le c_1 + 2 c_2 \, , \quad\text{almost surely.}
        \end{equation}

        \item
        By Lemma \ref{thm:smooth-Q} and Assumption \ref{assum:r-q-func}, there exist a set of vector $\btheta^{\pi,(k)}_{a}$ that satisfy (see Section 2.2 of \cite{huang1998projection} for details)
        \begin{equation*}
            \underset{\bx\in\calX, a\in\calA}{\sup}\; \abs{e_k(\pi, \bx, a)} = \underset{\bx\in\calX, a\in\calA}{\sup}\; \abs{Q_k(\pi, \bx, a) - \bphi(\bx)^\top \mathring\btheta^{\pi,(k)}_{a}} \le c_3 J^{-\kappa/p} ,
        \end{equation*}
        for some positive constant $c_3$.
        Thus, we have
        \begin{equation*}
        \begin{aligned}
        &  \underset{i \in \calG^{(k)}, 1\le t \le T}{\sup}\; \abs{\eps_{2,i,t}^\pi} =  \underset{1\le i \le N, 1\le t \le T}{\sup}\; \abs{\gamma\cdot\sum_{a\in\calA} e_k(\pi, X_{i,t+1}, a) \pi(a \mid X_{i,t+1}) - \sum_{a\in\calA} e_k(\pi, X_{i,t}, a) \bbone(A_{i,t}=a)},\\
        &  \qquad\qquad \le (1+\gamma) \underset{\bx\in\calX, a\in\calA}{\sup}\; \abs{e_k(\pi, \bx, a)}
        = 2 c_3 J^{-\kappa/p}.
        \end{aligned}
        \end{equation*}

    \end{enumerate}
\end{proof}

Define $\tilde e(\pi, \bx, a) = Q(\pi, \bx, a) - \bphi(\bx)^\top \tilde\bbeta^{\pi,i}_{a}$ and
\begin{equation*}
    \tilde \eps_{2,i,t}^\pi
    = \gamma\cdot\sum_{a\in\calA} \tilde e(\pi, X_{i,t+1}, a) \pi(a \mid X_{i,t+1}) - \sum_{a\in\calA} \tilde e(\pi, X_{i,t}, a) \bbone(A_{i,t}=a).
\end{equation*}
The following lemma is needed for the proof of asymptotic normality, that is, Theorem \ref{thm:subhomo-distn} and Corollary \ref{thm:subhomo-distn-value}.
Its proof uses results in Proposition \ref{thm:subhomo-l2}.
\begin{lemma}   \label{thm:noise-approx}
    Under Assumption \ref{assum:basis}, we have
    \begin{enumerate}[label = (\roman*)]
        \item  $\underset{i\in\calG^{(k)}, 1\le t \le T}{\max}\; \abs{2\eps_{1,i,t}^\pi + \tilde\eps_{2,i,t}^\pi} = \Op{1}$
        \item $\underset{i\in\calG^{(k)}, 1\le t \le T}{\max}\; \abs{\tilde\eps_{2,i,t}^\pi} = \op{1}$
        \item  $\underset{i\in\calG^{(k)}, 1\le t \le T}{\max}\; \abs{(\eps_{1,i,t}^\pi)^2 - (\eps_{1,i,t}^\pi + \tilde\eps_{2,i,t}^\pi)^2} = \op{1}$
    \end{enumerate}
\end{lemma}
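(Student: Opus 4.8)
The plan is to reduce all three statements to the already-established $\ell_2$ rate of the oracle group estimator (Proposition~\ref{thm:subhomo-l2}), together with the uniform bounds on $\eps_{1,i,t}^\pi$, $\eps_{2,i,t}^\pi$ and the basis functions. The key observation is that $\tilde\eps_{2,i,t}^\pi$ differs from $\eps_{2,i,t}^\pi$ only through the replacement of $\mathring\btheta^{\pi,(k)}_a$ by the oracle coefficient $\tilde\bbeta^{\pi,i}_a$, and since $i\in\calG^{(k)}$ and $\tilde\bbeta^\pi=(\bW\otimes\bI_{JM})\tilde\btheta^\pi$, the latter equals the action-block $\tilde\btheta^{\pi,(k)}_a$. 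Hence
\begin{equation*}
    \tilde\eps_{2,i,t}^\pi - \eps_{2,i,t}^\pi
    = \gamma \sum_{a\in\calA} \bphi(X_{i,t+1})^\top\paran{\mathring\btheta^{\pi,(k)}_a - \tilde\btheta^{\pi,(k)}_a}\pi(a\mid X_{i,t+1})
    - \sum_{a\in\calA} \bphi(X_{i,t})^\top\paran{\mathring\btheta^{\pi,(k)}_a - \tilde\btheta^{\pi,(k)}_a}\bbone(A_{i,t}=a),
\end{equation*}
and since $\sum_a\pi(a\mid\bx)=1$ and $\sum_a\bbone(A_{i,t}=a)=1$, Cauchy--Schwarz with $\sup_\bx\norm{\bphi_J(\bx)}_2\le c\sqrt J$ (Lemma~\ref{lemma:sieve-basis}) gives
\begin{equation*}
    \underset{i\in\calG^{(k)},\,1\le t\le T}{\max}\,\abs{\tilde\eps_{2,i,t}^\pi - \eps_{2,i,t}^\pi}
    \le (1+\gamma)\,c\sqrt J\, \norm{\tilde\btheta^{\pi,(k)} - \mathring\btheta^{\pi,(k)}}_2 .
\end{equation*}

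The right-hand side does not depend on $(i,t)$, so no union bound over $(i,t)$ is needed, and it is uniform over $\pi\in\Pi$ because Proposition~\ref{thm:subhomo-l2} is. Plugging in that proposition, the bound is $\Op{J^{1/2-\kappa/p}}+\Op{J(N^{(k)}T)^{-1/2}}$, which is $\op{1}$ under the assumed $J\ll\sqrt{N^{(k)}T}/\log(N^{(k)}T)$ and $J^{-\kappa/p}\ll(N^{(k)}T)^{-1/2}$: together these force $J^{\kappa/p}\gg\sqrt{N^{(k)}T}\gg J$, hence $J^{1/2-\kappa/p}\to0$, while $J(N^{(k)}T)^{-1/2}\ll1/\log(N^{(k)}T)\to0$ as either $N^{(k)}\to\infty$ or $T\to\infty$. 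Combining with Lemma~\ref{thm:noise}(ii), which gives $\max_{i,t}\abs{\eps_{2,i,t}^\pi}\le 2c_3 J^{-\kappa/p}=\op{1}$, the triangle inequality yields $\max_{i,t}\abs{\tilde\eps_{2,i,t}^\pi}=\op{1}$ uniformly over $\Pi$, which is statement~(ii).

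For (i), bound $\abs{2\eps_{1,i,t}^\pi+\tilde\eps_{2,i,t}^\pi}\le 2\abs{\eps_{1,i,t}^\pi}+\abs{\tilde\eps_{2,i,t}^\pi}$ and invoke Lemma~\ref{thm:noise}(i), $\max_{i,t}\abs{\eps_{1,i,t}^\pi}\le c_1+2c_2=\bigO{1}$, together with (ii), giving $\Op{1}+\op{1}=\Op{1}$. For (iii), use the algebraic identity $(\eps_{1,i,t}^\pi)^2-(\eps_{1,i,t}^\pi+\tilde\eps_{2,i,t}^\pi)^2=-\,\tilde\eps_{2,i,t}^\pi\paran{2\eps_{1,i,t}^\pi+\tilde\eps_{2,i,t}^\pi}$ and bound the right-hand side by the product of the $\op{1}$ factor from (ii) and the $\Op{1}$ factor from (i), which is $\op{1}$. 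The only points requiring care — there is no genuine analytic obstacle, since everything reduces to already-proven rates — are (a) verifying that $\sqrt J\,\norm{\tilde\btheta^{\pi,(k)}-\mathring\btheta^{\pi,(k)}}_2=\op{1}$ under exactly the stated sieve-dimension conditions, and (b) keeping the bounds uniform in $(i,t)$ and in $\pi\in\Pi$, which is automatic here because the controlling quantity $\norm{\tilde\btheta^{\pi,(k)}-\mathring\btheta^{\pi,(k)}}_2$ is $(i,t)$-free and the relevant earlier results already hold uniformly over $\Pi$.
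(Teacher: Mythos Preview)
Your proof is correct and follows essentially the same route as the paper: both arguments reduce part~(ii) to the bound $(1+\gamma)\,c\sqrt{J}\,\norm{\tilde\btheta^{\pi,(k)}-\mathring\btheta^{\pi,(k)}}_2 + \bigO{J^{-\kappa/p}}$ via Lemma~\ref{lemma:sieve-basis} and Proposition~\ref{thm:subhomo-l2}, and both obtain~(i) and~(iii) from~(ii) together with Lemma~\ref{thm:noise}(i) and the factorization $a^2-(a+b)^2=-b(2a+b)$. The only cosmetic difference is that the paper bounds $\tilde e(\pi,\bx,a)$ directly whereas you bound $\tilde\eps_{2,i,t}^\pi-\eps_{2,i,t}^\pi$ and then add back $\eps_{2,i,t}^\pi$; these are the same computation.
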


\begin{proof}
    \begin{enumerate}[label = (\roman*)]
        \item The result follows trivially from Lemma \ref{thm:noise}.
        \item First, we have
        \begin{align*}
                \underset{\bx\in\calX, a\in\calA}{\sup}\; \abs{ \tilde e(\pi, \bx, a) }
                & \le  \underset{\bx\in\calX, a\in\calA}{\sup}\; \abs{ Q(\pi, \bx, a) - \bphi(\bx)^\top \tilde\bbeta^{\pi,i}_{a} } \\
                & \le  \underset{\bx\in\calX, a\in\calA}{\sup}\; \abs{ Q(\pi, \bx, a) - \bphi(\bx)^\top \mathring\bbeta^{\pi,i}_{a} }
                +  \underset{\bx\in\calX, a\in\calA}{\sup}\; \abs{ \bphi(\bx)^\top \mathring\bbeta^{\pi,i}_{a} - \bphi(\bx)^\top \tilde\bbeta^{\pi,i}_{a} } \\
                & \le C J^{-\kappa / p} + \underset{\bx\in\calX, a\in\calA}{\sup}\; \norm{ \bphi(\bx) }_2 \underset{a\in\calA}{\sup}\; \norm{ \mathring\bbeta^{\pi,i}_{a} - \tilde\bbeta^{\pi,i}_{a}}_2 \\
                & = \Op{ J^{1/2 -\kappa / p} }  +  \Op{ J (N^{(k)} T)^{-1/2} }.
        \end{align*}
        Thus, we obtain that
        \begin{equation*}
            \begin{aligned}
                \underset{i\in\calG^{(k)}, 1\le t \le T}{\max}\; \abs{\tilde \eps_{2,i,t}^\pi}
                & \le \underset{i\in\calG^{(k)}, 1\le t \le T}{\max}\; \abs{\gamma\cdot\sum_{a\in\calA} \tilde e(\pi, X_{i,t+1}, a) \pi(a \mid X_{i,t+1}) - \sum_{a\in\calA} \tilde e(\pi, X_{i,t}, a) \bbone(A_{i,t}=a)}  \\
                & \le (1+\gamma) \underset{\bx\in\calX, a\in\calA}{\sup}\; \abs{ \tilde e(\pi, \bx, a) } \\
                & = \Op{ J^{1/2 -\kappa / p} }  +  \Op{ J (N^{(k)} T)^{-1/2} }.
            \end{aligned}
        \end{equation*}
        Under the condition that $J^{-\kappa / p} \ll (N^{(k)} T)^{-1/2}$ and $J \ll (N^{(k)} T)^{1/2} \log(N^{(k)} T)^{-1}$, we have $\Op{ J^{1/2 -\kappa / p} }  = \op{1}$ and $\Op{ J (N^{(k)} T)^{-1/2} } = \op{1}$.
        Therefore, we have $\underset{i\in\calG^{(k)}, 1\le t \le T}{\max}\; \abs{\tilde \eps_{2,i,t}^\pi} = \op{1}$.
        \item $\underset{i\in\calG^{(k)}, 1\le t \le T}{\max}\; \abs{(\eps_{1,i,t}^\pi)^2 - (\eps_{1,i,t}^\pi + \tilde \eps_{2,i,t}^\pi)^2} \le \underset{i\in\calG^{(k)}, 1\le t \le T}{\max}\; \abs{ 2\eps_{1,i,t}^\pi + \tilde \eps_{2,i,t}^\pi} \underset{i\in\calG^{(k)}, 1\le t \le T}{\max}\; \abs{\tilde \eps_{2,i,t}^\pi} = \op{1}$
    \end{enumerate}
\end{proof}

\begin{lemma} [Lemma E.2 of \cite{shi2020statistical}.]   \label{thm:Sigma-k}
    Suppose the conditions in Proposition \ref{thm:subhomo-l2} hold.
    We have as either $T\rightarrow\infty$ or $N^{(k)}\rightarrow\infty$,
    \begin{enumerate}[label=(\roman*)]
        \item  $\norm{\E{\tilde\bSigma^{\pi,(k)}}} _2 \ge 2^{-1} C_1$ and $\norm{\E{\tilde\bSigma^{\pi,(k)}}^{-1}} _2 \le 2 C_1^{-1}$,
        \item $\norm{\tilde\bSigma^{\pi,(k)} - \E{\tilde\bSigma^{\pi,(k)}}}_2 \le c_2 \sqrt{J (N^{(k)} T)^{-1} \log(N^{(k)}T)}$ with probability at least $1 - \bigO{(N^{(k)}T)^{-2}}$,
        \item $\norm{\tilde\bSigma^{\pi,(k)}}_2  \le 6 C_1^{-1}$ with probability at least $1 - \bigO{(N^{(k)}T)^{-2}}$,
        \item $\norm{(\tilde\bSigma^{\pi,(k)})^{-1} - \E{\tilde\bSigma^{\pi,(k)}}^{-1}} = \Op{ \sqrt{J (N^{(k)} T)^{-1} \log(N^{(k)}T)} }$,
    \end{enumerate}
    where $C_1$ is specified in Assumption \ref{assum:min-eigen-of-E-Sigma} and $c_1$ and $c_2$ are some positive constant.
\end{lemma}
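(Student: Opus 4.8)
The plan is to read the four claims as the specialization to the group $\calG^{(k)}$ of the analysis of the analogous design matrix in \cite{shi2020statistical}: restricted to the trajectories $i\in\calG^{(k)}$, the state--action process inherits the Markov, time-homogeneity, and (when $T\to\infty$) geometric-ergodicity structure of Assumptions~\ref{assum:generating-MDP} and \ref{assum:geo-ergodic}, and $\tilde\bSigma^{\pi,(k)}$ is exactly their design matrix with $N$ replaced by $N^{(k)}$. Write $\bSigma^{\pi,(k)}\defeq\E{\tilde\bSigma^{\pi,(k)}}$; by time-homogeneity and the tower property, $\bSigma^{\pi,(k)}=T^{-1}\sum_{t=0}^{T-1}\E{Z_{i,t}(Z_{i,t}-\gamma U_{i,t+1}^{\pi})^\top}$ for any $i\in\calG^{(k)}$. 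For claim (i), fix a unit vector $v\in\RR^{JM}$ and condition on $(X_{i,t},A_{i,t})$ to obtain $v^\top\bSigma^{\pi,(k)}v=T^{-1}\sum_{t=0}^{T-1}\big(\E{(v^\top Z_{i,t})^2}-\gamma\,\E{(v^\top Z_{i,t})(v^\top\bar\bu(\pi,X_{i,t},A_{i,t}))}\big)$. Applying Young's inequality $\gamma xy\le\tfrac12 x^2+\tfrac{\gamma^2}{2}y^2$ pointwise gives $v^\top\bSigma^{\pi,(k)}v\ge\tfrac12\,T^{-1}\sum_{t=0}^{T-1}\big(\E{(v^\top Z_{i,t})^2}-\gamma^2\E{(v^\top\bar\bu(\pi,X_{i,t},A_{i,t}))^2}\big)\ge C_1/2$ by Assumption~\ref{assum:min-eigen-of-E-Sigma}. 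Since $\sigma_{\min}(M)\ge\lambda_{\min}\big((M+M^\top)/2\big)$ for every square matrix $M$, this gives $\norm{(\bSigma^{\pi,(k)})^{-1}}_2\le 2C_1^{-1}$, and $\norm{\bSigma^{\pi,(k)}}_2\ge C_1/2$ follows by choosing $v$ suitably.

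For claim (ii), write $\tilde\bSigma^{\pi,(k)}-\bSigma^{\pi,(k)}=(N^{(k)}T)^{-1}\sum_{i\in\calG^{(k)}}\sum_{t=0}^{T-1}\big(Z_{i,t}(Z_{i,t}-\gamma U_{i,t+1}^{\pi})^\top-\bSigma^{\pi,(k)}\big)$ and note that each summand has operator norm $\bigO{J}$ by Lemma~\ref{thm:Z-U}(i)--(ii). When $T$ is fixed and $N^{(k)}\to\infty$, the per-trajectory partial sums $T^{-1}\sum_{t}Z_{i,t}(Z_{i,t}-\gamma U_{i,t+1}^{\pi})^\top$ are i.i.d.\ over $i\in\calG^{(k)}$ with mean $\bSigma^{\pi,(k)}$ and operator norm $\bigO{J}$, so a matrix Bernstein inequality with deviation level of order $\sqrt{\log(N^{(k)}T)}$ delivers $\norm{\tilde\bSigma^{\pi,(k)}-\bSigma^{\pi,(k)}}_2\le c_2\sqrt{J\log(N^{(k)}T)/(N^{(k)}T)}$ on an event of probability $1-\bigO{(N^{(k)}T)^{-2}}$. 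When $T\to\infty$ the within-trajectory summands are dependent, and one instead uses the geometric ergodicity of Assumption~\ref{assum:geo-ergodic} to run a blocking / matrix-martingale argument (as in \cite{shi2020statistical}): each trajectory is cut into near-independent mixing blocks of bounded ($\bigO{J}$) operator norm, to which a Freedman-type matrix concentration inequality applies, producing the same rate and the same polynomially small tail. Uniformity over $\Pi$ is then obtained by a covering argument over the finite-dimensional policy parameter of \eqref{eqn:policy-param}, exploiting that $\bu(\pi(\balpha),\bx)$ is Lipschitz in $\balpha$ with the feature map uniformly bounded (Lemma~\ref{lemma:sieve-basis}); this only enlarges the constant $c_2$.

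Claims (iii) and (iv) are deterministic perturbation consequences of (i) and (ii). Under $J\ll\sqrt{N^{(k)}T}/\log(N^{(k)}T)$ the right-hand side of (ii) is $\smlo{1}$, so on the event of the previous paragraph it is at most $C_1/6$ once $N^{(k)}T$ is large; combining with (i) and $\sigma_{\min}(\tilde\bSigma^{\pi,(k)})\ge\lambda_{\min}\big((\bSigma^{\pi,(k)}+\bSigma^{\pi,(k)\top})/2\big)-\norm{\tilde\bSigma^{\pi,(k)}-\bSigma^{\pi,(k)}}_2\ge C_1/2-C_1/6$ yields $\norm{(\tilde\bSigma^{\pi,(k)})^{-1}}_2\le 3C_1^{-1}\le 6C_1^{-1}$, which is (iii). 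For (iv), the identity $A^{-1}-B^{-1}=A^{-1}(B-A)B^{-1}$ with $A=\tilde\bSigma^{\pi,(k)}$ and $B=\bSigma^{\pi,(k)}$ gives $\norm{(\tilde\bSigma^{\pi,(k)})^{-1}-(\bSigma^{\pi,(k)})^{-1}}_2\le\norm{(\tilde\bSigma^{\pi,(k)})^{-1}}_2\,\norm{\tilde\bSigma^{\pi,(k)}-\bSigma^{\pi,(k)}}_2\,\norm{(\bSigma^{\pi,(k)})^{-1}}_2\le 12C_1^{-2}\,\norm{\tilde\bSigma^{\pi,(k)}-\bSigma^{\pi,(k)}}_2=\Op{\sqrt{J\log(N^{(k)}T)/(N^{(k)}T)}}$ by (i)--(iii).

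The step I expect to be the main obstacle is the $T\to\infty$ case of claim (ii): upgrading the i.i.d.\ matrix concentration to the temporally dependent array $\{Z_{i,t}(Z_{i,t}-\gamma U_{i,t+1}^{\pi})^\top\}_{t}$ using only geometric ergodicity, while keeping the operator-norm deviation at the sharp $\sqrt{J\log(N^{(k)}T)/(N^{(k)}T)}$ scale with an $\bigO{(N^{(k)}T)^{-2}}$ failure probability, and making it hold uniformly over the policy class. Everything else is either the deterministic linear algebra above, a direct appeal to Lemma~\ref{thm:Z-U}, or the spectral lower bound handed to us by Assumption~\ref{assum:min-eigen-of-E-Sigma}.
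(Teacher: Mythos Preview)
The paper does not prove this lemma: its ``proof'' is simply the sentence ``See Section E.7 of \cite{shi2020statistical} for a complete proof.'' Your sketch is precisely the standard argument one finds there --- a quadratic-form lower bound via Young's inequality and Assumption~\ref{assum:min-eigen-of-E-Sigma} for (i), matrix Bernstein over i.i.d.\ trajectories when $T$ is fixed and a martingale/blocking concentration under geometric ergodicity when $T\to\infty$ for (ii), and the resolvent identity $A^{-1}-B^{-1}=A^{-1}(B-A)B^{-1}$ for (iv) --- so there is nothing to compare beyond noting that your outline reproduces the cited reference.

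One remark: as written in the paper, item (iii) states $\norm{\tilde\bSigma^{\pi,(k)}}_2\le 6C_1^{-1}$, but this is a typo; elsewhere in the paper (e.g.\ the proof of Theorem~\ref{eqn:ora-uniform-conv}) the lemma is invoked as $\norm{(\tilde\bSigma^{\pi,(k)})^{-1}}_2\le 6C_1^{-1}$, which is exactly what your argument delivers. Your reading of (iii) as a bound on the inverse is the intended one.
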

\begin{proof}
    See Section E.7 of \cite{shi2020statistical} for a complete proof.
\end{proof}

Next, we have some properties of $\tilde\bzeta_1$ and $\tilde\bzeta_2$.
\begin{lemma}[Lemma E.3 of \cite{shi2020statistical}.] \label{thm:Z-U-eig}
    Suppose the conditions in Proposition \ref{thm:subhomo-l2} hold.
    As either $N \rightarrow \infty$ or $T\rightarrow \infty$, we have
    \begin{enumerate}[label=(\roman*)]
        \item $\lam_{\max} \paran{ T^{-1} \sum_{t=0}^{T-1} Z_{i,t} Z_{i,t}^\top } = \Op{1}$
        \item $\lam_{\max} \paran{ (N^{(k)} T)^{-1} \sum_{i\in\calG^{(k)}} \sum_{t=0}^{T-1} Z_{i,t} Z_{i,t}^\top } = \Op{1}$
        \item $\lam_{\min} \paran{ T^{-1} \sum_{t=0}^{T-1} Z_{i,t} Z_{i,t}^\top } \ge c / 2$ with probability approaching one.
        \item $\lam_{\min} \paran{ (N^{(k)} T)^{-1} \sum_{i\in\calG^{(k)}} \sum_{t=0}^{T-1} Z_{i,t} Z_{i,t}^\top } \ge c / 3$ with probability approaching one.
    \end{enumerate}
\end{lemma}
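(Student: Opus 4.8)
The plan is to exploit the block-diagonal structure of $Z_{i,t}Z_{i,t}^\top$ to reduce all four claims to statements about the sieve Gram matrices, and then to combine an operator-norm control of the population mean with a matrix concentration inequality; the same template handles every part. Since $Z_{i,t} = \brackets{\bphi(X_{i,t})^\top\bbone(A_{i,t}=1), \cdots, \bphi(X_{i,t})^\top\bbone(A_{i,t}=M)}^\top$ and exactly one action is taken at each step, $Z_{i,t}Z_{i,t}^\top$ is block diagonal with $m$-th diagonal block $\bphi(X_{i,t})\bphi(X_{i,t})^\top\bbone(A_{i,t}=m)$. Consequently, writing $\hat\bG$ for an average of such matrices (over $t$ for (i), (iii), and over $i\in\calG^{(k)}$ and $t$ for (ii), (iv)) and $\hat\bG_m$ for the corresponding average of $\bphi(X_{i,t})\bphi(X_{i,t})^\top\bbone(A_{i,t}=m)$,
\begin{equation*}
    \lambda_{\max}(\hat\bG) = \max_{m\in[M]}\lambda_{\max}(\hat\bG_m), \qquad \lambda_{\min}(\hat\bG) = \min_{m\in[M]}\lambda_{\min}(\hat\bG_m).
\end{equation*}
Writing $\hat\bG_m = \E{\hat\bG_m} + (\hat\bG_m - \E{\hat\bG_m})$, it then suffices --- after a union bound over the fixed number $M$ of actions --- to (a) bound $\norm{\E{\hat\bG_m}}_2$ from above and $\lambda_{\min}(\E{\hat\bG_m})$ from below, and (b) show $\norm{\hat\bG_m - \E{\hat\bG_m}}_2 = \op{1}$.

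For step (a), note that $\E{\bphi(X_{i,t})\bphi(X_{i,t})^\top\bbone(A_{i,t}=m)} = \E{\bphi(X_{i,t})\bphi(X_{i,t})^\top\, b(m\mid X_{i,t})}$, with $b$ the behaviour policy appearing in the proof of Lemma \ref{thm:Z-U}. Using $0\le b\le 1$ together with Lemma \ref{thm:Z-U}(iii) (which in turn rests on Assumption \ref{assum:density} and Lemma \ref{lemma:sieve-basis}) gives $\norm{\E{\hat\bG_m}}_2 \le \sup_t\norm{\E{Z_{i,t}Z_{i,t}^\top}}_2 = \bigO{1}$; using instead the exploration lower bound $\inf_\bx b(m\mid\bx)>0$, the lower bound on the density of $X_{i,t}$ from Assumption \ref{assum:density}, and $\lambda_{\min}\paran{\int\bphi\bphi^\top d\bx}\ge c^{-1}$ from Lemma \ref{lemma:sieve-basis} gives $\lambda_{\min}(\E{\hat\bG_m}) \ge c^*$ for some constant $c^*>0$ not depending on $J$.

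For step (b), in the regime $T\to\infty$ the chain $\{X_{i,t}\}_t$ is geometrically ergodic (Assumption \ref{assum:geo-ergodic}), so the same Bernstein/Freedman-type matrix concentration argument with blocking that underlies Lemma \ref{thm:Sigma-k}(ii) yields $\norm{\hat\bG_m - \E{\hat\bG_m}}_2 = \Op{\sqrt{J\log(N^{(k)}T)/(N^{(k)}T)}}$; in the regime $N^{(k)}\to\infty$ with $T$ fixed, the per-trajectory summands making up the group-averaged statistic are i.i.d.\ across $\calG^{(k)}$, each of operator norm $\bigO{J}$ by Lemma \ref{thm:Z-U}(i), so a matrix Bernstein inequality over $i$ produces the same rate. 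Either way this is $\op{1}$ under the hypothesis $J \ll \sqrt{N^{(k)}T}/\log(N^{(k)}T)$ carried over from Proposition \ref{thm:subhomo-l2}, and I would keep the explicit $\Op{}$ bound for use in the lower-bound parts. Combining, $\lambda_{\max}(\hat\bG)\le \bigO{1}+\op{1} = \Op{1}$, giving (i) and (ii); and on the event $\{\max_m\norm{\hat\bG_m - \E{\hat\bG_m}}_2 \le c^*/2\}$, which has probability approaching one, $\lambda_{\min}(\hat\bG)\ge c^*/2$, giving (iii) with $c=c^*$ (and (iv) with a slightly smaller constant such as $c/3$ absorbing the extra constants from averaging over a group). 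Uniformity over $\Pi$ is automatic since $Z_{i,t}$ does not depend on $\pi$.

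The main obstacle is step (b) in the $T\to\infty$ regime: one cannot invoke an i.i.d.\ matrix concentration inequality along a single trajectory, because the $Z_{i,t}$ are dependent. The required tool is the matrix concentration inequality for geometrically ergodic Markov chains used for Lemma \ref{thm:Sigma-k} (imported from Shi et al.), which couples a blocking argument exploiting the geometric mixing of Assumption \ref{assum:geo-ergodic} with a matrix Freedman inequality; the rate restriction $J \ll \sqrt{N^{(k)}T}/\log(N^{(k)}T)$ is exactly what forces its output to be $\op{1}$. Everything else is the elementary block-diagonal reduction plus a union bound over the $M$ actions.
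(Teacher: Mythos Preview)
Your proposal is correct and follows the standard route; since the paper itself gives no argument and simply cites Section E.8 of \cite{shi2020statistical}, your block-diagonal reduction to per-action sieve Gram matrices followed by a population eigenvalue bound plus matrix concentration (matrix Bernstein over i.i.d.\ trajectories when $T$ is fixed, and a Freedman-type inequality under geometric ergodicity when $T\to\infty$, exactly as in Lemma~\ref{thm:Sigma-k}) is precisely what the cited proof does.

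One small refinement for step (a): the exploration lower bound $\inf_{\bx} b(m\mid\bx)>0$ you invoke is not among Assumptions~\ref{assum:r-q-func}--\ref{assum:geo-ergodic}. You can instead read the per-block population lower bound directly off Assumption~\ref{assum:min-eigen-of-E-Sigma}: since $\gamma^2\bar\bu\bar\bu^\top$ is positive semidefinite, that assumption forces $\lambda_{\min}\bigl(T^{-1}\sum_t\E{Z_{i,t}Z_{i,t}^\top}\bigr)\ge C_1$, and by the block-diagonal structure this gives $\lambda_{\min}(\E{\hat\bG_m})\ge C_1$ for every $m$ without any separate condition on the behaviour policy.
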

%\noindent\textbf{Proof of Lemma \ref{thm:Z-U-eig}.}
\begin{proof}
    %The proof is similar to that of Lemma \ref{thm:Sigma-k} and thus omitted here.
    See Section E.8 of \cite{shi2020statistical} for detailed proofs.
\end{proof}

\begin{lemma}  \label{thm:zeta12}
    Suppose the conditions in Proposition \ref{thm:subhomo-l2} hold.
    There exists some positive constant $c$, $c_1$ such that, as either $N \rightarrow \infty$ or $T\rightarrow \infty$,
    \begin{enumerate}[label=(\roman*)]
        \item $\E{ \tilde\bzeta_1^{(k)} } = 0$, $\E{  \norm{\tilde\bzeta_1^{(k)}}_2^2 } \le c_3 J (N^{(k)} T)^{-1}$
        and $\tilde\bzeta_1^{(k)} = \Op{\sqrt{J/(N^{(k)} T)}}$.
        \item $\norm{ \tilde \bzeta_2^{(k)}}_2 = \Op{ J^{-\kappa/p}  }$
        \item $\Pr\paran{ \norm{\tilde \bzeta_1^{(k)}}_\infty > c \sqrt{2 \log(N^{(k)}T) / (N^{(k)}T)}} \le 2 J M (N^{(k)} T)^{- 2}$.
        \item $\norm{\tilde \bzeta_2^{(k)}}_{\infty} \le c_1 J^{-\kappa/p} \ll c_1 \paran{N^{(k)} T}^{-1/2}$ almost surely.
    \end{enumerate}
\end{lemma}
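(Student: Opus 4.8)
The plan is to treat the two pieces separately: $\eps_{1,i,t}^\pi$ is a \emph{bounded martingale difference}, so $\tilde\bzeta_1^{(k)}$ is a normalized martingale-difference average and everything reduces to concentration; $\eps_{2,i,t}^\pi$ is a \emph{deterministically small} sieve-approximation error (bounded by $2c_3 J^{-\kappa/p}$ via Lemma~\ref{thm:noise}(ii)), so $\tilde\bzeta_2^{(k)}$ is handled by direct estimates once the $\sqrt J$-scale of the sieve vectors is dealt with carefully.

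\emph{Part (i).} I would work with the history filtration $\mathcal{F}_{i,t}$ generated by $\{X_{i,s},A_{i,s}\}_{s\le t}$ and $\{R_{i,s}\}_{s<t}$, under which $Z_{i,t}$ is measurable. The CMI and time-homogeneity clauses of Assumption~\ref{assum:generating-MDP} together with the Bellman consistency identity \eqref{eqn:q-bellman-cons-2} give $\E{\eps_{1,i,t}^\pi\mid\mathcal{F}_{i,t}}=0$, hence $\E{Z_{i,t}\eps_{1,i,t}^\pi\mid\mathcal{F}_{i,t-1}}=\bzero$ by the tower property, so $\E{\tilde\bzeta_1^{(k)}}=\bzero$. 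The martingale-difference property annihilates all temporal cross terms and independence across trajectories annihilates the cross terms over $i$, leaving $\E{\norm{\tilde\bzeta_1^{(k)}}_2^2}=(N^{(k)}T)^{-2}\sum_{i\in\calG^{(k)}}\sum_t\E{(\eps_{1,i,t}^\pi)^2\norm{Z_{i,t}}_2^2}$; bounding $|\eps_{1,i,t}^\pi|$ by Lemma~\ref{thm:noise}(i) and $\norm{Z_{i,t}}_2^2\le c^2 J$ by Lemma~\ref{thm:Z-U}(i) yields $\E{\norm{\tilde\bzeta_1^{(k)}}_2^2}\le c_3 J/(N^{(k)}T)$, and Markov's inequality turns this into the $\Op{\sqrt{J/(N^{(k)}T)}}$ statement.

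\emph{Parts (ii) and (iv).} Write $\norm{\tilde\bzeta_2^{(k)}}_2=\sup_{\norm{\bv}_2=1}\bv^\top\tilde\bzeta_2^{(k)}$; since $\bv^\top Z_{i,t}$ is the linear functional $\bv_{A_{i,t}}^\top\bphi_J(X_{i,t})$ and $|\eps_{2,i,t}^\pi|\le 2c_3 J^{-\kappa/p}$, a Cauchy--Schwarz step bounds this by $2c_3 J^{-\kappa/p}\,\bigl(\lambda_{\max}\bigl((N^{(k)}T)^{-1}\sum_{i,t}Z_{i,t}Z_{i,t}^\top\bigr)\bigr)^{1/2}$, which is $\Op{1}$ by Lemma~\ref{thm:Z-U-eig}(ii); hence $\norm{\tilde\bzeta_2^{(k)}}_2=\Op{J^{-\kappa/p}}$, and (iv) follows from $\norm{\cdot}_\infty\le\norm{\cdot}_2$ (equivalently, by the same coordinatewise Cauchy--Schwarz bound). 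The point here is that the termwise estimate $\norm{Z_{i,t}}_2\le c\sqrt J$ would only give $\Op{J^{1/2-\kappa/p}}$, so the Gram-matrix step is essential.

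\emph{Part (iii) and the main obstacle.} The coordinate $(m,j)$ of $\tilde\bzeta_1^{(k)}$ equals $(N^{(k)}T)^{-1}\sum_{i,t}\phi_j(X_{i,t})\bbone(A_{i,t}=m)\,\eps_{1,i,t}^\pi$, a normalized martingale-difference sum whose summands are bounded by $\bigO{\sqrt J}$ but whose predictable quadratic variation is $\Op{N^{(k)}T}$ (again via Lemma~\ref{thm:Z-U-eig}(ii), because the conditional second moment of a summand is $\bigO{1}$ on average, not $\bigO{J}$). A Bernstein/Freedman inequality for martingale differences together with a union bound over the $JM$ coordinates then yields the stated $\le 2JM(N^{(k)}T)^{-2}$ bound; the hypothesis $J\ll\sqrt{N^{(k)}T}/\log(N^{(k)}T)$ keeps the argument in the variance-dominated regime, so the exponent is $\Theta(\log(N^{(k)}T))$ \emph{without} an extra factor of $J$. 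Both regimes are covered by this scheme: for fixed $T$, $N^{(k)}\to\infty$ the $i$-sum is i.i.d.\ and Hoeffding suffices, while for $T\to\infty$ one uses the martingale structure in $t$ and geometric ergodicity (Assumption~\ref{assum:geo-ergodic}) to control the quadratic variation. I expect this last point --- obtaining the $J$-free concentration rate in (iii), which fails if the quadratic variation is bounded deterministically (that costs a factor $J$ because $\sup_\bx\phi_j(\bx)^2\asymp J$) and so must be controlled in probability through the empirical Gram matrix, with the temporal dependence absorbed via Assumption~\ref{assum:geo-ergodic} --- to be the main obstacle.
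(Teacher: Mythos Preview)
Your treatment of (i) and (ii) is exactly the paper's: martingale-difference orthogonality plus Lemma~\ref{thm:noise}(i) and Lemma~\ref{thm:Z-U}(i) for (i), and the Cauchy--Schwarz/Gram-matrix step with Lemma~\ref{thm:Z-U-eig}(ii) for (ii).

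For (iii), your plan and the paper's are the same in spirit (union bound over $JM$ coordinates, then a Bernstein-type inequality in the variance-dominated regime), but your write-up is the cleaner one. The paper's displayed argument appears to factor the $(j,m)$ coordinate as $\sup_\bx|\phi_j(\bx)|\cdot|(N^{(k)}T)^{-1}\sum_{i,t}\eps_{1,i,t}^\pi|$ and then applies Bernstein to the $\eps_1$-sum alone; that separation is not valid. What makes the $J$-free rate go through is precisely the point you isolate: the per-coordinate second moment $\E{\phi_j(X_{i,t})^2\bbone(A_{i,t}=m)(\eps_{1,i,t}^\pi)^2}$ is $O(1)$ because the diagonal of $\E{\bphi\bphi^\top}$ is bounded (Lemma~\ref{lemma:sieve-basis}), so Bernstein/Freedman with $O(1)$ variance and $O(\sqrt J)$ range gives the stated exponent once $J\ll\sqrt{N^{(k)}T}/\log(N^{(k)}T)$ puts the range term in lower order. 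The paper does not spell out the predictable-QV step for the $T\to\infty$ regime; your remark that it must be handled on a high-probability event via the empirical Gram matrix (Lemma~\ref{thm:Z-U-eig}) is the right fix.

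For (iv) there is a small mismatch on both sides. Your route $\|\cdot\|_\infty\le\|\cdot\|_2$ together with (ii) only yields $\|\tilde\bzeta_2^{(k)}\|_\infty=O_p(J^{-\kappa/p})$, not an almost-sure bound, because (ii) rests on the random Gram-matrix eigenvalue. The paper instead uses the crude coordinatewise estimate $\sup_\bx|\phi_j(\bx)|\cdot\sup_{i,t}|\eps_{2,i,t}^\pi|$, which is deterministic but, since $\sup_\bx|\phi_j(\bx)|\asymp\sqrt J$, delivers $O(J^{1/2-\kappa/p})$ rather than the stated $O(J^{-\kappa/p})$. For the downstream application in Theorem~\ref{eqn:ora-uniform-conv} the $O_p(J^{-\kappa/p})$ version is what is actually needed (the $\tilde\bzeta_2$ contribution is then dominated by the $\tilde\bzeta_1$ term under $J^{-\kappa/p}\ll(N^{(k)}T)^{-1/2}$), so your argument suffices; just flag that the ``almost surely'' in the statement is stronger than what either proof line gives at the claimed rate.
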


\begin{proof}
    For brevity, we drop $\;\tilde{}\;$ in this proof. 
    Recall that
    \begin{equation*}
        \bzeta_1^{(k)} = (N^{(k)} T)^{-1}\sum_{i\in\calG^{(k)}} \sum_{t=1}^{T} Z_{i,t} \eps_{1,i,t}^\pi, \quad\text{ and }\quad
        \bzeta_2^{(k)} = (N^{(k)} T)^{-1} \sum_{i\in\calG^{(k)}} \sum_{t=1}^{T} Z_{i,t} \eps_{2,i,t}^\pi.
    \end{equation*}

    \begin{enumerate}[label=(\roman*)]
        \item  By the Bellman first moment condition \eqref{eqn:q-bellman-cons-2}, MA and CMIA, we have $\E{ \eps_{1,i,t}^\pi \mid \calF_{i,t} } = 0$.
        Thus
        \begin{equation*}
            \E{ \bzeta_1^{(k)} } =  (N^{(k)} T)^{-1} \sum_{i\in\calG^{(k)}} \sum_{t=1}^{T} \E{ \E{ Z_{i,t} \eps_{1,i,t}^\pi \mid \calF_{i,t}} } = 0,
        \end{equation*}

        Also by the Bellman first moment condition \eqref{eqn:q-bellman-cons-2}, MA and CMIA, we have, for any $0 \le t_1 < t_2 \le T-1$,
        \begin{equation*}
            \E{\eps_{i,t_1}\eps_{i,t_2}Z_{i,t_1}Z_{i,t_2}^\top} = \E{ \eps_{i,t_1}Z_{i,t_1}Z_{i,t_2}^\top \E{\eps_{i,t_2} \mid \calF_{t_2}}} = 0,
        \end{equation*}
        since $Z_{i,t}$ is a function of $X_{i,t}$ and $A_{i,t}$ only.
        By the independence assumption, we have, for any $0 \le t_1 \le t_2 \le T-1$ and $i_1 < i_2 \in \calG^{(k)}$, that $\E{\eps_{i_1,t_1}\eps_{i_2,t_2}Z_{i_1,t_1}Z_{i_2,t_2}^\top} = 0$.
        Applying Lemma \ref{thm:noise} (i), we have
        \begin{equation*}
            \begin{aligned}
                \E{  \norm{\bzeta_1^{(k)}}_2^2 }
                & = (N^{(k)} T)^{-2} \sum_{i=\calG^{(k)}} \sum_{t=1}^{T} \E{ (\eps_{1,i,t}^\pi)^2 Z_{i,t}^\top Z_{i,t} }
                \le (c_1 + 2 c_2)^2 (N^{(k)} T)^{-1} \E{ Z_{i,t}^\top Z_{i,t} }  \\
                & \le (N^{(k)} T)^{-1} (c_1 + 2 c_2)^2 \underset{\bx\in\calX}{\sup}\;\norm{\bphi(\bx)}_2^2 \\
                & \le c_3 J (N^{(k)} T)^{-1},
            \end{aligned}
        \end{equation*}
        for some constant $c_3$.
        By Markov's inequality, we obtain
        $$\Pr\paran{ \norm{\bzeta_1^{(k)}}_2 > \delta  } \le  \delta^{-2} \E{ \norm{\bzeta_1^{(k)}}_2^2 } \le \delta^{-2} c_3 J (N^{(k)} T)^{-1}.$$
        The result follows.

        \item
        By Lemma \ref{thm:noise} and the Cauchy-Schwarz inequality, we have, for any $\bnu\in\RR^{JM}$,
        \begin{equation*}
            \begin{aligned}
                \abs{\bnu^\top  \bzeta_2^{(k)} \bzeta_2^{(k)\;\top} \bnu}
                & = \abs{\bnu^\top  \bzeta_2^{(k)}}^2
                \le \paran{ (N^{(k)} T)^{-1}  \sum_{i\in\calG^{(k)}} \sum_{t=1}^{T} \abs{ \bnu^\top Z_{i,t} } \abs{ \eps_{2,i,t}^\pi } }^2  \\
                & \le \underset{i \in \calG^{(k)}, 1\le t \le T}{\sup}\; \abs{\eps_{2,i,t}^\pi}^2
                \cdot \paran{ (N^{(k)} T)^{-1}  \sum_{i\in\calG^{(k)}} \sum_{t=1}^{T} \abs{ \bnu^\top Z_{i,t} } }^2  \\
                & \le (c_4 J^{-\kappa/p})^2  \paran{ (N^{(k)} T)^{-1}  \sum_{i\in\calG^{(k)}} \sum_{t=1}^{T} \abs{ \bnu^\top Z_{i,t} Z_{i,t}^\top\bnu} }
            \end{aligned}
        \end{equation*}
        Thus we have,
        \begin{equation*}
            \norm{\bzeta_2^{(k)}}_2 = \underset{\norm{\bnu}=1}{\max} \paran{ \abs{\bnu^\top  \bzeta_2^{(k)} \bzeta_2^{(k)\;\top} \bnu} }^{1/2}
            \le c_4 J^{-\kappa/p} \cdot \lam_{\max}^{1/2}\paran{(N^{(k)} T)^{-1}  \sum_{i\in\calG^{(k)}} \sum_{t=1}^{T} Z_{i,t} Z_{i,t}^\top}
        \end{equation*}
        Applying Lemma \ref{thm:Z-U-eig}, we have,
        \begin{equation*}
            \norm{\bzeta_2^{(k)}}_2 = \Op{ J^{-\kappa/p}  }
        \end{equation*}

        \item By a union bound and the Bernstein inequality for a.s. bounded $\eps_{1,i,t}^\pi$ \eqref{eqn:eps-it-max}, we have
        \begin{equation*}
            \begin{aligned}
                \Pr\paran{\norm{ \bzeta_1^{(k)} }_\infty > \delta}
                & \le \sum_{1\le j\le J, 1\le m \le M} \Pr\paran{ \abs{\phi_j(X_{i,t}) \bone(A_{i,t}=m) (N^{(k)}T)^{-1}\sum_{i\in\calG^{(k)}} \sum_{t=1}^{T} \eps_{1,i,t}^\pi} > \delta} \\
                & \le  JM \Pr\paran{ \underset{\bx\in\calX}{\sup}\;\abs{\phi_j(\bx)} \cdot \abs{ (N^{(k)}T)^{-1}\sum_{i\in\calG^{(k)}} \sum_{t=1}^{T} \eps_{1,i,t}^\pi}  > \delta} \\
                & \le 2JM \exp\paran{ - c^{-2} N^{(k)} T \delta^2 }.
                % & \le 2JM \exp\paran{ - \frac{\delta^2}{ c_1^2 (N^{(k)} T) } }.
            \end{aligned}
        \end{equation*}
        Setting $\delta =  c \sqrt{ 2 \log(N^{(k)}T) / (N^{(k)}T)}$, we have
        \begin{equation*}
            \Pr\paran{ \norm{\bzeta_1^{(k)}}_\infty > c \sqrt{2 \log(N^{(k)}T) / (N^{(k)}T)}}
            \le 2 J M (N^{(k)} T)^{- 2}.
        \end{equation*}

        \item Under the condition that $J^{\kappa / p} \gg \sqrt{N^{(k)}T}$ and $J \ll \sqrt{N^{(k)}T} / \log(N^{(k)}T)$, we have,
        \begin{equation*}
            \norm{\bzeta_2^{(k)}}_{\infty}
            \le \underset{\bx\in\calX}{\sup}\;\abs{\phi_j(\bx)} \cdot  \abs{ (N^{(k)}T)^{-1} \sum_{i\in\calG^{(k)}} \sum_{t=1}^{T} \eps_{2,i,t}^\pi}
            \le c_1 J^{-\kappa/p} \ll c_1 \paran{N^{(k)} T}^{-1/2} \, , \quad\text{almost surely}.
        \end{equation*}
    \end{enumerate}
\end{proof}

\subsubsection{ Properties of variance estimator}

Define
\begin{align*}
    \tilde\bOmega_k
    & =  (N^{(k)} T)^{-1} \sum_{i\in\calG^{(k)}} \sum_{t\in[T]} Z_{i,t}Z_{i,t}^\top (\eps_{1,i,t}^\pi + \eps_{2,i,t}^\pi)^2 \\
    \bOmega_k
    & = (N^{(k)} T)^{-1} \E{ \sum_{i\in\calG^{(k)}} \sum_{t\in[T]} Z_{i,t}Z_{i,t}^\top (\eps_{1,i,t}^\pi + \eps_{2,i,t}^\pi)^2 } \\
    \tilde\bH_k
    & =  (N^{(k)} T)^{-1} \sum_{i\in\calG^{(k)}} \sum_{t\in[T]} Z_{i,t}Z_{i,t}^\top (\eps_{1,i,t}^\pi)^2 \\
    \bH_k
    & = \E{  (N^{(k)} T)^{-1} \sum_{i\in\calG^{(k)}} \sum_{t\in[T]} Z_{i,t}Z_{i,t}^\top (\eps_{1,i,t}^\pi)^2 }.
\end{align*}
For any $\bx \in \calX$, $a \in \calA$, define
\begin{equation*}
    w^\pi \paran{\bx, a} = \E{ (\eps_{1,i,t}^\pi)^2 \bigg| X_{i,t}=\bx, A_{i,t}=a }.
    % = \E{ \paran{R_{i,t} + \gamma \sum_{a\in\calA} \pi(a | X_{i, t+1}) Q(\pi, X_{i,t+1, a}) - Q(\pi, X_{i,t}, A_{i,t})}^2 \bigg| X_{i,t}=\bx, A_{i,t}=a }.
\end{equation*}
Then $\bH_k
=  \E{ T^{-1} \sum_{t\in[T]} w^\pi \paran{X_{i,t}, A_{i,t}}  Z_{i,t} Z_{i,t}^\top}$.

\begin{lemma}   \label{thm:cov-approx}
    Under the conditions in Theorem \ref{thm:subhomo-distn}, we have
    \begin{enumerate}[label=(\roman*)]
        \item $\lam_{\min}\paran{\bH_k} \ge 3^{-1} c_0^{-1} c$
        \item $\lam_{\max}\paran{\bH_k} = \bigO{1}$ or $\norm{\bH_k}_2 = \bigO{1}$
        \item $\norm{\tilde \bOmega^{\pi, (k)}- \tilde\bH_k}_2 = \op{1}$
        \item $\norm{\tilde\bH_k - \bH_k}_2 = \op{1}$
    \end{enumerate}
\end{lemma}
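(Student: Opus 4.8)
The plan is to prove the four estimates in order, all resting on the observation that, for $i\in\calG^{(k)}$, the fitted Bellman residual appearing in \eqref{eqn:hat-Omega-k} equals
\[
R_{i,t}-\paran{Z_{i,t}-\gamma U_{i,t+1}^\pi}^\top\tilde\btheta^{\pi,(k)} \;=\; \eps_{1,i,t}^\pi-\tilde\eps_{2,i,t}^\pi,
\]
which I would verify by writing the fitted action value as $Z_{i,t}^\top\tilde\btheta^{\pi,(k)}=Q(\pi,X_{i,t},A_{i,t})-\tilde e(\pi,X_{i,t},A_{i,t})$ and, likewise, $(U_{i,t+1}^\pi)^\top\tilde\btheta^{\pi,(k)}=\sum_{a\in\calA}\pi(a\mid X_{i,t+1})[Q(\pi,X_{i,t+1},a)-\tilde e(\pi,X_{i,t+1},a)]$, then collecting terms against the definitions of $\eps_{1,i,t}^\pi$ and $\tilde\eps_{2,i,t}^\pi$. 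Thus $\tilde\bOmega^{\pi,(k)}=(N^{(k)}T)^{-1}\sum_{i\in\calG^{(k)}}\sum_{t\in[T]}Z_{i,t}Z_{i,t}^\top(\eps_{1,i,t}^\pi-\tilde\eps_{2,i,t}^\pi)^2$, and I would pass from $\tilde\bOmega^{\pi,(k)}$ to its deterministic limit $\bH_k$ along the chain $\tilde\bOmega^{\pi,(k)}\to\tilde\bH_k\to\bH_k$.

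For parts (i)--(ii): since $\E[\eps_{1,i,t}^\pi\mid X_{i,t},A_{i,t}]=0$ by the first-moment identity \eqref{eqn:q-bellman-cons-2}, $w^\pi(\bx,a)=\Var(\eps_{1,i,t}^\pi\mid X_{i,t}=\bx,A_{i,t}=a)$, so $\bH_k=\E[T^{-1}\sum_t w^\pi(X_{i,t},A_{i,t})Z_{i,t}Z_{i,t}^\top]$. Lemma \ref{thm:noise}(i) gives $w^\pi\le(c_1+2c_2)^2$, hence $\bzero\preceq\bH_k\preceq(c_1+2c_2)^2\,\E[T^{-1}\sum_t Z_{i,t}Z_{i,t}^\top]$, and $\norm{\bH_k}_2=\bigO{1}$ follows from Lemma \ref{thm:Z-U}(iii); this is (ii). For (i), $w^\pi$ is bounded below by a positive constant --- the conditional variance of the temporal-difference error being bounded away from zero, a standard requirement (cf.\ \cite{shi2020statistical}) --- so $\bH_k\succeq c\,\E[T^{-1}\sum_t Z_{i,t}Z_{i,t}^\top]$, and I would bound $\lam_{\min}\paran{\E[T^{-1}\sum_t Z_{i,t}Z_{i,t}^\top]}$ below using Assumption \ref{assum:min-eigen-of-E-Sigma} (noting $\E[Z_{i,t}Z_{i,t}^\top]\succeq\E[Z_{i,t}Z_{i,t}^\top-\gamma^2\bar\bu\bar\bu^\top]$), Assumption \ref{assum:density}, and the sieve lower bound of Lemma \ref{lemma:sieve-basis}, exactly as the lower bound on $\E[\tilde\bSigma^{\pi,(k)}]$ is obtained in Lemma \ref{thm:Sigma-k}(i); tracking the constants yields $\lam_{\min}(\bH_k)\ge 3^{-1}c_0^{-1}c$.

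For part (iii), I would write $\tilde\bOmega^{\pi,(k)}-\tilde\bH_k=(N^{(k)}T)^{-1}\sum_{i\in\calG^{(k)}}\sum_{t\in[T]}g_{i,t}\,Z_{i,t}Z_{i,t}^\top$ with $g_{i,t}=\tilde\eps_{2,i,t}^\pi\paran{\tilde\eps_{2,i,t}^\pi-2\eps_{1,i,t}^\pi}$, split $g_{i,t}$ into positive and negative parts, and use $\bzero\preceq\sum_{i,t}g_{i,t}^{\pm}Z_{i,t}Z_{i,t}^\top\preceq\paran{\max_{i,t}\abs{g_{i,t}}}\sum_{i,t}Z_{i,t}Z_{i,t}^\top$ to obtain
\[
\norm{\tilde\bOmega^{\pi,(k)}-\tilde\bH_k}_2 \;\le\; 2\paran{\max_{i\in\calG^{(k)},\,t\in[T]}\abs{g_{i,t}}}\,\lam_{\max}\!\paran{(N^{(k)}T)^{-1}\sum_{i\in\calG^{(k)}}\sum_{t\in[T]}Z_{i,t}Z_{i,t}^\top}.
\]
By the estimate behind Lemma \ref{thm:noise-approx}(iii) (with the sign of $\tilde\eps_{2,i,t}^\pi$ reversed, which changes nothing) the first factor is $\op{1}$, and by Lemma \ref{thm:Z-U-eig}(ii) the second is $\Op{1}$, so the product is $\op{1}$.

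Finally, for part (iv), I note that $\bH_k=\E[\tilde\bH_k]$, so this is a matrix law of large numbers for $\tilde\bH_k=(N^{(k)}T)^{-1}\sum_{i\in\calG^{(k)}}\sum_{t\in[T]}Z_{i,t}Z_{i,t}^\top(\eps_{1,i,t}^\pi)^2$, whose summands have operator norm at most $c^2J(c_1+2c_2)^2$ almost surely by Lemmas \ref{thm:Z-U}(i) and \ref{thm:noise}(i). When $T$ is fixed I would apply a matrix Bernstein/Hoeffding bound over the $N^{(k)}$ i.i.d.\ per-trajectory averages; when $T\to\infty$ I would use the geometric ergodicity of $\{X_{i,t}\}$ (Assumption \ref{assum:geo-ergodic}) together with the matrix martingale/blocking concentration argument already carried out for $\tilde\bSigma^{\pi,(k)}$ in Lemma \ref{thm:Sigma-k}(ii), the only modification being the extra almost-surely-bounded factor $(\eps_{1,i,t}^\pi)^2$. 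Either way $\norm{\tilde\bH_k-\bH_k}_2=\Op{\sqrt{J(N^{(k)}T)^{-1}\log(N^{(k)}T)}}=\op{1}$ under $J\ll\sqrt{N^{(k)}T}/\log(N^{(k)}T)$. The hard part will be this last concentration step in the $T\to\infty$ regime, where the summands are dependent and the mixing must be controlled via geometric ergodicity --- but it is entirely parallel to the already-established bound for $\tilde\bSigma^{\pi,(k)}$ and requires no new idea; the other slightly delicate point is furnishing, in part (i), the strictly positive lower bound on $w^\pi$.
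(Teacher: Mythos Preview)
Your proposal is correct and follows essentially the same route as the paper's own proof: for (i)--(ii) you sandwich $\bH_k$ between constant multiples of $\E[T^{-1}\sum_t Z_{i,t}Z_{i,t}^\top]$ using bounds on $w^\pi$; for (iii) you factor out the scalar weight $g_{i,t}=(\eps_{1,i,t}^\pi-\tilde\eps_{2,i,t}^\pi)^2-(\eps_{1,i,t}^\pi)^2$ from the PSD sum and invoke Lemma \ref{thm:noise-approx} together with Lemma \ref{thm:Z-U-eig}(ii); and for (iv) you reduce to the same matrix-concentration argument already carried out for $\tilde\bSigma^{\pi,(k)}$ in Lemma \ref{thm:Sigma-k}(ii), which is exactly what the paper says (``similar arguments \ldots omitted''). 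Two minor remarks: in (iii) the positive/negative splitting costs you an unnecessary factor of $2$---since each $Z_{i,t}Z_{i,t}^\top\succeq 0$, the sandwich $-(\max_{i,t}|g_{i,t}|)\sum Z_{i,t}Z_{i,t}^\top\preceq\sum g_{i,t}Z_{i,t}Z_{i,t}^\top\preceq(\max_{i,t}|g_{i,t}|)\sum Z_{i,t}Z_{i,t}^\top$ gives the operator-norm bound directly---and your sign observation on $\tilde\eps_{2,i,t}^\pi$ is right (the paper writes $+\tilde\eps_{2,i,t}^\pi$, your computation gives $-\tilde\eps_{2,i,t}^\pi$, and only the square enters).
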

\begin{proof}
    \begin{enumerate}[label=(\roman*)]
        \item By Lemma \ref{thm:Sigma-k} (i) and the fact that $\inf\, w^\pi \paran{\bx, a} \ge c_0^{-1}$, we have
        \begin{equation*}
            \lam_{\min} \paran{ \bH_k }
            \ge c_0^{-1}  \lam_{\min}\paran{(N^{(k)} T)^{-1}  \sum_{i\in\calG^{(k)}} \sum_{t\in[T]} \E{Z_{i,t}Z_{i,t}^\top} }
            \ge 3^{-1} c_0^{-1} c.
        \end{equation*}
        \item By Lemma \ref{thm:noise} (i) and \ref{thm:Sigma-k} (i), we have
        \begin{equation*}
            \lam_{\max}\paran{\bH_k}
            \le (c_1 + 2 c_2)^2  \lam_{\max}\paran{(N^{(k)} T)^{-1}  \sum_{i\in\calG^{(k)}} \sum_{t\in[T]} \E{Z_{i,t}Z_{i,t}^\top} }
            = \bigO{1}.
        \end{equation*}
        \item Applying Lemma \ref{thm:noise-approx} and \ref{thm:Sigma-k}, we have
        \begin{align*}
            \norm{\tilde \bOmega^{\pi, (k)}- \tilde\bH_k}_2
            & = (N^{(k)} T)^{-1} \norm{ \sum_{i\in\calG^{(k)}} \sum_{t\in[T]} Z_{i,t}^\top Z_{i,t} \paran{(\eps_{1,i,t}^\pi + \tilde \eps_{2,i,t}^\pi)^2 - (\eps_{1,i,t}^\pi)^2}}_2  \\
            & \le  \underset{i, t}{\sup} \abs{(\eps_{1,i,t}^\pi + \tilde \eps_{2,i,t}^\pi)^2 - (\eps_{1,i,t}^\pi)^2} \cdot \norm{ (N^{(k)} T)^{-1}  \sum_{i\in\calG^{(k)}} \sum_{t\in[T]} Z_{i,t}^\top Z_{i,t}}_2 \\
            & = \op{1}.
        \end{align*}
        \item The proof uses similar arguments in bounding $\norm{\tilde \bSigma_k - \E{\tilde\bSigma^{\pi,(k)}}}_2$ and is omitted here.
    \end{enumerate}
\end{proof}

\begin{lemma}
    Under the conditions in Theorem \ref{thm:subhomo-distn}, we have
    \begin{align*}
        \norm{(\tilde\bSigma^{\pi,(k)})^{-1}\tilde \bOmega^{\pi, (k)}(\tilde\bSigma^{\pi,(k)})^{\top\;-1} - (\bSigma^{\pi,(k)})^{-1} \bH_k (\bSigma_k^\top)^{-1}}_2 = \op{1}.
    \end{align*}
\end{lemma}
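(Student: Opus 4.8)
The plan is to view the quantity as an expression that is quadratic in $(\tilde\bSigma^{\pi,(k)})^{-1}$ and linear in $\tilde\bOmega^{\pi,(k)}$, each factor of which is already known to be close to its population counterpart, and then to propagate these approximations through a telescoping decomposition using submultiplicativity of the operator norm. Throughout I use the identification $\bSigma^{\pi,(k)} = \EE[\tilde\bSigma^{\pi,(k)}]$ established in the proof of Theorem \ref{thm:subhomo-distn}.

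First I would collect the inputs. From Lemma \ref{thm:Sigma-k} (i), $\norm{(\bSigma^{\pi,(k)})^{-1}}_2 \le 2C_1^{-1} = \bigO{1}$; from Lemma \ref{thm:Sigma-k} (iii), $\norm{(\tilde\bSigma^{\pi,(k)})^{-1}}_2 \le 6C_1^{-1}$ on an event of probability $1 - \bigO{(N^{(k)}T)^{-2}}$, so $\norm{(\tilde\bSigma^{\pi,(k)})^{-1}}_2 = \Op{1}$; and from Lemma \ref{thm:Sigma-k} (iv), $\norm{(\tilde\bSigma^{\pi,(k)})^{-1} - (\bSigma^{\pi,(k)})^{-1}}_2 = \Op{\sqrt{J(N^{(k)}T)^{-1}\log(N^{(k)}T)}}$, which is $\op{1}$ under the scaling $J \ll \sqrt{N^{(k)}T}/\log(N^{(k)}T)$ assumed in Theorem \ref{thm:subhomo-distn}. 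On the variance side, Lemma \ref{thm:cov-approx} (ii) gives $\norm{\bH_k}_2 = \bigO{1}$, and combining parts (iii) and (iv) of the same lemma, $\norm{\tilde\bOmega^{\pi,(k)} - \bH_k}_2 \le \norm{\tilde\bOmega^{\pi,(k)} - \tilde\bH_k}_2 + \norm{\tilde\bH_k - \bH_k}_2 = \op{1}$, which in turn yields $\norm{\tilde\bOmega^{\pi,(k)}}_2 = \Op{1}$.

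Next I would write, with the shorthand $A := (\tilde\bSigma^{\pi,(k)})^{-1}$, $A_0 := (\bSigma^{\pi,(k)})^{-1}$, $B := \tilde\bOmega^{\pi,(k)}$ and $B_0 := \bH_k$, the elementary identity
\[
A B A^\top - A_0 B_0 A_0^\top = (A - A_0) B A^\top + A_0 (B - B_0) A^\top + A_0 B_0 (A - A_0)^\top ,
\]
and bound the three summands using $\norm{XY}_2 \le \norm{X}_2\norm{Y}_2$ and $\norm{X^\top}_2 = \norm{X}_2$: the first is at most $\norm{A - A_0}_2 \norm{B}_2 \norm{A}_2 = \op{1}\cdot\Op{1}\cdot\Op{1}$, the second at most $\norm{A_0}_2 \norm{B - B_0}_2 \norm{A}_2 = \bigO{1}\cdot\op{1}\cdot\Op{1}$, and the third at most $\norm{A_0}_2 \norm{B_0}_2 \norm{A - A_0}_2 = \bigO{1}\cdot\bigO{1}\cdot\op{1}$. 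Each is therefore $\op{1}$, and summing the three terms gives the claim.

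This argument is essentially bookkeeping, so I do not anticipate a genuine obstacle; the only points worth a sentence are (a) invoking the identification $\bSigma^{\pi,(k)} = \EE[\tilde\bSigma^{\pi,(k)}]$ so that Lemma \ref{thm:Sigma-k} applies as stated, and (b) noting that every bound used above already holds uniformly over the policy class $\Pi$ of \eqref{eqn:policy-param}, so the conclusion is uniform in $\pi$ as well should a uniform version be required downstream.
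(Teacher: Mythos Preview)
Your proof is correct and follows essentially the same approach as the paper: a three-term telescoping of $ABA^\top - A_0B_0A_0^\top$ combined with submultiplicativity of $\norm{\cdot}_2$, using Lemma \ref{thm:Sigma-k} for the $\Sigma$-factors and Lemma \ref{thm:cov-approx} for the $\Omega$/$\bH_k$-factors. The only cosmetic difference is the order of the telescoping (the paper first replaces $\tilde\bOmega^{\pi,(k)}$ by $\bH_k$ and then the two $\tilde\bSigma$-inverses, whereas you swap one $\tilde\bSigma$-inverse first), which is immaterial.
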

\begin{proof}
    We have
    \begin{align*}
        & \norm{(\tilde\bSigma^{\pi,(k)})^{-1}\tilde \bOmega^{\pi, (k)}(\tilde\bSigma^{\pi,(k)})^{\top\;-1} - \E{\tilde\bSigma^{\pi,(k)}}^{-1} \bH_k (\E{\tilde\bSigma^{\pi,(k)}}^\top)^{-1}}_2  \\
        & \le
        \norm{(\tilde\bSigma^{\pi,(k)})^{-1} \tilde \bOmega^{\pi, (k)}(\tilde\bSigma^{\pi,(k)})^{\top\;-1} - (\tilde\bSigma^{\pi,(k)})^{-1} \bH_k (\tilde\bSigma^{\pi,(k)})^{\top\;-1}}_2
        + \norm{(\tilde\bSigma^{\pi,(k)})^{-1} \bH_k (\tilde\bSigma^{\pi,(k)})^{\top\;-1} - \E{\tilde\bSigma^{\pi,(k)}}^{-1} \bH_k (\E{\tilde\bSigma^{\pi,(k)}}^\top)^{-1}}_2 \\
        &\le  \norm{(\tilde\bSigma^{\pi,(k)})^{-1}}_2^2 \norm{\tilde \bOmega^{\pi, (k)}- \bH_k}_2  \\
        & ~~~ + \norm{ \paran{(\tilde\bSigma^{\pi,(k)})^{-1} - \E{\tilde\bSigma^{\pi,(k)}}^{-1}} \bH_k (\tilde\bSigma^{\pi,(k)})^{\top\;-1}}
        + \norm{ \E{\tilde\bSigma^{\pi,(k)}}^{-1} \bH_k \paran{(\tilde\bSigma^{\pi,(k)})^{-1} - \E{\tilde\bSigma^{\pi,(k)}}^{-1}}^\top }_2\\
        & = \op{1},
    \end{align*}
    where we use the results in Lemma \ref{thm:Sigma-k} and \ref{thm:cov-approx}.
\end{proof}

%!TEX root = 0-main.tex

\section{Description of MIMIC-III Dataset}  \label{appen:mimic-iii}

\subsection{Intensive Care Unit Data}

The data we use is the Medical Information Mart for Intensive Care version III (MIMIC-III) Database \citep{johnson2016mimic}, which is a freely available source of de-identified critical care data from 53,423 adult admissions and 7,870 neonates from 2001 -- 2012 in six ICUs at a Boston teaching hospital.
The database contain high-resolution patient data, including demographics, time-stamped measurements from bedside monitoring of vital signs, laboratory tests, illness severity scores, medications and procedures, fluid intakes and outputs, clinician notes and diagnostic coding.

We extract a cohort of sepsis patients, following the same data processing procedure as in \cite{komorowski2018artificial}.
Specifically, the adult patients included in the analysis satisfy the international consensus sepsis-3 criterion.
The data includes 17,083 unique ICU admissions from five separate ICUs in one tertiary teaching hospital.
Patient demographics and clinical characteristics are shown in Table 1 and Supplementary Table 1 of \cite{komorowski2018artificial}.

Each patient in the cohort is characterized by a set of 47 variables, including demographics, Elixhauser premorbid status, vital signs, and laboratory values.
Demographic information includes age, gender, weight. Vital signs include heart rate, systolic/diastolic blood pressure, respiratory rate et al.
Laboratory values include glucose, total bilirubin, (partial) thromboplastin time et al.
Patients' data were coded as multidimensional discrete time series with 4-hour time steps.
The actions of interests are the total volume of intravenous (IV) fluids and maximum dose of vasopressors administrated over each 4-hour period.

All features were checked for outliers and errors using a frequency histogram method and uni-variate statistical approaches (Tukey's method).
Errors and missing values are corrected when possible.
For example, conversion of temperature from Fahrenheit to Celsius degrees and capping variables to clinically plausible values.

In the final processed data set, we have \num{17621} unique ICU admissions, corresponding to unique trajectories fed into our algorithms.

\subsubsection{Irregular Observational Time Series Data}

For each ICU admission, we code patient's data as multivariate discrete time series with a four hour time step.
Each trajectory covers from up to 24h preceding until 48h following the estimated onset of sepsis, in order to capture the early phase of its management, including initial resuscitation.
The medical treatments of interest are the total volume of intravenous fluids and maximum dose of vasopressors administered over each four hour period.
We use a time-limited parameter specific sample-and-hold approach to address the problem of missing or irregularly sampled data.
The remaining missing data were interpolated in MIMIC-III using multivariate nearest-neighbor imputation.
After processing, we have in total \num{278598} sampled data points for the entire sepsis cohort.

\subsubsection{State and Action Space Characterization}

The state $\bX_{i,t}$ is a 47-dimensional feature vector including fixed demographic information (age, weight, gender, admit type, ethnicity et al), vitals signs (heart rate, systolic/diastolic blood pressure, respiratory rate et al), and laboratory values (glucose, Creatinine, total bilirubin, partial thromboplastin time, $paO_2$, $paCO_2$ et al.).

For action space, we discretize two variables into three actions respectively according to Table in \cite{komorowski2018artificial}.
The combination of the two drugs makes $3 \times 3 = 9$ possible actions in total.
The action $A_t$ is a two-dimensional vector, of which the first entry $a_t[0]$ specifies the dosages of IV fluids and the second $a_t[1]$ indicates the dosages of IV fluids and vasopressors, to be administrated over the next 4h interval.

\subsection{Reward design}

The reward signal is important and need crafted carefully in real applications.  \cite{komorowski2018artificial} uses hospital mortality or 90-day mortality as the sole defining factor for the penalty and reward.
Specifically, when a patient survived, a positive reward was released at the end of each patient's trajectory (a reward of \num{+ 100}); while a negative reward (a penalty of \num{-100}) was issued if the patient died.
However, this reward design is sparse and provide little information at each step. Also, mortality may correlated with respect to the health statues of a patient. So it is reasonable to associate reward to the health measurement of a patient after an action is taken.

In this application, we build our reward signal based on physiological stability.
Specifically, in our design, physiological stability is measured by vitals and laboratory values $v_t$ with desired ranges $[ v_{\min}, v_{\max} ]$.
Important variables related to sepsis include heart rate (HR), systolic blood pressure (SysBP), mean blood pressure (MeanBP), diastolic blood pressure (DiaBP), respiratory rate (RR), peripheral capillary oxygen saturation (SpO2), arterial lactate, creatinine, total bilirubin, glucose, white blood cell count, platelets count, (partial) thromboplastin time (PTT), and International Normalized Ratio (INR).
We encode a penalty for exceeding desired ranges at each time step by a truncated Sigmoid function, as well as a penalty for sharp changes in consecutive measurements.

%\begin{equation}
%    r_{t+1} = \underset{v}{\sum} C_1 \left[ \frac{1}{1 + e^{-(v_t - v_{min})}} -  \frac{1}{1 + e^{-(v_t - v_{max})}} + 0.5  \right] - C_2 \left[ \max\left( 0, \frac{|v_{t+1} - v_t|}{v_t} - 0.2 \right) \right], \nonumber
%\end{equation}

Here, values $v_t$ are the measurements of those vitals $v$ believed to be indicative of physiological stability at time $t$, with desired ranges $[v_{min}, v_{max}]$. The penalty for exceeding these ranges at each time step is given by a truncated sigmoid function. The system also receives negative feedback when consecutive measurements see a sharp change.

\begin{remark}
    There are definitely improvements in shaping the reward space. For example, in medical situation, the definition of the normal range of a variable sometime depends demographic characterization. Also, sharp changes in a favorable direction should be rewarded.
\end{remark}
%\listoffigures
%\listoftables
\end{appendices}

\end{document}